\Crefname{algocf}{Algorithm}{Algorithms}
\def \SVD{\operatorname{SVD}}
\def \PCA{\operatorname{PCA}}
\def \RSPCA{\operatorname{SSVD}}
\def \FPCA{\operatorname{Federated-PCA}}
\def \energy{\operatorname{\mathcal{E}}}
\def \rankupdate{\operatorname{Rank}}
\def \rankupdate{\operatorname{AdjustRank}}
\def \QR{\operatorname{QR}}
\def \FPCAC{\operatorname{FPCA}}
\def \FPCAEC{\operatorname{FPCA-Edge}}
\def \BasicMerge{\operatorname{BasicMerge}}
\def \FasterMerge{\operatorname{FasterMerge}}
\def \Merge{\operatorname{Merge}}
\def \SU {\mathcal{S}}
\def \A {\mathcVal{A}}
\def \modsulq {\operatorname{MOD-SuLQ}}
\def \sulq {\operatorname{SuLQ}}
\def \Q {\mathbf{Q}}
\def \A {\mathbf{A}}
\def \BLK {\mathbf{B}}
\def \E {\mathbb{E}}
\def \h {\widehat}
\def \U {\mathcal{U}}
\def \Y {\mathbf{Y}}
\def \R {\mathbb{R}}
\def \GR {\mathbb{G}}
\def \X {\mathbf{X}}
\def \y{\mathbf{y}}
\def \P{\mathbf{P}}
\def \G {\mathbf{G}}
\def \Ghat {\hat{\mathbf{G}}}
\def \H {\mathbf{H}}
\newcommand{\B}[1]{\mathbf{#1}}
\def \P{\mathbf{P}}
\def \rate{\eta}
\def \rank {\operatorname{rank}}
\def \l{\left}
\def \r{\right}
\def \G{\B{G}}
\def \N{\mathbb{N}}
\def \wh{\widehat}
\DeclareMathOperator{\ind}{\mathds{1}}
\newcommand{\removelatexerror}{\let\@latex@error\@gobble}
\newtheorem{thm}{Theorem}
\newtheorem{lemma}{Lemma}
\title{Federated Principal Component Analysis}
\author{{\bf Andreas Grammenos\textsuperscript{1,3}\thanks{Correspondence to: Andreas Grammenos $<$\href{mailto://ag926@cl.cam.ac.uk}{ag926@cl.cam.ac.uk}$>$}\hskip1em  
  Rodrigo Mendoza-Smith\textsuperscript{2}\hskip.5em
  Jon Crowcroft\textsuperscript{1,3}}\hskip.5em
  Cecilia Mascolo\textsuperscript{1}\\  
  \\
  \textsuperscript{1}Computer Lab, University of Cambridge\\
  \textsuperscript{2}Quine Technologies\\
  \textsuperscript{3}Alan Turing Institute
}
\begin{document}

\maketitle

\begin{abstract}
We present a federated, asynchronous, and $(\varepsilon, \delta)$-differentially private algorithm for $\PCA$ in the memory-limited setting.
Our algorithm incrementally computes local model updates using a streaming procedure and adaptively estimates its $r$ leading principal components when only $\mathcal{O}(dr)$ memory is available with $d$ being the dimensionality of the data.
We guarantee differential privacy via an  input-perturbation scheme in which  the covariance matrix of a dataset $\B{X} \in \R^{d \times n}$ is perturbed with a non-symmetric random Gaussian matrix with variance in $\mathcal{O}\left(\left(\frac{d}{n}\right)^2 \log d \right)$, thus improving upon the state-of-the-art.
Furthermore, contrary to previous federated or distributed algorithms for $\PCA$, our algorithm is also invariant to permutations in the incoming data, which provides robustness against straggler or failed nodes.  
Numerical simulations show that, while using limited-memory, our algorithm exhibits performance that closely matches or outperforms traditional non-federated algorithms, and in the absence of communication latency, it exhibits attractive horizontal scalability.
\end{abstract}

\section{Introduction}
\label{label:intro}

In recent years, the advent of edge computing in smartphones, IoT and cryptocurrencies has induced a paradigm shift in distributed model training and large-scale data analysis.
Under this new paradigm, data is generated by commodity devices with hardware limitations and severe restrictions on data-sharing and communication, which makes the centralisation of the data extremely difficult.
This has brought new computational challenges since algorithms do not only have to deal with the sheer volume of data generated by networks of devices, but also leverage the algorithm's voracity, accuracy, and complexity with constraints on hardware capacity, data access, and device-device communication.
Moreover, concerns regarding data ownership and privacy have been growing in applications where sensitive datasets are crowd-sourced and then aggregated by {\em trusted} central parties to train machine learning models. In such situations, mathematical and computational frameworks to ensure data ownership and guarantee that trained models will not expose private client information are highly desirable.
In light of this, the necessity of
being able to {\em privately} analyse large-scale decentralised datasets and extract useful insights out of 
them is becoming more prevalent than ever before.
A number of frameworks have been put forward to train machine-learning models while preserving data ownership and privacy like Federated Learning \citep{mcmahan2016communication, konevcny2016federated}, Multi-party computation \citep{mohassel2017secureml, liu2017oblivious, rouhani2018deepsecure}, Homomorphic encryption \citep{gilad2016cryptonets}, and Differential Privacy \citep{dwork2006calibrating, dwork2014algorithmic}.
In this work we pursue a combined {\em federated learning and differential privacy} framework to compute $\PCA$ in a decentralised way and provide precise guarantees on the privacy budget.
Seminal work in federated learning has been made, but mainly in the context of deep neural networks, see \citep{mcmahan2016communication, konevcny2016federated}.
Specifically, in~\citep{konevcny2016federated} a {\em federated} method for training of neural networks was proposed.
In this setting one assumes that each of a large number of independent {\em clients} can contribute to the training of a centralised 
model by computing local updates with their own data and sending them to
the client holding the centralised model for aggregation. 
Ever since the publication of this seminal work, 
interest in federated algorithms 
for training neural networks has surged, see~\citep{smith2017federated, he2018cola, geyer2017differentially}.
Despite of this, federated adaptations of classical data analysis 
techniques are still largely missing.
Out of the many techniques available, Principal Component Analysis 
($\PCA$)~\citep{pearson1901liii, jolliffe2011principal} is arguably 
the most ubiquitous one for discovering linear structure or reducing dimensionality in data, so has become an essential component in inference, machine-learning, and data-science pipelines. 
In a nutshell, given a matrix 
$\Y \in \R^{d \times n}$ of $n$ feature vectors of dimension $d$, $\PCA$ aims to build a 
low-dimensional subspace of $\R^{d}$ that captures
the directions of maximum variance in the data contained in $\Y$. 
Apart from being a fundamental tool for data analysis, $\PCA$ is often used to reduce the dimensionality of the data
in order to minimise the cost of computationally expensive operations. For instance, before applying t-SNE~\citep{maaten2008visualizing} or 
UMAP~\citep{mcinnes2018umap}. 
Hence, a federated algorithm for $\PCA$ is not only desired when data-ownership is sought to be preserved, but also from a computational viewpoint.

Herein, we propose a federated {and differentially private} algorithm for PCA  (Alg.~\ref{algorithm:federated_pca}). The computation of $\PCA$ is related to the Singular Value Decomposition ($\SVD$)~\citep{eckart,mirsky} which can decompose any matrix into a linear combination of orthonormal rank-1 matrices weighted by positive scalars.
In the context of high-dimensional data, the main  limitation stems from the fact that, in the absence of structure, performing $\PCA$ on a matrix $\Y \in \R^{d \times n}$ requires $\mathcal{O}(d^2n+ d^3)$ computation time and $\mathcal{O}(d^2)$ memory. 
This cubic computational complexity and quadratic storage dependency on $d$ makes 
the cost of $\PCA$ computation prohibitive for high-dimensional data, 
though it can often be circumvented when the data is sparse or has other type of exploitable structure.
Moreover, in some decentralised applications, the computation has to be done in commodity devices with $\mathcal{O}(d)$ storage capabilities, 
so a $\PCA$ algorithm with $\mathcal{O}(d)$ memory dependency is highly desirable.
On this front, there have 
been numerous recent works in the streaming setting
that try to tackle this problem, see~\citep{mitliagkas2014streaming,mitliagkas2013memory,marinov2018streaming,arora2012stochastic,arora2016stochastic,boutsidis2015online}.
However, most of these methods
do not naturally scale well nor can they be parallelised efficiently despite 
their widespread use, e.g.~\citep{bouwmans2014robust,boutsidis2015online}.
To overcome 
these issues a reliable and federated scheme for large decentralised datasets is highly desirable.
Distributed algorithms for $\PCA$ have been studied previously in \citep{kannan2014principal, liang2014improved, qu2002principal}.
Similar to this line of work in~\cite{narayanamurthy2020federated} proposed a federated subspace tracking algorithm in the presence of missing values.
However, the focus in this line of work is in obtaining high-quality guarantees in communication complexity and approximation accuracy and do not implement differential privacy.
A number of papers in non-distributed, but differentially private algorithms for $\PCA$ have been proposed.
These can be roughly divided in two main groups: (i) those which are {\em model free} and provide guarantees for unstructured data matrices, (ii) those that are specifically tailored for instances where specific structure is assumed.
In the model-free $\PCA$ we have (SuLQ) \citep{blum2005practical}, (PPCA) and ($\modsulq$) \citep{chaudhuri2012near}, Analyze Gauss \citep{dwork2014analyze}. 
In the structured case, \citep{hardt2012beating, hardt2013beyond, hardt2014noisy} studies approaches under the assumption of high-dimensional data, \citep{zhou2009differential} considers the case of achieving differential privacy by compressing the database with a random affine transformation, while \citep{ge2018minimax} proposes a distributed privacy-preserving version for sparse $\PCA$, but with a strong sparsity assumption in the underlying subspaces.
To the best of our knowledge, the combined federated, model free, and differential private setting for $\PCA$ has not been previously addressed in literature. This is not surprising as this case is especially difficult to address. In the one hand, distributed algorithms for computing principal directions are not generally {\em time-independent}. That is, the principal components are not invariant to permutations the data.
On the other hand, guaranteeing $(\varepsilon,\delta)$-differential privacy imposes an $\mathcal{O}(d^2)$ overhead in storage complexity, which might render the distributed procedure infeasible in limited-memory scenarios.

\textbf{Summary of contributions}:
Our main contribution is {\em Federated-PCA} (Alg.~\ref{algorithm:federated_pca}) a federated, asynchronous, and $(\varepsilon, \delta)$-differentially private algorithm for $\PCA$.
Our algorithm is comprised out of two independent 
components: (1) An algorithm for the incremental, private, and decentralised computation 
of local updates to $\PCA$, (2) a low-complexity merging procedure to privately aggregate these incremental updates together.
By design Federated-PCA is only allowed to do \textit{one pass} through each column of the dataset $\Y \in \R^{d \times n}$ using an $\mathcal{O}(d)$-memory device which results in a $\mathcal{O}(dr)$ storage complexity.
Federated-PCA achieves $(\varepsilon, \delta)$-differential privacy by extending the symmetric input-perturbation scheme put forward in \cite{chaudhuri2012near} to the non-symmetric case.
In doing so, we improve the noise-variance complexity with respect to the state-of-the-art for non-symmetric matrices.

\section{Notation \& Preliminaries}
\label{label:prelim_notation}

This section introduces the notational conventions used throughout the paper. 
We use lowercase letters $y$ for scalars, bold lowercase letters $\mathbf{y}$ for vectors, bold capitals $\Y$ for matrices, and calligraphic capitals $\mathcal{Y}$ for subspaces.
If $\Y \in \R^{d \times n}$ and $S \subset \{1, \dots,  m\}$, then $\Y_S$ is the block composed of columns indexed by $S$.
We reserve $\B{0}_{m\times n}$ for the zero matrix in $\R^{m \times n}$ and $\B{I}_{n}$ for the identity matrix in $\R^{n \times n}$.
Additionally, we use $\| \cdot \|_{\rm F}$ to denote the Frobenius norm operator and $\| \cdot \|$ to denote the $\ell_{2}$ norm.
If $\Y \in \R^{d \times n}$ we let $\mathbf{Y}=\mathbf{U}\mathbf{\Sigma} \mathbf{V}^T$ be its full $\SVD$ formed from unitary $\mathbf{U}\in \R^{d \times d}$ and $\mathbf{V} \in \R^{n\times n}$ and diagonal  $\mathbf{\Sigma} \in \R^{d \times n}$.
The values $\mathbf{\Sigma}_{i,i} = \sigma_{i}(\Y) \geq 0$ are the singular values of $\Y$.
If $1 \leq r\leq \min(d,n)$, we let 
$[\mathbf{U}_r, \mathbf{\Sigma}_r, \mathbf{V}_r^T] = \SVD_r(\mathbf{Y})$ be the singular value decomposition of its {\em best rank-$r$ approximation}. That is, the solution of $ \min \{\|\mathbf{Z}-\Y\|_F :  \rank{(\mathbf{Z})} \le r\}$.
 Using this notation, we define $[\mathbf{U}_r, \mathbf{\Sigma}_r]$ be the rank-$r$ {\em principal subspace} of $\Y$.
When there is no risk of confusion, we will abuse notation and use $\SVD_r(\mathbf{Y})$ to denote the rank-$r$ left principal subspace with the $r$ leading singular values $[\mathbf{U}_r, \mathbf{\Sigma}_r]$
We also let $\lambda_{1}({\Y})\ge \cdots \ge \lambda_{k}(\Y)$ be the eigenvalues of $\Y$ when $d=n$.
Finally, we let $\vec{\mathbf{e}}_{k} \in \R^d$ be the $k$-th canonical vector in $\R^d$.

\textbf{Streaming Model:}  A data stream is a vector sequence $\mathbf{y}_{t_0}, \mathbf{y}_{t_1}, \mathbf{y}_{t_2}, \dots $ such that
$t_{i+1} > t_{i}$ for all $i \in \N$.
We assume that $\mathbf{y}_{t_j} \in \R^d$ and $t_j \in \mathbb{N}$ for all $j$.
At time $n$, the data stream $\y_1, \dots, \y_n$ can be arranged in a matrix $\Y \in \R^{d \times n}$.
Streaming models assume that, at each timestep, algorithms observe sub-sequences $\y_{t_1}, \dots, \y_{t_b}$ of the data rather than the full dataset $\Y$.

\textbf{Federated learning:}  
Federated Learning~\citep{konevcny2016federated} is a machine-learning paradigm that 
considers how a large number of {\em clients} 
owning different data-points can contribute to the training of a {\em centralised model} by 
locally computing updates with their own data and merging them to the centralised model without sharing data between each other.
Our method resembles the 
{\em distributed agglomerative summary model} (DASM)~\citep{tanenbaum2007distributed}
in which updates are aggregated
in a ``bottom-up'' approach following a tree-structure.
That is, by
arranging the nodes in a
tree-like hierarchy
such that, for any sub-tree,
the leaves 
compute and propagate intermediate results the their roots for merging or summarisation.

\textbf{Differential-Privacy:} Differential privacy~\citep{dwork2014algorithmic} is a mathematical framework that measures to what extent the parameters or predictions of a trained machine learning model reveal information about any individual points in the training dataset.
 Formally, we say that a randomised algorithm $\mathcal{A}(\cdot)$ taking values in a set $\mathcal{T}$ provides $(\varepsilon, \delta)$-differential privacy if
 \begin{equation}
 \mathbb{P}\left[\mathcal{A}(\mathcal{D}) \in \mathcal{S}\right] \leq e^{\varepsilon} \mathbb{P}\left[\mathcal{A}(\mathcal{D}') \in \mathcal{S}\right] + \delta
 \end{equation}
 for all measurable $\mathcal{S} \subset \mathcal{T}$ and all datasets $\mathcal{D}$ and $\mathcal{D}'$ differing in a single entry.
Our algorithm extends $\modsulq$~\citep{chaudhuri2013near} to the streaming and {\em non-symmetric} setting and guarantees $(\varepsilon, \delta)$-differential privacy. Our extension 
only requires {\em one pass} over the data and preserves the nearly-optimal variance rate $\modsulq$.

\section{Federated PCA}
\label{label:fed_pca}

We consider a decentralised dataset $\mathcal{D} = \{\y_{1}, \dots, \y_{n}\} \subset \R^d$ distributed across $M$ clients.
The dataset $\mathcal{D}$ can be stored in a matrix
$\Y=\left[\Y^{1}|\Y^{2}|\cdots|\Y^{M}\right] \in \R^{d \times n}$ with $n \gg d$ and such that $\Y^{i} \in \R^{d \times n_i}$ is {\em owned} by client $i \in \{1, \dots, M\}$.
We assume that each $\Y^i$ is generated in a streaming fashion and that due to resource limitations it cannot be stored in full.
Furthermore, under the DASM we assume that the $M$ clients in the network can be arranged in a tree-like structure with $q > 1$ levels and approximately $\ell > 1$ leaves per node. Without loss of generality, in this paper we assume that $M = \ell^q$.
An example of such tree-like structure is given in~\autoref{fig:computation_graph}.
We note that such structure can be generated easily and efficiently using various schemes~\citep{wohwe2019optimized}.
Our procedure is presented in Alg.~\ref{algorithm:federated_pca}.

\begin{algorithm}[htb!]
\footnotesize{
\DontPrintSemicolon
\SetNoFillComment

\SetKwProg{Fn}{Function}{ is}{end}
\SetKwProg{Each}{Each client}{ :}{end}
\SetKwProg{At}{At time}{ , each client $i \in \{1, \dots, M\}$ }{end}
    \KwData{
    $\Y=\left[\Y^{1}|\cdots|\Y^{M}\right] \in \R^{d \times n}$: {\em Data for network with $M$ nodes} // 
    $(\varepsilon, \delta)$: {\em DP parameters} //
    $(\alpha, \beta)$: {\em Bounds on energy, see \eqref{eq:sv_energy} } // 
    $\B{B}$: {\em Batch size for clients} //
    $r$: {\em Initial rank} ;\;
    }
    \KwResult{$[\B{U}', \B{\Sigma}'] \approx \SVD_r(\Y) \in \R^{d \times r} \times \R^{r \times r}$}
    
    \Fn{$\FPCA_{\varepsilon,  \delta, \alpha, \beta, r}(\Y, B)$}{
    
    Compute $T_{\varepsilon, \delta, d, n}$ minimum batch size to ensure differential privacy, see Lemma \ref{lemma:diffprivacy}\;
    \Each(\tcp*[h]{1. Initialise clients}){ $i\in [M]$}{
        Initialises PC estimate to $(\B{U}^i, \B{\Sigma}^i) \leftarrow (0,0)$, batch $\B{B}^i \leftarrow \left[\;\right]$, and batch size $b^i \leftarrow T_{\varepsilon, \delta, d, n}$\;
    } 
    \At(\tcp*[h]{2. Computation of local updates}){ $t \in \{1, \dots, n\}$}{
        Observes data-point $\y^i_{t} \in \R^d$ and add it to batch $\B{B}^i \leftarrow [\B{B}^i, \y^i_{t}]$\;
        \If{$\B{B}^i$ has $b^i$ columns}{
            $(\B{U}^i, \B{\Sigma}^i) \leftarrow \FPCAEC_{\varepsilon, \delta, \alpha, \beta, r}(\B{B}^i, \B{U}^i, \B{\Sigma}^i)$\; Reset the batch $\B{B}^i \leftarrow \left[\;\right]$, and set the batch size $b^i \leftarrow B$\;
        }
    }
    \tcc{3. Recursive subspace merge}
    Arrange clients' subspaces in a tree-like data structure and merge them recursively with Alg.~\ref{algorithm:fastest_subspace} (Fig.~\ref{fig:computation_graph})\;
    }
    }
    \caption{Federated PCA ($\FPCAC$)}
    \label{algorithm:federated_pca}
\end{algorithm}

Note that Alg.~\ref{algorithm:federated_pca}, invokes $\FPCAEC$ (Alg.~\ref{alg:fpca_edge}) to privately compute local updates to the centralised model and Alg.~\ref{algorithm:fastest_subspace} to recursively merge the local subspaces in the tree.
To simplify the exposition we assume, without loss of generality, that every client $i \in [T]$ observes a vector $\y_t^i \in \R^d$ at time $t \in [T]$, but remark that this uniformity in data sampling need not hold in the general case.
We also assume that clients accumulate observations in {\em batches} and that these are not merged until their size grows to $b^i$.
However, we point out that in real-world device networks the batch size might vary from client to client due to heterogeneity in storage capacity and could indeed be merged earlier in the process.
Finally, it is important to note that the network does not need to wait for all clients to compute a global estimation, so that subspace merging can be initiated when a new local estimation has been computed without perturbing the global estimation.
This {\em time independence} property enables {\em federation} as it guarantees that the principal-component estimations after merging are invariant to permutations in the data, see Lemma~\ref{lemma:fpca-time-independence}.
Merge and $\FPCAEC$ are described in Algs.~\ref{algorithm:fastest_subspace} and \ref{alg:fpca_edge}.
\begin{figure}[htb!]
    \centering
    \includegraphics[scale=0.5]{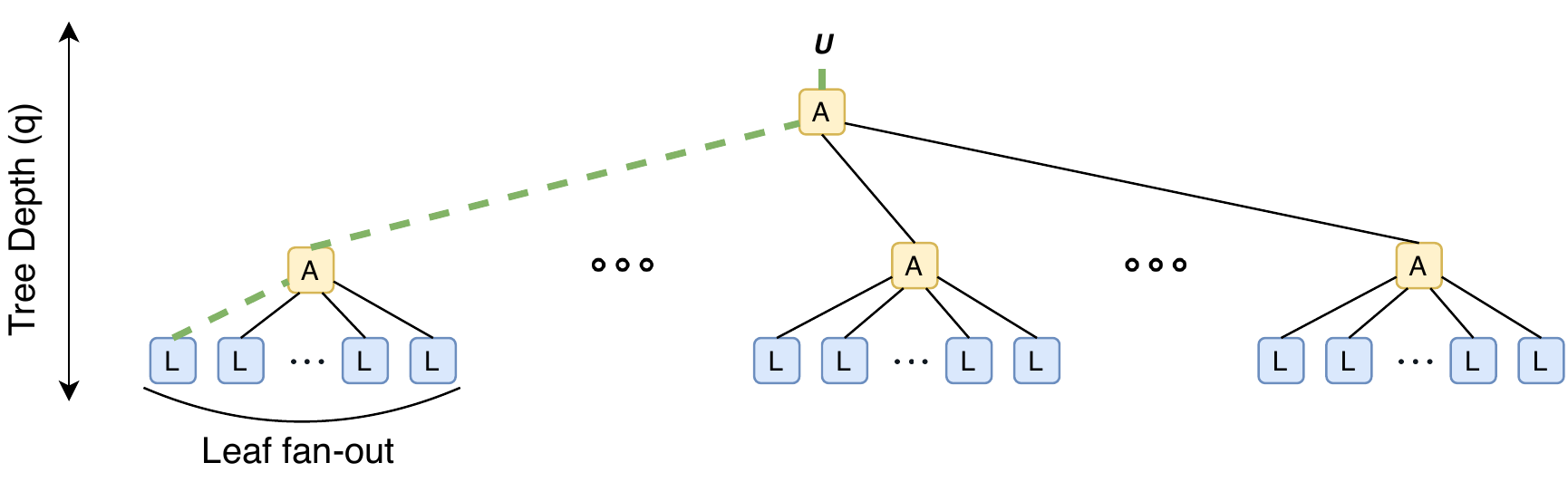}
    \caption{Federated model: (1) Leaf nodes ($\mathbf{L}$) independently compute local updates asynchronously, (2) The subspace updates are propagated upwards to aggregator nodes ($\mathbf{A}$), (3) The process is repeated recursively until the root node is reached, (4) $\FPCAC$ returns the global PCA estimate.}
    \label{fig:computation_graph}
\end{figure}
\subsection{Merging}
Our algorithmic constructions are built upon the concept of {\em subspace merging} in which two subspaces $\mathcal{S}_1 =(\B{U}_{1}, \B{\Sigma}_1) \in \R^{r_{1} \times d} \times \R^{r_1 \times r_1}$ and $\mathcal{S}_2=(\B{U}_{2}, \B{\Sigma_2}) \in \R^{r_{2} \times d} \times \R^{r_2 \times r_2}$ are {\em merged} together to produce a subspace $\mathcal{S}=(\B{U}, \B{\Sigma}) \in \R^{r \times d} \times \R^{r \times r}$ describing the combined $r$ principal directions of $\mathcal{S}_1$ and $\mathcal{S}_2$.
One can merge two sub-spaces  by computing a truncated $\SVD$ on a concatenation of their bases. 
Namely,
\begin{equation}
    \label{eq:basic_subspace}
    [\B{U},\B{\Sigma},\B{V^{T}}] \leftarrow \SVD_{r}([\lambda \B{U}_{1}\B{\Sigma}_{1},\B{U}_{2}\B{\Sigma}_{2}]),
\end{equation}
where $\lambda \in (0,1]$ a {\em forgetting factor} that allocates less weight to the previous subspace $\B{U}_{1}$.
In~\citep{vrehuuvrek2011subspace,eftekhari2019moses}, it is shown how \eqref{eq:basic_subspace} can be further optimised when $\mathbf{V}^{T}$ is not required and we have knowledge that $\mathbf{U}_{1}$ and $\mathbf{U}_{2}$ are already orthonormal. 
An efficient version of \eqref{eq:basic_subspace} is presented in Alg.~\ref{algorithm:fastest_subspace}.
\begin{algorithm}[htb!]
\footnotesize{
    \DontPrintSemicolon
    \SetNoFillComment
    \SetKwProg{Fn}{Function}{ is}{end}
    \KwData{$(\B{U}_{1}, \B{\Sigma}_1) \in \R^{d \times r_{1}} \times \R^{r_1 \times r_1}$: {\em First subspace} // $(\B{U}_{2}, \B{\Sigma}_2) \in \R^{d \times r_{2}} \times \R^{r_2 \times r_2}$: {\em Second subspace};\;
    }
    \KwResult{
        $(\B{U}'', \B{\Sigma}'') \in \R^{d \times r} \times \R^{r \times r}$ merged subspace\;
    }
    \Fn{$\Merge_{r}( \B{U}_1, \B{\Sigma}_1, \B{U}_2, \B{\Sigma}_2)$}{
        $\B{Z} \leftarrow \B{U}^{T}_{1}\B{U}_{2}$\;
        $[\B{Q}, \B{R}] \leftarrow \QR(\B{U}_{2} - \B{U}_{1}\B{Z})$, the QR factorisation\;
        $[\B{U}',\B{\Sigma}'', \thicksim] \leftarrow \SVD_{r}\bigg(
        \begin{bmatrix} 
            \B{\Sigma}_{1} & \B{Z}\B{\Sigma}_{2} \\
            0 & \B{R}\B{\Sigma}_{2}
        \end{bmatrix}
        \bigg)$\;
        $\B{U}'' \leftarrow [\B{U}_{1}, \B{Q}]\B{U}'$\;
        }
    }
    \caption{$\Merge_{r}$ \citep{vrehuuvrek2011subspace, eftekhari2019moses}}
    \label{algorithm:fastest_subspace}
\end{algorithm}
Alg.~\ref{algorithm:fastest_subspace} is generalised in~\citep{iwen2016distributed} to multiple subspaces when the computation is incremental, but not streaming.
That is, when every subspace has to be computed in full in order to be processed, merged, and propagated synchronously, which is not ideal for use in a federated approach.
Hence, in Lemma \ref{lemma:svd_partial_lemma} we extend the result in~\citep{iwen2016distributed} to the case of {\em streaming} data.
Lemma \ref{lemma:svd_partial_lemma} is proved in the Appendix.
\begin{lemma}[Federated $\SVD$ uniqueness]
  \label{lemma:svd_partial_lemma}
  Consider a network with $M$ nodes where, at each timestep $t \in \mathbb{N}$, node $i \in \{1, \dots, M\}$ processes a dataset $\B{D}_t^i \in \R^{d \times b}$.
  At time $t$, let $\Y_t^i = [\B{D}_1^i \mid \cdots \mid \B{D}_t^i] \in \R^{d \times tb}$ be the dataset observed by node $i$ and 
  $\Y_{t} = \left[\Y^{1}_{t}|\Y^{2}_{t}|\cdots|\Y^{M}_{t}\right]\in \R^{d\times tMb}$ be the dataset observed by the network.
  Moreover, let 
  $\B{Z}_{t} := [{\B{U}}^1_{t} {\B{\Sigma}}^1_{t} \mid\cdots\mid {\B{U}}^M_{t}{\B{\Sigma}}^M_{t} ]$ where $[\B{U}^i_t, \B{\Sigma}^i_t, (\B{V}^i_t)^T] = \SVD(\Y_t^i)$. 
  If $\left[{\B{U}_{t}}, {\B{\Sigma}_{t}}, {\B{V}}^{T}_{t}\right] = \SVD(\Y_{t})$ and $[\hat{\B{U}_{t}}, \hat{\B{\Sigma}_{t}}, (\hat{\B{V}_{t}})^{T} ] = \SVD(\B{Z}_t)$, 
  then  ${\B{\Sigma}} = \hat{\B{\Sigma}_{t}}$, and ${\B{U}_{t}} = \hat{\B{U}_{t}}\B{B}_{t}$, where $\B{B}_{t}\in\R^{r \times r}$ is a unitary block diagonal matrix with $r=\rank(\Y_t)$ columns.  
  If none of the nonzero singular values are repeated then $\B{B}_{t}=\B{I}_{r}$. 
  A similar result holds if $b$ differs for each worker as long as $b \geq \min \rank(\Y_t^i)$ $\forall i \in [M]$.
\end{lemma}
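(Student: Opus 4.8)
The plan is to reduce the whole statement to a single algebraic identity between $d \times d$ Gram matrices and then invoke the standard uniqueness of the symmetric eigendecomposition.

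First I would record the key observation that each local factor $\B{U}_t^i\B{\Sigma}_t^i$ is simply $\Y_t^i$ written in a rotated column basis: from $\Y_t^i = \B{U}_t^i\B{\Sigma}_t^i(\B{V}_t^i)^T$ with $(\B{V}_t^i)^T\B{V}_t^i = \B{I}$ we get $\B{U}_t^i\B{\Sigma}_t^i = \Y_t^i\B{V}_t^i$, hence $\B{Z}_t = \Y_t\B{W}_t$ with $\B{W}_t = \operatorname{blkdiag}(\B{V}_t^1,\dots,\B{V}_t^M)$. Then
\[
  \B{Z}_t\B{Z}_t^T = \sum_{i=1}^M (\B{U}_t^i\B{\Sigma}_t^i)(\B{U}_t^i\B{\Sigma}_t^i)^T = \sum_{i=1}^M \B{U}_t^i(\B{\Sigma}_t^i)^2(\B{U}_t^i)^T = \sum_{i=1}^M \Y_t^i(\Y_t^i)^T = \Y_t\Y_t^T,
\]
where the third equality uses the SVD of $\Y_t^i$ once more and the last is just the block partition of $\Y_t$. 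So $\B{Z}_t$ and $\Y_t$ share the Gram matrix $\B{G}_t := \Y_t\Y_t^T$.

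Next I would translate this into the SVD statement. The nonzero eigenvalues of the symmetric PSD matrix $\B{G}_t$ together with their eigenvectors are exactly the squared nonzero singular values and the corresponding left singular vectors of any matrix with Gram matrix $\B{G}_t$; therefore $\B{\Sigma}_t = \hat{\B{\Sigma}}_t$ under the usual decreasing ordering, and, restricting to the $r = \rank(\Y_t) = \rank(\B{G}_t)$ nonzero part, $\B{U}_t$ and $\hat{\B{U}}_t$ are two orthonormal eigenbases of $\B{G}_t$ for the same eigenvalues. Two such bases differ by an orthogonal change of coordinates that preserves each eigenspace, i.e. $\B{U}_t = \hat{\B{U}}_t\B{B}_t$ with $\B{B}_t \in \R^{r \times r}$ orthogonal and block diagonal, one block per distinct nonzero singular value with size equal to its multiplicity. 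If no nonzero singular value is repeated every block is $1\times 1$, so $\B{B}_t$ is a diagonal sign matrix, which becomes $\B{I}_r$ once the customary sign convention on singular vectors is fixed.

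For the streaming and heterogeneous-batch extension I would run the same identity recursively: writing $\Y_t^i = [\,\Y_{t-1}^i \mid \B{D}_t^i\,]$ and treating $(\B{U}_{t-1}^i\B{\Sigma}_{t-1}^i,\ \B{D}_t^i)$ as a two-block instance of the above shows the incrementally maintained local factorisation agrees with $\SVD(\Y_t^i)$ up to a block-diagonal unitary, after which feeding these into $\B{Z}_t$ reproduces the network-level claim; since the Gram computation never uses the common batch size, it goes through verbatim when $b$ varies, the condition $b \ge \min_i \rank(\Y_t^i)$ only ensuring each local model still carries the full top-$r$ subspace. There is no deep obstacle here; the one thing needing care is the bookkeeping of zero versus nonzero singular values — so that $\B{B}_t$ genuinely comes out $r \times r$ and block diagonal — and being explicit that repeated singular values, possibly created by combining different workers' contributions, are precisely what can force $\B{B}_t \neq \B{I}_r$. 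I would keep the preliminaries' convention that $\SVD_r$ denotes the left principal subspace of the $r$ leading singular values so that every matrix in the argument has a matching shape.
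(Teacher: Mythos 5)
Your proposal is correct and follows essentially the same route as the paper's proof: both establish the Gram-matrix identity $\B{Z}_t\B{Z}_t^T=\sum_i \B{U}_t^i(\B{\Sigma}_t^i)^2(\B{U}_t^i)^T=\Y_t\Y_t^T$ and then invoke uniqueness of the symmetric eigendecomposition to match singular values and identify the left singular vectors up to a block-diagonal unitary acting within each repeated-eigenvalue eigenspace. Your preliminary remark that $\B{Z}_t=\Y_t\B{W}_t$ for a block-diagonal orthogonal $\B{W}_t$ is a slightly tidier way to see why the Gram matrices coincide, but it does not change the substance of the argument.
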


\subsection{Local update estimation: Subspace tracking}
\label{streaming_pca}

Consider a sequence $\{\mathbf{y}_1, \dots, \mathbf{y}_n\} \subset \R^{d}$ of feature vectors.
A block of size $b \in \N$ is formed by taking $b$ contiguous columns of $\{\mathbf{y}_1, \dots, \mathbf{y}_n\}$.
Assume $r \leq b \leq \tau \leq n$. If $\widehat{\Y}_{0}$ is the empty matrix, the $r$ principal components of $\Y_{\tau} := [\mathbf{y}_1, \cdots, \mathbf{y}_{\tau}]$ can be estimated by running the following iteration for $k = \{1, \dots, \lceil \tau/b \rceil\}$,
\begin{equation}
\label{eq:svdr_1}
[\widehat{\B{U}}, \widehat{\B{\Sigma}},\widehat{\B{V}}^T]\leftarrow\text{SVD}_{r}\left(\left[\begin{array}{cccc}
\widehat{\Y}_{(k-1)b} & \mathbf{y}_{(k-1)b+1} & \cdots & \mathbf{y}_{kb}\end{array}\right]\right), \;\;\;\; \widehat{\Y}_{kb}\leftarrow \widehat{\B{U}} \widehat{\B{\Sigma}}\widehat{\B{V}}^T \in \R^{d \times kb}.
\end{equation}
Its output after $K = \lceil \tau /b \rceil$ iterations contains an estimate $\widehat{\B{U}}$ of the leading $r$ principal components of $\Y_{\tau}$ and the projection $\widehat{\Y}_{\tau}=\widehat{\B{U}} \widehat{\B{\Sigma}}\widehat{\B{V}}^T$  of $\Y_{\tau}$ onto this estimate.
The local subspace estimation
in \eqref{eq:svdr_1} was initially analysed in~\citep{eftekhari2019moses}. 
$\FPCAEC$ adapts \eqref{eq:svdr_1} to the federated setting by implementing an adaptive rank-estimation procedure which allows clients to adjust, independently of each other, their rank estimate based on the distribution of the data seen so far.
That is, by enforcing,
\begin{equation}
\label{eq:sv_energy}
     \energy_r({\Y_{\tau}}) = \frac{\sigma_{r}(\Y_{\tau})}{\sum_{i=1}^{r}\sigma_{i}(\Y_{\tau})}
     \in [ 
     \alpha, \beta],
\end{equation}
and increasing $r$ whenever $\energy_r(\Y_{\tau}) >\beta$ or decreasing it when $\energy_r(\Y_{\tau}) < \alpha$.
In our algorithm, this adjustment happens only once per block, though a number of variations to this strategy are possible. 
Further, typical values for $\alpha$ and $\beta$ are $1$ and $10$ respectively; note for best results the ratio $\alpha/\beta$ should be kept below $0.3$.
Letting $[r+1] = \{1, \dots, r+1\}$, $[r-1] = \{1, \dots, r-1\}$, and $\ind\{\cdot\}\in \{0, 1\}$ be the indicator function, the subspace tracking and rank-estimation procedures in Alg.~\ref{alg:fpca_edge} depend on the following functions: 
\begin{footnotesize}
\begin{equation*}
\begin{array}{rl}
\RSPCA_r(\mathbf{D}, \mathbf{U}, \mathbf{\Sigma}) =& 
        \SVD_r (\mathbf{D}) \ind\{ \mathbf{U}\mathbf{\Sigma} = 0\} + \mbox{Merge}_{r}(\mathbf{U}, \mathbf{\Sigma}, \mathbf{D}, \mathbf{I}) \ind\{\mathbf{U}\mathbf{\Sigma} \neq 0 \}\\
\rankupdate_r^{\alpha, \beta}({\mathbf{U}}, {\mathbf{\Sigma}}) =& 
        \left([{\mathbf{U}}, \vec{\mathbf{e}}_{r+1}], \B{\Sigma}_{[r+1]} \right) \ind\{\energy_r({\mathbf{\Sigma}}) > \beta\} + 
        ({\mathbf{U}}_{[r-1]}, {\mathbf{\Sigma}}_{[r-1]}) \ind\{\energy_r({\mathbf{\Sigma}}) < \alpha\} \\
        &+ 
        (\mathbf{U}, \mathbf{\Sigma}) \ind\{\energy_r({\mathbf{\Sigma}}) \in [\alpha, \beta]\}
\end{array}
\end{equation*}
\end{footnotesize}
Note that the storage and computational requirements of the {\em Subspace tracking} procedure of Alg.~\ref{alg:fpca_edge} are nearly optimal for the given objective since, at iteration $k$, only requires $\mathcal{O}(r(d+kr))$ bits of memory and $\mathcal{O}(r^2(d+kr))$ flops.
However, in the presence of perturbation masks, the computational complexity is  $\mathcal{O}(d^2)$ due to the incremental covariance expansion per block, see Sec.~\ref{sec:private-estimation}.
\subsection{Differential Privacy: Streaming $\modsulq$}
\label{sec:private-estimation}

Given a data matrix $\X \in \R^{d \times n}$ and differential privacy parameters $(\varepsilon, \delta)$, the $\modsulq$ algorithm \citep{chaudhuri2012near} privately computes the $k$-leading principal components of
\begin{equation}
    \label{eq:A}
    \mathbf{A} = \frac{1}{n} \X\X^T + \mathbf{N}_{\varepsilon, \delta, d, n} \in \R^{d \times d},
\end{equation}
the covariance matrix of $\X$ perturbed with a {\em symmetric} random Gaussian matrix $\mathbf{N}_{\varepsilon, \delta, d, n}\in \R^{d \times d}$.
This {\em symmetric} perturbation mask is such that $\left(\mathbf{N}_{\varepsilon, \delta, d, n}\right)_{i,j} \sim \mathcal{N}(0, \omega^2)$ for $i \geq j$ where
\begin{equation}
     \label{eq:omega}
     \omega:=\omega(\varepsilon, \delta, d, n) = \frac{d+1}{n \varepsilon}\sqrt{2 \log\left(\frac{d^2 + d}{2\delta\sqrt{2\pi}}\right)} + \frac{1}{n \sqrt{\varepsilon}}.
\end{equation}
Materialising \eqref{eq:A} requires $\mathcal{O}(d^2)$ memory which is prohibitive given our complexity budgets.
We can reduce the memory requirements to $\mathcal{O}(cdn)$ by computing $\X\X^T$ incrementally in batches of size $c \leq d$.
That is, by drawing $\B{N}_{\varepsilon, \delta, d, n}^{d\times c} \in \R^{d \times c}$ and merging the {\em non-symmetric} updates 
\begin{equation}
\label{eq:streaming-privacy}
\B{A}_{k,c} = \frac{1}{b}\X
    \begin{bmatrix}
        (\X^T)_{(k-1)c+1} & \cdots & (\X^T)_{ck} 
    \end{bmatrix}
    + \B{N}_{\varepsilon, \delta, d, n}^{d \times c}
\end{equation}
using Alg.~\ref{algorithm:fastest_subspace}.
In Lemma \ref{lemma:diffprivacy} we extend the results in \citep{chaudhuri2012near} to guarantee $(\varepsilon, \delta)$-differential privacy in \eqref{eq:streaming-privacy}.
While the $\sulq$ algorithm~\citep{blum2005practical}, guarantees $(\varepsilon, \delta)$-differential privacy with non-symmetric noise matrices, it requires a variance rate of $\omega^2=\frac{8d^2\log^2(d/\delta)}{n^2 \varepsilon^2}$, which is sub-optimal with respect to the $\mathcal{O}(\frac{d^2 \log(d/\delta)}{n^2 \varepsilon^2})$ guaranteed by Lemma \ref{lemma:diffprivacy}. Lemma \ref{lemma:diffprivacy} is proved in the Appendix.
\begin{lemma}[Streaming Differential Privacy]
\label{lemma:diffprivacy}
Let $\B{X}=\left[\mathbf{x}_1 \cdots \mathbf{x}_n\right] \in \R^{d \times n}$ be a dataset with $\|\mathbf{x}_i\|\leq 1$, $\mathbf{N}_{\varepsilon, \delta, d, n} \in \R^{d \times d}$ and $\B{A}= \frac{1}{n}\B{X}\B{X}^T +\mathbf{N}_{\varepsilon, \delta, d, n}$. Let $\{\mathbf{v}_1, \dots, \mathbf{v}_d\}$ be the eigenvectors of $\frac{1}{n}\B{X}\B{X}^T$ and $\{\hat{\mathbf{v}}_1, \dots, \hat{\mathbf{v}}_d\}$ be the eigenvectors of $\B{A}$. Let
\begin{equation}
\label{eq:omega-streaming}
\omega(\varepsilon, \delta, d, n) = \frac{4d}{\varepsilon n} \sqrt{2 \log \left(\frac{d^2}{\delta \sqrt{2\pi}}\right)} + \frac{\sqrt{2}}{\sqrt{\varepsilon} n}.
\end{equation}
\begin{enumerate}
    \item If $\left(\mathbf{N}_{\varepsilon, \delta, d, n}\right)_{i,j} \sim \mathcal{N}(0, \omega^2)$ independently, then \eqref{eq:streaming-privacy}
    is $(\varepsilon, \delta)$-differentially private. 
    \item If $n \geq T_{\varepsilon, \delta, d, n} := {\omega_0^{-1}} \left[4d\varepsilon^{-1}\sqrt{2 \log\left(d^2\delta^{-1} (2\pi)^{-1/2}\right)} + \sqrt{2\varepsilon^{-1}}\right]^{-1}$, then \eqref{eq:streaming-privacy} is $(\varepsilon, \delta)$-differentially private for a noise mask with variance $\omega_0^2$.
    
    \item Iteration \eqref{eq:streaming-privacy} inherits $\modsulq$'s sample complexity guarantees, and asymptotic utility bounds on $\mathbb{E}\left[|\langle \mathbf{v}_1, \hat{\mathbf{v}}_1 \rangle|\right]$ and  $\mathbb{E}\left[\|\mathbf{v}_1 - \hat{\mathbf{v}}_1\|\right]$.
\end{enumerate}

\end{lemma}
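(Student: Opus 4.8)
The plan is to reduce the iteration \eqref{eq:streaming-privacy} to a single release of the perturbed covariance matrix $\mathbf{A}=\frac1n\mathbf{X}\mathbf{X}^T+\mathbf{N}_{\varepsilon,\delta,d,n}$ and then adapt the density-ratio argument of \citet{chaudhuri2012near} to the non-symmetric perturbation. First I would observe that stacking the blocks of \eqref{eq:streaming-privacy} column-wise yields $[\mathbf{A}_{1,c}\mid\mathbf{A}_{2,c}\mid\cdots]=\frac1n\mathbf{X}\mathbf{X}^T+\mathbf{N}$, where $\mathbf{N}$ is the concatenation of the independent blocks $\mathbf{N}^{d\times c}_{\varepsilon,\delta,d,n}$ and therefore has i.i.d.\ $\mathcal{N}(0,\omega^2)$ entries, i.e.\ it is exactly the mask of part 1 (after the per-block rescaling that normalises the stack to $\frac1n\mathbf{X}\mathbf{X}^T$). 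By Lemma~\ref{lemma:svd_partial_lemma}, the recursive $\Merge_r$ applied to the $\SVD$s of these blocks returns $\SVD_r$ of their concatenation, so the principal subspace produced by \eqref{eq:streaming-privacy}---hence by Alg.~\ref{algorithm:federated_pca}---is a deterministic function of $\mathbf{A}$; since $(\varepsilon,\delta)$-differential privacy is closed under post-processing, it suffices to show that releasing $\mathbf{A}$ is $(\varepsilon,\delta)$-differentially private.

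For the latter, fix neighbouring datasets $\mathbf{X},\mathbf{X}'$ differing only in their last column and set $\mathbf{\Delta}:=\frac1n(\mathbf{x}\mathbf{x}^T-\mathbf{x}'\mathbf{x}'^T)$. Expanding the Gaussian densities entrywise gives
\[
\log\frac{p(\mathbf{A}\mid\mathbf{X})}{p(\mathbf{A}\mid\mathbf{X}')}=\frac{1}{\omega^2}\sum_{i,j}\mathbf{N}_{ij}\mathbf{\Delta}_{ij}+\frac{1}{2\omega^2}\sum_{i,j}\mathbf{\Delta}_{ij}^2,
\]
with both sums now running over all $d^2$ index pairs rather than the $\binom{d+1}{2}$ of the symmetric case---this is precisely what turns the $\log\frac{d^2+d}{2\delta\sqrt{2\pi}}$ of \eqref{eq:omega} into the $\log\frac{d^2}{\delta\sqrt{2\pi}}$ of \eqref{eq:omega-streaming}. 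Using $\|\mathbf{x}\|,\|\mathbf{x}'\|\le1$ one bounds $\sum_{i,j}\mathbf{\Delta}_{ij}^2=\|\mathbf{\Delta}\|_F^2\le 2/n^2$ and $\sum_{i,j}|\mathbf{\Delta}_{ij}|\le\frac1n(\|\mathbf{x}\|_1^2+\|\mathbf{x}'\|_1^2)\le 2d/n$, the non-symmetric analogue of the $\le(d+1)/n$ bound behind \eqref{eq:omega}. A union bound over the $d^2$ entries then shows that $|\mathbf{N}_{ij}|\le\omega\sqrt{2\log(d^2/(\delta\sqrt{2\pi}))}$ holds simultaneously except on an event of probability at most $\delta$, on whose complement $|\sum_{i,j}\mathbf{N}_{ij}\mathbf{\Delta}_{ij}|\le\frac{2d\omega}{n}\sqrt{2\log(d^2/(\delta\sqrt{2\pi}))}$. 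Forcing the first summand of \eqref{eq:omega-streaming} to dominate $\omega^{-1}$ times this quantity and the second to dominate $\frac{1}{2\omega^2}\|\mathbf{\Delta}\|_F^2$ drives the log-density ratio below $\varepsilon$ off that event, and the standard splitting argument (handling the ``good'' and ``bad'' parts of any measurable set separately) promotes this to $(\varepsilon,\delta)$-differential privacy, proving part 1; the degradation of the leading constant from $d+1$ to $4d$ stems from the factor $2$ in $\sum_{i,j}|\mathbf{\Delta}_{ij}|\le2d/n$ together with splitting the privacy budget across the two summands.

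Part 2 is then algebra: the right-hand side of \eqref{eq:omega-streaming} equals $\frac1n\bigl(4d\varepsilon^{-1}\sqrt{2\log(d^2\delta^{-1}(2\pi)^{-1/2})}+\sqrt{2}\,\varepsilon^{-1/2}\bigr)$, so a fixed noise level $\omega_0$ meets the hypothesis of part 1 exactly when $n\ge T_{\varepsilon,\delta,d,n}$, the value obtained by solving $\omega(\varepsilon,\delta,d,n)\le\omega_0$ for $n$. For part 3 I would again invoke post-processing: the returned subspace is a function of $\mathbf{A}$, whose perturbation $\mathbf{N}$ has i.i.d.\ $\mathcal{N}(0,\omega^2)$ entries with the same variance rate---hence the same $\Theta(\omega\sqrt d)$ operator-norm scaling---as in \citet{chaudhuri2012near}; since the $\modsulq$ sample-complexity guarantee and the asymptotic estimates for $\mathbb{E}[|\langle\mathbf{v}_1,\widehat{\mathbf{v}}_1\rangle|]$ and $\mathbb{E}[\|\mathbf{v}_1-\widehat{\mathbf{v}}_1\|]$ depend on the noise only through such norms and a subspace-perturbation inequality, replacing the eigenvector (Davis--Kahan) bounds used there by the corresponding singular-vector (Wedin) bounds leaves the rates unchanged up to constants. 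I expect the main obstacle to be the bookkeeping in part 1---carrying the tail computation through with $d^2$ independent coordinates and the non-symmetric sensitivity so that the two terms of \eqref{eq:omega-streaming} come out exactly as stated---together with verifying that the block-wise assembly really reconstructs $\mathbf{A}$ with i.i.d.\ (rather than correlated) Gaussian noise, which is what licenses the post-processing reduction.
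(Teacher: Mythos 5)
Your proposal matches the paper's proof in all essentials: part 1 is established by exactly the same density-ratio argument over all $d^2$ entries (the sensitivity bounds $\sum_{i,j}|\mathbf{\Delta}_{ij}|\le 2d/n$ and $\|\mathbf{\Delta}\|_F^2=\mathcal{O}(1/n^2)$, a union bound over $d^2$ Gaussian tails giving $\gamma=\omega\sqrt{2\log(d^2/(\delta\sqrt{2\pi}))}$, and solving the resulting quadratic in $\omega$), and part 2 is the same inversion of $\omega(\varepsilon,\delta,d,n)\le\omega_0$ for $n$. The only divergences are minor: you make the block-stacking and post-processing reduction explicit where the paper leaves it implicit, and for part 3 the paper actually re-runs the Fano-packing utility bound and sample-complexity computation of Chaudhuri et al.\ with the ambient dimension changed from $d(d+1)/2$ to $d^2$, rather than appealing to a Wedin-type perturbation transfer, though the conclusion is the same.
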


Alg.~\ref{alg:fpca_edge} uses the result in Lemma \ref{lemma:diffprivacy} for $\X = \B{B} \in \R^{d \times b}$ and computes an input-perturbation in a streaming way in batches of size $c$. Therefore, the utility bounds for Alg.~\ref{alg:fpca_edge} can be obtained by setting $n = b$ in \eqref{eq:omega-streaming}.
If $c$ is taken as a fixed small constant the memory complexity of this procedure reduces to $\mathcal{O}(db)$, which is linear in the dimension.
\begin{algorithm}[htb!]
\footnotesize{
    \DontPrintSemicolon
    \SetNoFillComment
    \SetKwProg{Fn}{Function}{ is}{end}
    \KwData{$\B{B}\in \R^{d \times b}$: {\em Batch $\Y_{\{(k-1)b+1, \dots, kb\}}$} // 
    $(\wh{\mathbf{U}}_{k-1}, \wh{\mathbf{\Sigma}}_{k-1})$: {\em SVD estimate for $\Y_{\{1, \dots, (k-1)b\}}$} //
    $r$: {\em Initial rank estimate} //
    $(\alpha, \beta)$: {\em Bounds on energy, see \eqref{eq:sv_energy}} // $(\varepsilon, \delta)$: { \em DP parameters} // $r$: {\em Initial rank estimate}
    }
    \KwResult{ $(\wh{\mathbf{U}},\wh{\mathbf{\Sigma}})$, principal $r$-subspace of $\Y_{\{1, \dots, kb\}}$.}
    \Fn{$\FPCAEC_{\varepsilon, \delta, \alpha, \beta, r}(\B{B}, \wh{\mathbf{U}}_{k-1}, \wh{\mathbf{\Sigma}}_{k-1})$}{
    \tcc{Streaming $\modsulq$}
    $(\B{U}, \B{\Sigma}) \leftarrow (0,0)$\;
    \For{$\ell \in \{1, \dots, d/c\}$}{
        $\B{B}_{s} \leftarrow \frac{1}{b}\BLK(\BLK_{\{(\ell-1)c + 1, \dots, \ell c\}})^{T} + \B{N}_{\varepsilon, \delta, d, b}^{d \times c}$ such that $\left(\B{N}_{\varepsilon, \delta, d, b}^{d\times c}\right)_{i,j} \sim \mathcal{N}(0, \omega^{2})$ and $\omega$ as in \eqref{eq:omega-streaming}\;
        $(\B{U}, \B{\Sigma}) \leftarrow  \RSPCA_r(\B{B}_{s}, \B{U}, \B{\Sigma})$\;
    }
    \tcc{Subspace tracking}
    $(\wh{\mathbf{U}}', \wh{\mathbf{\Sigma}}') \leftarrow \Merge_{r}(\mathbf{U}, \mathbf{\Sigma}, \wh{\mathbf{U}}_{k-1}, \wh{\mathbf{\Sigma}}_{k-1})$\;
    $(\wh{\mathbf{U}}, \wh{\mathbf{\Sigma}}) \leftarrow \rankupdate_r^{\alpha, \beta}(\wh{\mathbf{U}}',\wh{\mathbf{\Sigma}}')$
        }
    }
    \caption{Federated PCA Edge ($\FPCAEC$)}
    \label{alg:fpca_edge} 
\end{algorithm}
A value for $\varepsilon$ can be obtained from Apple's differential privacy guidelines~\cite{applediifpriv}.
However, in our experiments, we benchmark across a wider spectrum of values.

\section{Experimental Evaluation}
\label{label:eval}
All our experiments were computed on a workstation using an AMD 1950X CPU with $16$ cores at $4.0$GHz, $128$ GB $3200$ MHz DDR4 RAM, and Matlab R2020a (build 9.8.0.1380330). 
To foster reproducibility both code and datasets used for our numerical evaluation are made publicly available at: \url{https://www.github.com/andylamp/federated\_pca}.

\subsection{Differential Privacy empirical evaluation}
To quantify the loss with the application of differential private that our scheme has we compare the quality of the projections using the MNIST standard test set~\citep{lecun2010mnist} and Wine~\cite{cortez2009modeling} datasets which contain, respectively, 10000 labelled images of handwritten digits and physicochemical data for 6498 variants of red and white wine.
To retrieve our baseline we performed the full-rank $\PCA$ on the MNIST and (red) Wine datasets and retrieved the first and second principal components, see Figs.~\ref{fig:pca_mnist}~and~\ref{fig:pca_wine}. 
Then, on the same datasets, we applied $\FPCAC$ with rank estimate $r=6$, block size $b=25$, and DP budget $(\varepsilon, \delta)=(0.1, 0.1)$.
The projections for Offline PCA, $\FPCAC$ with no DP mask, $\FPCAC$ with DP mask, and vanilla $\modsulq$ for the MNIST and (red) Wine datasets are shown in Fig.~\ref{fig:mnist_projections}.
We note that for a fair comparison with $\modsulq$, the rank estimation was disabled in this first round of experiments.
It can be seen from Fig.~\ref{fig:mnist_projections} 
that in all cases $\FPCAC$ learnt the principal subspace of Offline $\PCA$ (up to a rotation) and managed to preserve the underlying structure of the data.
In fact, in most instances it even performed better than $\modsulq$.
We note that rotations are expected
as the guarantees for our algorithm hold up to a unitary transform, see Appendix C.  
\begin{center}
    \begin{figure*}[ht]
    \centering
    \begin{subfigure}{.24\linewidth}
        \centering
        \includegraphics[scale=0.25]{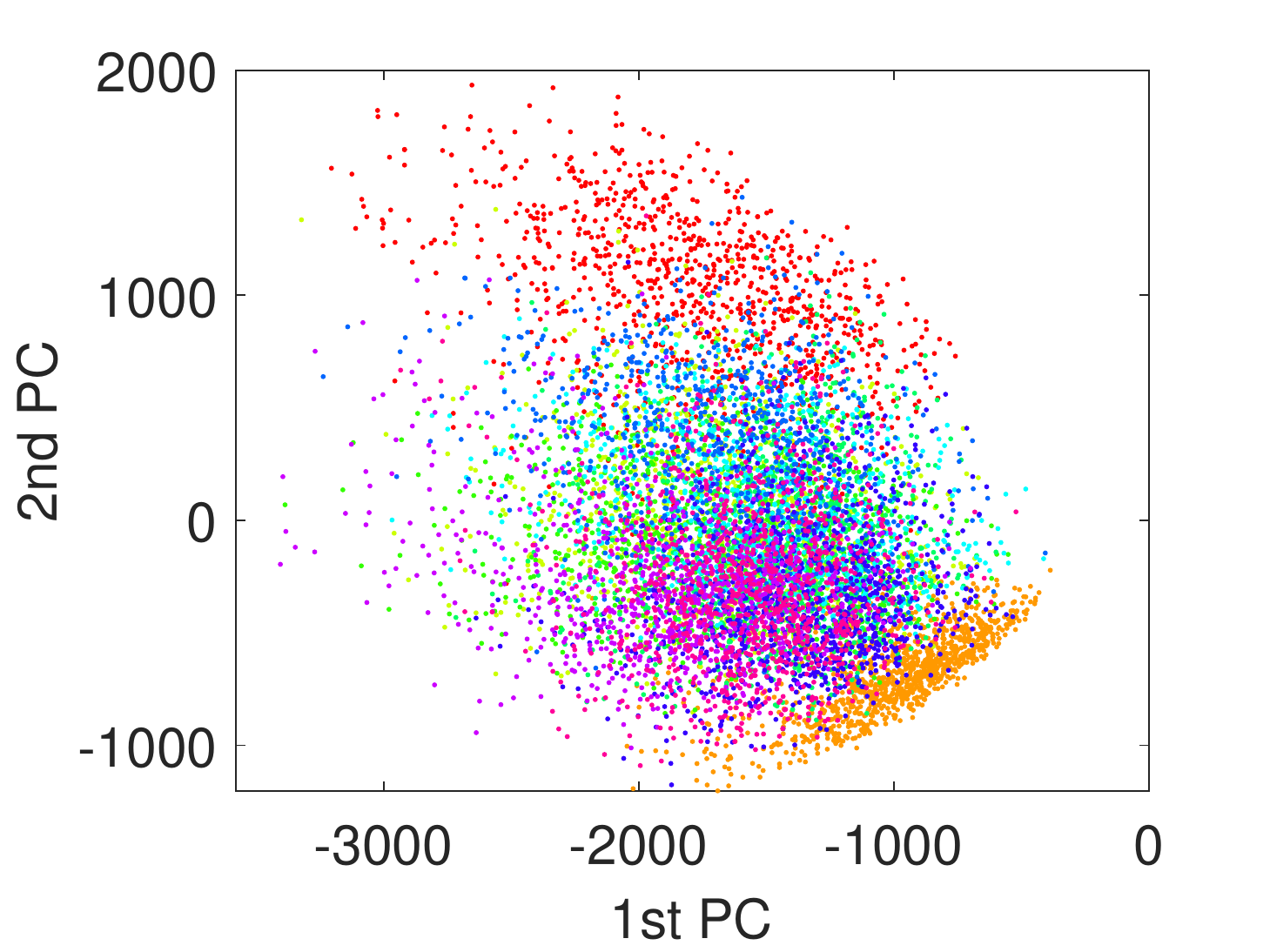}
        \caption{Offline PCA}
        \label{fig:pca_mnist}
    \end{subfigure}
    \begin{subfigure}{.24\linewidth}
        \centering
        \includegraphics[scale=0.25]{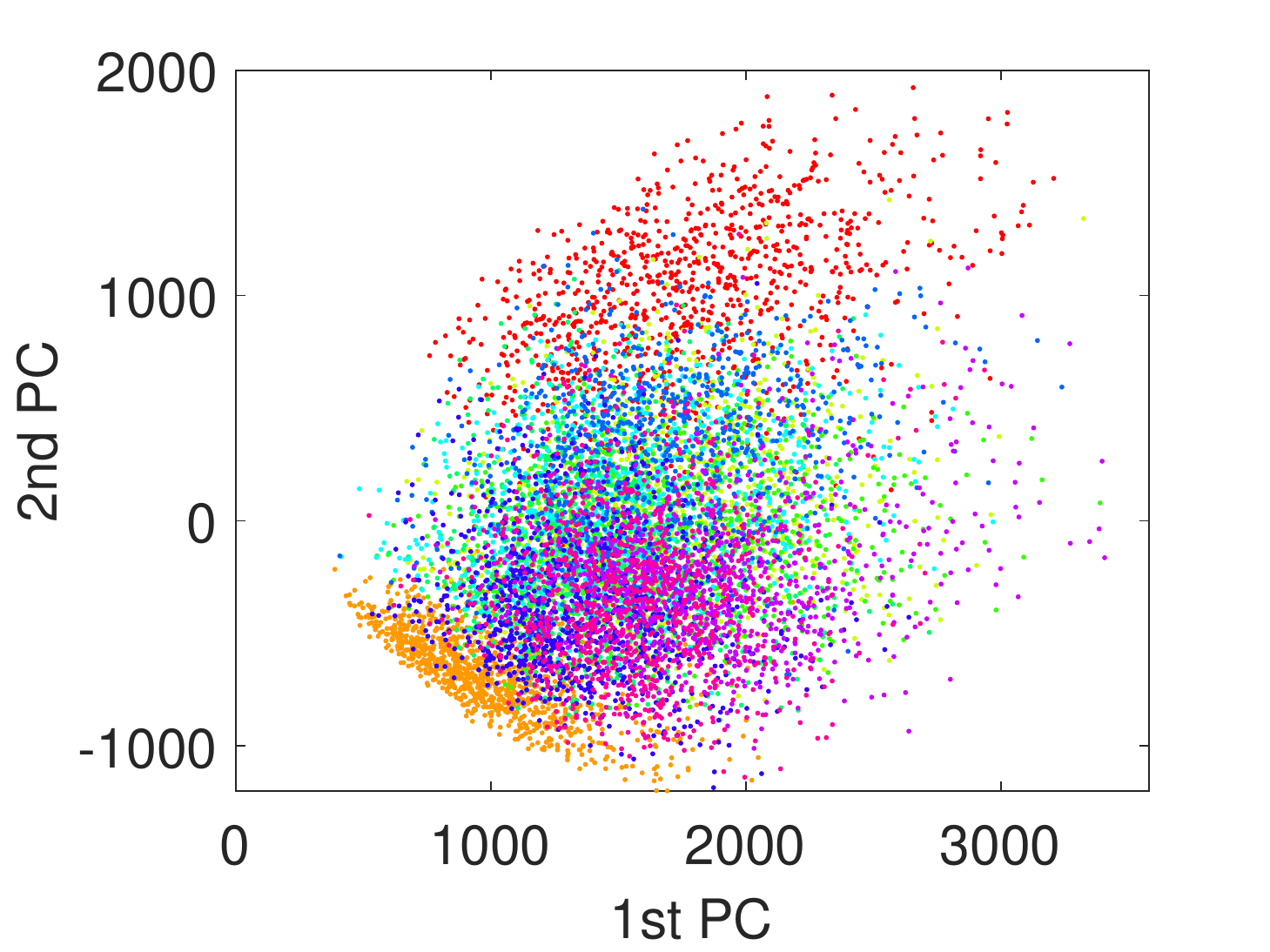}
        \caption{F-PCA (no mask)}
        \label{fig:mnist_fpca}
    \end{subfigure}
    \begin{subfigure}{.24\linewidth}
        \centering
        \includegraphics[scale=0.25]{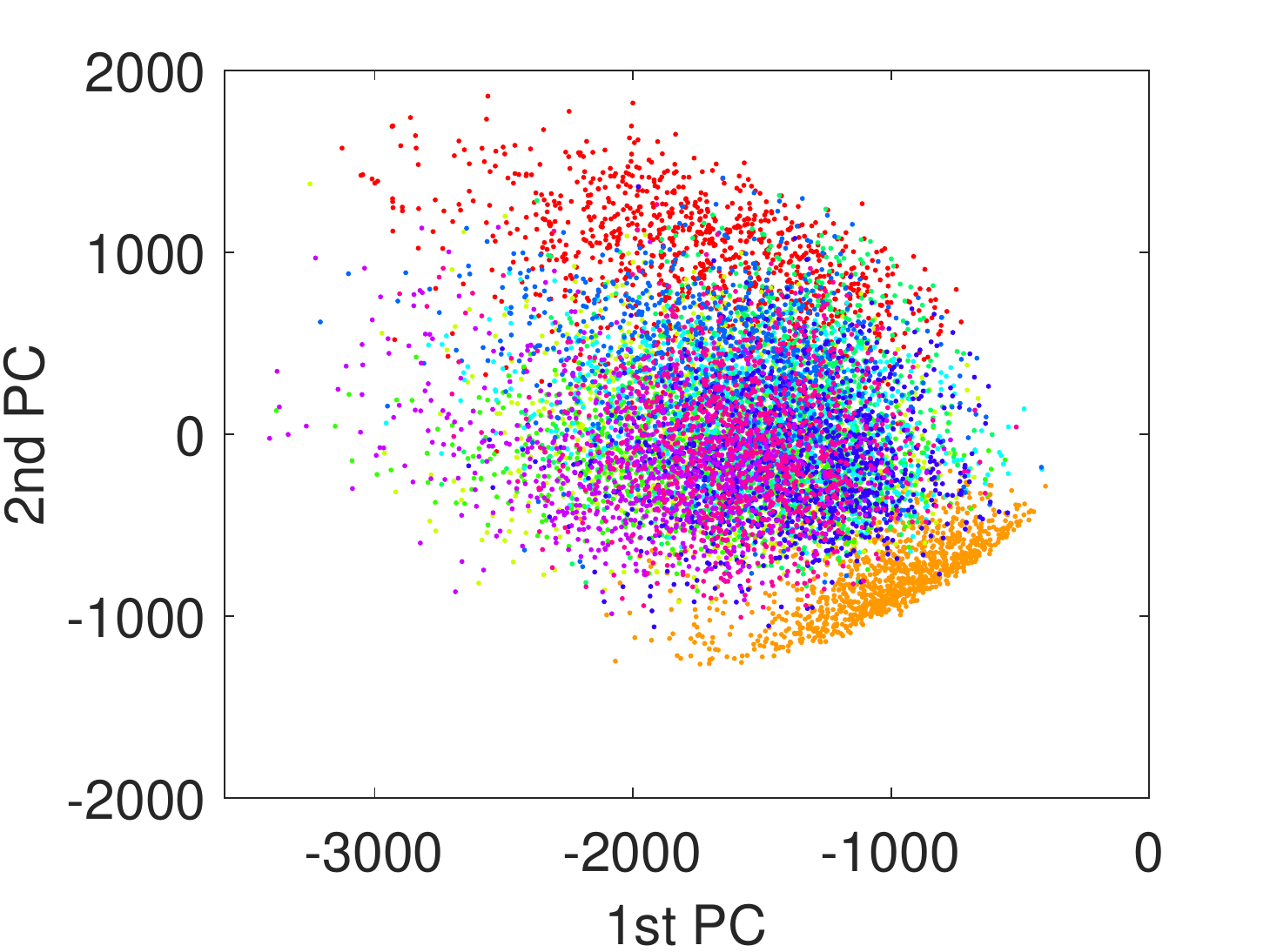}
        \caption{F-PCA (with mask)}
        \label{fig:mnist_fpca_private}
    \end{subfigure}
    \begin{subfigure}{.24\linewidth}
        \centering
        \includegraphics[scale=0.25]{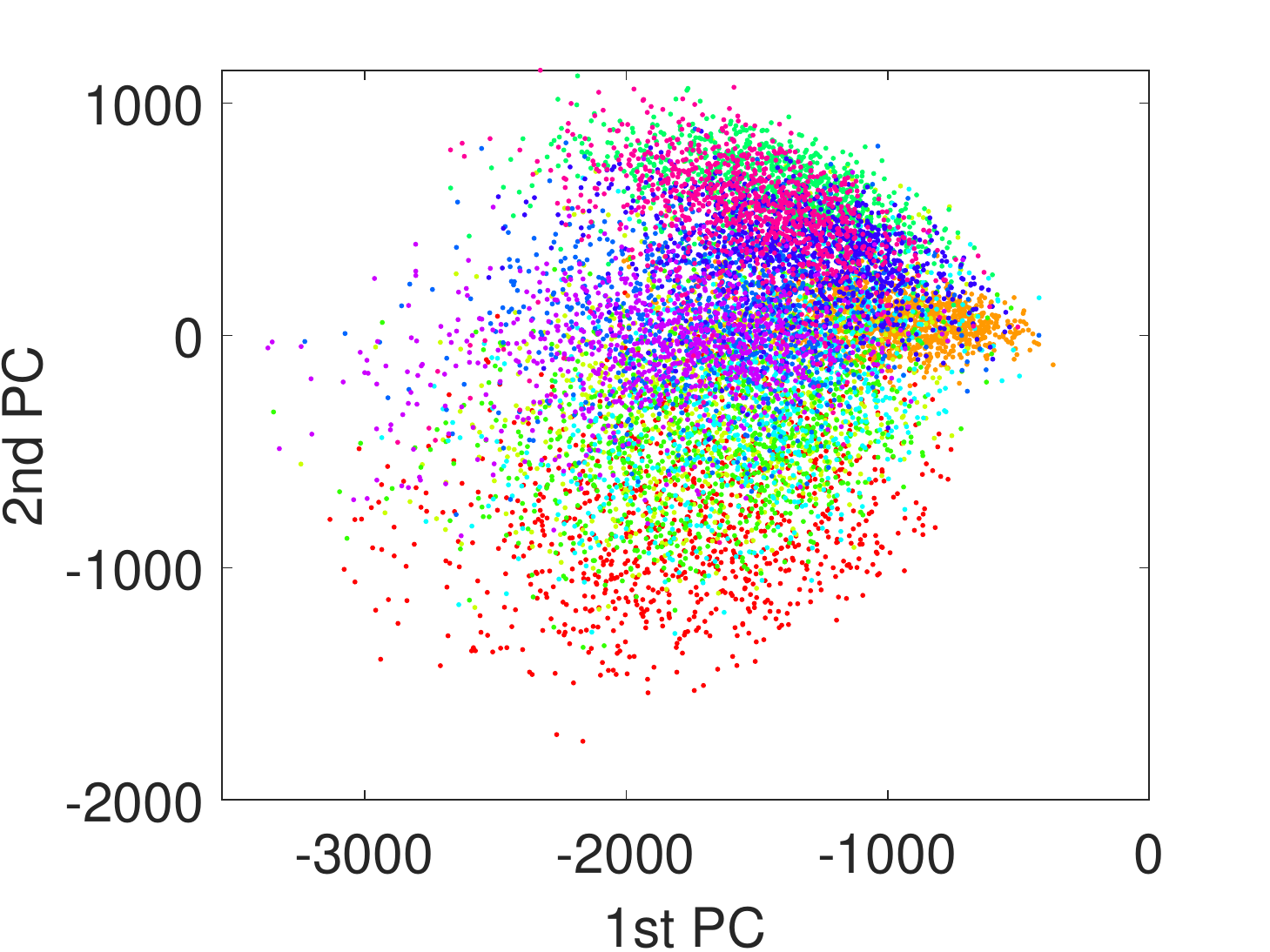}
        \caption{MOD-SuLQ}
        \label{fig:mnist_mod_sulq}
    \end{subfigure} \\
    \begin{subfigure}{.24\linewidth}
        \centering
        \includegraphics[scale=0.25]{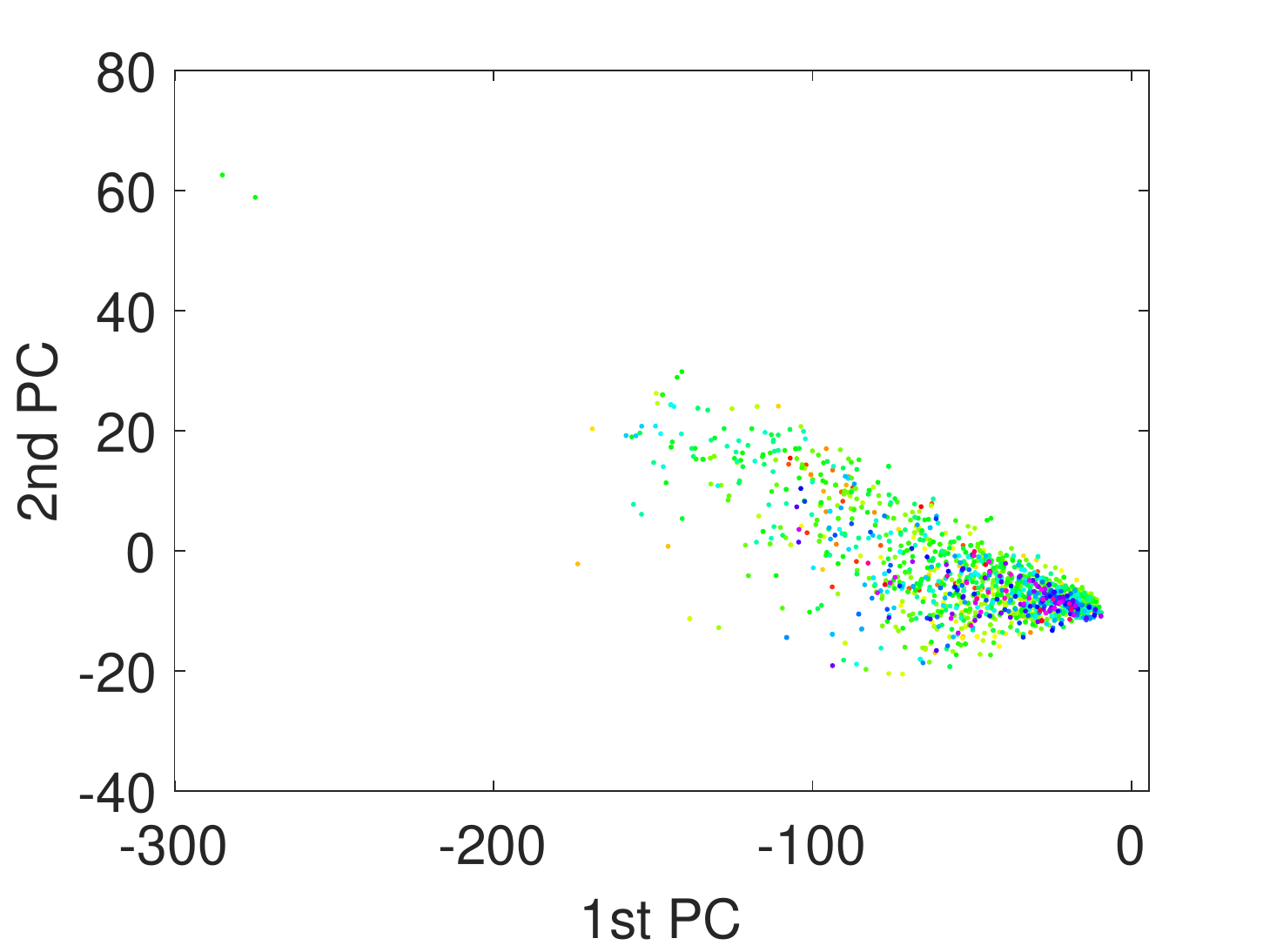}
        \caption{Offline PCA}
        \label{fig:pca_wine}
    \end{subfigure}
    \begin{subfigure}{.24\linewidth}
        \centering
        \includegraphics[scale=0.25]{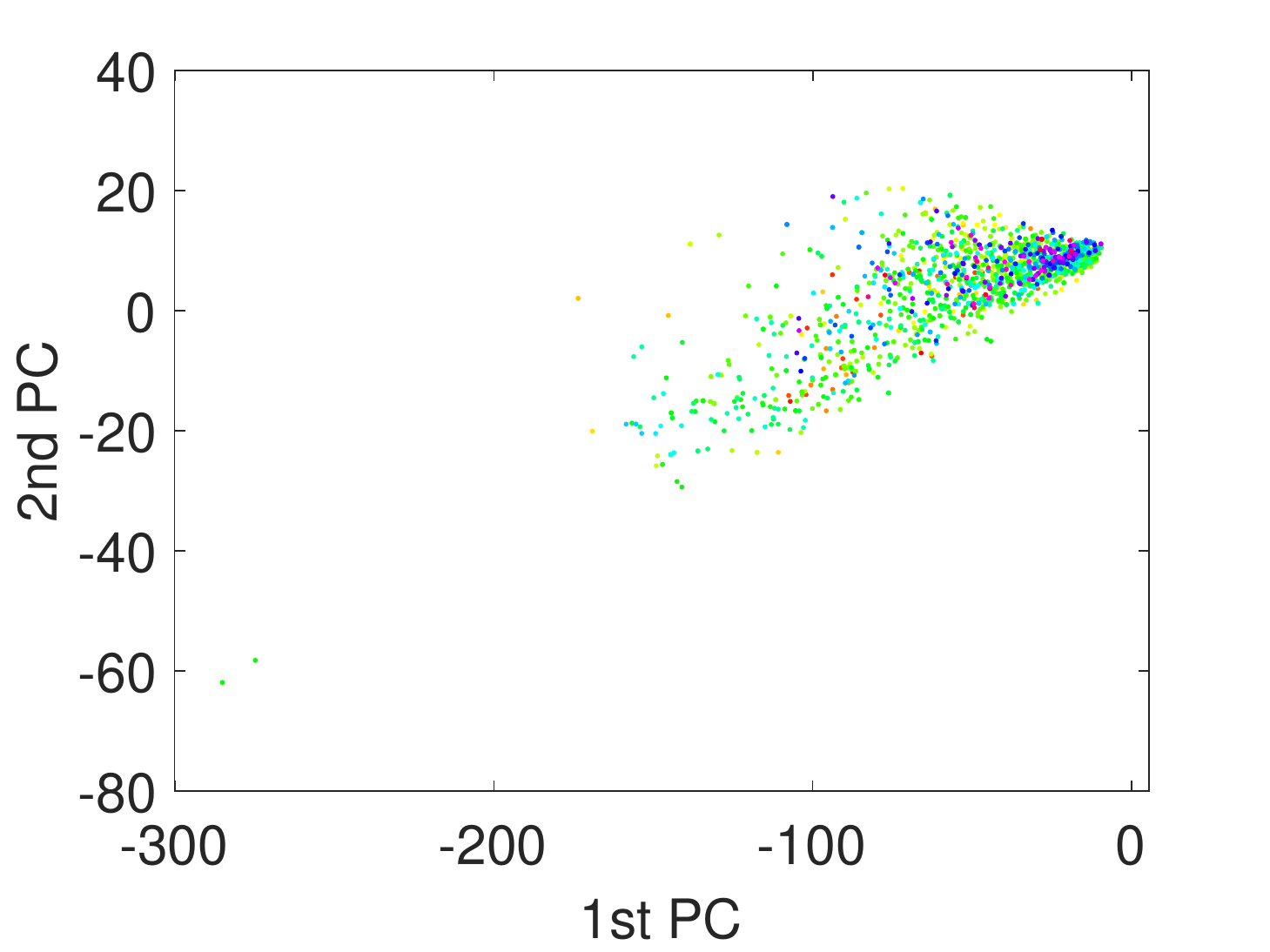}
        \caption{F-PCA (no mask)}
        \label{fig:fpca_wine_no_mask}
    \end{subfigure}
    \begin{subfigure}{.24\linewidth}
        \centering
        \includegraphics[scale=0.25]{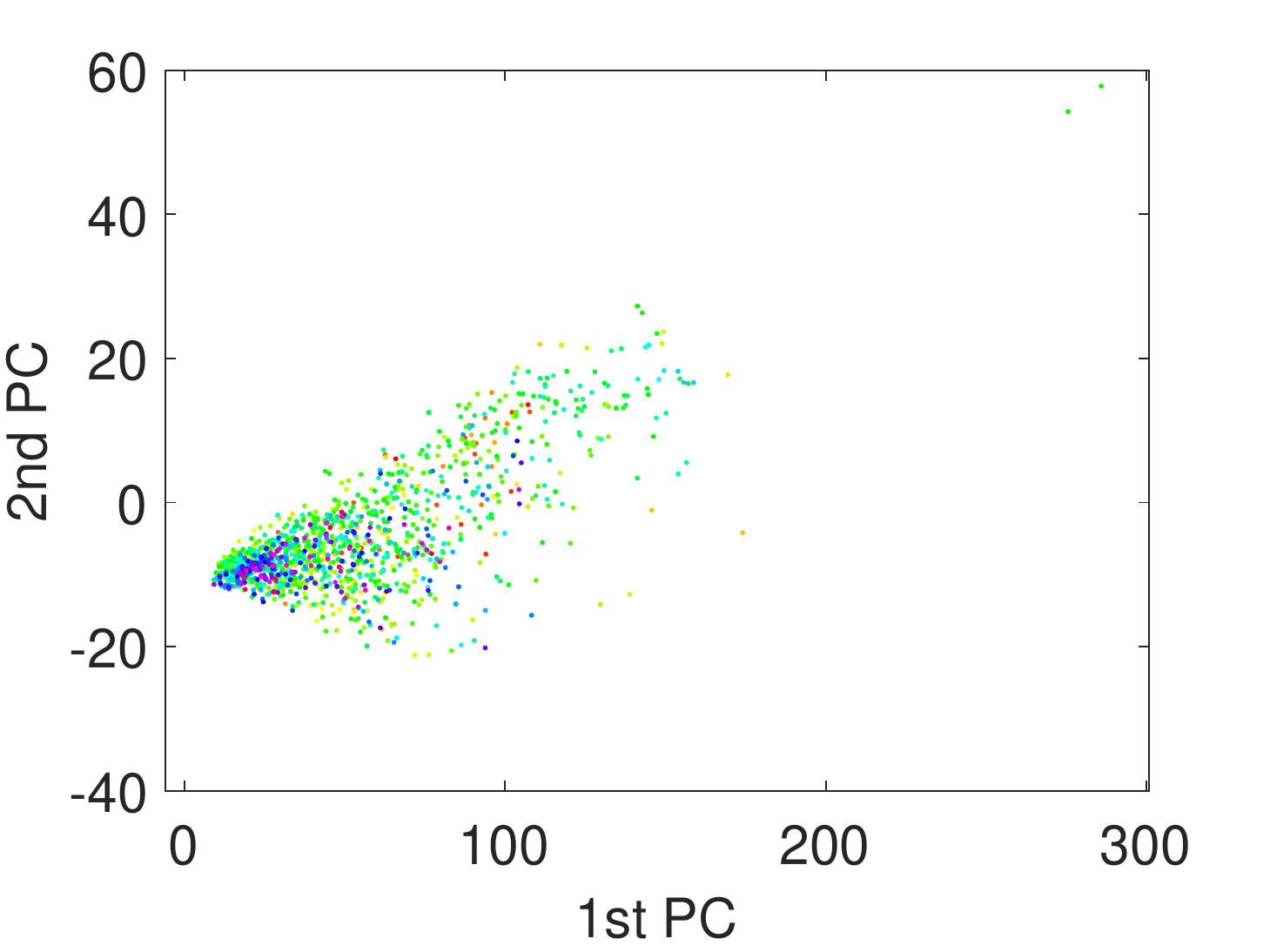}
        \caption{F-PCA (with mask)}
        \label{fig:fpca_wine_with_mask}
    \end{subfigure}
    \begin{subfigure}{.24\linewidth}
        \centering
        \includegraphics[scale=0.25]{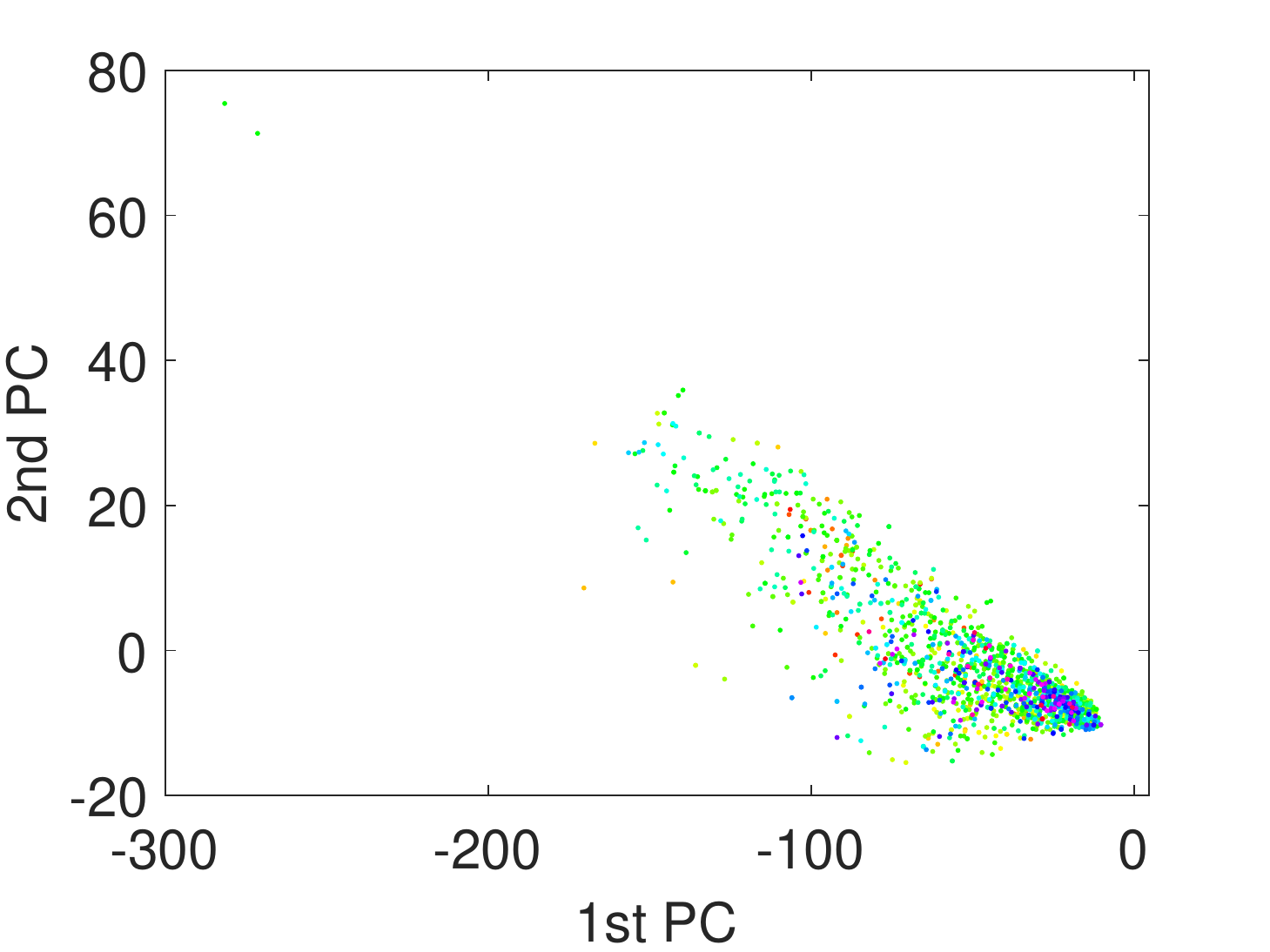}
        \caption{MOD-SuLQ}
        \label{fig:pca_wine_mod_sulq}
    \end{subfigure}
    \caption{MNIST and Wine projections, for (a,e) Offline PCA, (b,f) F-PCA without DP mask, (c,g) F-PCA with DP mask, (d,h) (symmetric) MOD-SuLQ. Computed with DP budget of $(\varepsilon, \delta)=(0.1, 0.1)$.}
    \label{fig:mnist_projections}
    \end{figure*}
\end{center}

To evaluate the utility loss with respect to the privacy-accuracy trade-off we fix $\delta=0.01$ and plot $q_{A}= \langle \mathbf{v}_{1}, \hat{\mathbf{v}}_{1} \rangle$ for $\varepsilon\in\{0.1k : k \in \{1, \dots, 40\}\}$ where $\mathbf{v}_1$ and $\hat{\mathbf{v}}_1$ are defined as in Lemma \ref{lemma:diffprivacy}.
Synthetic data was generated from a power-law spectrum\footnote{If $\Y \sim \text{Synth}(\alpha)^{d \times n}$ iff $\Y = \B{U} \B{\Sigma} \B{V}^T$ with $[\B{U}, \sim] =\mbox{QR}(\B{N}^{d\times d})$, $[\B{V}, \sim] = \mbox{QR}(\B{N}^{d \times n})$, and $\B{\Sigma}_{i,i} = i^{- \alpha}$, and $\B{N}^{m \times n}$ is an $m \times n$ matrix with i.i.d. entries drawn from $\mathcal{N}(0,1)$.} $\Y_{\alpha} \sim \text{Synth}(\alpha)^{d \times n}\subset \R^{d\times n}$ using $\alpha\in\{0.01, 0.1, .5, 1\}$.
The results are shown in~\autoref{fig:utility_loss} where we see that a larger $\varepsilon$ increases the utility, but at the cost of lower DP.
Quantitatively speaking, our experiments suggest that
the more uniform the spectrum is, the harder it is to guarantee DP and preserve the utility. 
\begin{figure}[htb!]
    \centering
    \begin{subfigure}{.32\linewidth}
        \centering
        \includegraphics[scale=0.25]{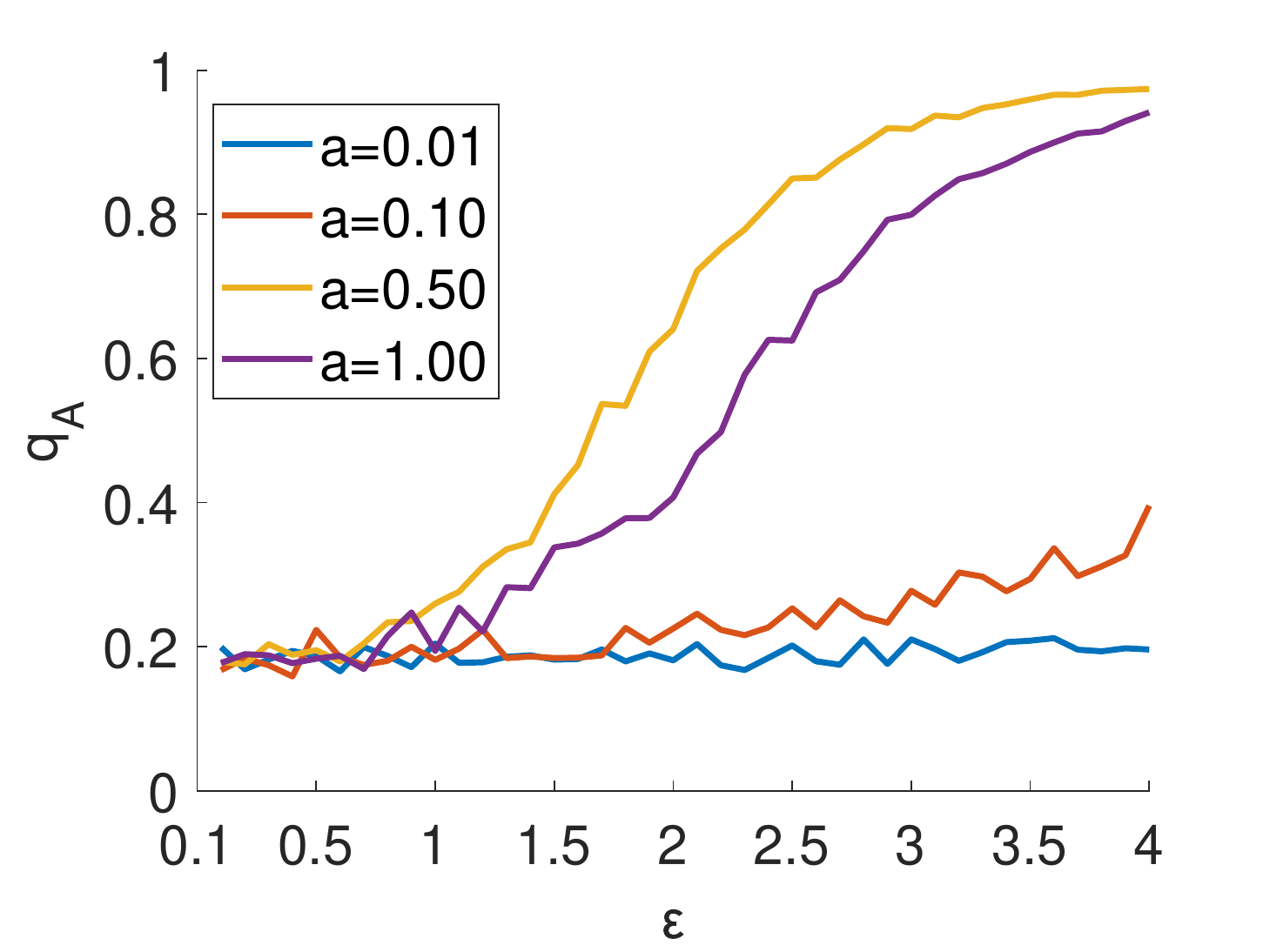}
        \caption{F-PCA (with mask).}
        \label{fig:fed_utility_loss}
    \end{subfigure}
    \begin{subfigure}{.32\linewidth}
        \centering
        \includegraphics[scale=0.25]{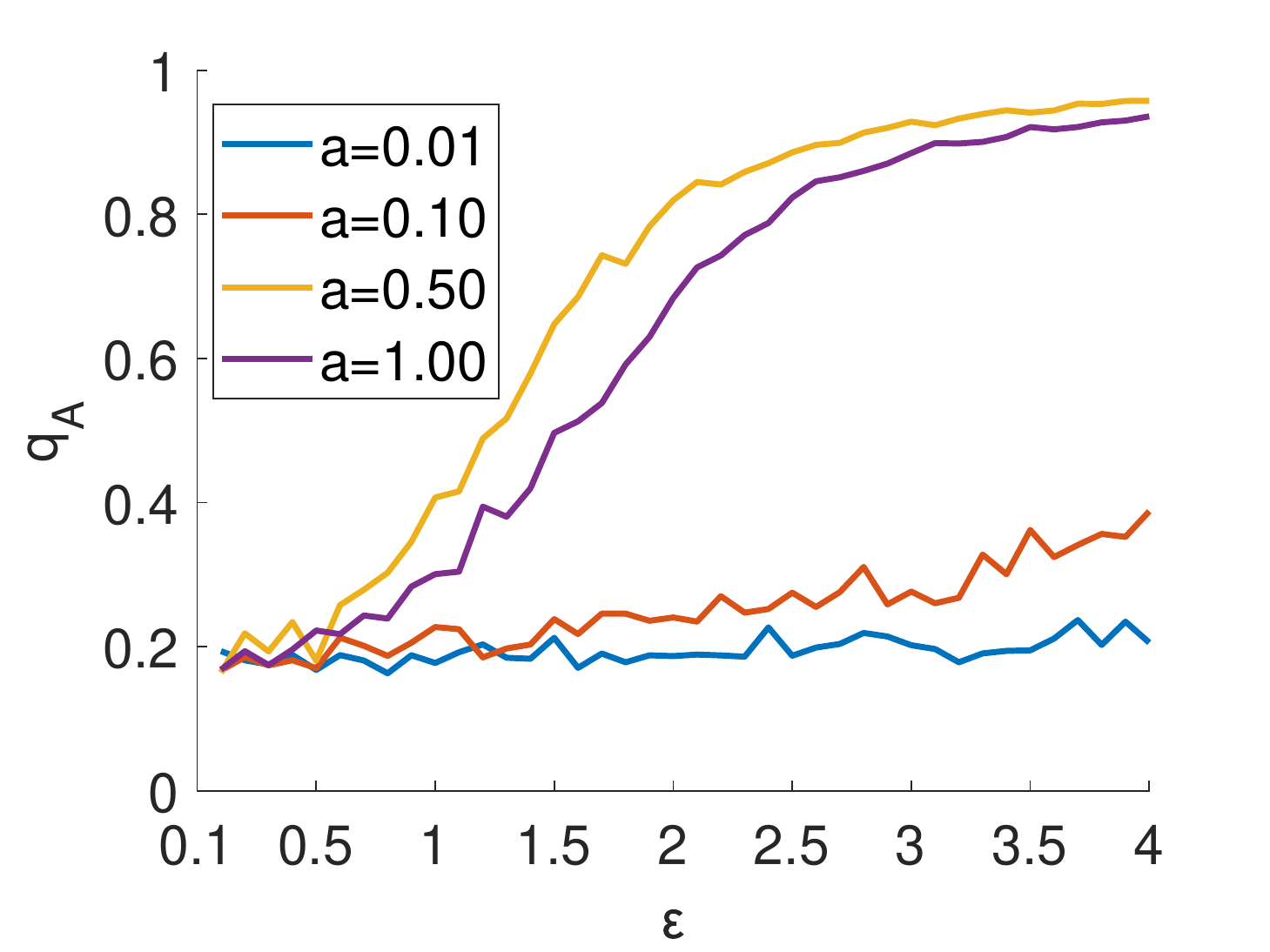}
        \caption{MOD-SuLQ (non-symmetric).}
        \label{fig:mod_sulq_non_symmetric_utility_loss}
    \end{subfigure}
    \begin{subfigure}{.32\linewidth}
        \centering
        \includegraphics[scale=0.25]{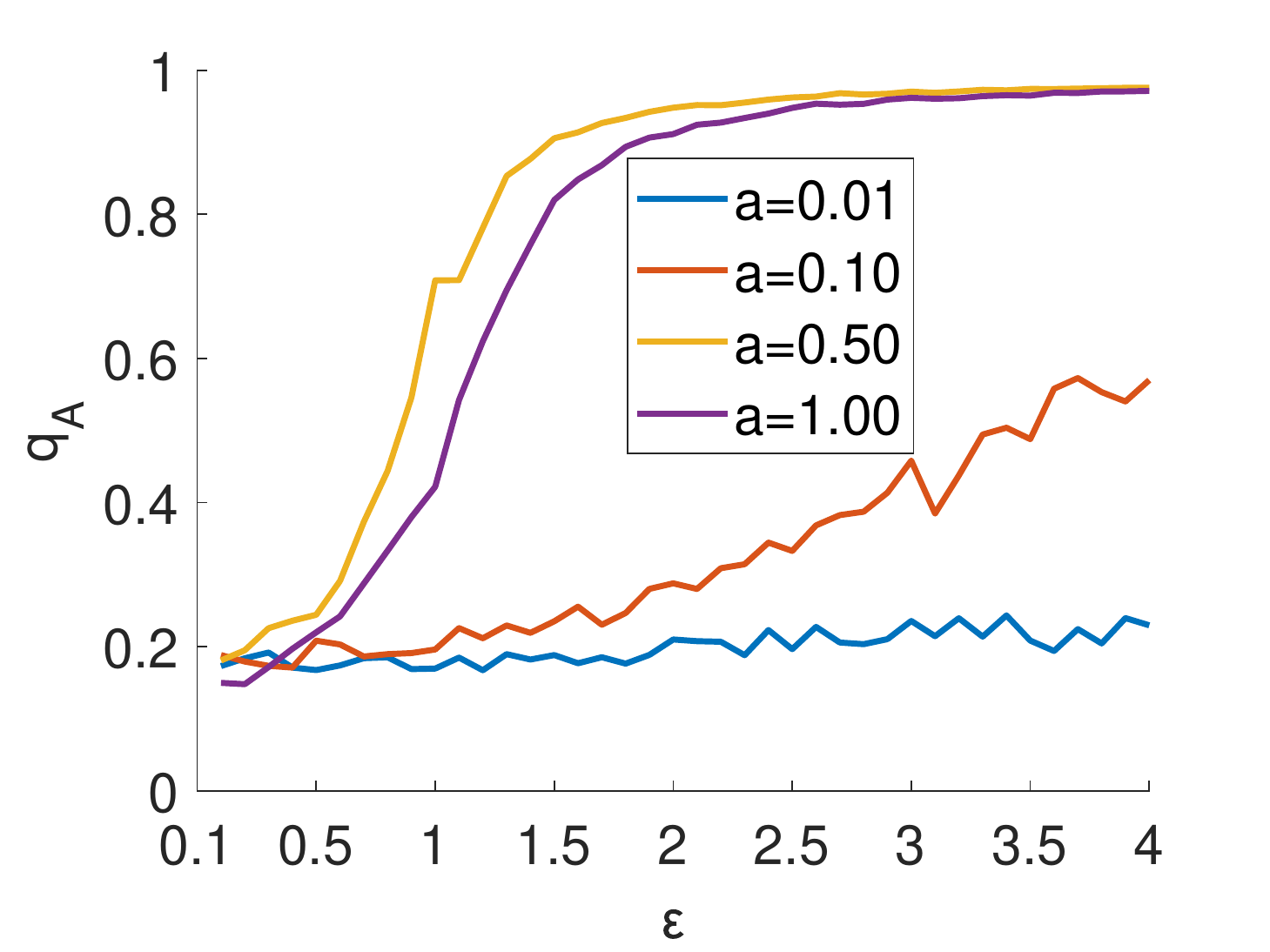}
        \caption{MOD-SuLQ (symmetric).}
        \label{fig:mod_sulq_symmetric_utility_loss}
    \end{subfigure}
    \caption{Utility loss of $q_{A}$ for (a) F-PCA, (b) non-symmetric MOD-SuLQ, and (c) symmetric MOD-SuLQ using $\delta=0.05$, $N=5$k, and $d=20$ across different $\varepsilon$ and $\Y_{\alpha} \sim \text{Synth}(\alpha)^{d \times n}$.
    }
    \label{fig:utility_loss}
\end{figure}

\subsection{Computational performance evaluation}
Figs.~\ref{fig:final_subspace_fro_mse_errors_real},~\ref{fig:final_subspace_fro_mse_errors},~\ref{fig:fpca_speed_exec_eval} evaluate the performance of $\FPCAEC$ against other streaming algorithms. 
The algorithms considered in this instance are: $\FPCAEC$ (on a single node network), GROUSE~\citep{balzano2013grouse}, Frequent Directions (FD)~\citep{desai2016improved,luo2017robust}, the Power Method (PM)~\citep{mitliagkas2014streaming}, and a variant of Projection Approximation Subspace Tracking (PAST)~\citep{yang1995projection},  named SPIRIT (SP)~\citep{papadimitriou2005streaming}.
In the spirit of a fair comparison, we run $\FPCAEC$ without its DP features, given that no other streaming algorithm implements DP.
The algorithms are tested on: (1) synthetic datasets, (2) the {\em humidity}, {\em voltage}, {\em temperature}, and {\em light} datasets of readings from \texttt{Berkeley Mote} sensors~\citep{deshpande2004model}, (3) the MNIST and Wine datasets used in the previous section.
Figs.~\ref{fig:final_subspace_fro_mse_errors_real} and \ref{fig:final_subspace_fro_mse_errors} report $\log(\text{RMSE})$ errors with respect to the offline full-rank $\PCA$ and show that $\FPCAC$ exhibits state-of-the-art performance across all datasets.
On the other hand, Fig.~\ref{fig:fpca_speed_exec_eval}
shows that the computation time of $\FPCAC$ scales gracefully as the ambient dimension $d$ grows, and even outperforms SPIRIT.
\begin{figure}[h]
    \centering
    \begin{subfigure}{.32\linewidth}
        \centering
        \includegraphics
        [scale=0.3]
        {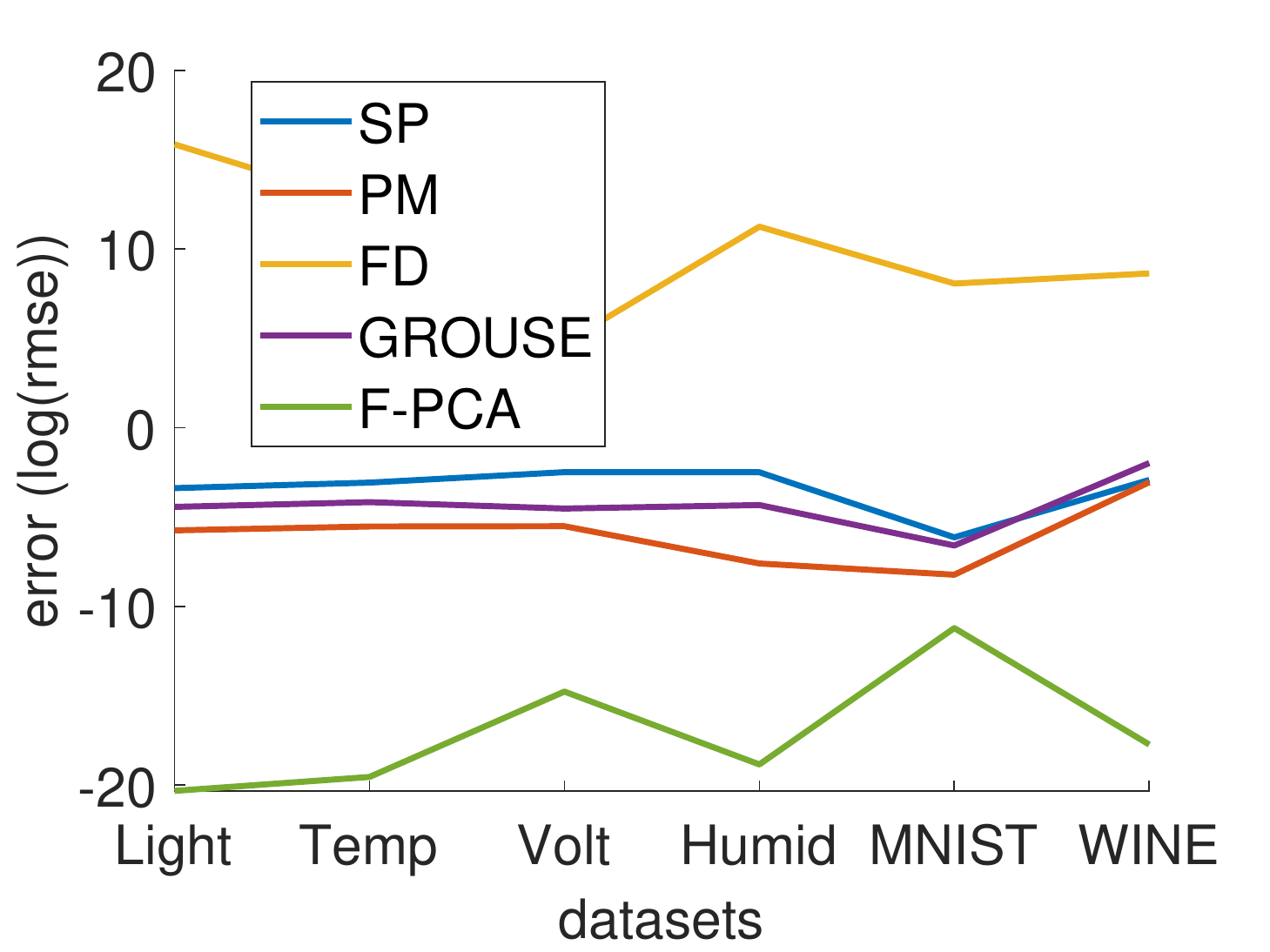}
        \caption{Errors on real datasets}
        \label{fig:final_subspace_fro_mse_errors_real}
    \end{subfigure}
    \begin{subfigure}{.32\linewidth}
        \centering
        \includegraphics
        [scale=0.3]
        {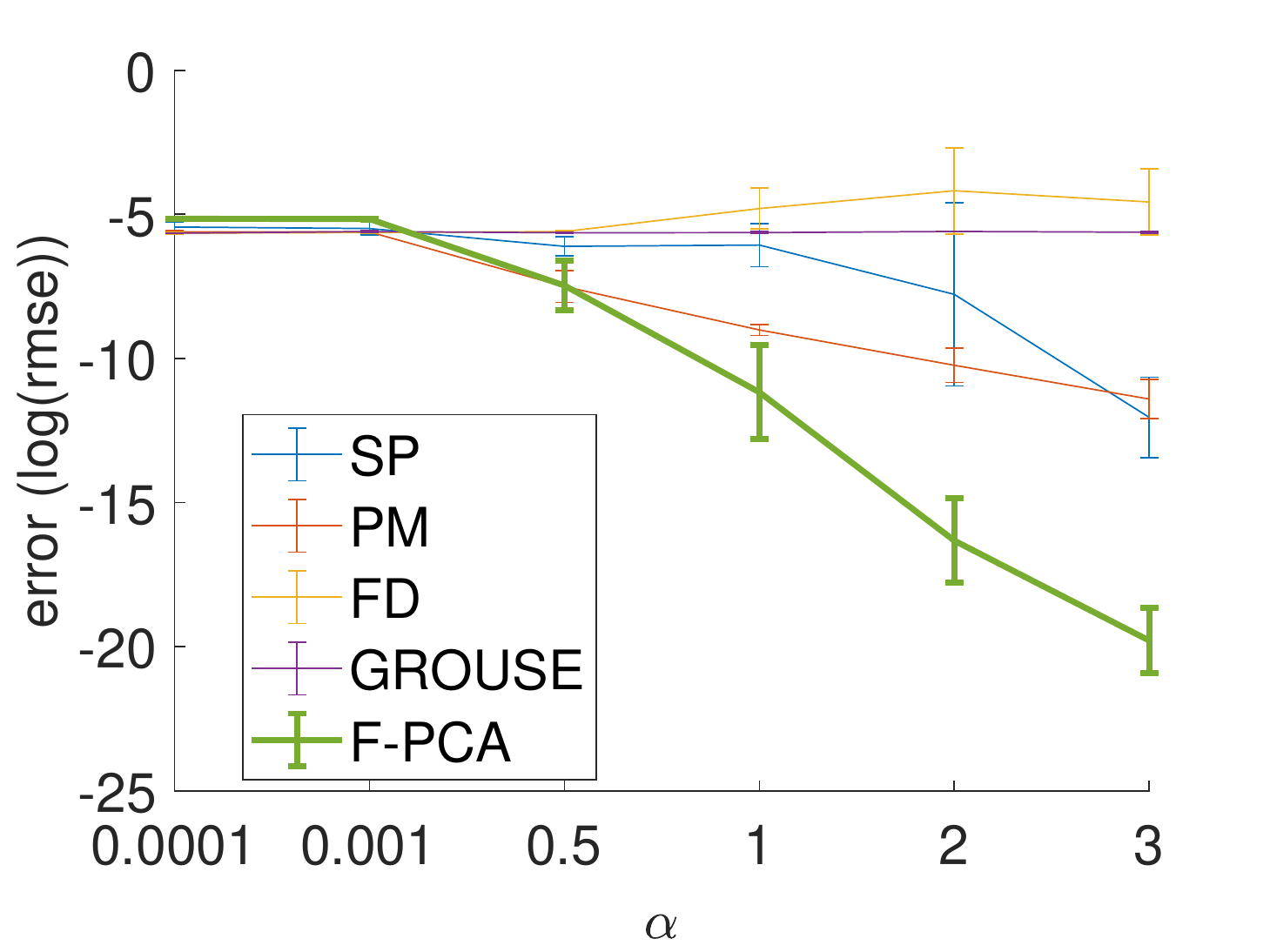}
        \caption{Errors on synthetic datasets}
        \label{fig:final_subspace_fro_mse_errors}
    \end{subfigure}
    \begin{subfigure}{.32\linewidth}
        \centering
        \includegraphics
        [scale=0.3]
        {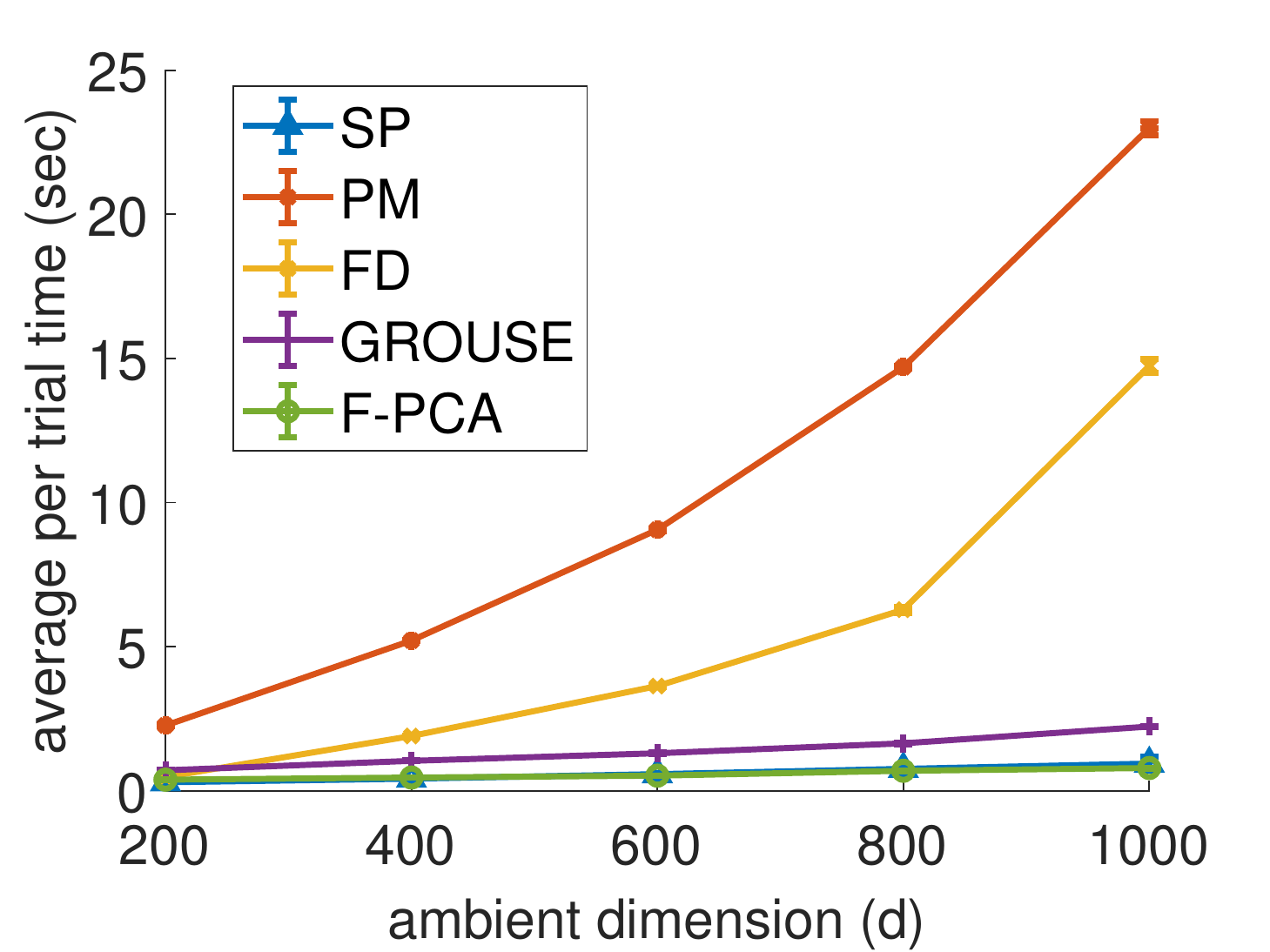}
        \caption{Average execution time}
        \label{fig:fpca_speed_exec_eval}
    \end{subfigure}\\
    \begin{subfigure}{.32\linewidth}
        \centering
        \includegraphics[scale=0.25]{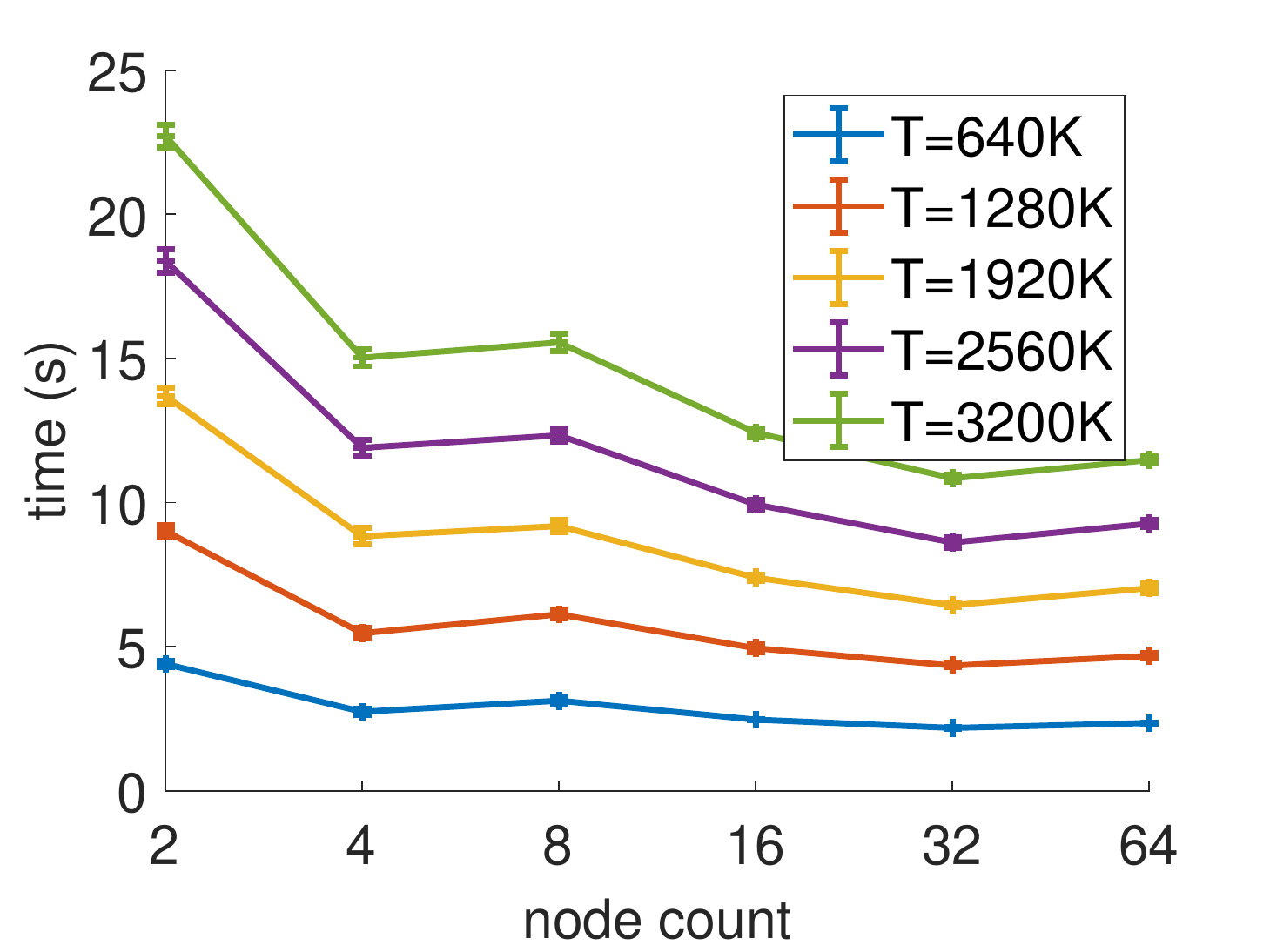}
        \caption{FPCA: Total time}
        \label{fig:fed_scaling}
    \end{subfigure}
    \begin{subfigure}{.32\linewidth}
        \centering
        \includegraphics[scale=0.25]{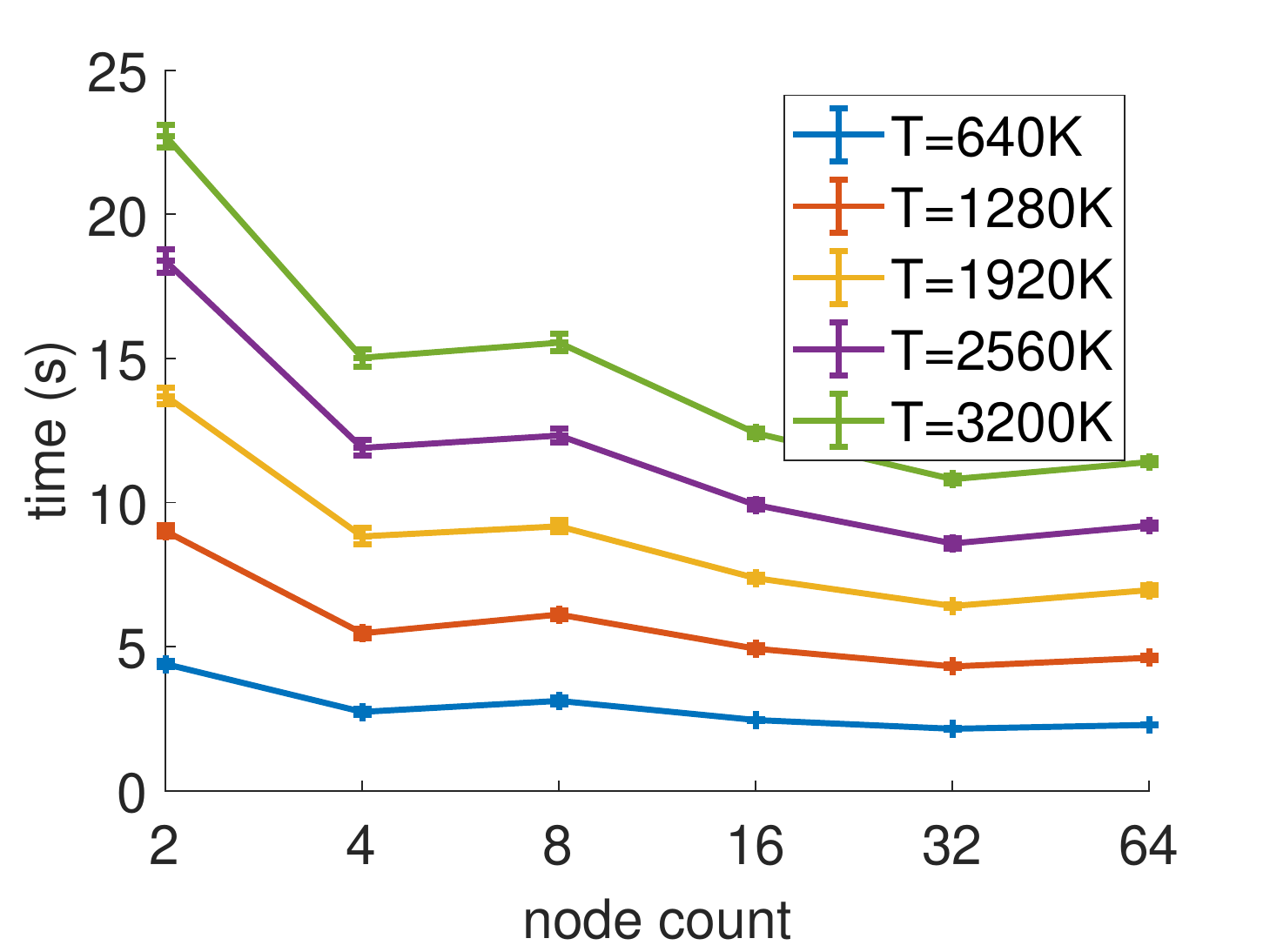}
        \caption{FPCA: PCA computation time}
        \label{fig:fed_pca_time}
    \end{subfigure}
    \begin{subfigure}{.32\linewidth}
        \centering
        \includegraphics[scale=0.25]{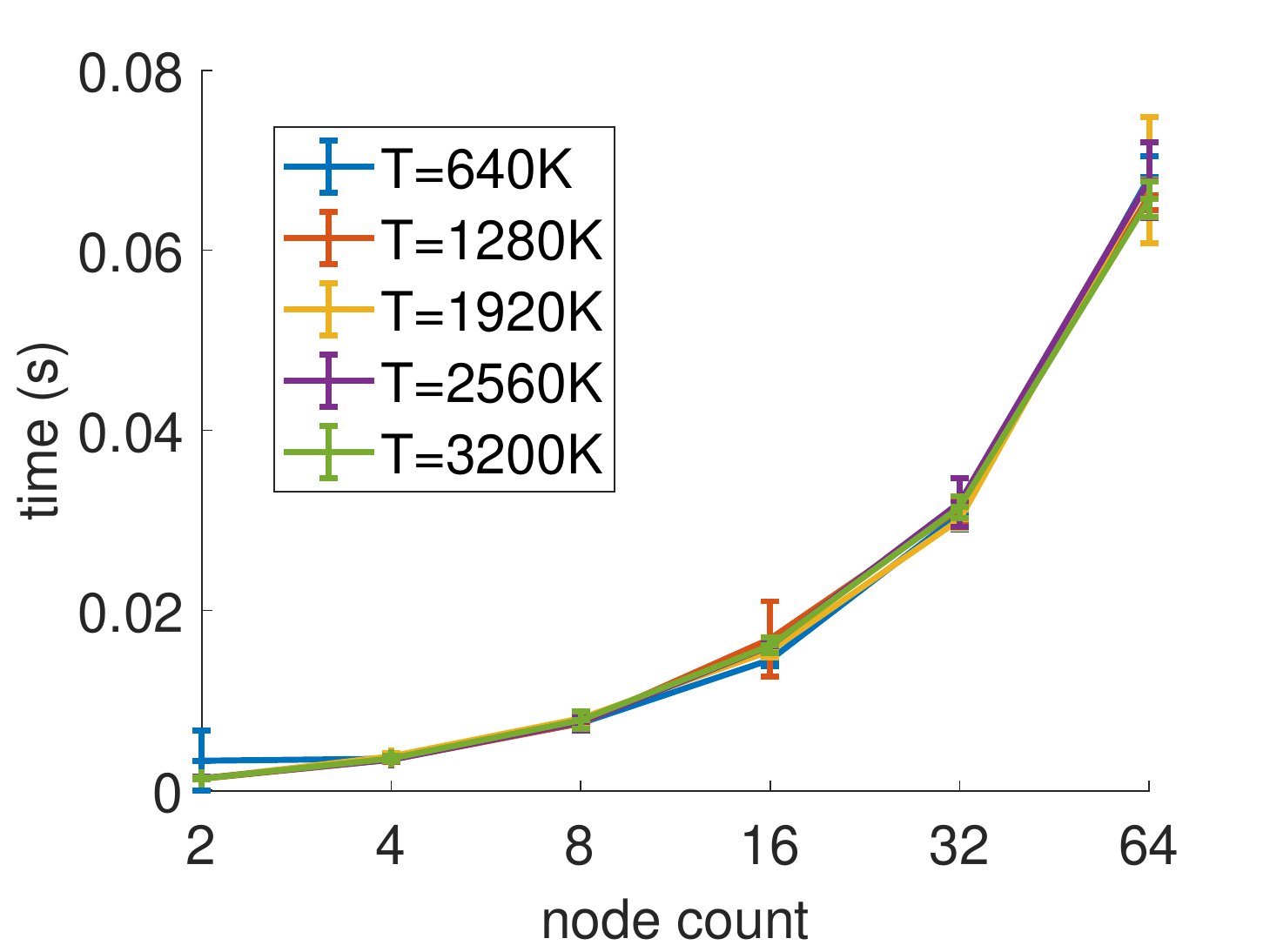}
        \caption{FPCA: Merging time}
        \label{fig:fed_merge_time}
    \end{subfigure}
    \caption{(a)-(c) Approximation and execution benchmarks against other streaming algorithms for a single-node network and without DP masks, (d)-(f) Computational scaling of $\FPCAC$ on multi-node networks with binary-trees of depth $\ell=\log_2(\mbox{node count})$.}
    \label{fig:fed_pca_main_text_performance}
\end{figure}

Figs.~\ref{fig:fed_scaling}, \ref{fig:fed_pca_time}, \ref{fig:fed_merge_time}
show the evaluation of $\FPCAC$ in a simulated federated computation environment. 
Specifically, they show the average execution times required to compute $\PCA$ on a dataset $\Y_{\alpha} \sim \text{Synth}(\alpha)^{d \times N}$
when fixing $d=10^3$ and varying $n\in\{640\text{k},1.28\text{M},1.92\text{M},2.56\text{M},3.2\text{M}\}$.
Fig.~\ref{fig:fed_scaling} shows the total computation time of the federated computation, while Figs.~\ref{fig:fed_pca_time} and~\ref{fig:fed_merge_time} show respectively the time spent computing PCA, and merging subspaces.
Fig.~\ref{fig:fed_scaling} shows a regression after exceeding the number of physical cores in our machine.
However, the amortised cost shows that with sufficient resources the federation can scale horizontally. 
More details can be found in Appendix~\ref{apx:fed_eval_details}.

\section{Discussion \& Conclusions}
\label{discussion_conclusions}
In this work, we introduced a federated streaming and differentially private algorithm for computing $\PCA$.
Our algorithm advances the state-of-the-art from several fronts: It is time-independent, asynchronous, and differentially-private.
DP is guaranteed by extending the results in \citep{chaudhuri2012near} to the streaming and non-symmetric setting. We do this while preserving the same nearly-optimal asymptotic guarantees provided by $\modsulq$.
Our algorithm is complemented with several theoretical results that guarantee bounded estimation errors and robustness to permutations in the data.
We have supplemented our work with a wealth of numerical experiments that show that shows that $\FPCA$ compares favourably against other methods in terms of convergence, bounded estimation errors, and low memory requirements.
An interesting avenue for future work is to study Federated $\PCA$ in the setting of missing values while preserving differential privacy.

\section{Broader Impact}

PCA is an ubiquitous and fundamental tool in data analysis and machine learning pipelines and also has important societal applications like {\em poverty measurement}.
Computing PCA on large-scale data is not only challenging from the computational point of view, but also from the {\em public policy} point of view. Indeed, new regulations around data ownership and privacy like GDPR have imposed restrictions in data collection and storage.
Our work allows for large-scale decentralised computation of PCA in settings where each compute node
- be it large (servers), thin (mobile phones), or super-thin (cryptocurrency blocks) - contributes in an independent an asynchronous way to the training of a global model, while ensuring the ownership and privacy of the data.
However, we note that our algorithmic framework is a {\em tool} and, like all tools, is subject to
misuse.
For example, our framework could allow malicious users to extract embeddings out of user data to be used for surveillance, user fingerprinting, and many others not so desirable use-cases.
We firmly believe, however, that the positives outweigh the negatives and this work has the potential to unlock information from decentralised datasets for the benefit of society, all while guaranteeing high-quality outputs and stringent privacy properties.

\section{Acknowledgements}

This work was supported by The Alan Turing Institute under grants: TU/C/000003, {TU/B/000069}, and {EP/N510129/1}.

\bibliographystyle{plain}
\bibliography{refs}

\clearpage
\newpage

\appendix

\section*{Supplementary Material}

This comes as supplementary material to the paper \textit{Federated Principal Component Analysis}. %
The appendix is structured as follows: 
\begin{enumerate}
    \item $\FPCA$'s local update guarantees,
    \item $\FPCA$'s differential privacy properties,
    \item In-depth analysis of algorithm's federation,
    \item Additional evaluation and discussion.
\end{enumerate}
Furthermore, we complement our theoretical analysis with additional empirical evaluation on synthetic and real datasets which include details on memory consumption. 

\section{Local Update Guarantees}
\label{apx:local_update_guarantees}

We note that the local updating procedure in Algorithm \ref{alg:fpca_edge} inherits some theoretical guarantees from~\citep{eftekhari2019moses}.
We leverage on these to provide a bound for the adaptive case.
Specifically, let $\mu$ be an {\em unknown} probability distribution supported on $\R^d$ with zero mean.
The informal objective is to find an $r$-dimensional subspace $\U$ that provides the {\em best approximation} with respect to the mass of $\mu$. 
That is, provided that $y$ is drawn from $\mu$, the target is to find an 
$r$-dimensional subspace $\U$ that minimises the \emph{population risk}.
This is done by solving 
\begin{equation}
\label{eq:direct}
\min_{\U\in\GR(d,r)} \underset{\mathbf{y}\sim \mu}{\E}\l\| \mathbf{y} - \P_{\U} \mathbf{y}\r\|_2^2
\end{equation} 
where the Grassmanian $\mathbb{G}(d,r)$ is the manifold of all $r$-dimensional subspaces in $\R^d$ and $\P_{\U} \in \R^{d \times d}$ is the orthogonal projection onto $\U$.
Unfortunately, the value of $\mu$ is 
unknown and cannot be used to directly solve \eqref{eq:direct}, but provided we have access 
to a block of samples $\{ \mathbf{y}_{t} \}^{\tau}_{t=1} \in \R^{d}$ 
that are independently drawn from $\mu$, then \eqref{eq:direct} can be
reformulated using the \textit{empirical risk} by
\begin{equation}
\label{eq:empirical}
 \min_{\U\in\GR(d,r)} \frac{1}{\tau}\sum_{t=1}^{\tau} \l\| \mathbf{y}_t - \P_{\U}\mathbf{y}_t \r\|_2^2.
\end{equation}
Given that $\sum_{t=1}^{\tau}\|\mathbf{y}_t - \P_{\U}\mathbf{y}_t\|_2^2 = \|\Y_{\tau} - \P_{\U}\Y_{\tau}\|_F^2$, it follows by the EYM Theorem \citep{eckart,mirsky}, that $\P_{\U}\Y_{\tau}$ is the {\em best} rank-$r$ approximation to $\Y_{\tau}$ which is given by $\hat{\Y}_{\tau} = \SVD_r(\Y_{\tau})$.
Therefore, $\mathcal{U} =  \operatorname{span}(\hat{\mathbf{Y}}_{\tau})$, which implies that $\|\Y_{\tau} - \P_{\mathcal{U}}\Y_{\tau}\|_F^2 = \|\Y_{\tau} - \hat{\Y}_{\tau}\|_F^2 = \rho_r^2(\Y_{\tau})$, so the solution of \eqref{eq:empirical} equals $\rho_r^2(\Y_{\tau})/\tau$. For
completeness the theorem is shown below.

\begin{thm}[\citep{eftekhari2019moses}]
\label{thm:stochastic-result}
Suppose $\{\mathbf{y}_t\}_{t=1}^{\tau} \subset\R^{d}$  are 
independently drawn from a zero-mean Gaussian distribution 
with covariance matrix $\B{\Xi}\in\R^{d \times d}$
and form $\Y_{\tau} = \left[\mathbf{y}_1 \cdots \mathbf{y}_{\tau}\right]\in \R^{d \times \tau}$.
Let $\lambda_{1}\ge \cdots \ge \lambda_{d}$ be the eigenvalues of $\B{\Xi}$ and $\rho_r^2=\rho_r^2(\Xi)$ be its residual.
Define 
{\footnotesize\begin{equation}
\rate_r = \frac{\lambda_1}{\lambda_r} +  \sqrt{\frac{2\alpha \rho_r^2}{p^{\frac{1}{3}}\lambda_r}},
\label{eq:spect of guassian thrm}
\end{equation}}
Let $\wh{\Y}_{\tau}$ be defined as in \eqref{eq:svdr_1}, $\mathcal{U} = \operatorname{span}(\wh{\Y}_{\tau})$ and $\alpha,p, c$ be constants such that $1\le \alpha\le \sqrt{\tau/\log \tau}$, $p>1$ and $c>0$.
Then, if $b \geq \max(\alpha p^{\frac{1}{3}}r({p^{\frac{1}{6}}}-1)^{-2}, c\alpha r)$ and $\tau \geq p \eta_r^2 b$, it holds, with probability at most $\tau^{-c\alpha^2}+ e^{-c\alpha r}$ that 
\begin{equation*}
\frac{\| \Y_{\tau} - \wh{\Y}_{\tau} \|_{F}^{2}}{\tau} \;\lesssim \;G_{\alpha, b, p, r, \tau}
\end{equation*}
\begin{equation*}
\underset{\mathbf{y}\sim \mu}{\E} \|\mathbf{y} - \P_{{\mathcal{U}}} \mathbf{y}\|_2^2 \;\lesssim \; G_{\alpha, b, p, r, \tau} + \alpha (d-r)\lambda_1 \sqrt{\frac{ {\log \tau} }{\tau}}
\end{equation*}

where 

{\footnotesize
\[
G_{\alpha, b, p, r, \tau}
=
\frac{\alpha p^{\frac{1}{3}} 4^{p\eta_r^2} }{(p^{\frac{1}{3}}-1)^2} \; \min\l (   \frac{\lambda_1}{\lambda_r} \rho_r^2,  r\lambda_1+\rho_r^2\r) \; \l(\frac{\tau}{p\eta^2_r b} \r)^{p\eta_r^2-1}
\]
}
\end{thm}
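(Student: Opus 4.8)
The plan is to follow the streaming block-SVD analysis of~\citep{eftekhari2019moses}: a \emph{deterministic} error-propagation bound for the iteration~\eqref{eq:svdr_1}, a \emph{probabilistic} control of the per-block spectra of i.i.d.\ Gaussian data, and finally a generalisation step passing from empirical to population risk.

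First I would establish a deterministic recursion for the iterates. Let $\wh\Y_{kb}$ be the output after processing the $k$-th block of $b$ columns, call it $\B{W}_k$. The Eckart--Young--Mirsky theorem, together with a perturbation argument (sub-multiplicativity of the rank-$r$ truncation and the triangle inequality in Frobenius norm), gives a bound of the form
\[
\| \Y_{kb} - \wh\Y_{kb} \|_F^2 \;\le\; (1+c_k)\,\| \Y_{(k-1)b} - \wh\Y_{(k-1)b} \|_F^2 + \rho_r^2(\B{W}_k),
\]
where $c_k$ is governed by the ratio between discarded and retained energy in the $k$-th truncation, hence ultimately by the spectral gap through $\eta_r$. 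Unrolling over the $\lceil\tau/b\rceil$ blocks produces the product $\prod_k(1+c_k)$, and controlling it under the hypothesis $\tau \ge p\eta_r^2 b$ is what yields the factor $4^{p\eta_r^2}\,(\tau/(p\eta_r^2 b))^{p\eta_r^2 - 1}$ in $G_{\alpha,b,p,r,\tau}$; I expect this compounding to be the main obstacle, since each $c_k$ carries the condition-number amplification $\lambda_1/\lambda_r$ and the product must be kept from exceeding the claimed rate.

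Second, I would make this high-probability. Since $\B{W}_k$ has $b$ i.i.d.\ $\mathcal{N}(0,\B{\Xi})$ columns, Gaussian covariance concentration --- e.g.\ a Bernstein-type bound for sums of independent rank-one matrices together with an $\varepsilon$-net over the unit sphere --- shows that for $b\gtrsim\alpha r$ the singular values $\sigma_i(\B{W}_k)$ concentrate near $\sqrt{b\lambda_i}$ and $\rho_r^2(\B{W}_k)\lesssim b\rho_r^2$, each on an event of probability at least $1-\tau^{-c\alpha^2}-e^{-c\alpha r}$. A union bound over the blocks (absorbed into constants), plugged into the deterministic recursion and combined with $\rho_r^2(\Y_\tau)\lesssim\tau\rho_r^2$, then delivers the stated failure probability and the empirical bound $\|\Y_\tau-\wh\Y_\tau\|_F^2/\tau\lesssim G_{\alpha,b,p,r,\tau}$.

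Finally, for the population risk, with $\U=\spann(\wh\Y_\tau)$ I would write
\[
\E_{\mathbf{y}\sim\mu} \| \mathbf{y} - \P_\U \mathbf{y} \|_2^2 \;\le\; \tfrac1\tau \| \Y_\tau - \P_\U\Y_\tau \|_F^2 \;+\; \Big| \tfrac1\tau \textstyle\sum_{t=1}^{\tau} \| \mathbf{y}_t - \P_\U\mathbf{y}_t \|_2^2 \,-\, \E\| \mathbf{y} - \P_\U\mathbf{y} \|_2^2 \Big|.
\]
The first term is at most $\|\Y_\tau-\wh\Y_\tau\|_F^2/\tau$ because $\P_\U\Y_\tau$ is the best columnwise approximation of $\Y_\tau$ with columns in $\U$, and $\wh\Y_\tau$ has column span $\U$. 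The deviation term is handled by concentrating the quadratic form $\mathbf{y}\mapsto\langle\mathbf{y},(\B{I}-\P_\U)\mathbf{y}\rangle$ uniformly over $\U\in\GR(d,r)$ --- a $\chi^2$/Gaussian-chaos tail with the $d-r$ effective degrees of freedom, plus a net over the Grassmannian --- which contributes the additive term $\alpha(d-r)\lambda_1\sqrt{\log\tau/\tau}$ on the same event after adjusting constants, completing both displayed bounds.
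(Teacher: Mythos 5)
The first thing to note is that the paper does not prove this theorem: it is reproduced verbatim from \citep{eftekhari2019moses} ``for completeness,'' so there is no in-paper argument to compare your proposal against. Judged against the proof in that reference, your three-stage architecture --- a deterministic error-propagation bound for the truncated-SVD iteration \eqref{eq:svdr_1}, per-block spectral concentration for i.i.d.\ Gaussian columns, and a uniform (Grassmannian-net) generalisation step to pass from the empirical residual to the population risk --- is the right one, and your handling of the last stage (bounding $\frac{1}{\tau}\|\Y_\tau - \P_{\U}\Y_\tau\|_F^2$ by $\frac{1}{\tau}\|\Y_\tau-\wh\Y_\tau\|_F^2$ and controlling the deviation term uniformly over $r$-dimensional subspaces, which is forced because $\U$ is data-dependent) is correct. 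You also silently repair the statement's ``with probability at most,'' which should read ``except with probability at most.''

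The gap is that the quantitative heart of the theorem is asserted rather than derived. Your recursion $\|\Y_{kb}-\wh\Y_{kb}\|_F^2 \le (1+c_k)\|\Y_{(k-1)b}-\wh\Y_{(k-1)b}\|_F^2 + \rho_r^2(\B{W}_k)$ is only a template: you do not identify $c_k$, and in the actual argument the incoming block's residual is also amplified by the truncation (the error enters multiplicatively, not as a clean additive $\rho_r^2(\B{W}_k)$), which is precisely why the unrolled product produces the geometric factor $4^{p\eta_r^2}(\tau/(p\eta_r^2 b))^{p\eta_r^2-1}$ and why the hypotheses $b\gtrsim \alpha p^{1/3} r (p^{1/6}-1)^{-2}$ and $\tau \ge p\eta_r^2 b$ are needed at all. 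Showing that the compounded amplification resolves to exactly that rate --- and that the prefactor $\min(\lambda_1\rho_r^2/\lambda_r,\ r\lambda_1+\rho_r^2)$ survives the unrolling --- is where essentially all of the work in \citep{eftekhari2019moses} lives, and your sketch does not engage with it. Two smaller points: the union bound over the $\lceil\tau/b\rceil$ blocks multiplies the failure probability by $\tau/b$ and is only ``absorbed into constants'' after re-deriving the exponents, and the claim that $\sigma_i(\B{W}_k)$ concentrates near $\sqrt{b\lambda_i}$ for all $i\le r$ must be stated in an eigengap-free form, since nothing in the hypotheses separates $\lambda_r$ from $\lambda_{r+1}$.
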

The condition $\tau\ge p\eta_r^2 b$ is only required to obtain a tidy bound and is not necessary in the general case. 
When considering only asymptotic dominant terms Theorem~\ref{thm:stochastic-result} reduces to,
\begin{equation}
\|\Y_{\tau} - \P_{\U}{\Y}_{\tau}\|_{F}^{2} \propto
\l(\frac{\tau}{b}\r)^{p\eta_{r}^{2}-1} \|\Y_{\tau} - \hat{\Y}_{\tau}\|_{F}^{2}
\label{eq:fpca-local-error}
\end{equation}
Practically speaking, assuming $\rank(\B{\Xi})\le r$ and $\rho_r^2(\Xi)  = \sum_{i=r+1}^d \lambda_i(\B{\Xi})$ we can read that, $\P_{\U}{\Y}_{\tau}=\hat{\Y}_{\tau}=\Y_{\tau}$ meaning that the outputs of offline truncated $\SVD{}$ and~\citep{eftekhari2019moses} coincide.

\subsection{Interpretation of each local worker as a streaming, stochastic solver for PCA}
It is easy to interpret each solver as a streaming, stochastic algorithm for Principal Component Analysis ($\PCA$). 
To see this, note that \eqref{eq:direct} is equivalent to maximising ${\E}_{y\sim\mu} \| \mathbf{U} \mathbf{U}^T \mathbf{y} \|_F^2$ over $\mathcal{Z} = \{\mathbf{U} \in \R^{d \times r} : \mathbf{U}^T\mathbf{U} = \mathbf{I}_{r \times r} \}$ 
The restriction $\mathbf{U}^T \mathbf{U} = \mathbf{I}_{r \times r}$ can be relaxed to $\mathbf{U}^T\mathbf{U}\preccurlyeq \mathbf{I}_r$, where $\mathbf{A} \preccurlyeq \mathbf{B}$ denotes that $\mathbf{B}-\mathbf{A}$ is a positive semi-definite matrix. 
Using the Schur's complement, we can formulate this program as
\begin{align}
\max \,\,\, \underset{y\sim\mu}{\E} \langle \mathbf{U} \mathbf{U}^T, \mathbf{y} \mathbf{y}^T \rangle  \nonumber \\
\operatorname{s.t.} 
\l[
\begin{array}{cc}
\mathbf{I}_n & \mathbf{U} \\
\mathbf{U}^T & \mathbf{I}_r
\end{array}
\r]
\succcurlyeq \mathbf{0}%
\label{eq:sdp formulation}
\end{align}
Note that, \eqref{eq:sdp formulation} has an objective function that is convex and that the feasible set is also conic and convex.
However, its gradient can only be computed when the probability measure $\mu$ is known, since otherwise $\B{\Xi}=\E [\mathbf{y}\mathbf{y}^T] \in\R^{d \times d}$ is unknown.
If $\mu$ is known, and an iterate of the form $\h{\B{S}}_{t}$ is provided, we could draw a random vector $\mathbf{y}_{t+1}\in \R^{d}$ from the probability measure $\mu$ while moving along the direction of $2\mathbf{y}_{t+1}\mathbf{y}_{t+1}^T\h{\B{S}}_t$.
This is because $\E [2\mathbf{y}_{t+1}\mathbf{y}_{t+1}^T\h{\B{S}}_t ]= 2\B{\Xi}\widehat{\B{S}}_t$ which is then followed by back-projection onto the feasible set $\mathcal{Z}$.
Namely,
\begin{equation}
\h{\B{S}}_{t+1} = \mathcal{P}\l(\B{S}_t + 2\alpha_{t+1} \mathbf{y}_{t+1}\mathbf{y}_{t+1}^T \h{\B{S}}_t  \r),
\label{eq:proj ball}
\end{equation}
One can see that in~\eqref{eq:proj ball}, $\mathcal{P}(\mathbf{A})$ 
projects onto the unitary ball of the spectral norm by clipping at one all of $\mathbf{A}$'s singular values exceeding one.

\subsection{Adaptive Rank Estimation}

Our algorithm provides a scheme to \textit{adaptively} adjust the rank of each individual estimation based on the distribution seen so far. This can be helpful when there are distribution shifts and/or changes in the data over time.
The scheme uses a thresholding procedure that consists in bounding the minimum and maximum contributions of $\sigma_r(\Y_{\tau})$ to the variance $\sum_{i=1}^r \sigma_{i}(\Y_{\tau})$ of the dataset. That is, by enforcing
\begin{equation}
     \energy_r^{\Y_{\tau}} = \frac{\sigma_{r}(\Y_{\tau})}{\sum_{i=1}^{r}\sigma_{i}(\Y_{\tau})}
     \in [ 
     \alpha, \beta],
     \label{eq:variance_bound_each_client}
\end{equation}
for some $\alpha, \beta > 0 $ and increasing $r$ whenever $\energy_r(\Y_{\tau}) >\beta$ or decreasing it when $\energy_r(\Y_{\tau}) < \alpha$. As a guideline, from our experiments a typical ratio of $\alpha/\beta$ should be less or equal to $0.2$ which could be used as an reference point when picking their values.
This ensure that each client will have a bounded Frobenius norm at any given point in time.
With this procedure, we are able to bound the global error as
\begin{equation}
 \rho_{r_{\max}(\alpha, \beta)}(\Y_{kb}) \leq \Y_{\text{err}} \leq \rho_{r_{\min}(\alpha, \beta)}(\Y_{kb}).
 \end{equation}
\begin{proof}
At iteration $k \in \{1, \dots, K\}$, each node computes $\hat \Y_{kb}^{\text{local}}$, the best rank-$r$ approximation of $\Y_{kb}$ using iteration \eqref{eq:svdr_1}.
 Hence, for each $k \in \{1, \dots, K\}$, the error of the approximation is given by ${\|\Y_{kb} - \hat \Y_{kb}^{\text{local}}\|_F = \rho_r(\Y_{kb})}$.
 Let $r_{\min} = r_{\min}(\alpha, \beta)$ and $r_{\max}=r_{\max}(\alpha, \beta) >0$ be the minimum and maximum rank estimates in when running $\FPCAC$. The result follows from

 \begin{equation*}
 \rho_{r_{\max}(\alpha, \beta)}(\Y_{kb}) \leq \Y_{\text{err}} \leq \rho_{r_{\min}(\alpha, \beta)}(\Y_{kb}).
 \end{equation*}

 Where $\Y_{\text{err}}=\|\Y_{kb} - \hat \Y_{kb}^{\text{local}}\|_F$
\end{proof}

Furthermore, we can express the global bound in a different form which can give us a more descriptive overall bound. 
To this end we know that for each local worker its $\| \cdot \|_{F}$ accumulated error any given time is bounded by the ratio of the summation of its singular values.
\vspace{12pt}

\begin{lemma}
  Let $\| \cdot \|_{F}^{M}\in \{1, \dots, M\}$ be the error accumulated for each of the $M$ clients at block $\tau$; then, after merging operations the global error will be $\sum_{i=1}^{M} \energy_M^{\Y_{\tau}}$. 
\end{lemma}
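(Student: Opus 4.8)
The plan is to deduce the bound from two ingredients that are already in place. First, by Lemma~\ref{lemma:svd_partial_lemma} the recursive application of $\Merge_r$ (Alg.~\ref{algorithm:fastest_subspace}) over the tree is lossless at the spectral level: merging the $M$ local estimates $(\hat{\B{U}}^i,\hat{\B{\Sigma}}^i)$ returns, up to a unitary block-diagonal rotation that changes neither the Frobenius error nor the span, the truncated SVD $\SVD_r(\B{Z}_\tau)$ of the concatenation $\B{Z}_\tau = [\,\hat{\B{U}}^1\hat{\B{\Sigma}}^1 \mid \cdots \mid \hat{\B{U}}^M\hat{\B{\Sigma}}^M\,]$. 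Write $\widehat{\U}$ for the resulting $r$-dimensional subspace and $\P_{\widehat{\U}}$ for the orthogonal projector onto it. Second, the preceding bound $\rho_{r_{\max}(\alpha,\beta)}(\Y_{kb}) \le \Y_{\text{err}} \le \rho_{r_{\min}(\alpha,\beta)}(\Y_{kb})$, specialised to each worker and combined with the energy constraint~\eqref{eq:variance_bound_each_client}, says that client $i$'s accumulated error $\|\Y_\tau^i-\hat{\Y}_\tau^i\|_F$ is controlled by its energy $\energy_{r_i}^{\Y_\tau^i}$, where $\hat{\Y}_\tau^i$ is the local rank-$r_i$ approximation produced by iteration~\eqref{eq:svdr_1}.

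Next I would decompose the global residual over the client column-blocks. Since $\Y_\tau=[\Y_\tau^1\mid\cdots\mid\Y_\tau^M]$ and $\P_{\widehat{\U}}$ acts columnwise, $\|\Y_\tau-\P_{\widehat{\U}}\Y_\tau\|_F^2 = \sum_{i=1}^M \|\Y_\tau^i-\P_{\widehat{\U}}\Y_\tau^i\|_F^2$, hence by subadditivity of the square root $\|\Y_\tau-\P_{\widehat{\U}}\Y_\tau\|_F \le \sum_{i=1}^M \|\Y_\tau^i-\P_{\widehat{\U}}\Y_\tau^i\|_F$.

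Then I would bound each summand. Inserting $\hat{\Y}_\tau^i$ and using the triangle inequality together with the fact that $\P_{\widehat{\U}}$ is a contraction gives $\|\Y_\tau^i-\P_{\widehat{\U}}\Y_\tau^i\|_F \le \|\Y_\tau^i-\hat{\Y}_\tau^i\|_F + \|\hat{\Y}_\tau^i-\P_{\widehat{\U}}\hat{\Y}_\tau^i\|_F$. The first term is the local accumulated error, already bounded by $\energy_{r_i}^{\Y_\tau^i}$ up to the constants of~\eqref{eq:fpca-local-error}. The second term is the error introduced by the rank-$r$ truncation inside the merge; by construction of $\B{Z}_\tau$ and the EYM theorem it is at most $\rho_r(\B{Z}_\tau)$, a sum of discarded singular values of $\B{Z}_\tau$. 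The energy-adaptive rule of~\eqref{eq:sv_energy} ensures that each worker retains all directions carrying at least an $\alpha$-fraction of its block's mass, so the mass discarded globally in $\SVD_r(\B{Z}_\tau)$ is again dominated by $\sum_i \energy_{r_i}^{\Y_\tau^i}$. Summing over $i$ and absorbing the constants yields the advertised $\sum_{i=1}^{M} \energy_M^{\Y_{\tau}}$, where the symbol is read as the per-client energy profile rather than a single scalar.

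The step I expect to be the main obstacle is precisely this truncation term. Unlike the exact-rank regime of Lemma~\ref{lemma:svd_partial_lemma}, here $r$ may be strictly smaller than $\rank(\Y_\tau)$, so $\P_{\widehat{\U}}$ restricted to one client's columns need not coincide with that client's own estimate and could a priori be worse; moreover the off-diagonal cross-terms $(\hat{\B{U}}^i)^T\hat{\B{U}}^j$ that appear inside $\Merge_r$ can rotate mass between blocks. Making the argument airtight therefore hinges on showing that the energy rule keeps every ``significant'' local direction, so that the singular values $\SVD_r(\B{Z}_\tau)$ throws away are no larger than the per-worker energies — this is exactly where the hypothesis $\energy_{r_i}^{\Y_\tau^i}\in[\alpha,\beta]$, and in particular the recommended small ratio $\alpha/\beta$, does the work, and where I would concentrate the technical effort of the proof.
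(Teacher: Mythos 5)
Your proposal follows essentially the same route as the paper's own proof: decompose the global error into per-client local errors, bound each local error via the energy constraint \eqref{eq:variance_bound_each_client}, and then account separately for the error introduced by the recursive $\Merge_{r}$ step. The difference is one of rigour rather than strategy — the paper's proof simply asserts that the merge is performed ``with minimal error,'' collects that error into an unquantified constant $c_{m}$, and then drops it ``asymptotically,'' whereas you make the block decomposition, the triangle inequality, and the contraction property of $\P_{\widehat{\U}}$ explicit. The obstacle you flag at the end — that when $r < \rank(\Y_{\tau})$ the truncation inside $\Merge_{r}$ can rotate mass across client blocks via the cross-terms $(\hat{\B{U}}^{i})^{T}\hat{\B{U}}^{j}$, so that the discarded singular values of $\B{Z}_{\tau}$ are not obviously dominated by the per-worker energies — is a genuine gap, but be aware that the paper does not close it either; its proof contains no argument at that point beyond the assertion about $c_{m}$. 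So your write-up is, if anything, a more honest account of what is actually established, and the technical effort you propose to concentrate on the truncation term would be needed to turn either version into a complete proof (one would also want to resolve the dimensional mismatch in the statement itself, since $\energy_{M}^{\Y_{\tau}}$ is a ratio of singular values while the left-hand side is a Frobenius-norm error).
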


\begin{proof}
    By Equation~\eqref{eq:variance_bound_each_client} we know that the error is deterministically bounded for each of the $M$ clients at any given block $\tau$. 
    Further, we also know that the merging as in (\Cref{algorithm:fastest_subspace}) is able to merge the target subspaces with minimal error and thus at any given block $\tau$ we can claim that $\sum_{i=1}^{M} \energy_M^{\Y_{\tau}} + c_{m}$ where $c_{m}$ is a small constant depicting the error accumulated during the merging procedure of the subspaces, thus when asymptotically eliminating the constant factors the final error is $\sum_{i=1}^{M} \energy_M^{\Y_{\tau}}$. 
\end{proof}

\section{Privacy Preserving Properties of Federated PCA}

In this section we prove Lemma \ref{lemma:diffprivacy}, which summarises the differential privacy properties of our method.
The arguments are based on the proofs given by \cite{chaudhuri2012near}.
Lemma \ref{lemma:dpours} proves the first part of Lemma \ref{lemma:diffprivacy} by extending $\modsulq$ to the case of  non-symmetric noise matrices.
The second part of Lemma \ref{lemma:diffprivacy} is a direct corollary of Lemma \ref{lemma:dpours}. The third part follows directly from Lemmas \ref{lemma:utility} and \ref{lemma:samplecomplexity}.
\begin{lemma}[Differential privacy]
\label{lemma:dpours}
Let $\B{X} \in \R^{d \times n}$ be a dataset with orthonormal columns and $\B{A}= \frac{1}{n}\B{X}\B{X}^T$. Let
\begin{equation}
\omega(\varepsilon, \delta, d, n) = \frac{4d}{\varepsilon n} \sqrt{2 \log \left(\frac{d^2}{\delta \sqrt{2\pi}}\right)} + \frac{\sqrt{2}}{\sqrt{\varepsilon} n},
\end{equation}
and $\mathbf{N}_{\varepsilon, \delta, d, n} \in \R^{d \times d}$ be a non-symmetric random Gaussian matrix with i.i.d. entries drawn from $\mathcal{N}(0, \omega^2)$. Then, the principal components of $\frac{1}{n} \X\X^T + \mathbf{N}_{\varepsilon, \delta, d, n}$ are $(\varepsilon, \delta)$-differentially private.
\end{lemma}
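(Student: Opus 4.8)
The plan is to show that the randomised map $\mathbf{X}\mapsto \h{\mathbf{A}}:=\tfrac1n\mathbf{X}\mathbf{X}^T+\mathbf{N}_{\varepsilon,\delta,d,n}$ (with i.i.d.\ $\mathcal{N}(0,\omega^2)$ entries) is itself $(\varepsilon,\delta)$-differentially private; the claim about the principal components of $\h{\mathbf{A}}$ then follows immediately from the post-processing property of differential privacy. This is an adaptation of the argument of \cite{chaudhuri2012near} for (symmetric) $\modsulq$ to the case where the perturbation ranges over all $d^2$ entries rather than just the upper triangle, and the point is that the two summands of $\omega$ in \eqref{eq:omega-streaming} should split the budget $\varepsilon$ into two halves: one half consumed by a deterministic term, one half by a high-probability tail bound on the noise.

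First I would fix two neighbouring datasets $\mathbf{X},\mathbf{X}'$ differing only in their last column, write $\mathbf{x},\mathbf{x}'$ for the two differing columns (each of $\ell_2$ norm at most $1$), and set $\mathbf{A}=\tfrac1n\mathbf{X}\mathbf{X}^T$, $\mathbf{A}'=\tfrac1n\mathbf{X}'(\mathbf{X}')^T$, $\mathbf{\Delta}=\mathbf{A}-\mathbf{A}'=\tfrac1n\big(\mathbf{x}\mathbf{x}^T-\mathbf{x}'(\mathbf{x}')^T\big)$. Two sensitivity estimates are needed: the Frobenius bound $\|\mathbf{\Delta}\|_F^2=\tfrac1{n^2}\big(\|\mathbf{x}\|^4+\|\mathbf{x}'\|^4-2\langle\mathbf{x},\mathbf{x}'\rangle^2\big)\le\tfrac{2}{n^2}$, and the entrywise $\ell_1$ bound $\sum_{i,j}|\mathbf{\Delta}_{ij}|=\tfrac1n\|\mathbf{x}\mathbf{x}^T-\mathbf{x}'(\mathbf{x}')^T\|_1\le\tfrac1n\big(\|\mathbf{x}\|_1^2+\|\mathbf{x}'\|_1^2\big)\le\tfrac{2d}{n}$, using $\|\mathbf{v}\|_1\le\sqrt{d}\,\|\mathbf{v}\|_2$.

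Next I would write out the privacy loss. Since $\mathbf{N}$ has i.i.d.\ $\mathcal{N}(0,\omega^2)$ entries, the density of $\h{\mathbf{A}}$ under $\mathbf{X}$ at a matrix $\mathbf{G}$ is proportional to $\exp(-\|\mathbf{G}-\mathbf{A}\|_F^2/(2\omega^2))$; writing $\mathbf{G}=\mathbf{A}+\mathbf{N}$ and expanding $\|\mathbf{G}-\mathbf{A}'\|_F^2=\|\mathbf{N}+\mathbf{\Delta}\|_F^2$ gives the privacy loss $L=\tfrac1{\omega^2}\langle\mathbf{N},\mathbf{\Delta}\rangle+\tfrac{\|\mathbf{\Delta}\|_F^2}{2\omega^2}$, where $\langle\cdot,\cdot\rangle$ is the entrywise inner product. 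The deterministic term is at most $\tfrac{1}{\omega^2 n^2}\le\varepsilon/2$ exactly because $\omega\ge\sqrt{2}/(\sqrt{\varepsilon}n)$, the second summand of \eqref{eq:omega-streaming}. For the random term I would bound $|\langle\mathbf{N},\mathbf{\Delta}\rangle|\le\big(\max_{i,j}|\mathbf{N}_{ij}|\big)\sum_{i,j}|\mathbf{\Delta}_{ij}|\le\tfrac{2d}{n}\max_{i,j}|\mathbf{N}_{ij}|$, and then take a union bound over the $d^2$ entries together with the Gaussian tail estimate $\mathbb{P}[|\mathcal{N}(0,1)|>u]\le\tfrac{2}{u\sqrt{2\pi}}e^{-u^2/2}$ to get that $\max_{i,j}|\mathbf{N}_{ij}|\le\omega\sqrt{2\log(d^2/(\delta\sqrt{2\pi}))}$ with probability at least $1-\delta$ (the threshold $u$ chosen to make the union bound equal $\delta$). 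On that event the random term is at most $\tfrac{2d}{n\omega}\sqrt{2\log(d^2/(\delta\sqrt{2\pi}))}\le\varepsilon/2$, again by the choice of the first summand of $\omega$. Hence $\mathbb{P}[L>\varepsilon]\le\delta$; the same holds with $\mathbf{X}$ and $\mathbf{X}'$ interchanged since the sensitivity bounds are symmetric, and the standard equivalence between a high-probability bound on the privacy loss and $(\varepsilon,\delta)$-differential privacy completes the proof.

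I expect the only delicate point to be matching constants: the split of $\varepsilon$ and the threshold $u$ in the union bound must be chosen so that the two summands of \eqref{eq:omega-streaming} come out exactly, and one must note the mild side condition $d^2/(\delta\sqrt{2\pi})\ge e^2$ under which the Mills-ratio tail bound is in its usable regime (harmless for the small-$\delta$ regime of interest). The only genuinely new ingredient relative to \cite{chaudhuri2012near} is bookkeeping for the non-symmetric mask — the perturbation covers all $d^2$ entries, which is why $\log\tfrac{d^2}{\delta\sqrt{2\pi}}$ replaces $\log\tfrac{d^2+d}{2\delta\sqrt{2\pi}}$ and $4d$ replaces $d+1$ — a slight loss in the constant that still preserves the $\mathcal{O}\big((d/n)^2\log(d/\delta)/\varepsilon^2\big)$ variance rate, which is the whole point. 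I do not anticipate any conceptual obstacle.
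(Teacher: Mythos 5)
Your proposal is correct and follows essentially the same route as the paper's own proof: the same Gaussian density-ratio decomposition into a deterministic $\mathcal{O}(1/(\omega^2 n^2))$ term plus a noise term controlled via the entrywise sensitivity bound $\sum_{i,j}|\Delta_{ij}|\le 2d/n$, and the same union bound over the $d^2$ i.i.d.\ entries yielding the threshold $\gamma=\omega\sqrt{2\log(d^2/(\delta\sqrt{2\pi}))}$. The only cosmetic difference is the final bookkeeping — you split $\varepsilon$ into two halves matched to the two summands of $\omega$ (and use the slightly tighter Frobenius sensitivity $2/n^2$), whereas the paper sets the total privacy loss equal to $\varepsilon$, solves the resulting quadratic in $\omega$, and upper-bounds the root — but both yield exactly the stated noise level.
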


\begin{proof}
Let $\B{N}, \hat{\B{N}} \in \R^{d \times d}$ be two random matrices such that $\B{N}_{i,j}$ and $\hat{\B{N}}_{i,j}$ are i.i.d. random variables drawn from  $\mathcal{N}(0, \omega^2)$.
Let $\mathcal{D} = \{\mathbf{x}_i : i \in [n]\} \subset \R^d$ be a dataset and let $\hat{\mathcal{D}} = \mathcal{D} \cup \{\hat{\mathbf{x}}_n\} \setminus \{\mathbf{x}_n\}$. Form the matrices
\begin{align}
\B{X} &= [\mathbf{x}_1, \dots, \mathbf{x}_{n-1}, \mathbf{x}_n]\\
\hat{\B{X}} &= [\mathbf{x}_1, \dots, \mathbf{x}_{n-1},\hat{\mathbf{x}}_n].
\end{align}
Let $\B{Y}= [\mathbf{x}_1, \dots \mathbf{x}_{n-1}]$. Then, the covariance matrices for these datasets are
\begin{align}
\B{A} &= \frac{1}{n}[\B{Y}\B{Y}^T + \mathbf{x}_n \mathbf{x}_n^T]\\
\hat{\B{A}} &= \frac{1}{n}[\B{Y}\B{Y}^T + \hat{\mathbf{x}}_n \hat{\mathbf{x}}_n^T].
\end{align}
Now, let $\B{G} = \B{A} + \B{B}$ and $\hat{\B{G}} = \hat{\B{A}} + \hat{\B{B}}$ and consider the log-ratio of their densities at point $\B{H} \in \R^{d \times d}$.
\begin{align}
\log\frac{f_{\G}(\H)}{f_{\Ghat}(\H)} &=  \frac{1}{2\omega^2}\sum_{i,j=1}^d \left(-(\H_{i,j} - \B{A}_{i,j})^2 + (\H_{i,j} - \hat{\B{A}}_{i,j})^2\right) \nonumber\\
&=  \frac{1}{2\omega^2}\sum_{i,j=1}^d \left(\frac{2}{n}(\B{A}_{i,j}-\H_{i,j})(\hat{\mathbf{x}}_n\hat{\mathbf{x}}_n^T - \mathbf{x}_n\mathbf{x}_n^T)_{i,j} + \frac{1}{n^2}(\hat{\mathbf{x}}_n\hat{\mathbf{x}}_n^T - \mathbf{x}_n\mathbf{x}_n^T)_{i,j}^2\right) \nonumber\\
&=  \frac{1}{2\omega^2}\sum_{i,j=1}^d \left(\frac{2}{n}(\B{A}_{i,j}-\H_{i,j})(\hat{\mathbf{x}}_{n,i}\hat{\mathbf{x}}_{n,j} - \mathbf{x}_{n,i}\mathbf{x}_{n,j}) + \frac{1}{n^2}(\hat{\mathbf{x}}_{n,i}\hat{\mathbf{x}}_{n,j} - \mathbf{x}_{n,i}\mathbf{x}_{n,j})^2\right). \label{eq:logdensity}
\end{align}
Note that if $\mathbf{x}, \mathbf{y} \in \R^d$ are such that $\|\mathbf{x}\|=\|\mathbf{y}\| = 1$ are unit vectors, then
\begin{equation}
\sum_{i,j=1}^d (\mathbf{x}_i\mathbf{x}_j - \mathbf{y}_i\mathbf{y}_j)^2 \leq 4.
\end{equation}
Moreover,
\begin{align}
\sum_{i,j=1}^d (\hat{\mathbf{x}}_{n,i}\hat{\mathbf{x}}_{n,j} - \mathbf{x}_{n,i}\mathbf{x}_{n,j}) &\leq \sum_{i,j=1}^d |\hat{\mathbf{x}}_{n,i}\hat{\mathbf{x}}_{n,j}|  + \sum_{i,j=1}^d |\mathbf{x}_{n,i}\mathbf{x}_{n,j}|  \\
&\leq 2 \max_{\mathbf{z} : \|\mathbf{z}\|\leq 1} \sum_{i,j=1}^d \mathbf{z}_i \mathbf{z}_j\\
&\leq 2 \max_{\mathbf{z} : \|\mathbf{z}\|\leq 1} \|\mathbf{z}\|_1^2\\
&\leq 2 \max_{\mathbf{z} : \|\mathbf{z}\|\leq 1} (\sqrt{d} \|\mathbf{z}\|_2)^2\\
&\leq 2 d.
\end{align}
Using these observations to bound \eqref{eq:logdensity}, and using the fact that 
for any $\gamma \in \R$ the events $\{\forall\;i,j :\B{N}_{i,j} \leq \gamma \}$ and $\{\exists\;i,j : \B{N}_{i,j} > \gamma\}$ are complementary, we obtain that for any measurable set $\mathcal{S}$ of matrices,
\begin{equation}
\label{eq:probS}
\mathbb{P}(\B{G} \in \mathcal{S}) \leq \exp\left(\frac{1}{2\omega^2}\left(\frac{4}{n}d\gamma + \frac{4}{n^2}\right)\right) + \mathbb{P}(\exists \; i,j : \B{N}_{i,j} > \gamma). 
\end{equation}
Moreover, if $\gamma > \omega$, we can use the union bound with a Gaussian tail bound to obtain %
\begin{align}
\delta:=\mathbb{P}(\exists \; i,j : \B{N}_{i,j} > \gamma) &= \mathbb{P}\left(\bigcup_{i,j=1}^d \left\{ \B{N}_{i,j} > \gamma\right\}\right) \nonumber\\
&\leq \sum_{i,j=1}^d \mathbb{P}\left(\B{N}_{i,j} > \gamma\right) \nonumber\\
&\leq \sum_{i,j=1}^d \left(\frac{1}{\sqrt{2\pi}} e^{-\frac{\gamma^2}{2\omega^2}}\right) \nonumber\\
&= \frac{d^2}{\sqrt{2\pi}} e^{-\frac{\gamma^2}{2\omega^2}} \label{eq:deltabound}
\end{align}
Now, solving for $\gamma$ in \eqref{eq:deltabound} we obtain,
\begin{equation}
\label{eq:gammadf}
\gamma = \omega \sqrt{2 \log \left(\frac{d^2}{\delta\sqrt{2\pi}}\right)}
\end{equation}
Substituting \eqref{eq:gammadf} in \eqref{eq:probS} we can give an expression for $(\varepsilon, \delta)$-differential privacy by letting
\begin{equation}
\varepsilon = \frac{1}{2\omega^2}\left(\frac{4}{n}d\left(\omega \sqrt{2 \log \left(\frac{d^2}{\delta\sqrt{2\pi}}\right)}\right) + \frac{4}{n^2}\right).
\end{equation}
This yields a quadratic equation on $\omega$, which we can rewrite as
\begin{equation}
\label{eq:quadratic}
2\varepsilon \omega^2 - \frac{4}{n}d\left(\omega \sqrt{2 \log \left(\frac{d^2}{\delta\sqrt{2\pi}}\right)}\right)\omega - \frac{4}{n^2} = 0.
\end{equation}
Using the quadratic formula to solve for $\omega$ in \eqref{eq:quadratic} yields,
\begin{align*}
\omega &= \frac{2d}{\varepsilon n} \sqrt{2 \log \left(\frac{d^2}{\delta \sqrt{2\pi}}\right)} \pm \frac{2}{\varepsilon n} \sqrt{2d^2 \log \left(\frac{d^2}{\delta \sqrt{2\pi}}\right) + \frac{\varepsilon}{2}}\\
&\leq \frac{2d}{\varepsilon n} \sqrt{2 \log \left(\frac{d^2}{\delta \sqrt{2\pi}}\right)} + \frac{2}{\varepsilon n} \left(\sqrt{2 d^2\log \left(\frac{d^2}{\delta \sqrt{2\pi}}\right)} + \sqrt{ \frac{\varepsilon}{2}}\right)\\
&= \frac{4d}{\varepsilon n} \sqrt{2 \log \left(\frac{d^2}{\delta \sqrt{2\pi}}\right)} + \frac{\sqrt{2}}{\sqrt{\varepsilon} n}.
\end{align*}

\end{proof}

To prove the utility bound in Lemma \ref{lemma:utility} of Streaming $\modsulq$, we will Lemmas \ref{lemma:packing}, \ref{lemma:kl-gaussian}, and \ref{lemma:fano}.
\begin{lemma}[Packing result \citep{chaudhuri2012near}]
\label{lemma:packing}
For $\phi \in [(2 \pi d)^{-1/2}, 1)$, there exists a set $\mathcal{C} \subset \mathbb{S}^{d-1}$ with
\begin{equation}
\label{eq:packing}
|\mathcal{C}| = \frac{1}{8}\exp\left((d-1) \log \frac{1}{\sqrt{1 - \phi^2}}\right)
\end{equation}
and such that $|\langle \boldsymbol{\mu}, \mathbf{v} \rangle| \leq \phi$ for all $\boldsymbol{\mu}, \mathbf{v} \in \mathcal{C}$. 
\end{lemma}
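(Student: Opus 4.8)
The plan is to build $\mathcal{C}$ as a \emph{maximal packing} of the sphere for the near-orthogonality relation and to bound its cardinality from below by a volume argument. Call a finite $S \subset \mathbb{S}^{d-1}$ \emph{admissible} if $|\langle \boldsymbol{u}, \boldsymbol{v}\rangle| \le \phi$ for all $\boldsymbol{u},\boldsymbol{v} \in S$. Since $\phi < 1$, any two points of an admissible set are separated by angle at least $\arccos\phi > 0$, so admissible sets have uniformly bounded size and a maximum-size admissible set $\mathcal{C}$ exists. The crucial point is maximality: for every $\boldsymbol{u}\in\mathbb{S}^{d-1}$ there is some $\boldsymbol{v}\in\mathcal{C}$ with $|\langle\boldsymbol{u},\boldsymbol{v}\rangle| > \phi$, since otherwise $\mathcal{C}\cup\{\boldsymbol{u}\}$ would be admissible and strictly larger. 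Writing $\sigma$ for the uniform probability measure on $\mathbb{S}^{d-1}$ and $B_{\boldsymbol{v}} := \{\boldsymbol{u} : |\langle\boldsymbol{u},\boldsymbol{v}\rangle| > \phi\}$ for the ``bad'' double-cap around $\boldsymbol{v}$, maximality says exactly that $\{B_{\boldsymbol{v}}\}_{\boldsymbol{v}\in\mathcal{C}}$ covers $\mathbb{S}^{d-1}$; since all $B_{\boldsymbol{v}}$ have the same measure by rotation-invariance, the union bound gives $1 \le |\mathcal{C}|\,\sigma(B_{\boldsymbol{e}_1})$, i.e.\ $|\mathcal{C}| \ge 1/\sigma(B_{\boldsymbol{e}_1})$.

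It then remains to show $\sigma(B_{\boldsymbol{e}_1}) \le 8(1-\phi^2)^{(d-1)/2}$. I would use the classical fact that the first coordinate of a $\sigma$-distributed point has density proportional to $(1-x^2)^{(d-3)/2}$ on $[-1,1]$, so $\sigma(B_{\boldsymbol{e}_1}) = 2\int_\phi^1 (1-x^2)^{(d-3)/2}\,dx \,\big/\, \int_{-1}^1 (1-x^2)^{(d-3)/2}\,dx$. For the numerator, bound $1 \le x/\phi$ on $[\phi,1]$ and substitute $u=1-x^2$ to get $\int_\phi^1 (1-x^2)^{(d-3)/2}\,dx \le \frac{(1-\phi^2)^{(d-1)/2}}{\phi(d-1)}$. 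For the denominator, recognise the Beta integral $B(\tfrac12,\tfrac{d-1}2)=\sqrt{\pi}\,\Gamma(\tfrac{d-1}2)/\Gamma(\tfrac d2)$ and lower-bound it by $\sqrt{2\pi/d}$ (either from a standard $\Gamma$-ratio inequality, or more elementarily by restricting the integral to $|x|\le 1/\sqrt d$). Combining gives $\sigma(B_{\boldsymbol{e}_1}) \le \frac{2}{\phi(d-1)}\sqrt{\tfrac{d}{2\pi}}\,(1-\phi^2)^{(d-1)/2}$, and this is precisely where the hypothesis enters: $\phi \ge (2\pi d)^{-1/2}$ yields $\frac1\phi\sqrt{\tfrac{d}{2\pi}} \le d$, hence $\sigma(B_{\boldsymbol{e}_1}) \le \frac{2d}{d-1}(1-\phi^2)^{(d-1)/2} \le 8(1-\phi^2)^{(d-1)/2}$ for $d\ge 2$ (indeed $\tfrac{2d}{d-1}\le 4$, so the constant $8$ in the statement is comfortable).

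Plugging this into $|\mathcal{C}| \ge 1/\sigma(B_{\boldsymbol{e}_1})$ gives $|\mathcal{C}| \ge \tfrac18(1-\phi^2)^{-(d-1)/2} = \tfrac18\exp\!\big((d-1)\log\tfrac{1}{\sqrt{1-\phi^2}}\big)$, and passing to a subset of exactly the claimed cardinality is harmless since the pairwise inner-product bound is inherited by subsets. I expect the only delicate point to be the two-sided control of $\sigma(B_{\boldsymbol{e}_1})$: the cap-integral upper bound is routine, but pinning down the normalizing integral with a clean enough constant — and thereby justifying exactly where $\phi \ge (2\pi d)^{-1/2}$ is needed — is the step that needs care; the rest is bookkeeping.
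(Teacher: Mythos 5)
Your proof is essentially correct, but note that the paper itself does not prove this lemma at all: it is imported verbatim from Chaudhuri et al.\ as a black box, so there is no in-paper argument to compare against. What you supply is the standard self-contained derivation: take a maximal pairwise $\phi$-incoherent subset of the sphere, observe that maximality forces the double caps $B_{\boldsymbol{v}}=\{\boldsymbol{u}:|\langle\boldsymbol{u},\boldsymbol{v}\rangle|>\phi\}$ to cover $\mathbb{S}^{d-1}$, and convert the union bound $1\le|\mathcal{C}|\,\sigma(B_{\boldsymbol{e}_1})$ into a cardinality lower bound via a cap-volume estimate. The key steps all check out: the substitution $u=1-x^2$ gives $\int_\phi^1(1-x^2)^{(d-3)/2}\,dx\le(1-\phi^2)^{(d-1)/2}/(\phi(d-1))$; the normalising integral equals $B(\tfrac12,\tfrac{d-1}{2})$ and is at least $\sqrt{2\pi/d}$ by Gautschi's inequality $\Gamma(\tfrac d2)/\Gamma(\tfrac{d-1}2)\le\sqrt{d/2}$; and the hypothesis $\phi\ge(2\pi d)^{-1/2}$ enters exactly where you place it, yielding $\sigma(B_{\boldsymbol{e}_1})\le\tfrac{2d}{d-1}(1-\phi^2)^{(d-1)/2}\le 8(1-\phi^2)^{(d-1)/2}$ for $d\ge2$. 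One caveat: your ``more elementary'' fallback for the denominator (restricting the integral to $|x|\le 1/\sqrt d$) only yields about $2e^{-1/2}/\sqrt d\approx 1.21/\sqrt d$, which is strictly weaker than $\sqrt{2\pi/d}\approx 2.51/\sqrt d$ and would push the final constant just past $8$ in the case $d=2$; so you should commit to the Gamma-ratio route (or slightly enlarge the constant). Finally, as you note, the stated equality $|\mathcal{C}|=\tfrac18\exp(\cdots)$ must be read as a lower bound up to integrality, and the pairwise condition as holding for distinct pairs; both conventions match how the cited source states the result and cost nothing.
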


\begin{lemma}[Kullback-Leibler for Gaussian random variables]
\label{lemma:kl-gaussian}
Let $\B{\Sigma}$ be a positive definite matrix and let $f$ and $g$ denote, respectively, the densities $\mathcal{N}(\mathbf{a}, \B{\Sigma})$ and $\mathcal{N}(\mathbf{b}, \B{\Sigma})$. Then,
\begin{equation}
\mathbf{KL}(f \mid\mid g) = \frac{1}{2}(\mathbf{a} - \mathbf{b})^T \B{\Sigma} (\mathbf{a}-\mathbf{b}).
\end{equation}
\end{lemma}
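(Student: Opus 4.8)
The plan is to compute $\mathbf{KL}(f \mid\mid g) = \mathbb{E}_{\mathbf{x}\sim f}\!\left[\log\frac{f(\mathbf{x})}{g(\mathbf{x})}\right]$ directly from the Gaussian densities, using crucially that $f$ and $g$ share the same covariance. Writing $f$ and $g$ as $\mathcal{N}(\mathbf{a},\B{\Sigma})$ and $\mathcal{N}(\mathbf{b},\B{\Sigma})$, the normalising constants $\big((2\pi)^{d/2}\sqrt{\det\B{\Sigma}}\big)^{-1}$ coincide and cancel in the ratio, so that $\log\frac{f(\mathbf{x})}{g(\mathbf{x})} = \tfrac12\big[(\mathbf{x}-\mathbf{b})^{T}\B{\Sigma}^{-1}(\mathbf{x}-\mathbf{b}) - (\mathbf{x}-\mathbf{a})^{T}\B{\Sigma}^{-1}(\mathbf{x}-\mathbf{a})\big]$, reading $\B{\Sigma}$ in the statement as the precision matrix in line with the convention used by $\modsulq$.

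Next I would expand both quadratic forms. The term $\mathbf{x}^{T}\B{\Sigma}^{-1}\mathbf{x}$ occurs with opposite signs and cancels, leaving an expression that is \emph{affine} in $\mathbf{x}$, namely $\mathbf{x}^{T}\B{\Sigma}^{-1}(\mathbf{a}-\mathbf{b}) + \tfrac12\big(\mathbf{b}^{T}\B{\Sigma}^{-1}\mathbf{b} - \mathbf{a}^{T}\B{\Sigma}^{-1}\mathbf{a}\big)$, where symmetry of $\B{\Sigma}^{-1}$ is used to merge the two cross terms. Taking the expectation over $\mathbf{x}\sim f=\mathcal{N}(\mathbf{a},\B{\Sigma})$ then only requires the first moment $\mathbb{E}[\mathbf{x}]=\mathbf{a}$, so the expectation simply substitutes $\mathbf{a}$ for $\mathbf{x}$, yielding $\mathbf{a}^{T}\B{\Sigma}^{-1}(\mathbf{a}-\mathbf{b}) + \tfrac12\big(\mathbf{b}^{T}\B{\Sigma}^{-1}\mathbf{b} - \mathbf{a}^{T}\B{\Sigma}^{-1}\mathbf{a}\big)$. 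Completing the square collapses this to $\tfrac12(\mathbf{a}-\mathbf{b})^{T}\B{\Sigma}^{-1}(\mathbf{a}-\mathbf{b})$, which is the claimed identity.

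There is no genuine obstacle here: the computation is routine once the shared-covariance hypothesis is invoked. The only points worth flagging are (i) the cancellation of the quadratic-in-$\mathbf{x}$ term, which is exactly what makes the log-ratio integrable against $f$ with only the mean mattering, and (ii) the convention that $\B{\Sigma}$ in the displayed formula plays the role of $\B{\Sigma}^{-1}$ in the standard Gaussian KL expression, which I would state explicitly so that the subsequent application — bounding the KL terms that feed into the Fano-type argument of Lemma~\ref{lemma:utility} — is unambiguous.
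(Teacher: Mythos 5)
Your proposal is correct and is essentially the paper's proof spelled out: the paper's own argument is the one-line remark that the identity ``follows directly by using the definition of the Kullback-Leibler divergence and simplifying,'' and your computation (cancellation of the shared normalising constants, expansion of the quadratic forms, expectation under $f$, completing the square) is exactly that simplification made explicit. You are also right to flag that the displayed formula should read $\B{\Sigma}^{-1}$ rather than $\B{\Sigma}$; the standard identity is $\tfrac12(\mathbf{a}-\mathbf{b})^{T}\B{\Sigma}^{-1}(\mathbf{a}-\mathbf{b})$, and the paper's subsequent application with covariance $\omega^{2}\mathbf{I}$ (yielding a bound of $1/\omega^{2}$) implicitly uses the inverse, so this is a typo in the statement rather than a gap in your argument.
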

\begin{proof}
The proof follows directly by using the definition of the Kullback-Leibler divergence and simplifying. 
\end{proof}

\begin{lemma}[Fano's inequality \citep{yu1997assouad}]
\label{lemma:fano}
Let $\mathcal{R}$ be a set and $\Theta$ be a parameter space with a pseudo-metric $d(\cdot)$. Let $\mathcal{F}$ be a set of $r$ densities $\{f_1, \dots, f_r\}$ on $\mathcal{R}$ corresponding to parameter values $\{\theta_1, \dots, \theta_r\}$ in $\Theta$. Let $X$ have a distribution $f \in \mathcal{F}$ with corresponding parameter $\theta$ and let $\hat \theta(X)$ be an estimate of $\theta$. If for all $i,j$, $d(\theta_i, \theta_j) \geq \tau$ and $\mathbf{KL}(f_i \mid \mid f_j) \geq \gamma$, then
\begin{equation}
\label{eq:fano}
\max_j \mathbb{E}_j\left[d(\hat \theta, \theta_j)\right] \geq \frac{\tau}{2}\left(1 - \frac{\gamma + \log 2}{\log r}\right).
\end{equation}
\end{lemma}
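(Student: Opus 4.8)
The plan is to run the classical reduction of estimation to a multiple-hypothesis test and then invoke Fano's inequality; as the statement is attributed to \citep{yu1997assouad}, this mostly amounts to assembling standard pieces. First I would introduce a uniform random index $J$ on $\{1,\dots,r\}$ and, conditionally on $J=j$, draw $X\sim f_j$, so that $\theta=\theta_J$ is random with estimate $\hat\theta=\hat\theta(X)$. Define the induced test $\hat J:=\argmin_{j} d(\hat\theta(X),\theta_j)$, ties broken arbitrarily. The geometric crux is that the triangle inequality for $d$ together with the separation $d(\theta_i,\theta_j)\ge\tau$ gives the inclusion $\{\hat J\neq J\}\subseteq\{d(\hat\theta,\theta_J)\ge\tau/2\}$: if $d(\hat\theta,\theta_J)<\tau/2$ then for every $j\neq J$ we have $d(\hat\theta,\theta_j)\ge d(\theta_j,\theta_J)-d(\hat\theta,\theta_J)>\tau/2>d(\hat\theta,\theta_J)$, so $J$ is the unique minimiser. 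Hence
\begin{equation*}
\max_j \mathbb{E}_j\!\left[d(\hat\theta,\theta_j)\right]\;\ge\;\mathbb{E}\!\left[d(\hat\theta,\theta_J)\right]\;\ge\;\frac{\tau}{2}\,\mathbb{P}(\hat J\neq J),
\end{equation*}
and the problem reduces to lower bounding the test-error probability $P_e:=\mathbb{P}(\hat J\neq J)$.

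For this I would apply the standard Fano bound to the Markov chain $J\to X\to\hat J$. Since $\hat J$ takes at most $r$ values, $H(J\mid\hat J)\le\log 2+P_e\log r$ (the standard Fano estimate, using that the binary entropy is at most $\log 2$ and $\log(r-1)\le\log r$); on the other hand $H(J\mid\hat J)=\log r-I(J;\hat J)\ge\log r-I(J;X)$ by the data-processing inequality, and rearranging yields $P_e\ge 1-\frac{I(J;X)+\log 2}{\log r}$. The last ingredient is to bound the mutual information: writing $\bar f=\frac1r\sum_i f_i$ for the mixture, $I(J;X)=\frac1r\sum_j \mathbf{KL}(f_j\mid\mid\bar f)$, and by convexity of $\mathbf{KL}(f_j\mid\mid\cdot)$ in its second argument, $\mathbf{KL}(f_j\mid\mid\bar f)\le\frac1r\sum_i \mathbf{KL}(f_j\mid\mid f_i)\le\gamma$ under the pairwise-divergence hypothesis. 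Thus $I(J;X)\le\gamma$, and combining the three bounds gives $\max_j\mathbb{E}_j[d(\hat\theta,\theta_j)]\ge\frac{\tau}{2}\big(1-\frac{\gamma+\log 2}{\log r}\big)$, i.e.\ \eqref{eq:fano}.

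The only points needing care are that the reduction uses \emph{only} the triangle inequality, so it is unaffected by the pseudo-metric possibly failing to separate points, and the mutual-information step, where one should route through convexity of KL (or, alternatively, the cruder $I(J;X)\le\max_{i,j}\mathbf{KL}(f_i\mid\mid f_j)$ bound); either way the pairwise hypothesis yields $I(J;X)\le\gamma$. I expect this mutual-information estimate to be the only substantive step, the rest being bookkeeping. I also note that the displayed hypothesis should read $\mathbf{KL}(f_i\mid\mid f_j)\le\gamma$ rather than $\ge\gamma$, since only an \emph{upper} bound on the divergences can be used to lower bound the minimax risk.
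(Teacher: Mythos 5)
Your proof is correct, but there is nothing in the paper to compare it against: the paper states this lemma as an imported result from Yu (1997) and gives no proof of it, using it only as a black box inside the proof of the utility bound (Lemma~\ref{lemma:utility}). Your argument is the standard one and all three ingredients are sound: the reduction of estimation to testing via the minimum-distance test $\hat J$ and the triangle inequality (giving $\{\hat J\neq J\}\subseteq\{d(\hat\theta,\theta_J)\ge\tau/2\}$ and hence $\max_j\mathbb{E}_j[d(\hat\theta,\theta_j)]\ge\tfrac{\tau}{2}\,\mathbb{P}(\hat J\neq J)$), the Fano/data-processing chain $\mathbb{P}(\hat J\neq J)\ge 1-(I(J;X)+\log 2)/\log r$, and the mutual-information bound $I(J;X)\le\max_{i,j}\mathbf{KL}(f_i\,\|\,f_j)$ via convexity of the divergence in its second argument. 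Your closing remark is also well taken: the hypothesis as printed, $\mathbf{KL}(f_i\mid\mid f_j)\ge\gamma$, is a typo for $\le\gamma$ (this is how the lemma is stated in Yu's paper, and it is how it is actually invoked later, where \eqref{eq:kl-utility} supplies the \emph{upper} bound $\mathbf{KL}(f_{\boldsymbol{\mu}}\mid\mid f_{\boldsymbol{\nu}})\le 1/\omega^2$ before setting $\gamma=1/\omega^2$); only an upper bound on the pairwise divergences can feed the mutual-information step.
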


We are now ready to give a bound on the utility for Streaming $\modsulq$. We note that the proof for Lemma \ref{lemma:utility} is identical as the one given in \citep{chaudhuri2012near} except for a few equations where the dimension of the object considered changes from $\frac{d(d+1)}{2}$ to $d^2$.
We also note that while the utility bound has the same functional form, it is not identical to the one given in \citep{chaudhuri2012near} since it depends on the value of $\omega = \omega(\varepsilon, \delta, d, n)$ given in Lemma \ref{lemma:diffprivacy}.

\vspace{12pt}

\begin{lemma}[Utility bounds]
\label{lemma:utility}
Let $d, n \in \mathbb{N}$ and $\varepsilon > 0$ be given and let $\omega$ be given as in Lemma \ref{lemma:diffprivacy}, so that the output of Streaming $\modsulq$ is $(\varepsilon, \delta)$ differentially private for all datasets $\X \in \R^{d \times n}$. 
Then, there exists a dataset with $n$ elements such that if $\hat{\mathbf{v}}_1$ denotes the output of the Streaming $\modsulq$ and $\mathbf{v}_1$ is the top eigenvector of the empirical covariance matrix of the dataset, the expected correlation $\langle \mathbf{v}_1, \hat{\mathbf{v}}_1\rangle$ is upper bounded,
\begin{equation}
\mathbb{E}\left[|\langle \mathbf{v}_1, \hat{\mathbf{v}}_1\rangle|\right] \leq \min_{\phi \in \Phi} \left(1 - \frac{1-\phi}{4}\left(1 - \frac{1/\omega^2 + \log 2}{ (d-1) \log \frac{1}{\sqrt{1 - \phi^2}} - \log 8}\right)^2\right)
\end{equation}
where
\begin{equation}
\Phi \in \left[ \max\left\{ \frac{1}{\sqrt{2\pi d}}, \sqrt{1 - \exp\left(- \frac{2 \log (8d)}{d-1}\right)}, \sqrt{1 - \exp\left( - \frac{2/\omega^2 + \log 256}{d-1}\right)}\right\}\right].
\end{equation}
\end{lemma}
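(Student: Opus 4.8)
The plan is to reproduce the minimax lower-bound argument of \citep{chaudhuri2012near}, adjusting only for the fact that our perturbation is a full non-symmetric $d\times d$ Gaussian rather than a symmetric one; this changes the ambient dimension of the noise from $\tfrac{d(d+1)}{2}$ to $d^2$ and hence the value of $\omega$ used (which is already pinned down by Lemma~\ref{lemma:dpours} and \eqref{eq:omega-streaming}), but leaves the structure of the argument intact. First I would fix $\phi\in\Phi$ and invoke the packing result (Lemma~\ref{lemma:packing}) to obtain a set $\mathcal{C}\subset\mathbb{S}^{d-1}$ with $|\mathcal{C}|=\frac{1}{8}\exp((d-1)\log\tfrac{1}{\sqrt{1-\phi^2}})$ and $|\langle\boldsymbol\mu,\mathbf v\rangle|\le\phi$ for distinct $\boldsymbol\mu,\mathbf v\in\mathcal{C}$. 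For each $\boldsymbol\mu\in\mathcal{C}$ I would take $\X_{\boldsymbol\mu}\in\R^{d\times n}$ to be the dataset whose $n$ columns all equal $\boldsymbol\mu$: then every column has unit norm, the empirical covariance is $\tfrac1n\X_{\boldsymbol\mu}\X_{\boldsymbol\mu}^T=\boldsymbol\mu\boldsymbol\mu^T$, and its leading eigenvector is exactly $\mathbf v_1=\boldsymbol\mu$. On this family the output $\hat{\mathbf v}_1$ of Streaming $\modsulq$ is the top eigenvector of $\boldsymbol\mu\boldsymbol\mu^T+\B{N}_{\varepsilon,\delta,d,n}$, i.e.\ a deterministic post-processing of a Gaussian vector in $\R^{d^2}$ with mean $\operatorname{vec}(\boldsymbol\mu\boldsymbol\mu^T)$ and covariance $\omega^2\B{I}_{d^2}$, with $\omega$ as in \eqref{eq:omega-streaming} (the reduction of the batched/streaming computation to this single-shot form is the one discussed right after Lemma~\ref{lemma:diffprivacy}, taking $n=b$).

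I would then apply Fano's inequality (Lemma~\ref{lemma:fano}) to the parametric family $\{\boldsymbol\mu\boldsymbol\mu^T+\B{N}:\boldsymbol\mu\in\mathcal{C}\}$ with estimator $\hat{\mathbf v}_1$ and the sign-invariant pseudo-metric $d(\boldsymbol\mu,\mathbf v)=\min(\|\boldsymbol\mu-\mathbf v\|,\|\boldsymbol\mu+\mathbf v\|)$, which satisfies $d(\boldsymbol\mu,\mathbf v)^2=2-2|\langle\boldsymbol\mu,\mathbf v\rangle|$. For distinct elements of $\mathcal{C}$ this gives the separation $\tau=\sqrt{2(1-\phi)}$. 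By Lemma~\ref{lemma:kl-gaussian} the Kullback--Leibler divergence between the laws indexed by $\boldsymbol\mu$ and $\mathbf v$ is $\tfrac{1}{2\omega^2}\|\boldsymbol\mu\boldsymbol\mu^T-\mathbf v\mathbf v^T\|_F^2$, and since $\|\boldsymbol\mu\boldsymbol\mu^T-\mathbf v\mathbf v^T\|_F^2=2-2\langle\boldsymbol\mu,\mathbf v\rangle^2\le2$ for unit vectors, we may take $\gamma=1/\omega^2$; the bound is inherited by the distributions of $\hat{\mathbf v}_1$ via the data-processing inequality. Fano then yields, for some index $j$,
\[
\mathbb{E}_j\big[d(\hat{\mathbf v}_1,\boldsymbol\mu_j)\big]\ \ge\ \frac{\sqrt{2(1-\phi)}}{2}\left(1-\frac{1/\omega^2+\log2}{(d-1)\log\frac{1}{\sqrt{1-\phi^2}}-\log8}\right),
\]
using $\log|\mathcal{C}|=(d-1)\log\tfrac{1}{\sqrt{1-\phi^2}}-\log8$.

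To conclude I would convert this direction-estimation bound into the stated correlation bound. Because $\hat{\mathbf v}_1$ and $\boldsymbol\mu_j$ are unit vectors, $|\langle\hat{\mathbf v}_1,\boldsymbol\mu_j\rangle|=1-\tfrac12 d(\hat{\mathbf v}_1,\boldsymbol\mu_j)^2$; taking expectations and applying Jensen's inequality in the form $\mathbb{E}[d^2]\ge(\mathbb{E}[d])^2$ gives
\[
\mathbb{E}\big[|\langle\hat{\mathbf v}_1,\boldsymbol\mu_j\rangle|\big]\ \le\ 1-\frac{1-\phi}{4}\left(1-\frac{1/\omega^2+\log2}{(d-1)\log\frac{1}{\sqrt{1-\phi^2}}-\log8}\right)^2 .
\]
Since $\mathbf v_1=\boldsymbol\mu_j$ is the top eigenvector of the empirical covariance of $\X_{\boldsymbol\mu_j}$, this exhibits an $n$-point dataset attaining the bound, and minimising the right-hand side over $\phi\in\Phi$ finishes the proof. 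The three thresholds defining $\Phi$ are exactly what keep each step non-vacuous: $\phi\ge(2\pi d)^{-1/2}$ is the hypothesis of Lemma~\ref{lemma:packing}, and the other two force $(d-1)\log\tfrac{1}{\sqrt{1-\phi^2}}$ to dominate, respectively, $\log(8d)$ (so that $\log|\mathcal{C}|>0$) and a term of order $1/\omega^2$ (so that the Fano factor stays away from degeneracy).

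I expect the only real work beyond transcription to be twofold: (i) carefully re-deriving every constant that in \citep{chaudhuri2012near} came from the symmetric $\tfrac{d(d+1)}{2}$-dimensional noise so that it reflects the $d^2$-dimensional noise here — this is where the exact form of $\Phi$ and of $\omega$ enters; and (ii) justifying the reduction from the genuinely streaming, per-chunk-perturbed computation in Alg.~\ref{alg:fpca_edge} to the clean ``$\boldsymbol\mu\boldsymbol\mu^T+\B{N}$'' model used by Fano, for which I would appeal to the $n=b$ reduction noted after Lemma~\ref{lemma:diffprivacy} together with the federated-$\SVD$ uniqueness of Lemma~\ref{lemma:svd_partial_lemma}. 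Neither is conceptually hard, but (i) is the step most likely to hide an off-by-a-constant slip.
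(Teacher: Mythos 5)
Your proposal is correct and follows essentially the same route as the paper's proof: the packing set of Lemma~\ref{lemma:packing}, the constant dataset $[\boldsymbol{\mu}\cdots\boldsymbol{\mu}]$, the Gaussian channel in $\R^{d^2}$ with KL bound $1/\omega^2$, Fano's inequality with $\tau=\sqrt{2(1-\phi)}$, and Jensen's inequality to pass from $\mathbb{E}[\|\hat{\mathbf{v}}-\boldsymbol{\mu}\|]$ to the correlation bound, with the thresholds in $\Phi$ arising exactly as you describe. If anything, your treatment of $\log|\mathcal{C}|=(d-1)\log\frac{1}{\sqrt{1-\phi^2}}-\log 8$ is cleaner than the paper's, which conflates this quantity with an orthonormal-basis count $d$ and a symbol $K$ in intermediate steps.
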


\begin{proof}
Let $\mathcal{C}$ be an orthonormal basis in $\R^d$. Then, $|\mathcal{C}|=d$, so solving for $\phi$ in \eqref{eq:packing} yields
\begin{equation}
\phi = \sqrt{1 - \exp\left(-\frac{2 \log (8d)}{d-1}\right)}.
\end{equation}
For any unit vector $\boldsymbol{\mu}$ let $\A(\boldsymbol{\mu}) = \boldsymbol{\mu}\boldsymbol{\mu}^T + \mathbf{N}$ where $\mathbf{N}$ is a symmetric random matrix such that $\{\mathbf{N}_{i,j}: i \leq i \leq j \leq d\}$ are i.i.d. $\mathcal{N}(0, \omega^2)$ and $\omega^2$ is the noise variance used in the Streaming $\modsulq$ algorithm.
The matrix $\A(\boldsymbol{\mu})$ can be thought of as a jointly Gaussian random vector on $d^2$ variables. The mean and covariance of this vector is
\begin{align}
\mathbb{E}[\boldsymbol{\mu}] &= (\boldsymbol{\mu}^2_{1}, \dots, \boldsymbol{\mu}^2_{d}, \boldsymbol{\mu}_1 \boldsymbol{\mu}_2, \dots, \boldsymbol{\mu}_{d-1}\boldsymbol{\mu}_d, \boldsymbol{\mu}_2 \boldsymbol{\mu}_1, \dots, \boldsymbol{\mu}_d\boldsymbol{\mu}_{d-1}) \in \R^{d^2},\\
\mbox{Cov}[\boldsymbol{\mu}] &= \omega^2 \mathbf{I}_{d^2 \times d^2} \in \R^{d^2 \times d^2}.
\end{align}
For $\boldsymbol{\mu}, \boldsymbol{\nu} \in \mathcal{C}$, the divergence can be calculated using Lemma \ref{lemma:kl-gaussian} yielding
\begin{equation}
\label{eq:kl-utility}
\mathbf{KL}(f_{\boldsymbol{\mu}} \mid \mid f_{\boldsymbol{\nu}}) \leq \frac{1}{\omega^2}.
\end{equation}

For any two vectors $\boldsymbol{\mu}, \boldsymbol{\nu} \in \mathcal{C}$, we have that $|\langle \boldsymbol{\mu}, \boldsymbol{\nu} \rangle| \leq \phi$, so that $-\phi \leq - \langle \boldsymbol{\mu}, \boldsymbol{\nu}\rangle$. Therefore, 
\begin{align}
\|\boldsymbol{\mu} - \boldsymbol{\nu}\|^2 &= \langle \boldsymbol{\mu} - \boldsymbol{\nu}, \boldsymbol{\mu} - \boldsymbol{\nu} \rangle\\
&= \|\boldsymbol{\mu}\|^2 + \|\boldsymbol{\nu}\|^2 - 2\langle \boldsymbol{\mu}, \boldsymbol{\nu}\rangle\\
&= 2(1 - \langle \boldsymbol{\mu}, \boldsymbol{\nu} \rangle)\\
&\geq 2(1 - \phi) \label{eq:aux-eq-utility}.
\end{align}
From \eqref{eq:kl-utility} and \eqref{eq:aux-eq-utility}, the set $\mathcal{C}$ satisfies the conditions of Lemma \ref{lemma:fano} with $\mathcal{F} = \{f_{\boldsymbol{\mu}}: \boldsymbol{\mu} \in \mathcal{C}\}$, $r = K$ and $\tau = \sqrt{2(1 - \phi)}$, and $\gamma = 1/\omega^2$. Hence, this shows that for Streaming $\modsulq$,
\begin{equation}
\label{eq:25}
\max_{\boldsymbol{\mu} \in \mathcal{C}} \mathbb{E}_{f_{\boldsymbol{\mu}}}\left[\|\hat{\boldsymbol{v}}  - \boldsymbol{\mu}\|\right] \geq \frac{\sqrt{2(1 - \phi)}}{2}\left(1 - \frac{1/\omega^2 + \log 2}{\log K}\right)
\end{equation}
As mentioned in \citep{chaudhuri2012near} this bound is vacuous when the term inside the parentheses is negative which imposes further conditions on $\phi$. Setting $K = 1/\omega^2 + \log 2$, we can solve to find another lower bound on $\phi$:
\begin{equation}
\phi \geq \sqrt{1 - \exp\left(-\frac{2/\omega^2 + \log 256}{d-1}\right)}
\end{equation}
Using Jensen's inequality on the left hand side of \eqref{eq:25} yields
\begin{equation}
\max_{\boldsymbol{\mu} \in \mathcal{C}} \mathbb{E}_{f_{\boldsymbol{\mu}}}\left[2(1 - |\langle \hat{\mathbf{v}}, \boldsymbol{\mu}\rangle|)\right] \geq \frac{(1 - \phi)}{2}\left(1 - \frac{1/\omega^2 + \log 2}{\log K}\right)^2
\end{equation}
so there is a $\boldsymbol{\mu}$ such that
\begin{equation}
\label{eq:expect}
\mathbb{E}_{f_{\boldsymbol{\mu}}}\left[|\langle \hat{\mathbf{v}}, \boldsymbol{\mu}\rangle|\right] \leq 1 - \frac{(1 - \phi)}{4}\left(1 - \frac{1/\omega^2 + \log 2}{\log K}\right)^2.
\end{equation}
Now, consider the dataset $\mathbf{D} = [\boldsymbol{\mu} \cdots \boldsymbol{\mu}] \in \R^{d^2 \times n}$. 
This dataset has covariance matrix equal to $\boldsymbol{\mu} \boldsymbol{\mu}^T$ and has top eigenvector equal to $\mathbf{v}_1 = \boldsymbol{\mu}$.
The output of the algorithm Streaming $\modsulq$ applied to $\mathbf{D}$ approximates $\boldsymbol{\mu}$, so satisfies \eqref{eq:expect}.
Minimising this equation over $\phi$ yields the required result.
\end{proof}

\begin{lemma}[Sample complexity]
    \label{lemma:samplecomplexity}
    For $(\epsilon, \delta)$ and $d \in \mathbb{N}$, there are constants $C_1>0$ and $C_2>0$ such that with
    \begin{equation}
    n \geq C_1 \frac{d^{3/2}\sqrt{\log(d/\delta)}}{\varepsilon}\left(1 - C_2 \left(1 -  \mathbb{E}_{f_{\boldsymbol{\mu}}} \left[|\langle \hat{\mathbf{v}}, \boldsymbol{\mu}\rangle|\right]\right)\right),
    \end{equation}
    where $\boldsymbol{\mu}$ is the first principal component of the dataset $\mathbf{X} \in \R^{d \times n}$ and $\hat{\mathbf{v}}$ is the first principal component estimated by Streaming MOD-SULQ.
\end{lemma}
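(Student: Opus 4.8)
The plan is to \emph{invert} the utility upper bound of Lemma~\ref{lemma:utility}. Recall from its proof (cf.\ \eqref{eq:expect}, using $\log|\mathcal{C}|=(d-1)\log\tfrac{1}{\sqrt{1-\phi^{2}}}-\log 8$ from \eqref{eq:packing}) that for every admissible packing parameter $\phi\in\Phi$ there is a unit vector $\boldsymbol{\mu}$, with associated dataset $\mathbf{D}=[\boldsymbol{\mu}\,\cdots\,\boldsymbol{\mu}]\in\R^{d\times n}$ whose top eigenvector is $\boldsymbol{\mu}$, such that the Streaming $\modsulq$ estimate $\hat{\mathbf{v}}$ obeys
\begin{equation*}
\mathbb{E}_{f_{\boldsymbol{\mu}}}\big[|\langle \hat{\mathbf{v}},\boldsymbol{\mu}\rangle|\big]\;\le\;1-\frac{1-\phi}{4}\,(1-R)^{2},\qquad R:=\frac{1/\omega^{2}+\log 2}{(d-1)\log\tfrac{1}{\sqrt{1-\phi^{2}}}-\log 8},
\end{equation*}
where $\omega=\omega(\varepsilon,\delta,d,n)$ is the noise scale of Lemma~\ref{lemma:diffprivacy}. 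Writing $q:=1-\mathbb{E}_{f_{\boldsymbol{\mu}}}[|\langle\hat{\mathbf{v}},\boldsymbol{\mu}\rangle|]$ for the correlation deficit, this is the statement $q\ge\frac{1-\phi}{4}(1-R)^{2}$, and it is informative precisely in the regime $0\le R<1$ of Fano's inequality (Lemma~\ref{lemma:fano}).

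First I would fix $\phi$ to a convenient absolute constant lying in the admissible set $\Phi$ for the relevant range of $(d,\delta)$ (this is possible since all three lower endpoints of $\Phi$ tend to $0$ as $d\to\infty$ along the envisaged scaling), so that $\tfrac{1-\phi}{4}$ and $\log\tfrac{1}{\sqrt{1-\phi^{2}}}$ are positive constants and $(d-1)\log\tfrac{1}{\sqrt{1-\phi^{2}}}-\log 8\ge c_{1}d$ for an absolute $c_{1}>0$ once $d$ exceeds an absolute threshold (small $d$ being absorbed into the final constant). Taking positive square roots in $q\ge\frac{1-\phi}{4}(1-R)^{2}$ gives $1-R\le 2\sqrt{q/(1-\phi)}$, and rearranging $R=\tfrac{1/\omega^{2}+\log 2}{(d-1)\log\tfrac{1}{\sqrt{1-\phi^{2}}}-\log 8}$ for $1/\omega^{2}$ yields
\begin{equation*}
\frac{1}{\omega^{2}}\;\ge\;\Big(1-2\sqrt{\tfrac{q}{1-\phi}}\Big)\Big((d-1)\log\tfrac{1}{\sqrt{1-\phi^{2}}}-\log 8\Big)-\log 2\;\ge\;c_{1}'\,d\,\Big(1-2\sqrt{\tfrac{q}{1-\phi}}\Big),
\end{equation*}
where the last step absorbs the additive $\log 2$ (legitimate on the range of $q$ for which Fano is informative, so that $1-2\sqrt{q/(1-\phi)}$ stays bounded away from $0$).

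Next I would substitute the closed form \eqref{eq:omega-streaming} of $\omega$. Retaining only its dominant summand gives $\omega\ge c_{0}\,d\sqrt{\log(d/\delta)}/(\varepsilon n)$ for an absolute $c_{0}>0$, hence $1/\omega^{2}\le\varepsilon^{2}n^{2}/(c_{0}^{2}d^{2}\log(d/\delta))$. Chaining this with the previous display gives $\varepsilon^{2}n^{2}/(c_{0}^{2}d^{2}\log(d/\delta))\ge c_{1}'d\,(1-2\sqrt{q/(1-\phi)})$, i.e.\ $n^{2}\gtrsim \frac{d^{3}\log(d/\delta)}{\varepsilon^{2}}\big(1-2\sqrt{q/(1-\phi)}\big)$, and after one more square root and a final adjustment of the constants this becomes the asserted
\begin{equation*}
n\;\ge\;C_{1}\,\frac{d^{3/2}\sqrt{\log(d/\delta)}}{\varepsilon}\Big(1-C_{2}\big(1-\mathbb{E}_{f_{\boldsymbol{\mu}}}[|\langle\hat{\mathbf{v}},\boldsymbol{\mu}\rangle|]\big)\Big).
\end{equation*}

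The step I expect to be the main obstacle is not any single computation but the bookkeeping around that ``informative regime'': verifying that the chosen $\phi$ is genuinely in $\Phi$ for the relevant range of $(d,\delta,q)$, that the additive $\log 2$ and $\log 8$ terms are dominated there, and that the intermediate $\sqrt{\cdot}$-shaped bound in $q$ can be rephrased in the exact stated form (linear in the deficit $1-\mathbb{E}_{f_{\boldsymbol{\mu}}}[|\langle\hat{\mathbf{v}},\boldsymbol{\mu}\rangle|]$) by shrinking $C_{1}$ and tuning $C_{2}$. No idea beyond inverting Lemma~\ref{lemma:utility} is needed, and this is the same manipulation as for the analogous statement in \citep{chaudhuri2012near}.
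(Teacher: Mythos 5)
Your overall strategy---invert the Fano/utility bound of Lemma~\ref{lemma:utility}, keep only the dominant term of $\omega$ in \eqref{eq:omega-streaming}, and solve for $n$---is the same as the paper's, but your handling of the packing parameter $\phi$ diverges from it and contains the one genuine flaw in the proposal. You freeze $\phi$ at an absolute constant and justify admissibility by asserting that ``all three lower endpoints of $\Phi$ tend to $0$ as $d\to\infty$.'' That is false for the third endpoint $\sqrt{1-\exp\left(-\frac{2/\omega^{2}+\log 256}{d-1}\right)}$, which is exactly the condition that the Fano ratio $R$ not exceed $1$ (without it, squaring \eqref{eq:25} to get \eqref{eq:expect} is illegitimate). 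Since $1/\omega^{2}\asymp \varepsilon^{2}n^{2}/(d^{2}\log(d/\delta))$, the ratio $\frac{2/\omega^{2}}{d-1}$ is $o(1)$ only when $n=o\left(d^{3/2}\sqrt{\log(d/\delta)}/\varepsilon\right)$; at the threshold scale this endpoint is a fixed constant, and above it it tends to $1$, so no fixed $\phi_{0}<1$ is admissible there. The repair is a case split you do not perform: if $n\gtrsim d^{3/2}\sqrt{\log(d/\delta)}/\varepsilon$ the claimed inequality holds trivially for small enough $C_{1}$, and only in the complementary regime does your constant-$\phi$ chain apply. Your other flagged worry---converting the $\sqrt{\smash[b]{1-c\sqrt{q}}}$-shaped bound into the stated linear form $1-C_{2}q$---is genuinely fine: an AM--GM argument gives $\sqrt{(1-cu)_{+}}\geq \tfrac{1}{\sqrt{2}}\left(1-4c^{2}u^{2}\right)$ with $u=\sqrt{q}$, so shrinking $C_{1}$ and taking $C_{2}=4c^{2}$ suffices.

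For comparison, the paper avoids both issues by choosing $\phi$ \emph{adaptively} so that the Fano fraction equals $\tfrac12$: this simultaneously yields $4\sqrt{1-\rho}\geq\sqrt{1-\phi}$, i.e.\ $\phi\geq 1-16(1-\rho)$ (already linear in the deficit), and pins down $\phi=\Theta\left(\omega^{-1}d^{-1/2}\right)$ via $\log\frac{1}{\sqrt{1-\phi^{2}}}\sim\frac{2}{\omega^{2}d}$, whence $\phi\lesssim \varepsilon n/\left(d^{3/2}\sqrt{\log(d/\delta)}\right)$ and the stated bound follows directly without the square-root detour. Your fixed-$\phi$ route is workable once the admissibility case split is added, but it is strictly more delicate and buys nothing over the adaptive choice.
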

\begin{proof}
Using \eqref{eq:expect}, and letting $\mathbb{E}_{f_{\boldsymbol{\mu}}} \left[|\langle \hat{\mathbf{v}}, \boldsymbol{\mu}\rangle|\right] = \rho$, we obtain,
\begin{equation}
2 \sqrt{1 - \rho} \geq \min_{\phi \in \Phi} \sqrt{1 - \phi} \left(1 - \frac{1/\omega^2 + \log 2}{(d-1)\log \frac{1}{\sqrt{1 - \phi^2}} - \log 8}\right)
\end{equation}
Picking $\phi$ so that the fraction in the right-hand side becomes 0.5, we obtain,
\begin{equation}
\label{eq:aux-complx}
4 \sqrt{1 - \rho} \geq \sqrt{1 - \phi}.
\end{equation}
Moreover, as $d, n \rightarrow \infty$, this value of $\phi$ guarantee implies an asymptotic of the form
\begin{equation}
\log \frac{1}{\sqrt{1 - \phi^2}} \sim \frac{2}{\omega^2 d} + o(1).
\end{equation}
This implies that $\phi = \Theta(\omega^{-1} d^{-1/2})$, and by \eqref{eq:omega-streaming} that $\omega \gtrsim d^2(\varepsilon n)^{-2}\log(d/\delta)$.
Therefore, there exists $C > 0 $ such that $\omega^2 > C d^2 (n \varepsilon)^{-2} \log (d/\delta)$. Since $\phi = \Theta (\omega^{-1}d^{-1/2})$ we have that for some $D > 0$
\begin{equation}
\phi^2 \leq D \frac{n^2 \varepsilon^2}{d^3 \log (d/\delta)}.
\end{equation}
By \eqref{eq:aux-complx} we get
\begin{equation}
\label{eq:solveforn}
(1 - 16 (1 - \rho)) \leq D \frac{n^2 \varepsilon^2}{d^3 \log (d/\delta)}
\end{equation}
Solving for $n$ in \eqref{eq:solveforn} yields
\begin{equation}
n \geq C_1 \frac{d^{3/2}\sqrt{\log(d/\delta)}}{\varepsilon}(1 - C_2 (1 - \rho)),
\end{equation}
for some constants $C_1$ and $C_2$.
\end{proof}

\section{Federated PCA Analysis}
\label{apx:fed_pca_analysis}

In this section we will present a detailed analysis of $\FPCA$ in which we will describe the merging process in detail as well as  provide a detailed error analysis in the \textit{streaming} and \textit{federated} setting that is based is based on the mathematical tools introduced in~\citep{iwen2016distributed}.

\subsection{Asynchronous Independent Block based SVD}

We begin our proof by proving~\Cref{lemma:svd_partial_lemma} (Streaming partial $\SVD$ uniqueness) which applies in the absence of perturbation masks and is the cornerstone of our federated scheme.

\begin{proof}
Let the reduced $\SVD{}_{r}$ representation of each of the $M$ nodes at time $t$ be,

  \begin{align}
    \Y^{i}_{t} = \sum_{j=1}^r \B{u}_j^i \bm{\sigma}_j^i (\B{v}_j^i)^{T} = \hat{\B{U}}^{i}_{t}
    \hat{\B{\Sigma}}^{i}_{t} (\hat{\B{V}}^{i}_{t})^{T} , \quad i = 1,2,\ldots,M.
  \end{align}

We also know that each of the blocks $\Y^{i}_{t}\in [M]$ can be at most of rank $d$. 
Note that in this instance, the definition applies for only \textit{fully} materialised matrices; however, substituting each block of $\Y_{i}^{t}$ with our local updates procedure as in~\Cref{alg:fpca_edge} then will generate an estimation of the reduced $\SVD{}_{r}$ of that particular $\Y_{i}^{t}$ block with an error at most as in~\eqref{eq:fpca-local-error} subject to each update chunk being in $\R^{d\times b}$ with $b \ge \min \rank(\Y_t^i)$ $\forall i \in [M]$.

Now, let the singular values of $\Y_{t}$ be the positive square root of the eigenvalues of $\Y_{t} \Y_{t}^{T}$, where as defined previously $\Y_{t}$ is the data seen so far from the $M$ nodes; then, by using the previously defined streaming block decomposition of a matrix $\Y_{t}$ we have the following,
  \begin{align}
    \Y_{t} \Y^{T}_{t} = \sum_{i=1}^M \Y^{i}_{t} (\Y^{i}_{t})^{T}  
    = \sum_{i=1}^M  \hat{\B{U}}^{i}_{t} \hat{\B{\Sigma}}^{i}_{t} (\hat{\B{V}_{t}}^{i})^{T}   (\hat{\B{V}}_{t}^{i}) (\hat{\B{\Sigma}}^{i}_{t})^{T} (\hat{\B{U}}^{i}_{t})^{T}  
    = \sum_{i=1}^M  \hat{\B{U}}^{i}_{t} \hat{\B{\Sigma}}^{i}_{t} (\hat{\B{\Sigma}}^{i}_{t})^{T} (\B{U}^{i}_{t})^{T}
  \end{align}
Equivalently, the singular values of $\B{Z}_{t}$ are similarly defined as the square root of the eigenvalues of $\B{Z}_{t} \B{Z}^{T}_{t}$.
  \begin{align}
    \B{Z} \B{Z}^{T} &= \sum_{i=1}^M (\hat{\B{U}}^{i}_{t} \hat{\B{\Sigma}}^{i}_{t}) (\hat{\B{U}}^{i}_{t} \hat{\B{\Sigma}}^{i}_{t})^T
    = \sum_{i=1}^M  \hat{\B{U}}^{i}_{t} \hat{\B{\Sigma}}^{i}_{t} (\hat{\B{\Sigma}}^{i}_{t})^T (\hat{\B{U}}^{i}_{t})^T
  \end{align}
  
Thus $\Y_{t} \Y^{T}_{t} = \B{Z}_{t} \B{Z}^{T}_{t}$ at any $t$, hence the singular values of matrix $\B{Z}_{t}$ must surely equal to those of matrix $\Y_{t}$. 
Moreover, since the left singular vectors of both $\Y_{t}$ and $\B{Z}_{t}$ will be also eigenvectors of $\Y_{t} \Y^{T}_{t}$ and $\B{Z}_{t} \B{Z}^{T}_{t}$, respectively; then the eigenspaces associated with each - possibly repeated - eigenvalue will also be equal thus $\hat{\B{U}}_{t} = \hat{\B{U}}_{t}'\B{B}_{t}$.  
The block diagonal unitary matrix $\B{B}_{t}$ which has $p$ unitary blocks of size $p \times p$ for each repeated eigenvalue; this enables the singular vectors which are associated with each repeated singular value to be rotated in the desired matrix representation $\hat{\B{U}}_{t}$.
In case of different update chunk sizes per worker the result is unaffected as long as the requirement for their size ($b$) mentioned above is kept and their rank $r$ is the same.
\end{proof}

\subsection{Time Order Independence}
Further, a natural extension to Lemma 1 which is pivotal to a successful federated scheme is the ability to guarantee that our result will be the same regardless of the merging order in the case there are no input perturbation masks.  
\begin{lemma}[Time independence]
\label{lemma:fpca-time-independence}
Let $\Y \in \R^{d \times n}$. Then, if $\mathbf{P} \in \R^{n \times n}$ is a row permutation of the identity. Then, in the absence of input-perturbation masks, ${\FPCAC(\Y) = \FPCAC(\Y\mathbf{P})}$.
\end{lemma}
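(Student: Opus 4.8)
The plan is to track how a column permutation $\mathbf{P}$ propagates through the three stages of $\FPCAC$ --- client assignment, local subspace tracking ($\FPCAEC$), and recursive merging --- and to show that each stage produces an output depending on the data it is handed only through a Gram matrix (for subspaces) or through a multiset of singular values (for the rank-adjustment logic), both of which are permutation invariant. Throughout I take the forgetting factor $\lambda=1$, since any $\lambda<1$ weights blocks by arrival order and would break the claim; and I allow the final identity to hold up to the block-diagonal unitary factor $\B{B}$ of Lemma~\ref{lemma:svd_partial_lemma}, which collapses to $\B{I}_r$ when no nonzero singular value is repeated.

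First I would handle the local stage. Fix a client $i$ and let $\Y^i$, resp.\ $\Y^i\mathbf{P}'$, be the columns it receives under $\Y$, resp.\ $\Y\mathbf{P}$, where $\mathbf{P}'$ is the induced permutation of that client's columns; a permutation of $\Y$ may also move columns between clients, which I treat below via Lemma~\ref{lemma:svd_partial_lemma}. The output $(\B{U}^i,\B{\Sigma}^i)$ is produced by iterating $\RSPCA_r$ --- the recursion \eqref{eq:svdr_1} realised through $\Merge_r$ --- over the client's batches and then applying $\rankupdate_r^{\alpha,\beta}$. Two observations finish this stage: (a) in the rank-preserving regime (cf.\ the discussion following Theorem~\ref{thm:stochastic-result}, where the streaming output coincides with offline truncated $\SVD$) the iterated recursion returns the left principal subspace and the singular values of $\Y^i$, and these depend on $\Y^i(\Y^i)^T$ alone, which equals $(\Y^i\mathbf{P}')(\Y^i\mathbf{P}')^T$; (b) $\rankupdate_r^{\alpha,\beta}$ only reads the energy $\energy_r(\B{\Sigma})$, a function of singular values, so it performs the same rank move in both runs, and the direction it appends is either a fixed canonical vector or the next left singular vector --- permutation invariant either way. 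Hence $\B{U}^i\B{\Sigma}^i(\B{U}^i\B{\Sigma}^i)^T$, and the final local rank $r_i$, are the same for $\Y^i$ and $\Y^i\mathbf{P}'$.

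Next I would assemble the merge stage. Form $\B{Z}=[\B{U}^1\B{\Sigma}^1\mid\cdots\mid\B{U}^M\B{\Sigma}^M]$ as in Lemma~\ref{lemma:svd_partial_lemma}. The recursive tree-merge of Alg.~\ref{algorithm:fastest_subspace} (with $\lambda=1$) is a staged, numerically stable evaluation of $\SVD_r(\B{Z})$, and by the extension of \citep{iwen2016distributed} recorded in Lemma~\ref{lemma:svd_partial_lemma} its output equals $\SVD_r(\Y)$ up to the block-diagonal unitary $\B{B}$; crucially, Lemma~\ref{lemma:svd_partial_lemma} is insensitive both to which columns landed on which node and to each node's block sizes (it only needs $b\geq\min\rank(\Y_t^i)$), which is exactly what a permutation can alter. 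Running the same pipeline on $\Y\mathbf{P}$ therefore yields $\SVD_r(\Y\mathbf{P})$, again up to a block-diagonal unitary.

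Finally I would close the loop with the elementary identity $(\Y\mathbf{P})(\Y\mathbf{P})^T=\Y\mathbf{P}\mathbf{P}^T\Y^T=\Y\Y^T$: thus $\Y$ and $\Y\mathbf{P}$ have the same singular values and the same left singular subspaces, so $\SVD_r(\Y)$ and $\SVD_r(\Y\mathbf{P})$ agree up to the rotation $\B{B}$ of Lemma~\ref{lemma:svd_partial_lemma}, and the adaptive rank --- being $\energy_r$-driven --- is identical in both runs; when no nonzero singular value repeats, $\B{B}=\B{I}_r$ and $\FPCAC(\Y)=\FPCAC(\Y\mathbf{P})$ holds exactly. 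The step I expect to be the real obstacle is the local one: making precise that the per-client streaming recursion \eqref{eq:svdr_1} returns something depending on the client's data only through its Gram matrix. Outside the exactly-low-rank regime that recursion is lossy and genuinely order dependent, so the crisp equality lives in that regime (or holds up to the error \eqref{eq:fpca-local-error} in general), whereas the merge stage is exact by Lemma~\ref{lemma:svd_partial_lemma} regardless.
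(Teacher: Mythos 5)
Your proposal is correct, and its closing step---$(\Y\mathbf{P})(\Y\mathbf{P})^T=\Y\Y^T$, so $\Y$ and $\Y\mathbf{P}$ share singular values and left principal subspaces---is, in essence, the \emph{entirety} of the paper's own proof: the paper simply writes $\Y\mathbf{P}=\mathbf{U}\mathbf{\Sigma}(\mathbf{V}^T\mathbf{P})$, notes that $\mathbf{P}^T\mathbf{V}$ is still orthogonal, and concludes that the left singular structure is unchanged. Everything else you do is additional work the paper omits: tracing the permutation through the client assignment, the local $\FPCAEC$ recursion, and the tree merge, and arguing stage by stage that each depends on its input only through a Gram matrix or a multiset of singular values. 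That extra work buys something real. The paper's three-line argument implicitly identifies $\FPCAC(\Y)$ with $\SVD_r(\Y)$ and never confronts whether the \emph{algorithm} actually computes a permutation-invariant function; your version makes that identification explicit via Lemma~\ref{lemma:svd_partial_lemma} for the merge stage, and---importantly---you are honest that the per-client streaming recursion \eqref{eq:svdr_1} is lossy and genuinely order-dependent outside the exactly-low-rank regime, so the equality is exact only there (or holds up to the error \eqref{eq:fpca-local-error} in general, and up to the block-diagonal unitary $\B{B}$ when singular values repeat). That caveat is a gap in the lemma as the paper states and proves it, not in your argument; your treatment is the more faithful account of what the algorithm actually guarantees, at the cost of a longer proof and a weaker (regime-restricted or up-to-error) conclusion.
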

\begin{proof}
If $\Y = \mathbf{U}\mathbf{\Sigma}\mathbf{V}^T$ is the Singular Value Decomposition ($\SVD$) of $\Y$, then $\Y\mathbf{P} =\mathbf{U}\mathbf{\Sigma}\left(\mathbf{V}^T\mathbf{P}\right)$.
Since $\mathbf{V}' = \mathbf{P}^T\mathbf{V}$ is orthogonal, $\mathbf{U}\mathbf{\Sigma}(\mathbf{V}')^T$ is the $\SVD$ of $\Y\mathbf{P}$.
Hence, both $\Y$ and $\Y\mathbf{P}$ have the same singular values and left principal subspaces.
\end{proof}

Notably, by formally proving the above Lemmas we can now exploit the following important properties: i) that we can create a block decomposition of $\Y_{t}$ for every $t$ without fully materialising the block matrices while being able to obtain their $\SVD_{r}$ incrementally, and ii) that the result will hold regardless of the arrival order. 

\subsection{Subspace Merging}

In order to expand the result of Lemmas~\ref{lemma:svd_partial_lemma} and~\ref{lemma:fpca-time-independence} we must first present the full implementation of~\Cref{algorithm:basic_subspace}.
This algorithm is a direct consequence of Lemma~\ref{lemma:svd_partial_lemma}, with the addition of a forgetting factor $\lambda$ that only gives more weight to the {\em newer} subspace.

\begin{algorithm}
    \KwData{
        $\B{U}_{1} \in \R^{d \times r_{1}}$, first subspace\\
        $\B{\Sigma}_{1} \in \R^{r_{1} \times r_{1}}$, first subspace singular values\\
        $\B{U}_{2} \in \R^{d \times r_{2}}$, second subspace\\
        $\B{\Sigma}_{2} \in \R^{r_{2} \times r_{2}}$, second subspace singular values\\
        $r \in [r]$, , the desired rank $r$\\
        $\lambda_{1} \in (0, 1)$, forgetting factor\\
        $\lambda_{2} \geq 1$, enhancing factor\\
    }
    \KwResult{
        $\B{U}' \in \R^{d \times r}$, merged subspace,
        $\B{\Sigma}' \in \R^{r \times r}$, merged singular values
    }
    \SetKwProg{Fn}{Function}{ is}{end}
    \Fn{$\BasicMerge(\B{U}_{1}$, $\B{\Sigma}_{1}$, $\B{U}_{2}$, $\B{\Sigma}_{2}$, $\lambda_{1}$, $\lambda_{2})$}{
    $[\B{U'}, \B{\Sigma'}, \text{\textasciitilde}] \leftarrow \SVD_{r}([\lambda_{1} \B{U}_{1}\B{\Sigma}_{1}, \lambda_{2} \B{U}_{2}\B{\Sigma}_{2}])$\\
    }
    \caption{$\BasicMerge$ algorithm}
    \label{algorithm:basic_subspace}
\end{algorithm}

\subsubsection{Improving upon regular $\SVD$}
As per~\Cref{lemma:svd_partial_lemma} we are able to use this algorithm in order to merge two subspaces with ease, however there are a few things that we could improve in terms of speed.
Recall, that in our particular care we do not require $\B{V}^{T}$, which is computed by default when using $\SVD$; this incurs both computational and memory overheads.
We now show how we can do better in this regard. 

We start by deriving an improved version for merging, shown Algorithm~\ref{algorithm:faster_subspace}; notably, this algorithm improves upon the basic merge (Algorithm~\ref{algorithm:basic_subspace}) by exploiting the fact that the input subspaces are already \textit{orthonormal}.
In this case, we show how we can transform the Algorithm~\ref{algorithm:basic_subspace} to Algorithm~\ref{algorithm:faster_subspace}. 
The key intuition comes from the fact that we can incrementally update $\B{U}$ by using $\B{U}\leftarrow \B{Q}_{p} \B{U}_{R}$. 
To do this we need to first create a subspace basis which spans $\B{U_{1}}$ and  $\B{U_{2}}$, namely $\text{span}(\B{Q}_{p})=\text{span}([\B{U_{1}}, \B{U_{2}}])$. This is done by performing $[\B{Q}_p, \B{R}_p] = \QR([\lambda_{1} \B{U}_{1}\B{\Sigma}_{1}, \lambda_{2} \B{U}_{2}\B{\Sigma}_{2}])$ and use $\B{R}_{p}$ to perform an incremental update.
Additionally, it is often the case that the subspaces spanned by $\B{U_{1}}$ and $\B{U_{2}}$ to intersect; in which case the rank of $\B{Q}$ is less than the sum $r_{1}$ and $r_{2}$.
Typically, practical implementations of $\QR$ will permute $\B{R}$ pushing the diagonal zeros only after all non-zeros which preserves the intended diagonal shape in the upper left part of $\B{R}$.
However, this behaviour has no practical impact to our results; as in the event this occurs, $\B{Q}$ is always permuted accordingly to reflect this~\cite{strang2019linear}.
Continuing, we know that $\B{Q}_{p}$ is orthogonal but we are not finished yet since $\B{R}_{p}$ is not diagonal, so an extra $\SVD$ needs to be applied on it which yields the singular values in question and the rotation that $\B{Q}_p$ requires to represent the new subspace basis.
Unfortunately, even if this improvement, this technique only yields a marginally better algorithm since the $\SVD$ has to now be performed at a much smaller matrix, namely, $\B{R}_{p}$.

\begin{algorithm}[htb!]
    \KwData{
        $U_{1} \in \R^{d \times r_{1}}$, first subspace\\
        $\B{\Sigma}_{1} \in \R^{r_{1} \times r_{1}}$, first subspace singular values\\
        $\B{U}_{2} \in \R^{d \times r_{2}}$, second subspace\\
        $\B{\Sigma}_{2} \in \R^{r_{2} \times r_{2}}$, second subspace singular values\\
        $r \in [r]$, , the desired rank $r$\\
        $\lambda_{1} \in (0, 1)$, forgetting factor\\
        $\lambda_{2} \geq 1$, enhancing factor\\
    }
    \KwResult{
        $\B{U}' \in \R^{d \times r}$, merged subspace\\
        $\B{\Sigma}' \in \R^{r \times r}$, merged singular values
    }
    \SetKwProg{Fn}{Function}{ is}{end}
    \Fn{$\FasterMerge(\B{U}_{1}$, $\B{\Sigma}_{1}$, $\B{U}_{2}$, $\B{\Sigma}_{2}$, $\lambda_{1}$, $\lambda_{2}, $r$)$}{
    $[\B{Q}_{p}, \B{R}_{p}] \leftarrow \QR(\lambda_{1} \B{U}_{1}\B{\Sigma}_{1} ~ | ~ \lambda_{2}\B{U}_{2}\B{\Sigma}_{2})$\\
    $[\B{U}_{R}, \B{\Sigma}', \text{\textasciitilde}] \leftarrow \SVD_{r}(\B{R_{p}})$\\
    $\B{U}' \leftarrow \Q_{p} \B{U}_{R}$\\
    }
    \caption{$\FasterMerge$ algorithm}
    \label{algorithm:faster_subspace}
\end{algorithm}

Now we will derive our final merge algorithm by showing how \Cref{algorithm:faster_subspace} can be further improved when $\mathbf{V}^{T}$ is not needed and we have knowledge that $\mathbf{U}_{1}$ and $\mathbf{U}_{2}$ are already orthonormal. 
This is done by building a basis $\B{U}'$ for $\operatorname{span}((\B{I} - \B{U_1}\B{U_1}^T)\B{U_2})$ via the QR factorisation and then computing the $\SVD$ decomposition of a matrix $\B{X}$ such that  

\begin{equation}
    [\B{U_{1}}\B{\Sigma_{1}},\B{U_{2}}\B{\Sigma_{2}}] = [\B{U_{1}},\B{U}']\B{X}.
\end{equation}

It is shown in~\citep[Chapter 3]{vrehuuvrek2011subspace} in an analytical derivation that this yields an $\B{X}$ of the form

\begin{equation*}
    \B{X} = 
    \begin{bmatrix} 
        \B{U_{1}^{T}}\B{U_{1}}\B{\Sigma_{1}} &
        \B{U_{1}^{T}}\B{U_{2}}\B{\Sigma_{2}}\\
        \B{U'}^{T}\B{U_{1}} & \B{U'^{T}}\B{U_{2}}\B{\Sigma_{2}}
    \end{bmatrix}
    = 
    \begin{bmatrix} 
        \B{\Sigma_{1}} & \B{U_{1}^{T}}\B{U_{2}}\B{\Sigma_{2}}\\
        0 & \B{R_{p}}\B{\Sigma_{2}}
    \end{bmatrix}
\end{equation*}

The same technique appears to have been independently rediscovered in~\citep{eftekhari2019moses} as the merging procedure for each block is identical. 
The Algorithm~\ref{algorithm:fastest_subspace_apx} below shows the full implementation.
\begin{algorithm}[htb!]
\footnotesize{
    $\mbox{Merge}_r(\B{U}_1, \B{\Sigma}_1, \B{U}_2, \B{\Sigma}_2)$
    
    \KwData{ 
        
        $r \in [r]$, rank estimate\;
        $(\B{U}_{1}, \B{\Sigma}_1) \in \R^{d \times r_{1}} \times \R^{r_1 \times r_1}$, 1st subspace\;
        $(\B{U}_{2}, \B{\Sigma}_2) \in \R^{d \times r_{2}} \times \R^{r_2 \times r_2}$, 2nd subspace\;
    }
    \KwResult{
        $(\B{U}', \B{\Sigma}') \in \R^{d \times r} \times \R^{r \times r}$ merged subspace\;
    }
    \SetKwProg{Fn}{Function}{ is}{end}
        \Fn{$\Merge_{r}(\B{U}_{1}$, $\B{\Sigma}_{1}$, $\B{U}_{2}$, $\B{\Sigma}_{2})$}{
        $\B{Z} \leftarrow \B{U}^{T}_{1}\B{U}_{2}$\;
        $[\B{Q}, \B{R}] \leftarrow \QR(\B{U}_{2} - \B{U}_{1}\B{Z})$\;
        $[\B{U}_r,\B{\Sigma}', \thicksim] \leftarrow \SVD_{r}\bigg(
        \begin{bmatrix} 
            \B{\Sigma}_{1} & \B{Z}\B{\Sigma}_{2} \\
            0 & \B{R}\B{\Sigma}_{2}
        \end{bmatrix}
        \bigg)$\;
        $\B{U}' \leftarrow [\B{U}_{1}, \B{Q}]\B{U}_r$\;
    }
    }
    \caption{$\Merge_{r}$ \citep{vrehuuvrek2011subspace, eftekhari2019moses}} %
    \label{algorithm:fastest_subspace_apx}
\end{algorithm}

The algorithm shown above is the one of the essential components of our federated scheme, allowing us to quickly merge incoming subspaces as they are propagated upwards. 
To illustrate the practical benefits of the merging algorithm we conducted an experiment in order to evaluate if the algorithm performs as expected.
Concretely, we created synthetic data using $ \text{Synth}(1)^{d\times n}$ with $d=800$ and $n\in \{800, 1.6k, 2.4k, 3.2k, 4k\}$; then we split each dataset into two equal chunks each of which was processed using $\FPCA$ with a target rank of $100$. 
Then we proceeded to merge the two resulting subspaces with two different techniques, namely, with the~\Cref{eq:basic_subspace} and~\Cref{algorithm:fastest_subspace_apx} as well as find the offline subspace using traditionally $\SVD$. 
We then show in~\Cref{fig:subspace_merge_evaluation} the errors incurred with respect to the offline $\SVD$ against the resulting merged subspaces and singular values of the two techniques used, as well as their execution.
We can clearly see that the resulting subspaces are \textit{identical} in all cases and that the error penalty in the singular values is minimal when compared to~\cref{eq:basic_subspace}; as expected, we also observe that derived algorithm is faster while consuming less memory.
Critically speaking, the speed benefit is not significant in the single case as presented; however, these benefits can be additive in the presence of thousands of merges that would likely occur in a federated setting.

\begin{figure*}[htb!]
    \centering
    \begin{subfigure}{.31\linewidth}
        \centering
        \includegraphics[scale=0.28]{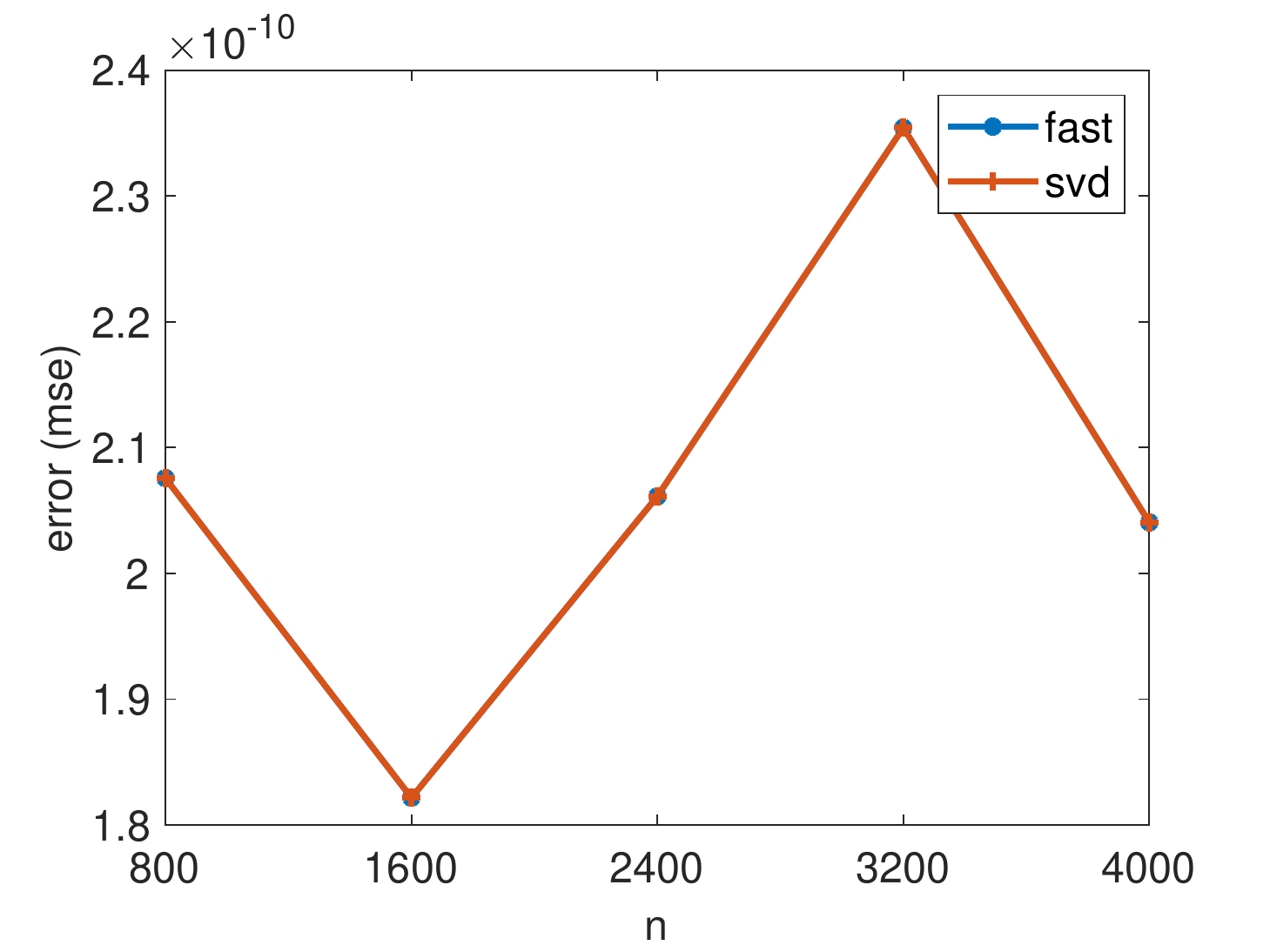}
        \caption{$\B{U}$ errors.}
        \label{fig:merge_u_errors}
    \end{subfigure}
    \begin{subfigure}{.31\linewidth}
        \centering
        \includegraphics[scale=0.28]{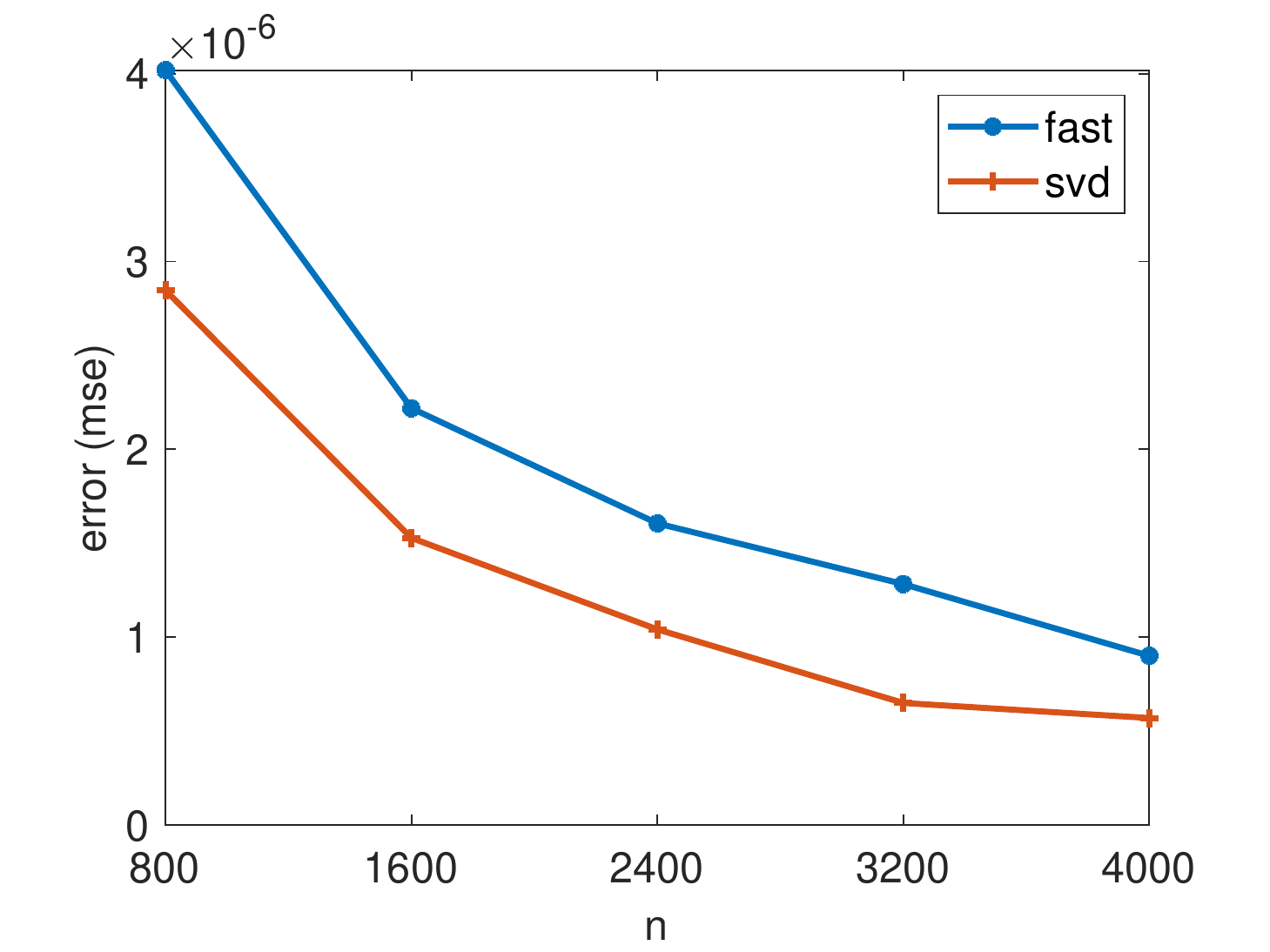}
        \caption{Singular Value errors.}
        \label{fig:merge_sigma_errors}
    \end{subfigure}
    \begin{subfigure}{.31\linewidth}
        \centering
        \includegraphics[scale=0.28]{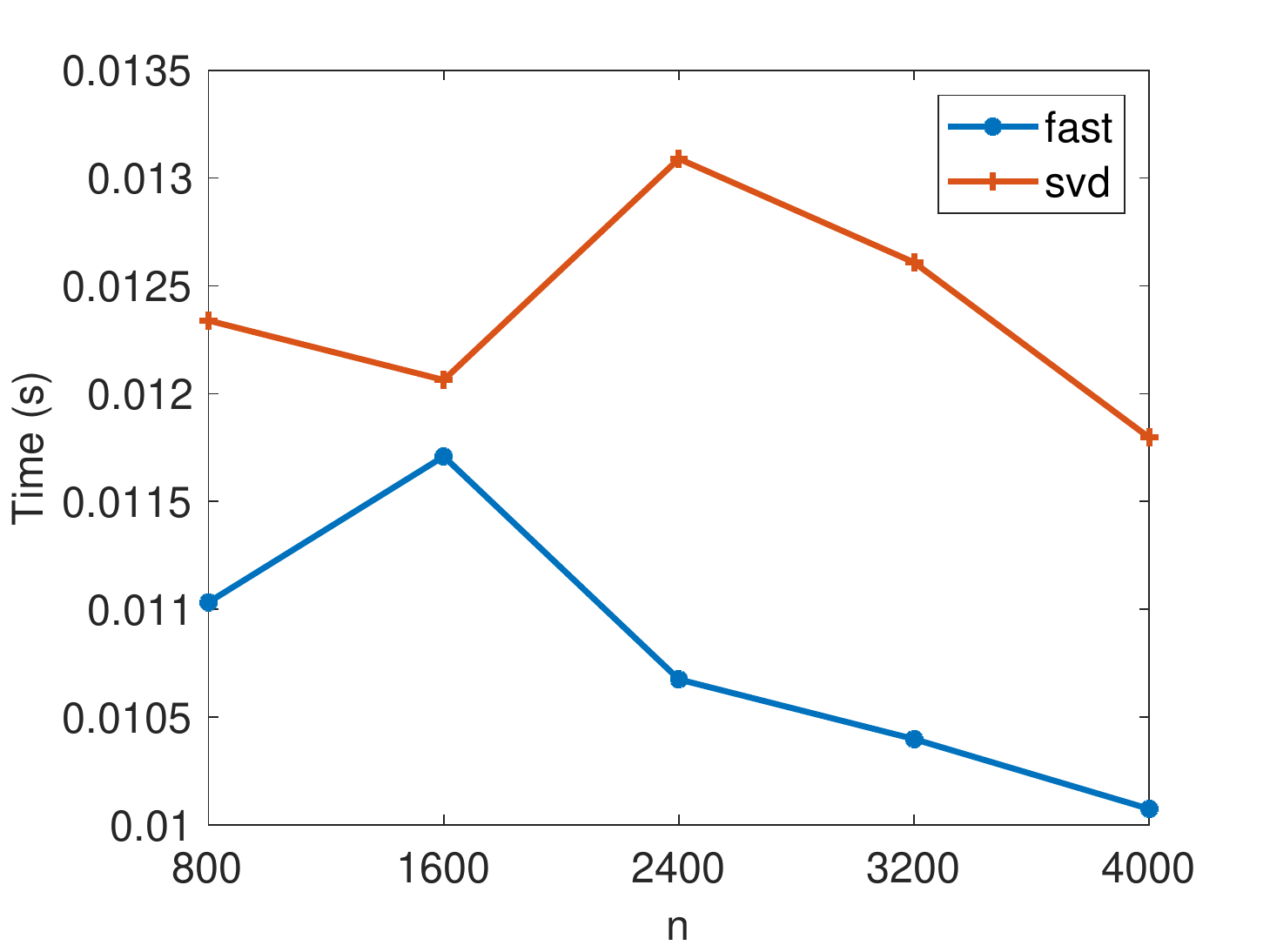}
        \caption{Execution time.}
        \label{fig:merge_speed}
    \end{subfigure}
    \caption{Illustration of the benefits of~\Cref{algorithm:fastest_subspace_apx}, in of errors of subspace (\cref{fig:merge_u_errors}), singular values (\cref{fig:merge_sigma_errors}), and its execution speed (\cref{fig:merge_speed}).}
    \label{fig:subspace_merge_evaluation}
\end{figure*}

\subsection{Federated Error Analysis}

In this section we will give a lower and a upper bound of our federated approach.
This is also  based on the mathematical toolbox we previously used~\cite{iwen2016distributed} but is adapted in the case of streaming block matrices. 

\begin{lemma}
  Let $\Y^{i}_{t} \in \mathbb{R}^{d\times tMb}, i=[M]$ for a any time $t$ and a fixed update chunk size $b$.
  Furthermore, suppose matrix $\Y^{i}_{t}$ at time $t$ has block matrices defined as
  $\Y^{i}_{t}=\left[\Y^{1}_{t}| \Y^{2}_{t}|\cdots|\Y^{M}_{t}\right]$, and $\B{Z_{t}}$ at the same time has blocks defined as $\B{Z}_{t}=\left[(\Y^{1}_{t})_r | (\Y^{2}_{t})_r |\cdots|(\Y^{M}_{t})_r \right]$, where $r \le d$.  
  Then, $\|(\B{Z}_{t})_r - \Y_{t} \|_{\rm F} \leq \| (\B{Z})_r - \B{Z}_{t} \|_{\rm F} +  \| \B{Z}_{t} - \Y_{t} \|_{\rm F} \leq 3 \| (\Y_{t})_r - \Y_{t} \|_{\rm F}$ holds for all $r \in [d]$.
  \label{lemma:low_ranking_merge}
\end{lemma}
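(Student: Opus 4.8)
The plan is to split the second inequality into two pieces, one for each summand of the triangle-inequality bound, and to control both pieces by $\|(\Y_t)_r - \Y_t\|_{\rm F}$ using only the optimality of the truncated SVD (Eckart--Young--Mirsky) together with the fact that the Frobenius norm decomposes over column blocks. The first inequality in the statement is immediate: it is the triangle inequality applied to the splitting $(\B{Z}_t)_r - \Y_t = \big((\B{Z}_t)_r - \B{Z}_t\big) + \big(\B{Z}_t - \Y_t\big)$, so there is nothing to do there.

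For the term $\|\B{Z}_t - \Y_t\|_{\rm F}$, I would first observe that $\B{Z}_t$ and $\Y_t$ carry the same column-block partition, so $\|\B{Z}_t - \Y_t\|_{\rm F}^2 = \sum_{i=1}^M \|(\Y^i_t)_r - \Y^i_t\|_{\rm F}^2$. Let $\B{P} \in \R^{d \times d}$ be the orthogonal projector onto the leading $r$-dimensional left singular subspace of $\Y_t$, so that $\B{P}\Y_t = (\Y_t)_r$. For each $i$ the matrix $\B{P}\Y^i_t$ has rank at most $r$, so Eckart--Young--Mirsky gives $\|(\Y^i_t)_r - \Y^i_t\|_{\rm F} \le \|\B{P}\Y^i_t - \Y^i_t\|_{\rm F}$; squaring, summing over $i$, and using that left multiplication by $\B{P}$ commutes with the column-block concatenation, I get $\|\B{Z}_t - \Y_t\|_{\rm F}^2 \le \sum_i \|\B{P}\Y^i_t - \Y^i_t\|_{\rm F}^2 = \|\B{P}\Y_t - \Y_t\|_{\rm F}^2 = \|(\Y_t)_r - \Y_t\|_{\rm F}^2$. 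Hence $\|\B{Z}_t - \Y_t\|_{\rm F} \le \|(\Y_t)_r - \Y_t\|_{\rm F}$.

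For the term $\|(\B{Z}_t)_r - \B{Z}_t\|_{\rm F}$, I would note that $(\Y_t)_r$ has rank at most $r$, so Eckart--Young--Mirsky applied to $\B{Z}_t$ gives $\|(\B{Z}_t)_r - \B{Z}_t\|_{\rm F} \le \|(\Y_t)_r - \B{Z}_t\|_{\rm F}$, and then I would bound the right-hand side by a further triangle inequality combined with the estimate just obtained: $\|(\Y_t)_r - \B{Z}_t\|_{\rm F} \le \|(\Y_t)_r - \Y_t\|_{\rm F} + \|\Y_t - \B{Z}_t\|_{\rm F} \le 2\|(\Y_t)_r - \Y_t\|_{\rm F}$. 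Adding the two bounds yields $\|(\B{Z}_t)_r - \B{Z}_t\|_{\rm F} + \|\B{Z}_t - \Y_t\|_{\rm F} \le 3\|(\Y_t)_r - \Y_t\|_{\rm F}$, and the whole argument is uniform over $r \in [d]$.

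The step that demands the most care — rather than being a true obstacle — is the bookkeeping that left multiplication by the projector $\B{P}$ commutes with the column-block concatenation, so that the $M$ blockwise Eckart--Young--Mirsky inequalities reassemble into the single global residual $\rho_r(\Y_t)=\|(\Y_t)_r-\Y_t\|_{\rm F}$. A secondary point, if one wants the bound to reflect the \emph{streaming} computation of the per-node subspaces rather than their exact truncated SVDs, is that each blockwise residual $\|(\Y^i_t)_r - \Y^i_t\|_{\rm F}$ should be replaced by the near-optimal quantity guaranteed by Lemma \ref{lemma:svd_partial_lemma} and \eqref{eq:fpca-local-error}; this only inflates each residual by the multiplicative factor appearing there, hence replaces the constant $3$ by $3$ times that factor, leaving the structure of the proof unchanged.
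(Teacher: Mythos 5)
Your proof is correct and follows essentially the same route as the paper's: the triangle inequality, Eckart--Young--Mirsky applied to $\B{Z}_{t}$ with the rank-$r$ competitor $(\Y_{t})_r$ followed by a further triangle inequality, and a blockwise Eckart--Young--Mirsky argument giving $\| \B{Z}_{t} - \Y_{t} \|_{\rm F} \leq \| (\Y_{t})_r - \Y_{t} \|_{\rm F}$. Your explicit identification of the blockwise competitor as $\B{P}\Y^{i}_{t}$, the $i$-th column block of $(\Y_{t})_r$, is a cleaner rendering of the step the paper states only loosely.
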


\begin{proof}
We base our proof on an invariant at each time $t$ the matrix $\Y_{t}$, although not kept in memory, due to the approximation described in~\cref{apx:local_update_guarantees} can be treated as such for the purposes of this proof. 
Thus, we have the following:
\begin{align*}
\| (\B{Z}_{t})_r - \Y_{t} \|_{\rm F} &\leq \| (\B{Z}_{t})_r - \B{Z_{t}} \|_{\rm F} +  \| \B{Z}_{t} - \Y_{t} \|_{\rm F}\\ &\leq \| (\Y_{t})_r - \B{Z}_{t} \|_{\rm F} +  \| \B{Z}_{t} - \Y_{t} \|_{\rm F}\\
&\leq \| (\Y_{t})_r - \Y_{t} \|_{\rm F} + 2 \| \B{Z}_{t} - \Y_{t} \|_{\rm F}.
\end{align*}

We let $(\Y_{t}^{i})_{r} \in \mathbb{R}^{d\times tMb}, i=1,2,\ldots,M$ denote the $i^{\rm th}$ block of $(\Y_{t})_{r}$, we can see that

\begin{align*}
\| \B{Z}_{t} - \Y_{t} \|^2_{\rm F} = \sum^M_{i=1} \| (\Y_{t}^i)_d - \Y_{t}^i \|^2_{\rm F} \leq \sum^M_{i=1} \| (\Y_{t}^i)_r - \Y_{t}^i \|^2_{\rm F} = \| (\Y_{t})_r - \Y_{t} \|^2_{\rm F} .
\end{align*}
Hence, if we combine these two estimates we complete our proof.
\end{proof}

To bound the error of the federated algorithm, we use \Cref{lemma:low_ranking_merge} to derive a lower and an upper bound of
the error.
Suppose that we choose a $r\le d$ which is a truncated version of $\Y_{t}$ while also having the depth equal to $1$.  
We can improve over \Cref{lemma:low_ranking_merge} in this particular setting by requiring no access on the right singular vectors of any given block - \textit{e.g.} the $\B{V_{t}^{i}}^T$.
Furthermore, it is possible to also show that this method is stable with respect to (small) additive errors. We represent this mathematically with a noise matrix $\Psi$.

\vspace{12pt}

\begin{thm}
Let $\Y_{t} \in \mathbb{R}^{d\times tMb}$ at time $t$ has its blocks defined as $\Y^i_{t} \in \R^{d \times tMb}, i=[M]$, so that $\Y_{t}=\left[\Y_{t}^1|\Y_{t}^2|\cdots|\Y_{t}^M\right]$.
Now, also let $\B{Z_{t}}=\left[\overline{(\Y_{t}^1)_r} ~\big|~ \overline{(\Y_{t}^2)_r} ~\big|~ \cdots~\big|~ \overline{(\Y_{t}^M)_r} \right]$, $\Psi_{t} \in \R^{d \times tMb}$, and $\B{Z_{t}}' = \B{Z_{t}} + \Psi_{t}$.  Then, there exists a unitary matrix $\B{B}_{t}$ such that
$$\left\| \overline{\left( \B{Z_{t}}' \right)_r} - \Y_{t}\B{B_{t}}_{t} \right\|_{\rm F} \leq 3\sqrt{2} \| (\Y_{t})_r - \Y_{t} \|_{\rm F} + \left(1+ \sqrt{2} \right) \| \Psi_{t} \|_{\rm F}$$
holds for all $r \in [d]$.
\label{thm:low_rank_merge}
\end{thm}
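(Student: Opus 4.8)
The plan is to run the argument of Lemma~\ref{lemma:low_ranking_merge} in the presence of the additive mask $\Psi_t$, and to pass between the ``thin'' representations $\overline{(\,\cdot\,)_r}$ (carrying only $\B{U}_r\B{\Sigma}_r$) and the ``full'' ones (carrying $\B{U}_r\B{\Sigma}_r\B{V}_r^T$) by invoking Lemma~\ref{lemma:svd_partial_lemma}. First I would introduce $\widetilde{\B{Z}}_t := \left[(\Y_t^1)_r \mid \cdots \mid (\Y_t^M)_r\right]\in\R^{d\times tMb}$, the full analogue of $\B{Z}_t$, and note that since $\overline{(\Y_t^i)_r}=\B{U}_r^i\B{\Sigma}_r^i$ and $(\Y_t^i)_r=\B{U}_r^i\B{\Sigma}_r^i(\B{V}_r^i)^T$ we have $\B{Z}_t\B{Z}_t^T=\widetilde{\B{Z}}_t\widetilde{\B{Z}}_t^T=\sum_{i=1}^M (\Y_t^i)_r((\Y_t^i)_r)^T$; by the argument of Lemma~\ref{lemma:svd_partial_lemma} the matrices $\B{Z}_t$ and $\widetilde{\B{Z}}_t$ therefore share singular values and left singular subspaces, so their best rank-$r$ approximations, and the merged outputs of Algorithm~\ref{algorithm:fastest_subspace_apx}, agree up to a block-diagonal unitary acting on the right. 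Lifting $\Psi_t$ to $\widetilde{\Psi}_t$ with $\|\widetilde{\Psi}_t\|_{\rm F}=\|\Psi_t\|_{\rm F}$ and writing $\widetilde{\B{Z}}_t':=\widetilde{\B{Z}}_t+\widetilde{\Psi}_t$, it then suffices to bound $\|(\widetilde{\B{Z}}_t')_r-\Y_t\B{B}_t\|_{\rm F}$ for the full objects, collecting all the right-multiplying unitaries into the single $\B{B}_t$ at the very end.

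Next I would split via the triangle inequality into an approximation term and a perturbation term,
\begin{equation*}
\|(\widetilde{\B{Z}}_t')_r-\Y_t\B{B}_t\|_{\rm F}\;\le\;\|(\widetilde{\B{Z}}_t')_r-(\widetilde{\B{Z}}_t)_r\|_{\rm F}\;+\;\|(\widetilde{\B{Z}}_t)_r-\Y_t\B{B}_t\|_{\rm F}.
\end{equation*}
For the second term I would reuse the block-wise optimality from the proof of Lemma~\ref{lemma:low_ranking_merge}, namely $\|\widetilde{\B{Z}}_t-\Y_t\|_{\rm F}^2=\sum_i\|(\Y_t^i)_r-\Y_t^i\|_{\rm F}^2\le\|(\Y_t)_r-\Y_t\|_{\rm F}^2$, and feed it through the same telescoping chain (now at depth $1$, with the rotation supplied by Lemma~\ref{lemma:svd_partial_lemma}) to obtain the $3\sqrt{2}\,\|(\Y_t)_r-\Y_t\|_{\rm F}$ contribution. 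For the first term I would use optimality of the truncated SVD: since $(\widetilde{\B{Z}}_t')_r$ is the best rank-$r$ approximation of $\widetilde{\B{Z}}_t'$,
\begin{equation*}
\|(\widetilde{\B{Z}}_t')_r-\widetilde{\B{Z}}_t'\|_{\rm F}\;\le\;\|(\widetilde{\B{Z}}_t)_r-\widetilde{\B{Z}}_t'\|_{\rm F}\;\le\;\|(\widetilde{\B{Z}}_t)_r-\widetilde{\B{Z}}_t\|_{\rm F}+\|\Psi_t\|_{\rm F},
\end{equation*}
and then combine this with a rotation-aware stability estimate for the top-$r$ subspace under an additive perturbation (the same toolbox used in~\citep{iwen2016distributed}); the residual $\|(\widetilde{\B{Z}}_t)_r-\widetilde{\B{Z}}_t\|_{\rm F}$ is again controlled by $\|(\Y_t)_r-\Y_t\|_{\rm F}$ through Lemma~\ref{lemma:low_ranking_merge} and is absorbed into the first constant, while the $\Psi_t$-dependent pieces combine to $(1+\sqrt{2})\|\Psi_t\|_{\rm F}$. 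Summing the two terms gives the claim, and it holds uniformly for all $r\in[d]$ because no step used a spectral gap.

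The hard part will not be the $\sqrt{2}$ arithmetic or the one-line peeling of $\Psi_t$, but \emph{keeping a single coherent unitary} $\B{B}_t$. Each ingredient---the thin$\leftrightarrow$full identification via Lemma~\ref{lemma:svd_partial_lemma}, the stability of the leading $r$-subspace under $\widetilde{\Psi}_t$, and Lemma~\ref{lemma:low_ranking_merge} itself---only provides equality or inequality ``up to some orthogonal matrix'', and when the eigenvalues of $\widetilde{\B{Z}}_t\widetilde{\B{Z}}_t^T$ are repeated (exactly the situation of the block-diagonal $\B{B}_t$ in Lemma~\ref{lemma:svd_partial_lemma}) these matrices are not unique, so one must verify that the rotations chosen at the various stages compose into one $\B{B}_t$ without degrading the constants. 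A secondary subtlety, as in Lemma~\ref{lemma:low_ranking_merge}, is that $\Y_t$ is never materialised by the streaming procedure, so one has to justify treating it as an exact matrix for the purpose of this bound, the discrepancy being already subsumed in the local-update error~\eqref{eq:fpca-local-error}.
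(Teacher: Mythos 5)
Your high-level architecture matches the paper's: identify thin and full representations through the Gram-matrix argument of \Cref{lemma:svd_partial_lemma}, absorb the resulting rotations into a single unitary $\B{B}_t$ using unitary invariance of $\|\cdot\|_{\rm F}$, apply a triangle inequality, and close with \Cref{lemma:low_ranking_merge}. But there is a genuine gap at the quantitative heart of the estimate: you never account for where the factor $\sqrt{2}$ comes from, and the tool you propose in its place would not deliver it. The paper's decomposition is the three-term split
\begin{equation*}
\left\| \left( \B{Z}_t' \right)_r - \Y_t' \right\|_{\rm F} \;\le\; \left\| \left( \B{Z}_t' \right)_r - \B{Z}_t' \right\|_{\rm F} \;+\; \left\| \Psi_t \right\|_{\rm F} \;+\; \left\| \B{Z}_t - \Y_t' \right\|_{\rm F},
\end{equation*}
whose first term is the SVD residual $\bigl(\sum_{j>r}\sigma_j^2(\B{Z}_t+\Psi_t)\bigr)^{1/2}$; this is controlled by \emph{pairing} consecutive tail singular values and applying Weyl's inequality $\sigma_{i+j-1}(\B{A}+\B{C})\le\sigma_i(\B{A})+\sigma_j(\B{C})$, so that each $\sigma_{r+j}(\B{Z}_t)$ and each $\sigma_j(\Psi_t)$ is used at most twice. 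The $\ell_2$ triangle inequality on the resulting sequences is exactly what produces $\sqrt{2}\,\|(\B{Z}_t)_r-\B{Z}_t\|_{\rm F}$ plus $\sqrt{2}\,\|\Psi_t\|_{\rm F}$, hence the constants $3\sqrt{2}$ and $1+\sqrt{2}$ once \Cref{lemma:low_ranking_merge} is applied. Your first term, $\|(\widetilde{\B{Z}}_t')_r-(\widetilde{\B{Z}}_t)_r\|_{\rm F}$, is a different and much harder object: the distance between two truncated SVDs is not controlled by the perturbation norm alone, and the ``rotation-aware stability estimate for the top-$r$ subspace'' you invoke is a Davis--Kahan/Wedin-type bound that degrades with the spectral gap at level $r$ --- which contradicts your closing claim that no step uses a spectral gap and that the bound holds for all $r\in[d]$. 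If you instead expand that term by further triangle inequalities through the residuals (which your Eckart--Young step suggests), the argument does close, but with a constant near $7$ in place of $3\sqrt{2}$.

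A secondary issue is your lift of $\Psi_t$ to $\widetilde{\Psi}_t$. The thin and full objects $\B{Z}_t$ and $\widetilde{\B{Z}}_t$ have different numbers of columns and are related by a right partial isometry, so transporting an arbitrary additive perturbation across that identification does not in general preserve the singular values of the \emph{perturbed} matrix, even though it preserves those of the unperturbed one. The paper sidesteps this entirely by perturbing the thin object $\B{Z}_t$ directly and invoking \Cref{lemma:svd_partial_lemma} only for the unperturbed, untruncated comparison of $\Y_t'$ with $\Y_t$, which is where the single unitary $\B{B}_t$ is actually manufactured. You correctly flag the bookkeeping of the unitaries as the delicate point, but the resolution is to keep all perturbation analysis on one side of the thin/full identification rather than to move $\Psi_t$ across it.
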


\begin{proof}
Let $\Y_{t}' = \left[\overline{\Y_{t}^1} ~\big|~ \overline{\Y_{t}^2} ~\big|~\cdots ~\big|~ \overline{\Y_{t}^M}\right]$.  Note that $\overline{\Y_{t}'} = \overline{\Y_{t}}$ by~\Cref{lemma:svd_partial_lemma}.  
Thus, there exists a unitary matrix $\B{B_{t}}''$ such that $\Y_{t}' = \overline{\Y_{t}} \B{B_{t}}''$.  
Using this fact in combination with the unitary invariance of the Frobenius norm, one can now see that
$$\left\| \left( \B{Z_{t}}' \right)_r - \Y_{t}' \right\|_{\rm F} ~=~ \left\| \left( \B{Z_{t}}' \right)_r - \overline{\Y_{t}} \B{B_{t}}'' \right\|_{\rm F} ~=~\left\| \overline{\left( \B{Z_{t}}' \right)_r} - \overline{\Y_{t}}\B{B_{t}}' \right\|_{\rm F} = \left\| \overline{\left( \B{Z_{t}}' \right)_r} - \Y_{t}\B{B_{t}} \right\|_{\rm F}$$
for some (random) unitary matrices $\B{B_{t}}'$ and $\B{B_{t}}$.  
Hence, it suffices to bound the norm of $\left\| \left( \B{Z_{t}}' \right)_r - \Y_{t}' \right\|_{\rm F}$.  

Having said that, we can now do
\begin{align*}
\left\| \left( \B{Z_{t}}' \right)_r - \Y_{t}' \right\|_{\rm F} ~&\leq~ \left\| \left( \B{Z_{t}}' \right)_r - \B{Z_{t}}' \right\|_{\rm F} + \left\| \B{Z_{t}}'  - \B{Z_{t}} \right\|_{\rm F} + \left\| \B{Z_{t}} - \Y_{t}' \right\|_{\rm F}\\
~&=~ \sqrt{\sum^d_{j = r+1} \sigma^2_j(\B{Z_{t}} + \Psi_{t})} ~+~ \left\| \Psi_{t} \right\|_{\rm F} +  \left\| \B{Z_{t}} - \Y_{t}' \right\|_{\rm F}\\
~&=~ \sqrt{\sum^{\left\lceil \frac{d-r}{2} \right\rceil}_{j = 1} \sigma^2_{r+2j-1}(\B{Z_{t}} + \Psi_{t}) + \sigma^2_{r+2j}(\B{Z_{t}} + \Psi_{t})} ~+~ \left\| \Psi_{t} \right\|_{\rm F} +  \left\| \B{Z_{t}} - \Y_{t}' \right\|_{\rm F}\\
~&\leq~\sqrt{\sum^{\left\lceil \frac{d-r}{2} \right\rceil}_{j = 1} \left(\sigma_{r+j}(\B{Z_{t}}) + \sigma_{j}(\Psi_{t}) \right)^2 + \left( \sigma_{r+j}(\B{Z_{t}}) + \sigma_{j+1}(\Psi_{t})\right)^2} ~+~ \left\| \Psi_{t} \right\|_{\rm F} +  \left\| \B{Z_{t}} - \Y_{t}' \right\|_{\rm F}
\end{align*}

the result follows from applying Weyl's inequality in the first term~\cite{horn1994topics}.  

By the application of the triangle inequality on the first term we now have the following 
\begin{align*}
\left\| \left( \B{Z_{t}}' \right)_r - \Y_{t}' \right\|_{\rm F} ~&\leq~ \sqrt{\sum^d_{j = r+1} 2 \sigma^2_j(\B{Z_{t}}) } ~+~ \sqrt{\sum^d_{j = 1} 2 \sigma^2_j(\Psi_{t}) }  + \left\| \Psi_{t} \right\|_{\rm F} +  \left\| \B{Z_{t}} - \Y_{t}' \right\|_{\rm F}\\
&\leq~ \sqrt{2} \left( \| (\B{Z_{t}})_r - \B{Z_{t}} \|_{\rm F} +  \| \B{Z_{t}} - \Y_{t}' \|_{\rm F} \right)+ \left(1+ \sqrt{2} \right) \| \Psi_{t} \|_{\rm F}.
\end{align*}
Finally, \Cref{lemma:low_ranking_merge} for bounding the first two terms concludes the proof if we note that $\| (\Y_{t}')_r - \Y_{t}' \|_{\rm F} =  \| (\Y_{t})_r - \Y_{t} \|_{\rm F}$.
\end{proof}

Now, we introduce the final theorem which bounds the general error of $\FPCA{}$ with respect to the data matrix $\Y_{t}$ and up to multiplication by a unitary matrix.

\vspace{12pt}

\begin{thm}
  Let $\Y_{t} \in \R^{d\times tMb}$ and $q \geq 1$.  
  Then, $\FPCA{}$ is guaranteed to recover an
  $\Y_{t}^{q+1,1} \in \R^{d \times tMb}$ for any $t$ such that
  $\overline{\left(\Y_{t}^{q+1,1} \right)_r} = \Y_{t} \B{B_{t}} + \Psi_{t}$, where $\B{B_{t}}$ is a
  unitary matrix, and $\| \Psi_{t} \|_{\rm F} \leq \left( \left( 1 +
  \sqrt{2} \right)^{q+1} - 1 \right) \| (\Y_{t})_r - \Y_{t} \|_{\rm F}$.
  \label{thm:low_rank_recovery}
\end{thm}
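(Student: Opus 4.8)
The plan is to read the statement as a recursive corollary of Theorem~\ref{thm:low_rank_merge}: $\FPCA$'s merge hierarchy consists of $q+1$ levels of nodes --- the leaves at the bottom, the root $\Y_t^{q+1,1}$ at the top, and $q$ rounds of subspace merging by Alg.~\ref{algorithm:fastest_subspace_apx} in between --- and each merge round is precisely an instance of the situation analysed in Theorem~\ref{thm:low_rank_merge}. I would induct on the level and unroll the resulting scalar recursion. Fix $t$ and, for a node $v$ at level $j$, let $\Y_t^{(v)}$ be the column-concatenation of all data streams owned by the leaves in the subtree rooted at $v$, so $\Y_t^{(\mathrm{root})} = \Y_t$; write $\rho^{(v)} := \|(\Y_t^{(v)})_r - \Y_t^{(v)}\|_{\rm F}$ and $\rho := \|(\Y_t)_r - \Y_t\|_{\rm F}$. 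Let $\B{M}^{(v)}$ be the subspace returned by $v$: by construction $\B{M}^{(v)} = \overline{(\widetilde{\B{Z}}^{(v)})_r}$, where $\widetilde{\B{Z}}^{(v)}$ is the horizontal stack of the outputs $\B{M}^{(c)}$ of $v$'s children $c$ (Alg.~\ref{algorithm:fastest_subspace_apx} computes $\SVD_r$ of this stack, cf.\ Lemma~\ref{lemma:svd_partial_lemma}). The invariant I would carry up the tree is: there is a unitary $\B{B}^{(v)}$ with $\overline{(\B{M}^{(v)})_r} = \Y_t^{(v)}\B{B}^{(v)} + \Psi^{(v)}$ and $\|\Psi^{(v)}\|_{\rm F} \le c_j\,\rho^{(v)}$, for a data-independent sequence $c_j$.

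For the base case, at the leaves (level $1$) Lemma~\ref{lemma:svd_partial_lemma} shows the streaming block-iteration~\eqref{eq:svdr_1} returns $\overline{(\Y_t^{(v)})_r}$ up to a unitary, so $\Psi^{(v)} = 0$ and $c_1 = 0$ (if one insists on accounting for the streaming error of~\eqref{eq:fpca-local-error}, it is absorbed into $\Psi^{(v)}$ and only changes the final additive constant). For the inductive step, assume the invariant at level $j-1$. The stack fed into $v$'s merge is $\widetilde{\B{Z}}^{(v)} = \B{Z}^{(v)} + \Psi^{(v)}_{\mathrm{stack}}$ with $\B{Z}^{(v)} = [\,\overline{(\Y_t^{(c_1)})_r}\mid\cdots\mid\overline{(\Y_t^{(c_L)})_r}\,]$ (the per-child unitaries $\B{B}^{(c)}$ act block-diagonally and leave Frobenius norms unchanged, exactly as in the proof of Lemma~\ref{lemma:svd_partial_lemma}) and $\Psi^{(v)}_{\mathrm{stack}} = [\,\Psi^{(c_1)}\mid\cdots\mid\Psi^{(c_L)}\,]$, so $\|\Psi^{(v)}_{\mathrm{stack}}\|_{\rm F}^2 = \sum_c \|\Psi^{(c)}\|_{\rm F}^2$. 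Applying Theorem~\ref{thm:low_rank_merge} with the substitutions $\Y_t \leftarrow \Y_t^{(v)}$, $\B{Z}_t \leftarrow \B{Z}^{(v)}$, $\Psi_t \leftarrow \Psi^{(v)}_{\mathrm{stack}}$ produces a unitary $\B{B}^{(v)}$ with
\[
\big\|\,\overline{(\B{M}^{(v)})_r} - \Y_t^{(v)}\B{B}^{(v)}\,\big\|_{\rm F} \;\le\; 3\sqrt{2}\,\rho^{(v)} + (1+\sqrt{2})\,\|\Psi^{(v)}_{\mathrm{stack}}\|_{\rm F}.
\]
Then two elementary facts close the recursion: (i) the leaves under $v$ partition the columns of $\Y_t^{(v)}$, so $\Y_t^{(c)}(\Y_t^{(c)})^T \preccurlyeq \Y_t^{(v)}(\Y_t^{(v)})^T$ and hence, by singular-value monotonicity (as used inside the proof of Lemma~\ref{lemma:low_ranking_merge}), $\sum_c (\rho^{(c)})^2 = \sum_c\sum_{i>r}\sigma_i(\Y_t^{(c)})^2 \le \sum_{i>r}\sigma_i(\Y_t^{(v)})^2 = (\rho^{(v)})^2$; (ii) $\Y_t^{(v)}$ is a column-submatrix of $\Y_t$, so $\rho^{(v)} \le \rho$. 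Combining (i) with the inductive hypothesis, $\|\Psi^{(v)}_{\mathrm{stack}}\|_{\rm F} \le c_{j-1}\big(\sum_c(\rho^{(c)})^2\big)^{1/2} \le c_{j-1}\rho^{(v)}$, so the invariant persists at level $j$ with $c_j = 3\sqrt{2} + (1+\sqrt{2})c_{j-1}$.

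Unrolling from $c_1 = 0$ over the $q+1$ levels gives $c_{q+1} = 3\sqrt2\sum_{i=0}^{q}(1+\sqrt2)^i = 3\big((1+\sqrt2)^{q+1}-1\big)$; taking $v$ to be the root and writing $\Psi_t := \Psi^{(\mathrm{root})}$, $\B{B}_t := \B{B}^{(\mathrm{root})}$, and $\rho^{(\mathrm{root})} = \rho$, this yields $\overline{(\Y_t^{q+1,1})_r} = \Y_t\B{B}_t + \Psi_t$ with $\|\Psi_t\|_{\rm F} \le 3\big((1+\sqrt2)^{q+1}-1\big)\|(\Y_t)_r - \Y_t\|_{\rm F}$. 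To land on the exact constant $(1+\sqrt2)^{q+1}-1$ as stated, one replaces the intermediate triangle-inequality split in the proof of Lemma~\ref{lemma:low_ranking_merge} by the sharper one-step bound $\|(\B{Z}_t)_r - \Y_t'\|_{\rm F} \le \sqrt{2}\,\|(\Y_t)_r - \Y_t\|_{\rm F}$, which turns the $3\sqrt2$ in Theorem~\ref{thm:low_rank_merge} into $\sqrt2$ and hence $c_j = \sqrt2 + (1+\sqrt2)c_{j-1}$, $c_{q+1} = (1+\sqrt2)^{q+1}-1$, with the rest of the argument untouched. I expect the one genuinely delicate point to be the concatenation bookkeeping in the inductive step: stacking the already-approximate child subspaces and re-merging them at $v$ threatens a $\sqrt{L}$ loss in the fan-out $L$, and this is exactly why the inductive hypothesis must be stated in normalised form (error measured against the node's \emph{own} residual $\rho^{(v)}$) and combined with the Pythagorean inequality $\sum_c(\rho^{(c)})^2 \le (\rho^{(v)})^2$, which removes the $\sqrt{L}$; the remaining steps --- carrying the block-diagonal unitaries through the stacking and summing a scalar geometric series --- are routine, modulo the off-by-one in counting tree levels against merge rounds.
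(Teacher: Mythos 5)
Your strategy coincides with the paper's proof of Theorem~\ref{thm:low_rank_recovery}: induction up the aggregation tree carrying the normalised invariant $\overline{(\cdot)_r}=\Y_t^{(v)}\B{B}^{(v)}+\Psi^{(v)}$ with $\|\Psi^{(v)}\|_{\rm F}\le c_j\,\rho^{(v)}$, block-diagonal assembly of the children's unitaries, the Pythagorean bound $\sum_c(\rho^{(c)})^2\le(\rho^{(v)})^2$ to remove the fan-out factor, the sharpening of the $3\sqrt2$ of Theorem~\ref{thm:low_rank_merge} to $\sqrt2$ inside the induction, and the geometric recursion $c_{j+1}=\sqrt2+(1+\sqrt2)c_j$. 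All of these are exactly the paper's steps.

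The genuine gap is your base case $c_1=0$. The invariant measures a leaf's output against the leaf's \emph{full} data matrix $\Y_t^{(v)}$, not against its rank-$r$ truncation; since the leaf returns (up to a unitary) $\overline{(\Y_t^{(v)})_r}$, the base-case error is $\Psi^{(v)}=\bigl((\Y_t^{(v)})_r-\Y_t^{(v)}\bigr)\B{B}^{(v)}$, whose Frobenius norm is the residual $\rho^{(v)}$ itself, so $c_1=1\le\sqrt2=(1+\sqrt2)^1-1$, not $0$. This is not the streaming error of \eqref{eq:fpca-local-error} that your parenthetical points to; it is the truncation residual, and it is exactly where the exponent $q+1$ (rather than $q$) comes from: the $q$ merge rounds each contribute a factor $(1+\sqrt2)$ plus an additive $\sqrt2$, and the leaves contribute the first $\sqrt2$. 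Your unrolling compensates by summing the series over $q+1$ merge rounds instead of $q$, so your two off-by-ones cancel and you reach the right constant for the wrong reason. A secondary bookkeeping slip: writing the stack at an internal node as $\B{Z}^{(v)}+\Psi^{(v)}_{\rm stack}$ with $\B{Z}^{(v)}=[\,\overline{(\Y_t^{(c_1)})_r}\mid\cdots\,]$ is inconsistent with your own invariant --- the children hand up $\Y_t^{(c)}\B{B}^{(c)}+\Psi^{(c)}$, so the stack equals $\Y_t^{(v)}\tilde{\B{B}}+\Psi^{(v)}_{\rm stack}$ with the \emph{full} child data block-rotated; if you instead peel off the truncated stack so as to invoke Theorem~\ref{thm:low_rank_merge} verbatim, the perturbation you must feed it is $\Psi^{(c)}+\bigl((\Y_t^{(c)})_r-\Y_t^{(c)}\bigr)\B{B}^{(c)}$, which inflates the recursion to $c_{j+1}=3\sqrt2+(1+\sqrt2)(1+c_j)$ and overshoots the stated constant. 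The paper avoids both issues by taking $c_1=1$ and inlining the one-step Weyl bound $\|(\B{Z})_r-\B{Z}\|_{\rm F}\le\sqrt2\,\rho^{(v)}+\sqrt2\,\|\tilde\Psi\|_{\rm F}$ directly against $\Y_t^{(v)}\tilde{\B{B}}+\tilde\Psi$; with those two repairs your argument becomes the paper's proof.
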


\begin{proof}
  For the purposes of this proof we will refer to the approximate subspace result for $\Y_{t}^{p+1,i}$ from the merging chunks as $$\B{Z_{t}}^{p+1,i} := \left[
    \overline{\left( \B{Z_{t}}^{p,(i-1)tMb+1} \right)_r} ~\Big| \cdots \Big|~
    \overline{\left( \B{Z_{t}}^{p,itMb} \right)_r} \right],$$ for $p \in [q]$, and $i \in [M/(tMb)^p]$.  
  Which, as previously proved is equivalent to $\Y_{t}$, for any $t$ and up to a unitary transform.
  Moreover, $\Y_{t}$ will refer to the original - and, potentially full rank - matrix with block components defined as
  $\Y_{t}=\left[\Y_{t}^1|\Y_{t}^2|\cdots|\Y_{t}^M\right]$, where $M = (tMb)^q$.  
  Additionally, $\Y_{t}^{p,i}$ will refer to the respective uncorrupted block part of the original matrix $\Y_{t}$ whose values correspond to the ones of $\B{Z_{t}}^{p,i}$.
  \footnote{Meaning, $\B{Z_{t}}^{p,i}$ is used to estimate the approximate singular
    values and left singular vectors of $\Y_{t}^{p,i}$ for all $p \in[q+1]$, and $i \in [M/(tMb)^{p-1}]$} 
    
Hence, $\Y_{t} =
  \left[\Y_{t}^{p,1}|\Y_{t}^{p,2}|\cdots|\Y_{t}^{p,M/(tMb)^{(p-1)}}\right]$ holds for all
  $p \in [q+1]$, in which $$\Y_{t}^{p+1,i} := \left[ \Y_{t}^{p,(i-1)tMb+1}
    ~\Big| \cdots \Big|~ \Y_{t}^{p,itMb} \right]$$ for all $p \in [q]$,
  and $i \in [M/(tMb)^p]$.  For $p=1$ we have $\B{Z_{t}}^{1,i} = \Y_{t}^i =
  \Y_{t}^{1,i}$ for $i \in [M]$ by definition.
  Our target is to bound $\overline{\left(\B{Z_{t}}^{q+1,1} \right)_d}$ matrix with respect to the original matrix $\Y_{t}$, which can be done by induction on the level $p$.
  Concretely, we have to formally prove the following for all $p \in [q+1]$, and $i \in [M/(tMb)^{(p-1)}]$
\begin{enumerate}
\item $\overline{\left( \B{Z_{t}}^{p,i} \right)_r} = \Y_{t}^{p,i} W^{p,i} + \Psi_{t}^{p,i}$, where 
\item $\B{B_{t}}^{p,i}$ is always a unitary matrix, and 
\item $\| \Psi_{t}^{p,i} \|_{\rm F} \leq \left( \left( 1 + \sqrt{2} \right)^{p} - 1 \right) \left \| (\Y_{t}^{p,i})_d - \Y_{t}^{p,i} \right \|_{\rm F}$.
\end{enumerate}

Notably, requirements $1-3$ are always satisfied when $p=1$ since $\B{Z_{t}}^{1,i} = \Y_{t}^i = \Y_{t}^{1,i}$ for all $i \in [M]$ by definition.  
Hence, we can claim that a unitary matrix $\B{B_{t}}^{1,i}$ for all $i \in [M]$ satisfying
$$\overline{ \left( \B{Z_{t}}^{1,i} \right)_d} ~=~ \overline{\left(  \Y_{t}^{1,i} \right)_r} ~=~ \left(  \Y_{t}^{1,i} \right)_r \B{Z_{t}}^{1,i} ~=~ \Y_{t}^{1,i} \B{B_{t}}^{1,i} + \left(  \left(  \Y_{t}^{1,i} \right)_r - \Y_{t}^{1,i} \right) \B{B_{t}}^{1,i},$$
where $\Psi^{1,i} :=  \left(  \left(  \Y_{t}^{1,i} \right)_r - \Y_{t}^{1,i} \right) W^{1,i}$ has 

\begin{equation}
\| \Psi_{t}^{1,i} \|_{\rm F} = \left \| \left(  \Y_{t}^{1,i} \right)_r - \Y_{t}^{1,i} \right \|_{\rm F} \leq \sqrt{2} \left \| \left(  \Y_{t}^{1,i} \right)_r - \Y_{t}^{1,i} \right \|_{\rm F}.
\end{equation}

Moreover, let's assume that conditions $1-3$ hold for some $p \in [q]$.  
In which case, we can see see from condition 1 that

\begin{align*}
\B{Z_{t}}^{p+1,i} ~&:=~ \left[ \overline{\left( \B{Z_{t}}^{p,(i-1)tMb+1} \right)_r} ~\Big| \cdots \Big|~ \overline{\left( \B{Z_{t}}^{p,itMb} \right)_r} \right]\\
&=~\left[ \Y_{t}^{p,(i-1)tMb+1} \B{B_{t}}^{p,(i-1)tMb+1} + \Psi_{t}^{p,(i-1)tMb+1}  ~\Big| \cdots \Big|~ \Y_{t}^{p,itMb} \B{B_{t}}^{p,itMb} + \Psi_{t}^{p,itMb}  \right]\\
&=~\left[ \Y_{t}^{p,(i-1)tMb+1} \B{B_{t}}^{p,(i-1)tMb+1} ~\Big| \cdots \Big|~ \Y_{t}^{p,itMb} \B{B_{t}}^{p,itMb} \right] + \left[  \Psi_{t}^{p,(i-1)tMb+1}  ~\Big| \cdots \Big|~ \Psi_{t}^{p,itMb}  \right] \\
&=~\left[ \Y_{t}^{p,(i-1)tMb+1} ~\Big| \cdots \Big|~ \Y_{t}^{p,itMb}  \right]  \tilde{\B{B_{t}}} + \tilde{\Psi_{t}},
\end{align*}

where $\tilde{\Psi_{t}} := \left[  \Psi_{t}^{p,(i-1)tMb+1}  ~\Big| \cdots \Big|~ \Psi_{t}^{p,itMb)}  \right] $, and 
\[ 
  \arraycolsep=1.4pt\def\arraystretch{0.3}
  \! \! \tilde{\B{B_{t}}} := \! \! \! = \! \! \!
    \left(  \! \! 
    \begin{array}{c;{2pt/2pt}c;{2pt/2pt}c;{2pt/2pt}c}
        & & & \\
        & & & \\
        ~\B{B_{t}}^{p,(i-1)tMb+1} & {\bf 0} & {\bf 0} &  {\bf 0} \\ 
        \vspace{0.05in} \\ \hdashline[2pt/2pt] %
        {\bf 0} & \B{B_{t}}^{p,(i-1)tMb+2}  & {\bf 0} &  {\bf 0} \\ 
        \vspace{0.05in} \\ \hdashline[2pt/2pt]
        {\bf 0} & {\bf 0} & ~\ddots~ &  {\bf 0} \\ 
        \vspace{0.05in} \\ \hdashline[2pt/2pt]
        {\bf 0} & {\bf 0} & {\bf 0} &  \B{B_{t}}^{p,i(tMb)}~
            \end{array}  \! \!   \right) \! \!.    \] 
Of note is that $\tilde{\B{B_{t}}}$ is always unitary due to its diagonal blocks all being unitary by condition 2 (and hence, by construction).  
Hence, we can claim that $\B{Z_{t}}^{p+1,i} = \Y_{t}^{p+1,i} \tilde{\B{B_{t}}} + \tilde{\Psi_{t}}.$  

Following this, we can now bound $\left \| \left( \B{Z_{t}}^{p+1,i} \right)_r - \Y_{t}^{p+1,i} \tilde{\B{B_{t}}}\right \|_{\rm F}$ by the use of similar argument to that we employed during the the proof of Theorem~\ref{thm:low_rank_merge}.

\begin{align}
\left \| \left( \B{Z_{t}}^{p+1,i} \right)_r - \Y_{t}^{p+1,i} \tilde{\B{B_{t}}}\right \|_{\rm F} ~&\leq~ \left\| \left( \B{Z_{t}}^{p+1,i} \right)_r - \B{Z_{t}}^{p+1,i} \right\|_{\rm F} + \left\| \B{Z_{t}}^{p+1,i}  - \Y_{t}^{p+1,i} \tilde{\B{B_{t}}} \right\|_{\rm F} \nonumber\\
~&=~ \sqrt{\sum^d_{j = r+1} \sigma^2_j \left(\Y_{t}^{p+1,i} \tilde{\B{B_{t}}} + \tilde{\Psi_{t}} \right)} ~+~ \| \tilde{\Psi_{t}} \|_{\rm F} \nonumber \\
~&\leq~ \sqrt{\sum^d_{j = r+1} 2 \sigma^2_j \left(\Y_{t}^{p+1,i} \tilde{\B{B_{t}}} \right) } ~+~ \sqrt{\sum^d_{j = 1} 2 \sigma^2_j( \tilde{\Psi_{t}}) }  + \| \tilde{\Psi_{t}} \|_{\rm F} \nonumber \\
~&=~ \sqrt{2} \left \| \Y_{t}^{p+1,i} - \left( \Y_{t}^{p+1,i} \right)_r \right \|_{\rm F} + \left(1+ \sqrt{2} \right) \| \tilde{\Psi_{t}} \|_{\rm F} \label{eq:thm_ref_first_bit}.
\end{align}

Appealing to condition 3 in order to bound  $\| \tilde{\Psi_{t}} \|_{\rm F}$ we obtain

\begin{align*}
\| \tilde{\Psi_{t}} \|^2_{\rm F} ~=~ \sum^{tMb}_{j=1} \| \Psi_{t}^{p,(i-1)tMb+j} \|^2_{\rm F} &\leq \left( \left( 1 + \sqrt{2} \right)^{p} - 1 \right)^2 \sum^{tMb}_{j=1} \left \| (\Y_{t}^{p,(i-1)tMb+j})_r - \Y_{t}^{p,(i-1)tMb+j} \right \|^2_{\rm F}\\
&\leq \left( \left( 1 + \sqrt{2} \right)^{p} - 1 \right)^2 \sum^{tMb}_{j=1} \left \| (\Y_{t}^{p+1,i})^j_d - \Y_{t}^{p,(i-1)n+j} \right \|^2_{\rm F},
\end{align*}
where $(\Y_{t}^{p+1,i})^j_r$ denotes the block of $( \Y_{t}^{p+1,i} )_d$ corresponding to $\Y_{t}^{p,(i-1)n+j}$ for $j \in [tMb]$.  
Hence,

\begin{align}
\| \tilde{\Psi_{t}} \|^2_{\rm F} &\leq \left( \left( 1 + \sqrt{2} \right)^{p} - 1 \right)^2 \sum^{tMb}_{j=1} \left \| (\Y_{t}^{p+1,i})^j_d - \Y_{t}^{p,(i-1)tMb+j} \right \|^2_{\rm F} \nonumber \\ 
&= \left( \left( 1 + \sqrt{2} \right)^{p} - 1 \right)^2 \left \| (\Y_{t}^{p+1,i})_r - \Y_{t}^{p+1,i} \right \|^2_{\rm F} \label{eq:thm_ref_last_bit}.
\end{align}

By using both \eqref{eq:thm_ref_first_bit} and \eqref{eq:thm_ref_last_bit} we can claim that

\begin{align}
\left \| \left( \B{Z_{t}}^{p+1,i} \right)_r - \Y_{t}^{p+1,i} \tilde{\B{B_{t}}}\right \|_{\rm F} &\leq \left[ \sqrt{2} + (1+ \sqrt{2} ) \left( \left( 1 + \sqrt{2} \right)^{p} - 1 \right) \right] \left \| \left( \Y_{t}^{p+1,i} \right)_r - \Y_{t}^{p+1,i}  \right \|_{\rm F} \nonumber\\
&= \left( \left( 1 + \sqrt{2} \right)^{p+1} - 1 \right) \left \| \left( \Y_{t}^{p+1,i} \right)_r - \Y_{t}^{p+1,i} \right \|_{\rm F} \label{equ:LastthmRef3}.
\end{align}

In the above, of note is that $\left \| \left( \B{Z_{t}}^{p+1,i} \right)_r - \Y_{t}^{p+1,i} \tilde{\B{B_{t}}}\right \|_{\rm F} = \left \| \overline{ \left( \B{Z_{t}}^{p+1,i} \right)_r } - \Y_{t}^{p+1,i} \B{B_{t}}^{p+1,i}  \right \|_{\rm F}$ where $\B{B_{t}}^{p+1,i}$ is always unitary.  
Hence, we can see that conditions 1 - 3 hold at any $t$ and any $p+1$ with $\Psi_{t}^{p+1,i} := \overline{ \left( \B{Z_{t}}^{p+1,i} \right)_r } - \Y_{t}^{p+1,i} \B{B_{t}}^{p+1,i}$.
\end{proof}

\Cref{thm:low_rank_recovery} proves that at any given time $t$, $\FPCA{}$ will accurately compute low rank approximations $\overline{\Y_{t}}$ of the data seen so up to time $t$ so long as the depth of the tree is relatively small. 
This is a valid assumption in our setting since we expect federated deployments to be shallow and have a large fanout. That is, we expect that the depth of the tree will be low and that many nodes will be using the same aggregator for their merging procedures. 
It is also worth mentioning that the proof of Theorem~\ref{thm:low_rank_recovery} can tolerate small additive noise (e.g. round-off and approximation errors) in the input matrix $\Y_{t}$ at time $t$.
Finally, we fully expect that, at any $t$, the resulting error will be no higher than $\min \rank(\Y_t^i)$ $\forall i \in [M]$ and no lower than $ \max \rank(\Y_t^i)$ $\forall i \in [M]$

\section{Further Evaluation Details}
\label{apx:further_dataset_eval}

In addition to the traditional MNIST results presented in the main paper, we further evaluate $\FPCAC$ against other competing methods which show that it performs favourably both in terms of accuracy and time when using synthetic and real datasets.

\subsection{Synthetic Datasets}
\label{apx:synthetic_dataset_eval}

For the tests on synthetic datasets, the vectors $\{\mathbf{y}_t\}_{t=1}^\tau$ are drawn independently from a zero-mean Gaussian distribution with the covariance matrix $\B{\Xi}=\B{S}\B{\Lambda}\B{S}^{T}$, where $\B{S}\in \mathcal{O}(d)$ is a generic basis obtained by orthogonalising a standard random Gaussian matrix. The entries of the diagonal matrix $\B{\Lambda}\in\mathbb{R}^{d \times d}$ (the eigenvalues of the covariance matrix $\B{\Xi}$) are selected according to the power law, namely,  $\lambda_i =  i^{-\alpha}$, for a positive $\alpha$.
To be more succinct, wherever possible we employ {MATLAB}'s notation for specifying the value ranges in this section. 

To assess the performance of $\FPCA$, we let $\Y_t=[\y_1,\cdots,\y_t]\in \mathbb{R}^{d \times t}$ be the data received by time $t$ and  
$\widehat{\Y}^{\FPCAC}_{t,r}$ be the output of $\FPCAC$ at time $t$.
\footnote{Recall, since \textit{block}-based algorithms like $\FPCA$, 
do not update their estimate after receiving feature vector but per each 
block for convenience in with respect to the evaluation against other 
algorithms (which might have different block sizes or singular updates), 
we properly {\em interpolate} their outputs over time.} 
Then, the error incurred by $\FPCAC$ is 
\begin{equation}
\frac{1}{t}\|\Y_t - \widehat{\Y}^{\FPCAC}_{t,r}\|_F^2, 
\end{equation} 
Recall, that the above error is always larger than the residual of $\Y_t$, namely,
\begin{equation}
\|\Y_t - \widehat{\Y}^{\FPCAC}_{t,r}\|_F^2 \ge \|\Y_t - {\Y}_{t,r}\|_F^2 = \rho_r^2(\Y_t).
\end{equation}
In the expression above, ${\Y}_{t,r}=\SVD_r(\Y_t)$ is a rank-$r$ truncated $\SVD$ of $\Y_t$ and $\rho_r^2(\Y_t)$ is the corresponding residual. 

Additionally, we compare $\FPCA$ against {GROUSE}~\citep{balzano2013grouse}, FD~\citep{desai2016improved}, PM~\citep{mitliagkas2013memory} and a version of PAST~\citep{papadimitriou2005streaming,yang1995projection}. 
Interestingly and contrary to $\FPCAC$, the aforementioned algorithms are \textit{only} able to estimate the principal components of the data and \textit{not} their projected data on-the-fly. 
Although, it has to noted that in this setup we are only interested in the resulting subspace $\U$ along with its singular values $\Sigma$ but is worth mentioning that the projected data, if desired, can be kept as well. 
More specifically, let $\widehat{\SU}_{t,r}^g \in \text{G}(d,r)$ be the span of the output of GROUSE, with the outputs of the other algorithms defined similarly. 
Then, these algorithms incur errors
\begin{equation*}
\frac{1}{t} \| \Y_t - \P_{\widehat{\SU}_{t,r}^{v}} \Y_t \|_F^2, \; v\in{g,f,p,\FPCAC},
\end{equation*}
where we have used the notation $\P_{\mathcal{A}}\in \mathbb{R}^{d\times d}$ to denote the orthogonal projection onto the subspace $\mathcal{A}$. 
Even though robust FD \citep{luo2017robust} improves over FD in the quality of matrix sketching, since the subspaces produced by FD and robust FD coincide, there is no need here for computing a separate error for robust FD.

Throughout our synthetic dataset experiments we have used an ambient dimension $d=400$, and for each $a \in (0.001, 0.1, 0.5, 1, 2, 3)$ generated $N=4000$ feature vectors in $\R^{d}$ using the method above. 
This results in a set of with four datasets of size $\R^{d \times N}$. 
Furthermore, in our experiments we used a block size of $b=50$ for $\FPCAC$, while for PM we chose $b=d$.
FD \& GROUSE perform singular updates and do not 
need a block-size value.
Additionally, the step size for GROUSE was set to $2$ and the total sketch size for FD was set $2r$. 
In all cases, unless otherwise noted 
in the respective graphs the starting rank for all methods in the synthetic dataset experiments was set to $r=10$.

We evaluated our algorithm using the aforementioned error metrics on a set of datasets generated as described above. 
The results for the different $a$ 
values are shown 
in~\autoref{fig:more_experiments_path}, which shows $\FPCAC$ can achieve an error that is significantly smaller than SP while maintaining a small number of principal components throughout the evolution of the algorithms in the absence
of a forgetting factor $\lambda$. When a forgetting
factor is used, as is shown 
in~\ref{fig:more_experiments_normal} then the 
performance of the two methods is similar.
This figure was produced on pathological datasets generated with an adversarial spectrum.
It can be seen that in SPIRIT the need for PC's increases dramatically for no apparent reason, whereas $\FPCA$ behaves favourably. 

\begin{figure*}[htb!]
    \centering
    \begin{subfigure}{.48\textwidth}
        \centering
        \includegraphics
        [scale=.28]
        {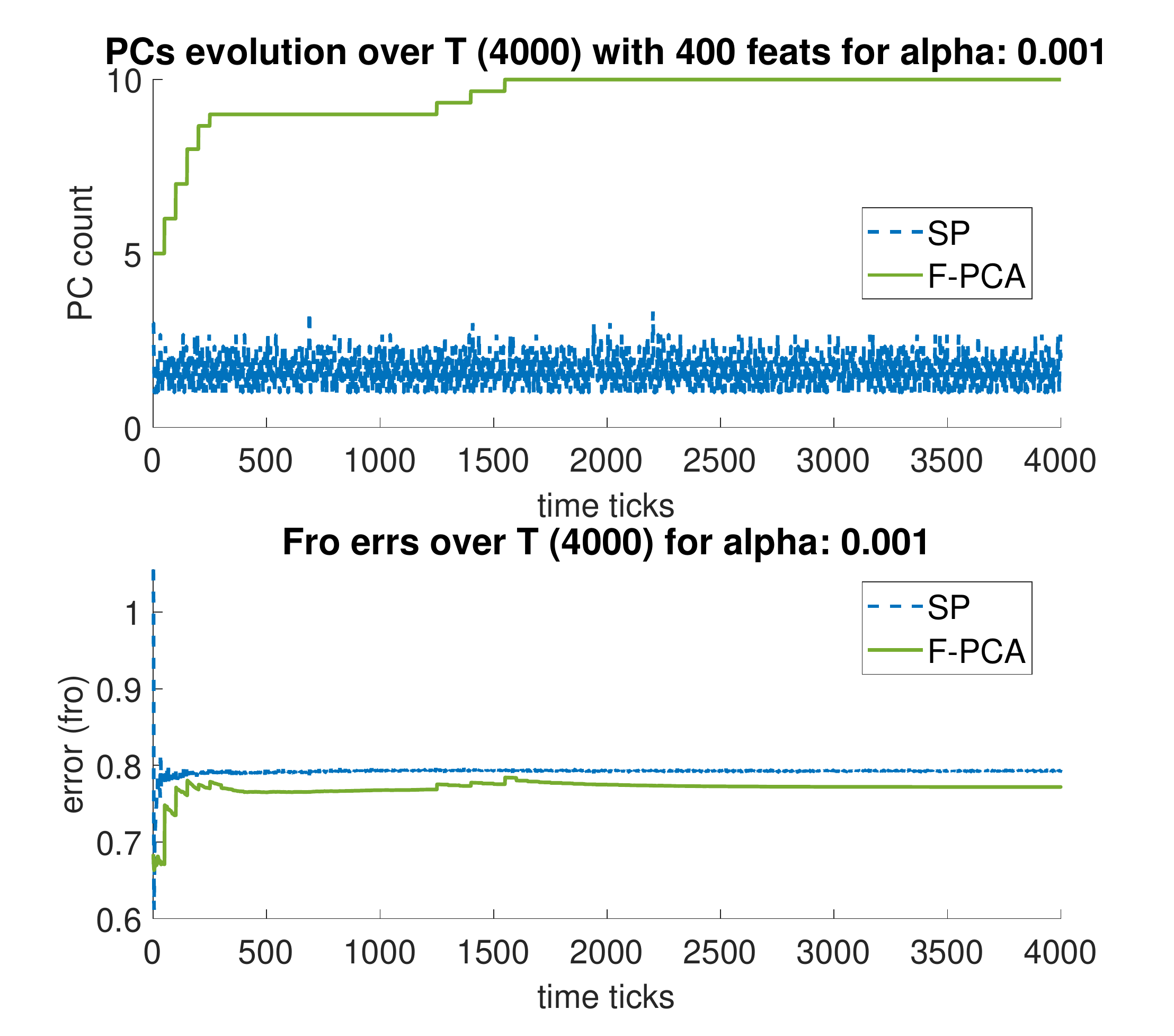}
        \caption{$\alpha=0.001$.}
        \label{fig:normal_fro_ext_errors_feat_400_T_4000_alpha_0_001}
    \end{subfigure}
    \begin{subfigure}{.48\textwidth}
        \centering
        \includegraphics
        [scale=.28]
        {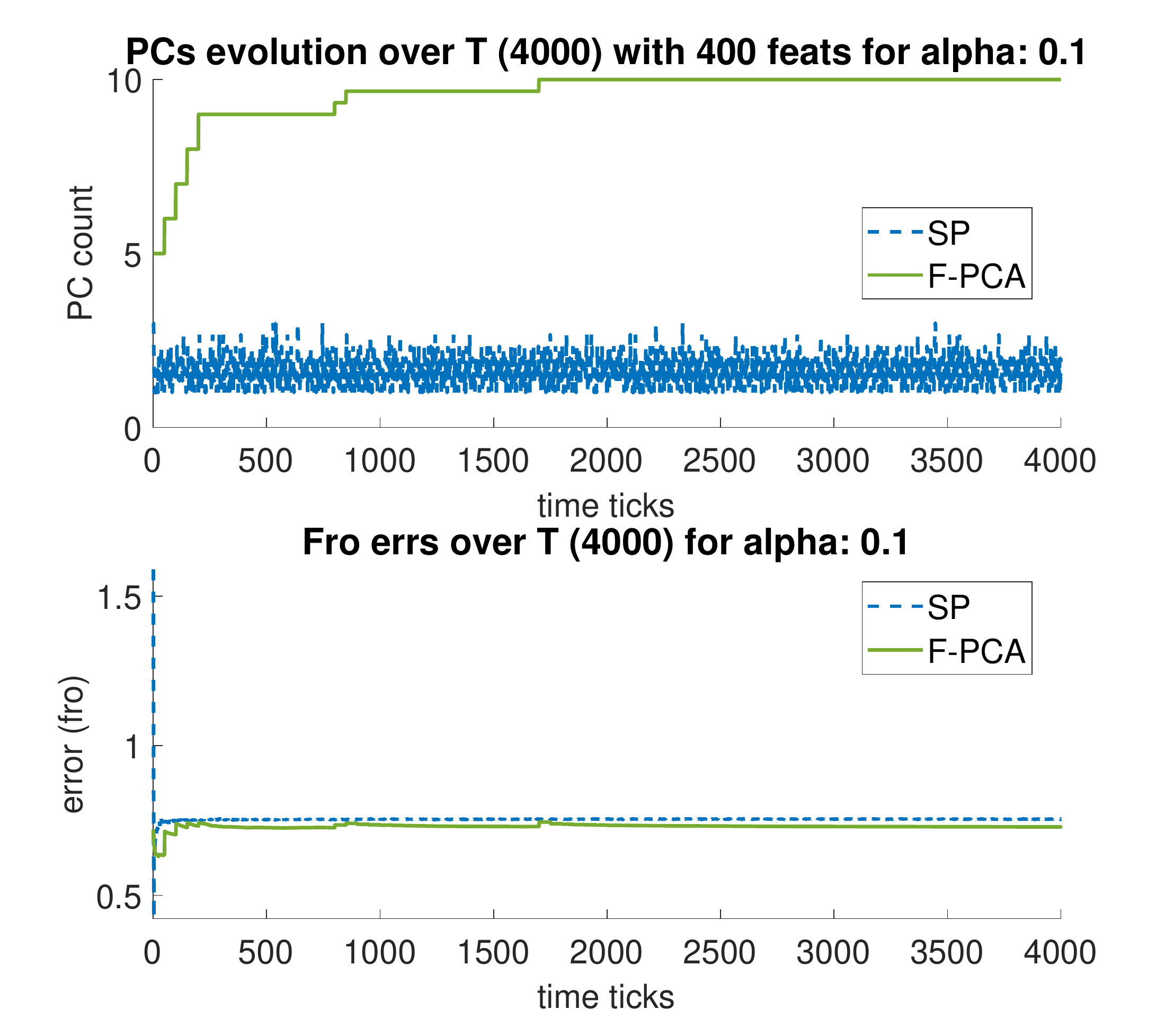}
        \caption{$\alpha=0.1$.}
        \label{fig:normal_fro_ext_errors_feat_400_T_4000_alpha_0_01}
    \end{subfigure}
    \begin{subfigure}{.48\textwidth}
        \centering
        \includegraphics
        [scale=.28]
        {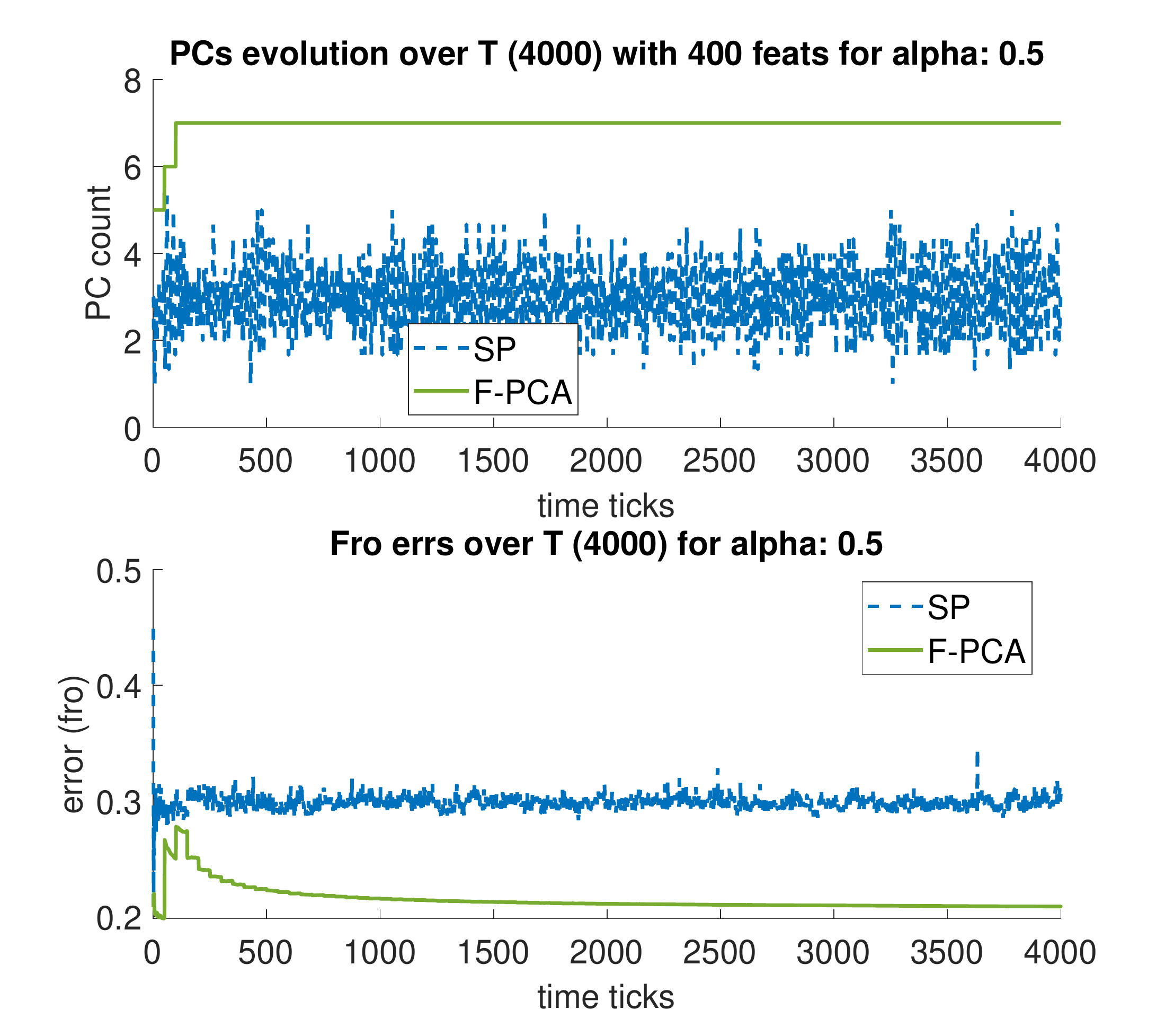}
        \caption{$\alpha=0.5$.}
        \label{fig:normal_fro_ext_errors_feat_400_T_4000_alpha_0_5}
    \end{subfigure}
    \begin{subfigure}{.48\textwidth}
        \centering
        \includegraphics
        [scale=.28]
        {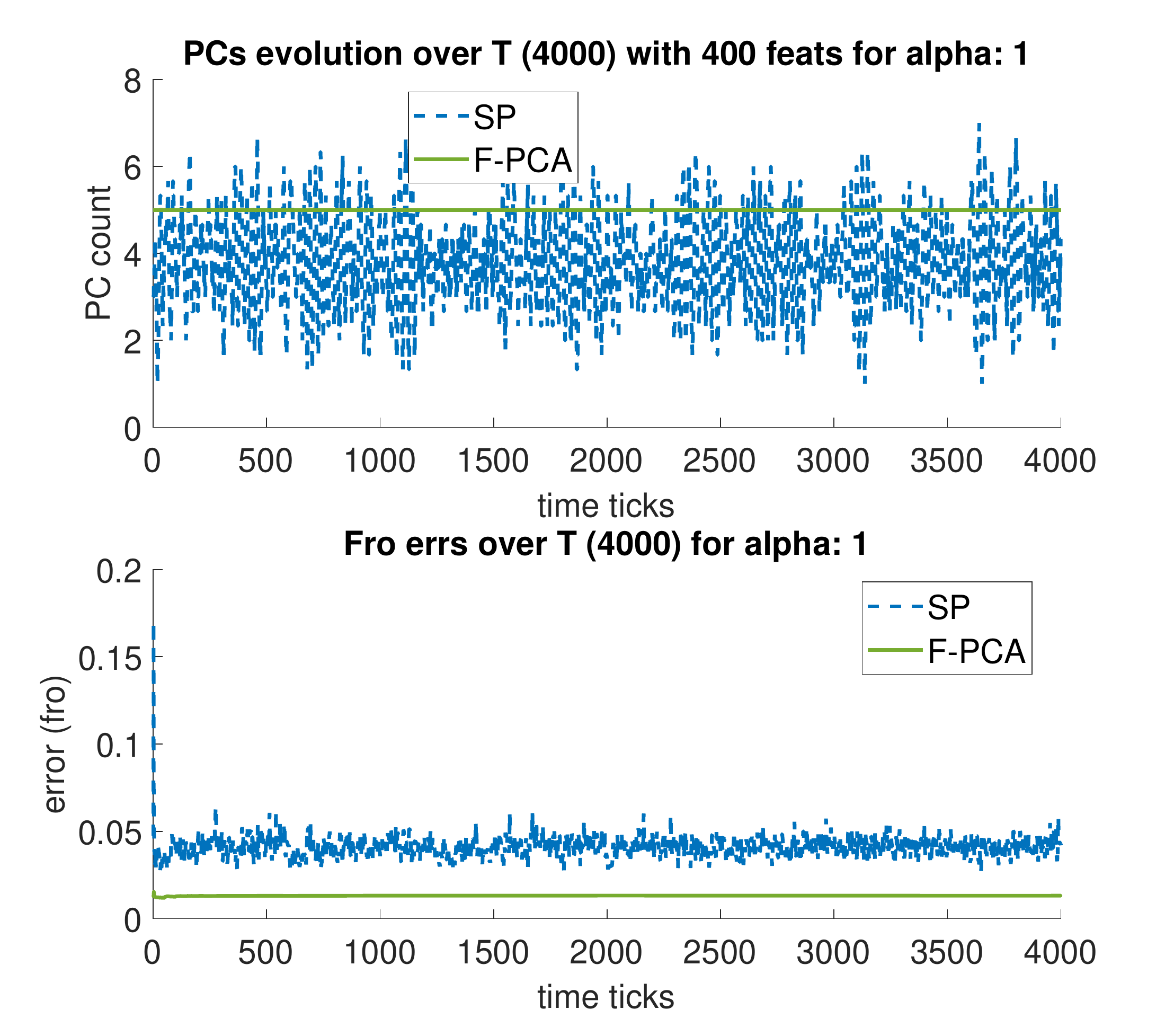}
        \caption{$\alpha=1$.}
        \label{fig:normal_fro_ext_errors_feat_400_T_4000_alpha_1}
    \end{subfigure}
    \begin{subfigure}{.48\textwidth}
        \centering
        \includegraphics
        [scale=.28]
        {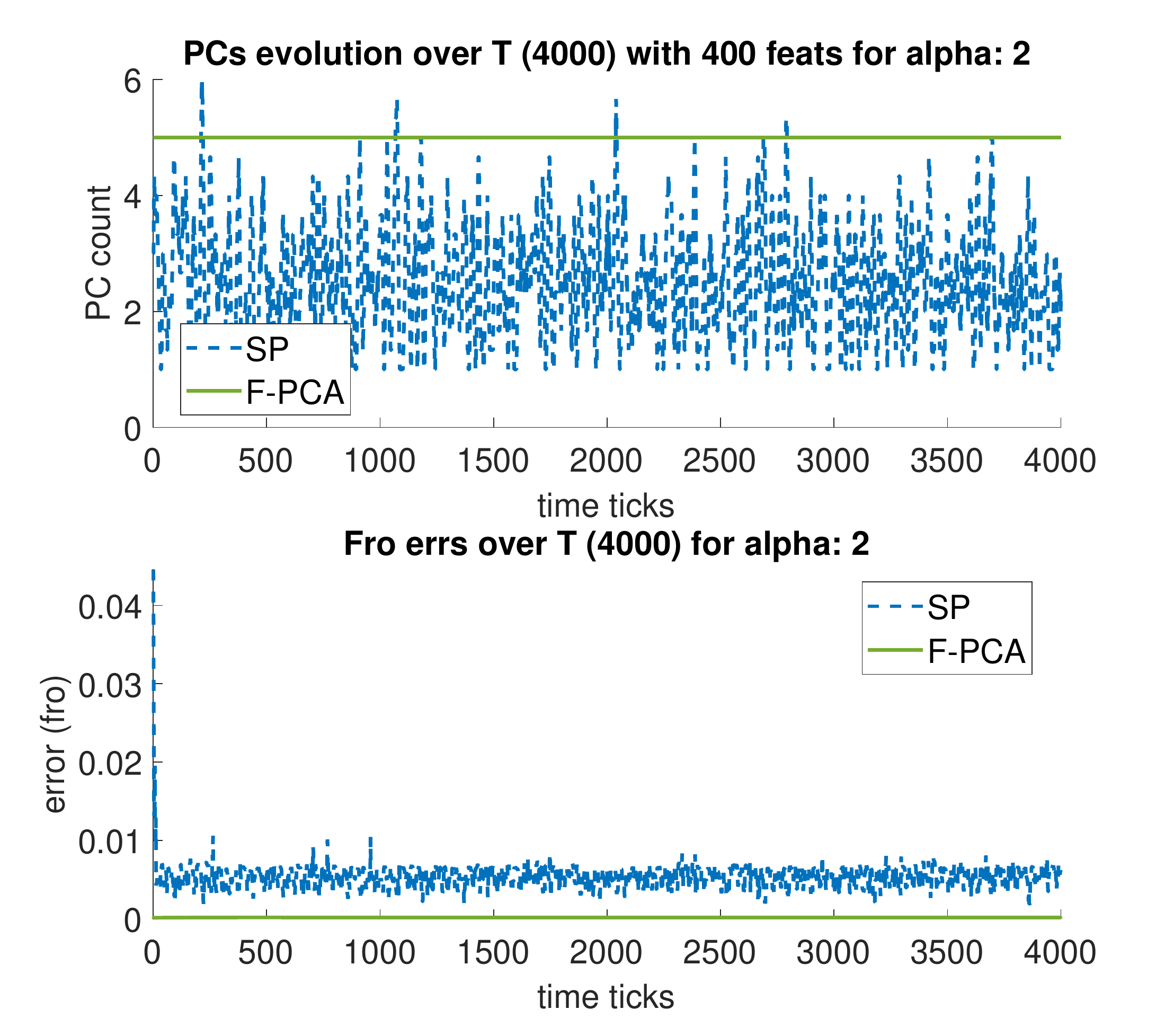}
        \caption{$\alpha=2$.}
        \label{fig:normal_fro_ext_errors_feat_400_T_4000_alpha_2}
    \end{subfigure}
    \begin{subfigure}{.48\textwidth}
        \centering
        \includegraphics
        [scale=.28]
        {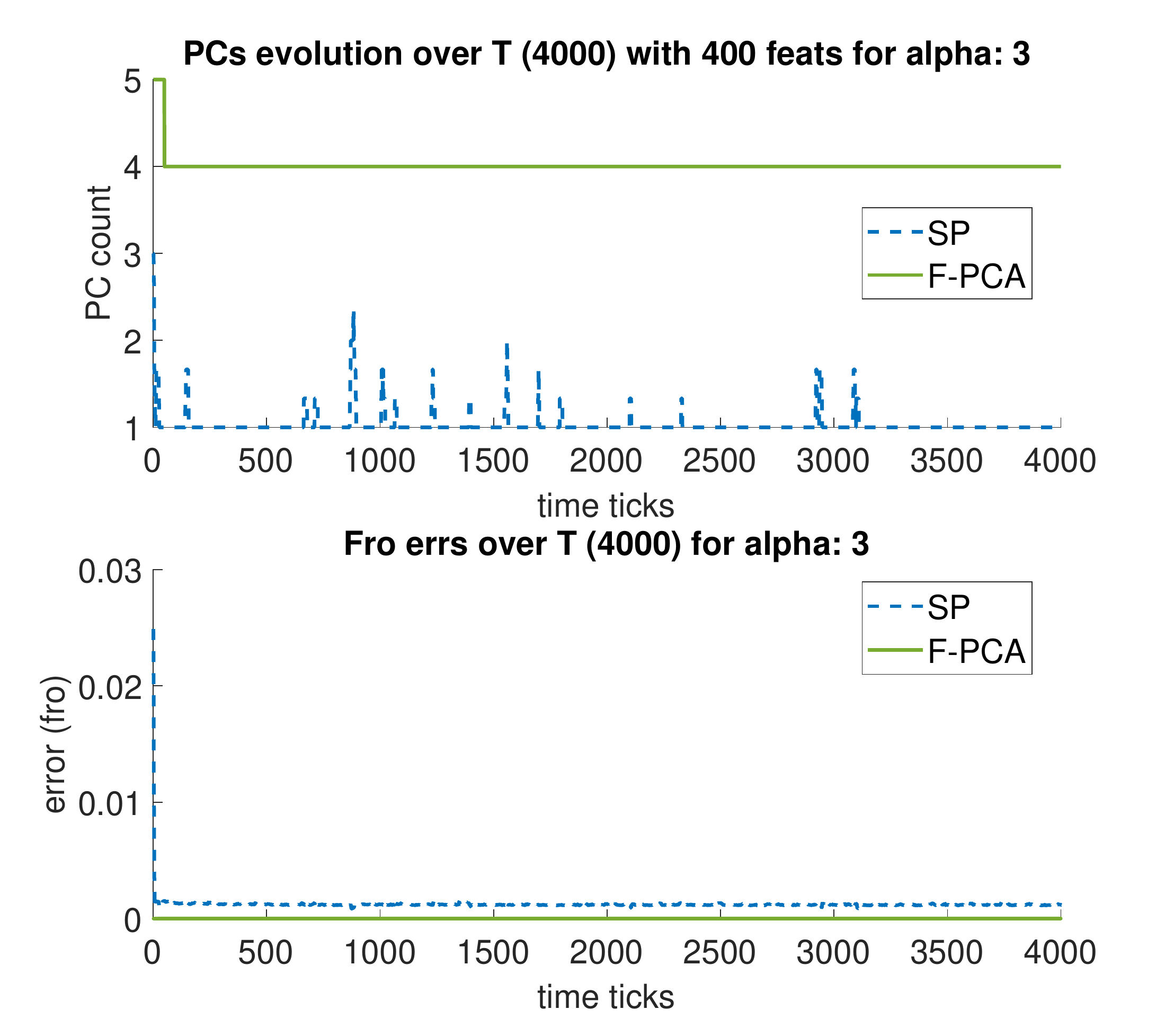}
        \caption{$\alpha=3$.}
        \label{fig:normal_fro_ext_errors_feat_400_T_4000_alpha_3}
    \end{subfigure}
    \caption{Performance measurements across the spectrum (when using forgetting factor $\lambda=0.9$).}
    \label{fig:more_experiments_normal}
\end{figure*}

\vfill
\pagebreak

Additionally, in order to bound our algorithm in terms of the expected error, we used a \textit{fixed rank} version with a low and high bound which fixed its rank value $r$ to the lowest and highest estimated $r$-rank during its normal execution.
We fully expect the incurred error of our adaptive scheme to fall within these bounds.
On the other hand, ~\autoref{fig:more_experiments_normal} shows that a drastic performance improvement occurs when using an exponential forgetting factor for SPIRIT with value $\lambda=0.9$, but the generated subspace is of inferior quality when compared to the one produced by $\FPCAC$.

\begin{figure*}[htb!]
    \centering
    \begin{subfigure}{.48\textwidth}
        \centering
        \includegraphics
        [scale=0.28]
        {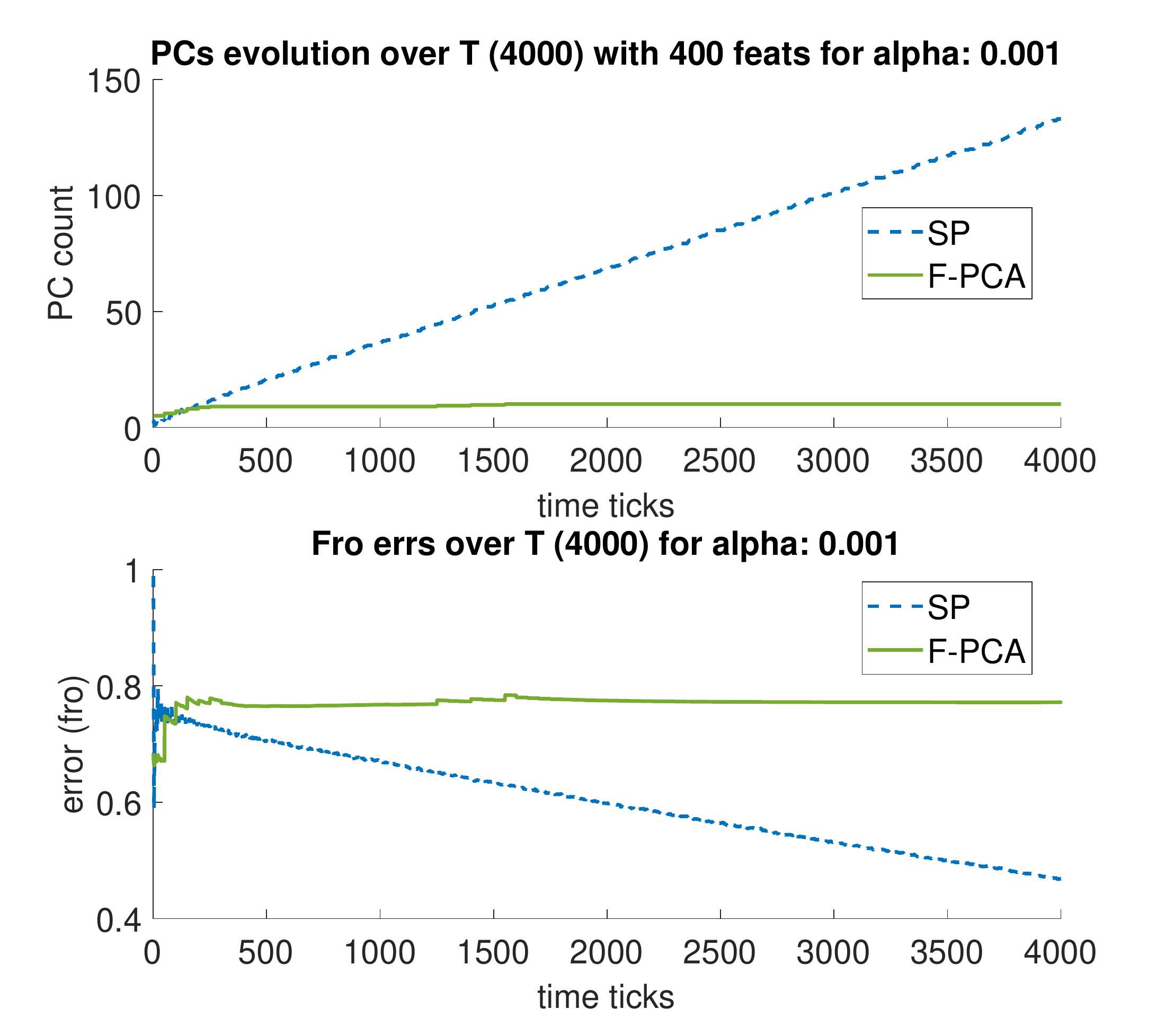}
        \caption{$\alpha=0.001$.}
        \label{fig:fro_ext_err_alpha_0_001}
    \end{subfigure}
    \begin{subfigure}{.48\textwidth}
        \centering
        \includegraphics
        [scale=0.28]
        {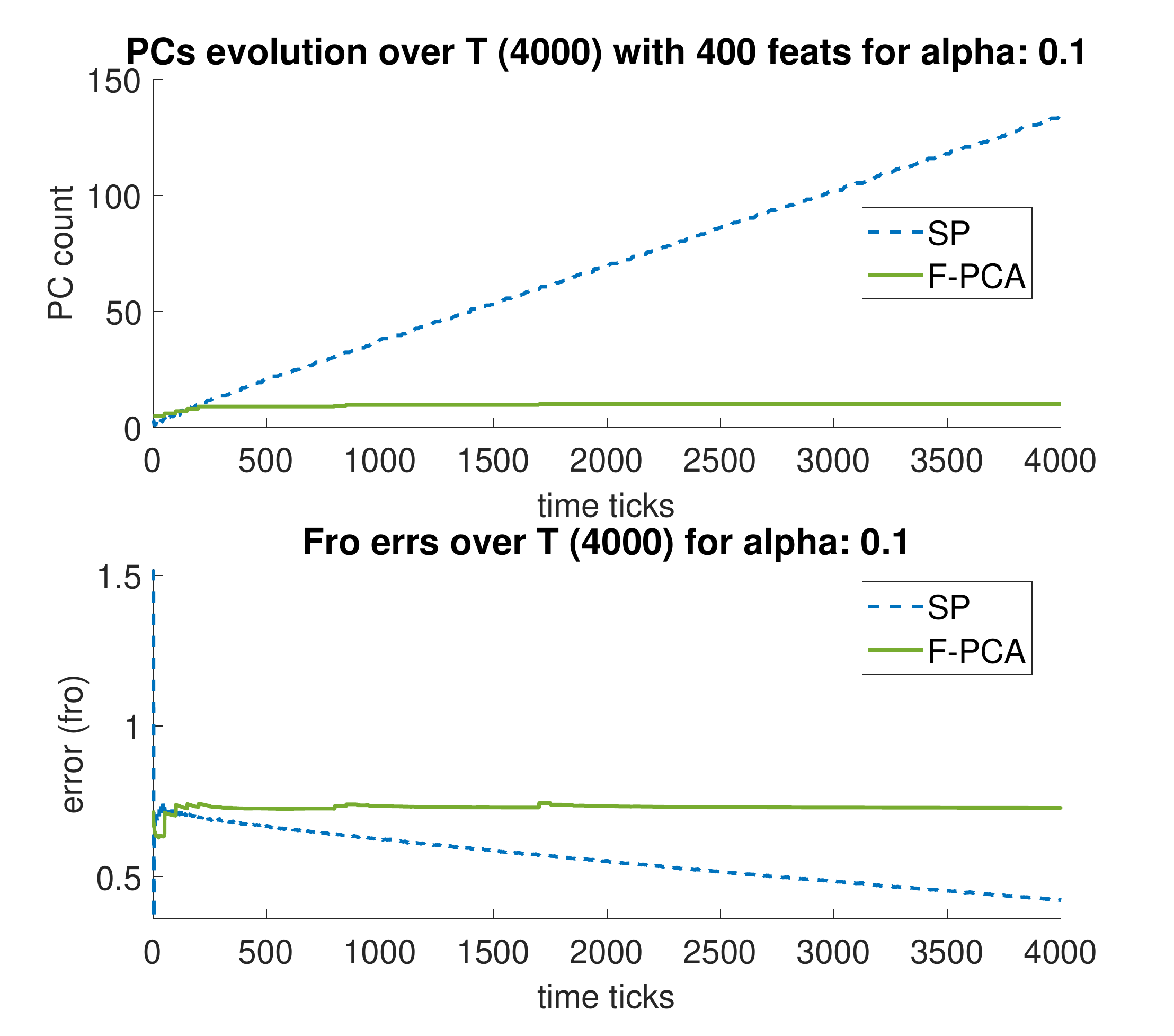}
        \caption{$\alpha=0.1$.}
        \label{fig:fro_ext_err_alpha_0_1}
    \end{subfigure}
    \begin{subfigure}{.48\textwidth}
        \centering
        \includegraphics
        [scale=0.28]
        {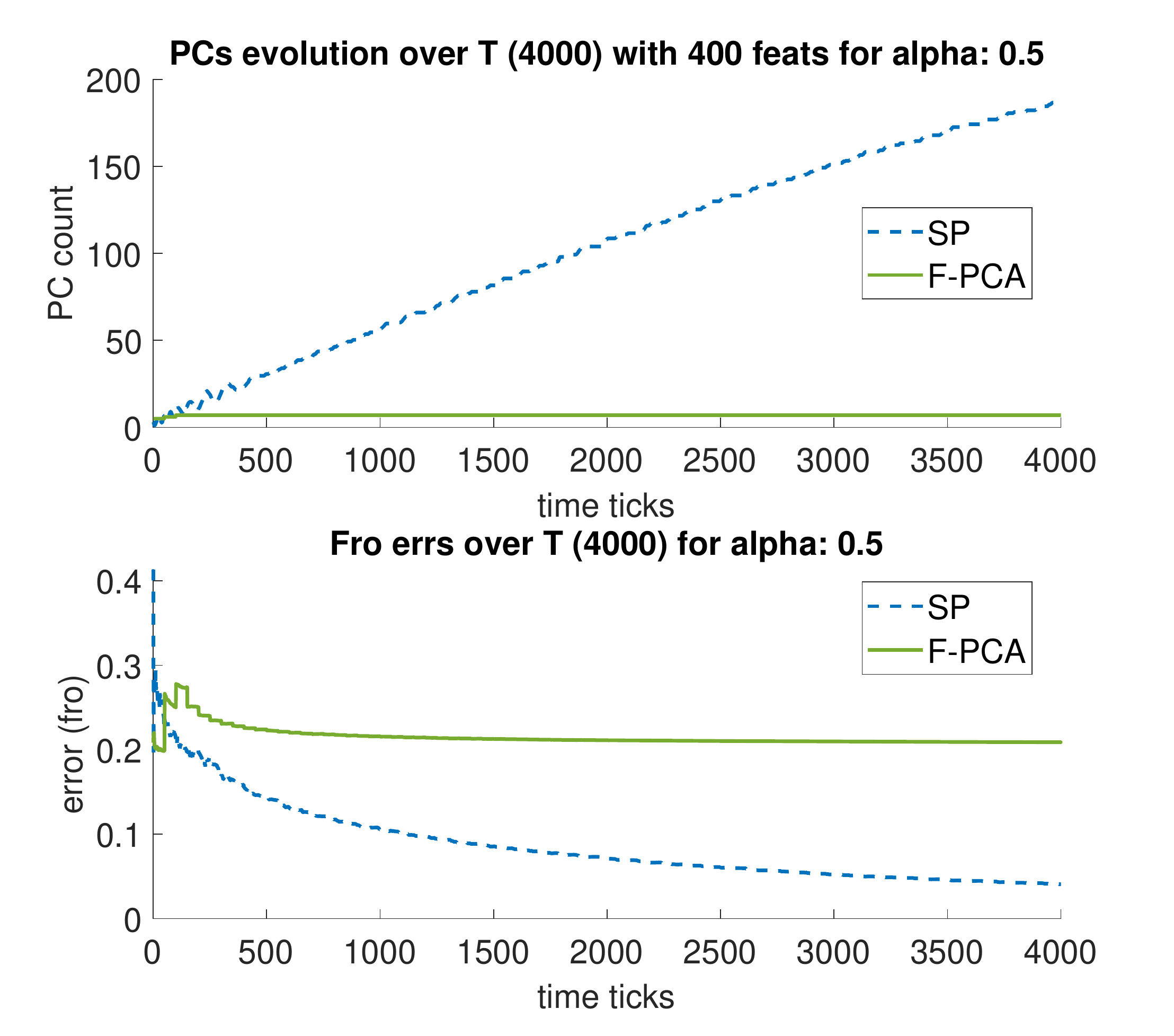}
        \caption{$\alpha=0.5$.}
        \label{fig:path_bottom_alpha_0_5}
    \end{subfigure}
    \begin{subfigure}{.48\textwidth}
        \centering
        \includegraphics
        [scale=0.28]
        {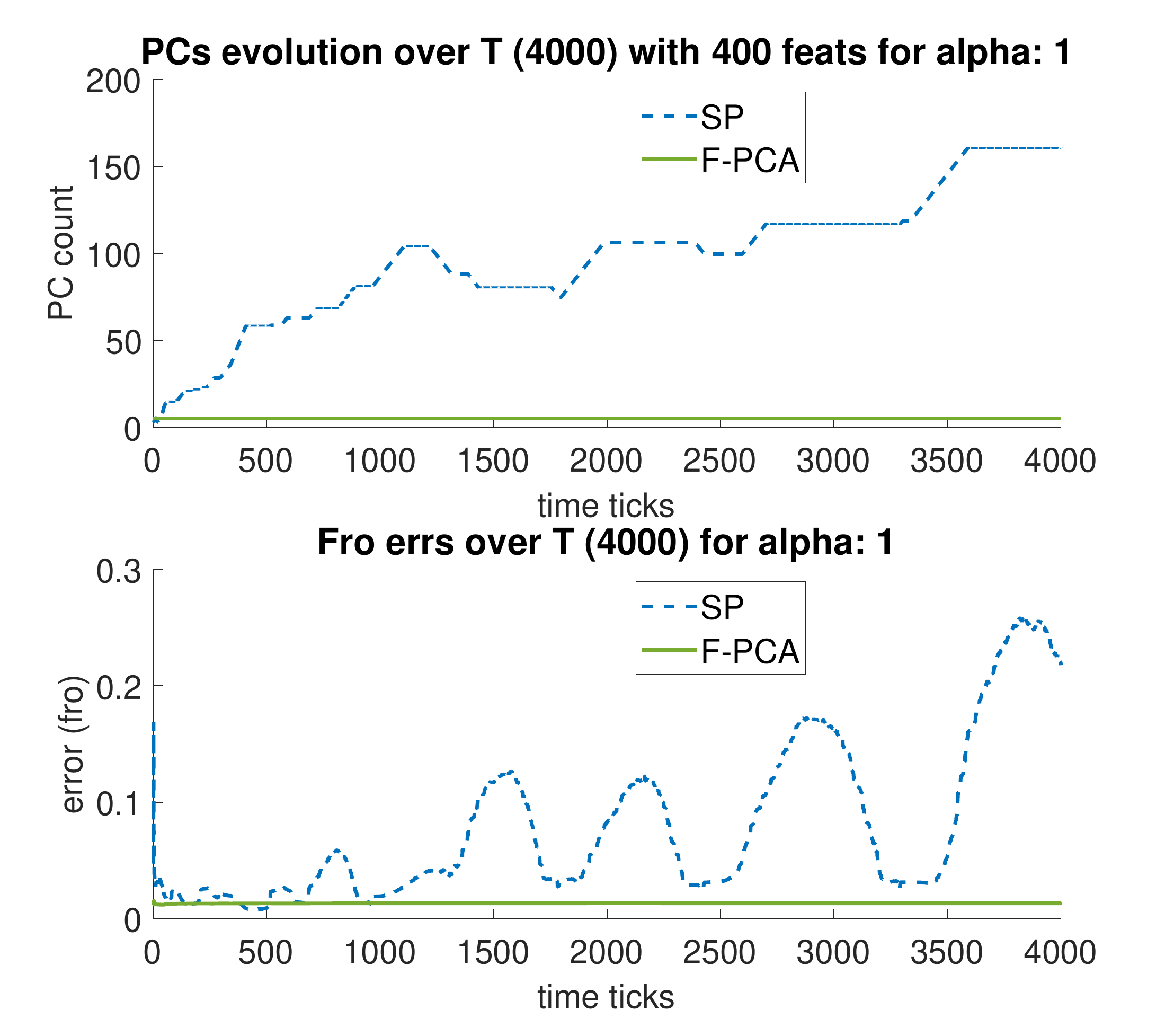}
        \caption{$\alpha=1$.}
        \label{fig:path_bottom_alpha_1}
    \end{subfigure}
    \begin{subfigure}{.48\textwidth}
        \centering
        \includegraphics
        [scale=.28]
        {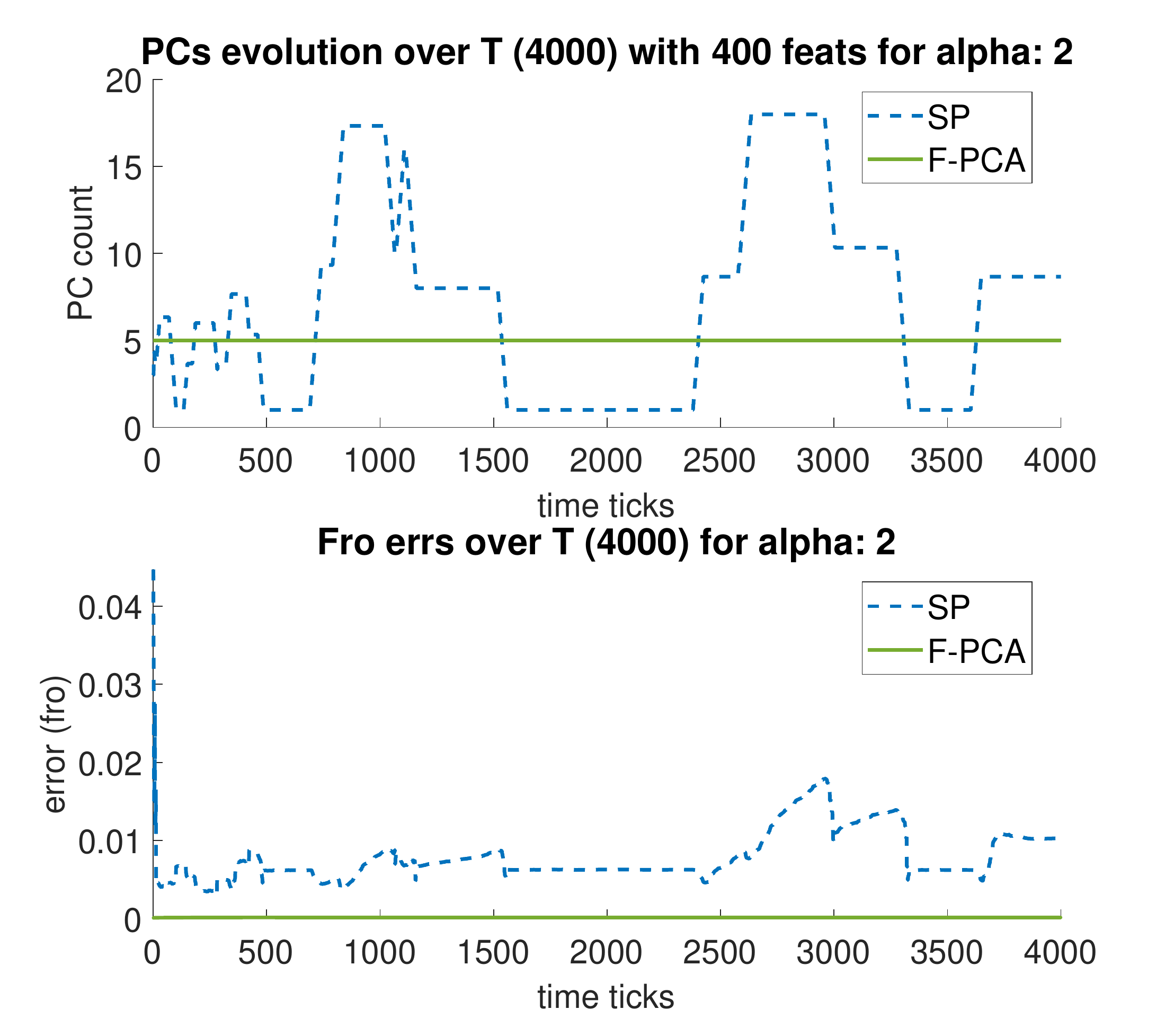}
        \caption{$\alpha=2$.}
            \label{fig:path_bottom_alpha_2}
    \end{subfigure}
    \begin{subfigure}{.48\textwidth}
        \centering
        \includegraphics
        [scale=0.28]
        {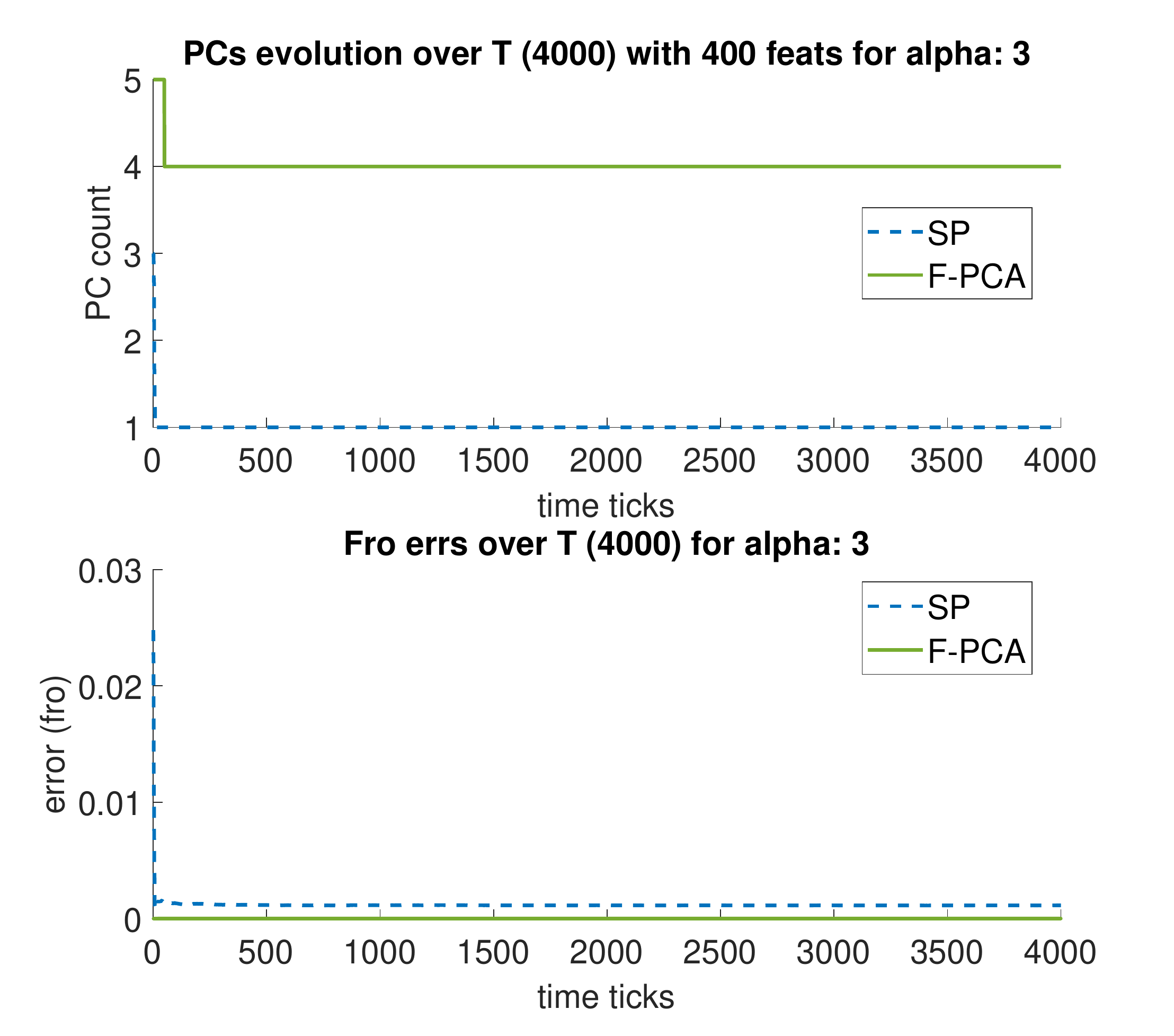}
        \caption{$\alpha=3$.}
        \label{fig:path_bottom_alpha_3}
    \end{subfigure}
    \caption{Pathological examples for adversarial Spectrums.}
    \label{fig:more_experiments_path}
\end{figure*}

\vfill
\pagebreak

Figures~\ref{fig:mse_subspace_normal} and \ref{fig:mse_subspace_pathological} show the results of our experiments on synthetic data $\text{Synth}(\alpha)^{d \times n} \subset \R^{d \times n}$ with $(d,n) =(400, 4000)$ generated as described above.
In the experiments, we let $\lambda$ be the forgetting factor of SP.
Figure~\ref{fig:more_experiments_normal} compares $\FPCAC$ with SP when $(\alpha, \lambda)=(1, 0.9)$ and Figure \ref{fig:more_experiments_path} when $(\alpha, \lambda) = (2, 1)$.
While $\FPCA$ exhibits relative stability in both cases with respect to the incurred $||\cdot||_{F}$ error, $SP$ exhibits a monotonic increase in the number of principal components estimated, in most cases, when $\lambda=1$. 
This behaviour is replicated in Figures~\ref{fig:mse_subspace_normal} and \ref{fig:mse_subspace_pathological} where \texttt{RMSE} subspace error is computed across the evaluated methods; thus, we can see while SP has better performance when $\lambda=1$ the number of principal components kept in most cases is unusually high.

\begin{figure}[htb!]
    \centering
    \begin{subfigure}{.48\linewidth}
    \centering
    \includegraphics[scale=0.4]{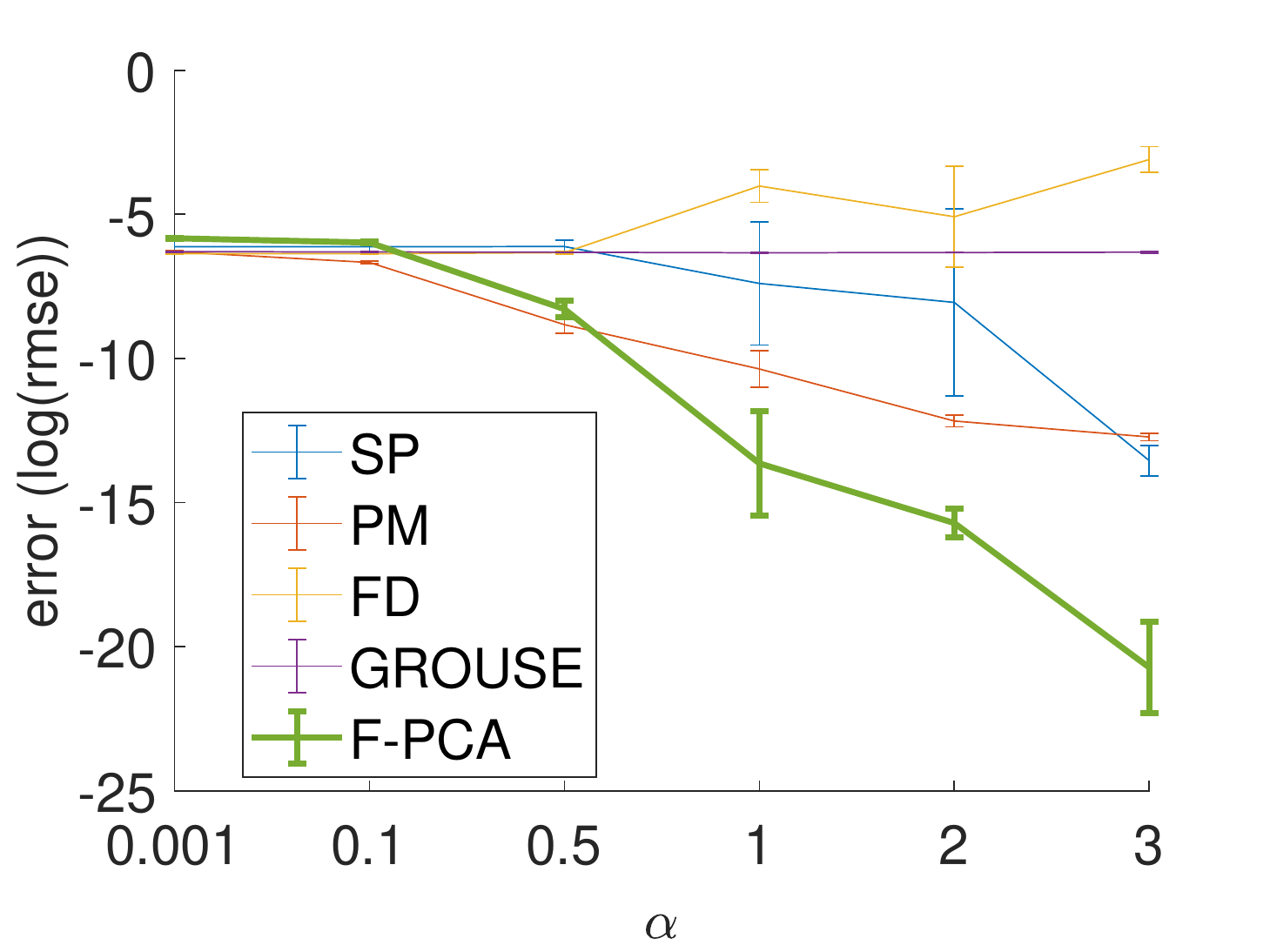}
    \caption{$\lambda=0.9$}
    
    \label{fig:mse_subspace_normal}
    \end{subfigure}
    \begin{subfigure}{.48\linewidth}
    \centering
    \includegraphics
    [scale=0.4]
    {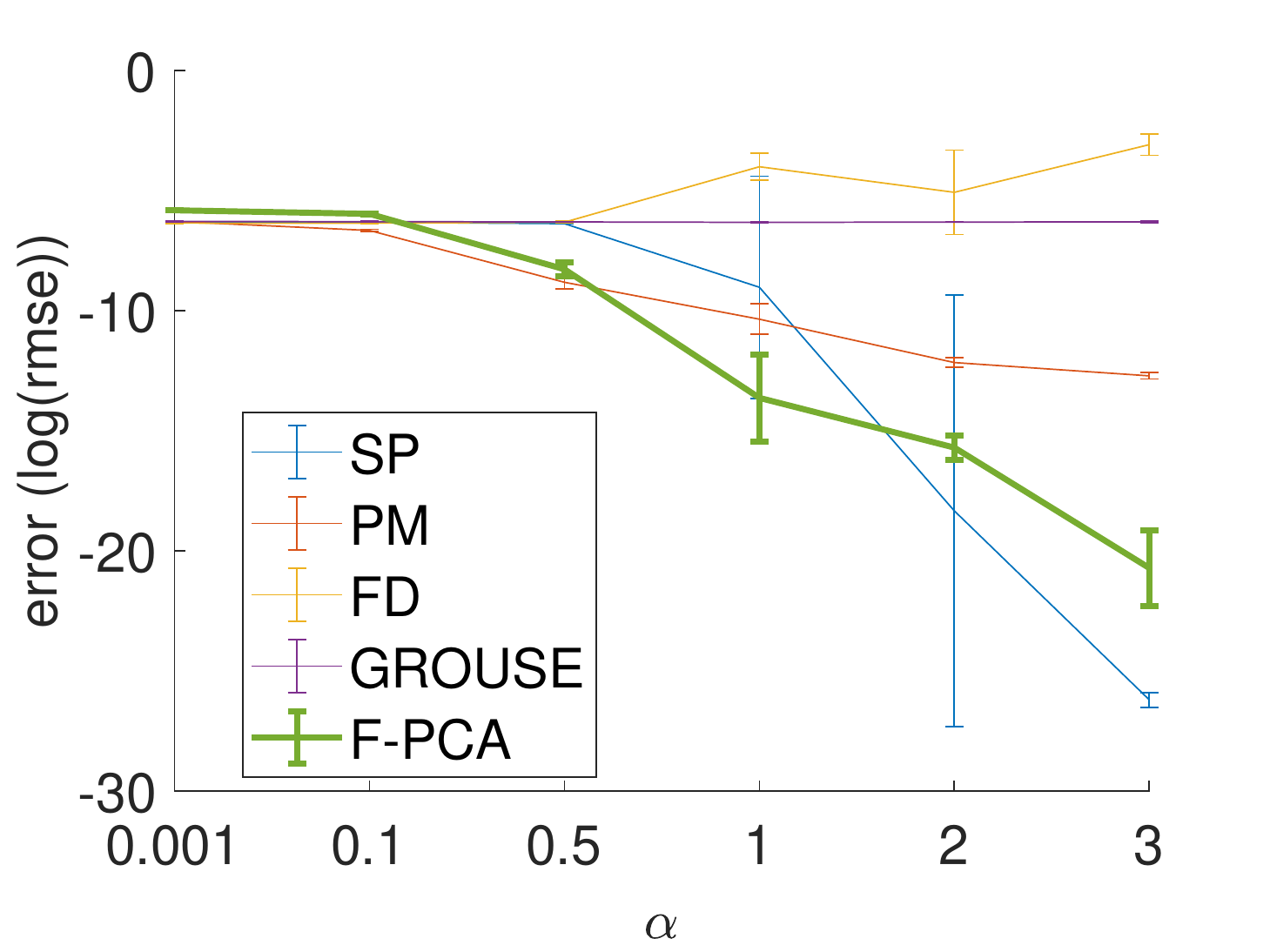}
    \caption{$\lambda=1$}
    \label{fig:mse_subspace_pathological}
    \end{subfigure}
    \caption{Resulting subspace $\mathbf{U}$ comparison across different spectrums generated using different $\alpha$ values.}
    \label{fig:mse_subspace}
\end{figure}

\vfill
\pagebreak

\subsection{Real Datasets}
\label{apx:real_dataset_eval}

To further evaluate our method against real datasets we also report in addition to the final subspace errors the Frobenious norm errors over time for all datasets and methods we used in the main paper. 
Namely, we used one that contains {\em light}, {\em volt}, and {\em temperature} readings gathered over a significant period of time, each of which exhibiting different noteworthy characteristics\footnote{Source of data: \url{https://www.cs.cmu.edu/afs/cs/project/spirit-1/www/data/Motes.zip}}. 
These datasets are used in addition to the MNIST and Wine quality datasets discussed in the main paper.
As with the synthetic datasets, across all real dataset experiments we used an ambient dimension $d$ and $N$ equal to the dimensions of each dataset. 
For the configuration parameters we elected to use a block size of $b=50$ for $\FPCAC$ and $b=d$ for PM.
The step size for GROUSE was again set to $2$ and the total sketch size for FD equal to $2r$. 
Additionally, we used the same bounding technique as with the synthetic datasets to bound the error of $\FPCAC$ using a fixed $r$ with lowest and highest estimation of the $r$-rank and note that we fully expect $\FPCAC$ to fall again within these bounds.
Note, that most reported errors are logarithmic; this was done in order for better readability and to be able to fit in the same plot most methods - of course, this is also reflected on the $y$-axis label as well.
We elected to do this as a number of methods, had errors orders of magnitude higher which posed a challenge when trying to plot them in the same figure.

\subsubsection{Motes datasets}

In this we elaborate on the findings with respect to the Motes dataset; below we present each of the measurements included along with discussion on the findings.

\paragraph{Humidity readings sensor node dataset evaluation.} 
Firstly, we evaluate against the motes dataset which has an ambient dimension $d=48$ and is comprised out of $N=7712$ total feature vectors thus its total size being $\R^{48 \times 7712}$.
This dataset is highly periodic in nature and has a larger lower/higher value deltas when compared to the other datasets. 
The initial rank used for all algorithms was $r=10$. 
The errors are plotted in logarithmic scale and can be seen in~\autoref{fig:eval_real_humid} and we can clearly see that $\FPCAC$ outperforms the competing algorithms while being within the expected $\FPCAC_{\text{(low)}}$ \& $\FPCAC_\text{(high)}$ bounds.

\paragraph{Light readings sensor node dataset evaluation.} 
Secondly, we evaluate against a motes dataset that has an ambient dimension $d=48$ and is comprised out of  $N=7712$ feature vectors thus making its total size $\R^{48 \times 7712}$. 
It contains mote light readings can be characterised as a much more volatile dataset when compared to the Humidity one as it contains much more frequent and rapid value changes while also having the highest value delta of all mote datasets evaluated. 
Again, as with Humidity dataset we used an initial seed rank $r=10$ while keeping the rest of the parameters as described  above, the errors over time for all algorithms is shown in~\autoref{fig:eval_real_light} plotted logarithmic scale.
As before, $\FPCAC$ outperforms the other algorithms while being again 
within the expected $\FPCAC_{\text{(low)}}$ \& $\FPCAC_{\text{(high)}}$ bounds.

\paragraph{Temperature readings sensor node dataset evaluation.} 
The third motes dataset we evaluate contains temperature readings from the mote sensors and has an 
ambient dimension $d=56$ containing $N=7712$ feature vectors thus making its total size $\R^{56 \times 7712}$. 
Like the humidity dataset the temperature readings exhibit periodicity in their value change and rarely have spikes. 
As previously we used a seed rank of $r=20$ and the rest of the parameters as described in the synthetic comparison above, the errors over time for all algorithms is shown in~\autoref{fig:eval_real_temp} plotted in logarithmic scale. 
It is again evident that $\FPCAC$ outperforms the other algorithms while being within the $\FPCAC_{\text{(low)}}$ \& $\FPCAC_{\text{(high)}}$ bounds.

\paragraph{Voltage readings sensor node dataset evaluation.} 
Finally, the fourth and final motes dataset we consider has an ambient dimension of $d=46$ contains $N=7712$ feature vectors thus making its size $\R^{46 \times 7712}$. 
Similar to the Light dataset this is an contains very frequent value changes, has large value delta which can be expected during operation of the nodes due to various reasons (one being duty cycling). 
As with the previous datasets we use a seed rank of $r=10$ and leave the rest of the parameters as described previously. 
Finally, the errors over time for all algorithms is shown in~\autoref{fig:eval_real_volt} and are plotted in logarithmic scale. 
As expected, $\FPCA$ here outperforms the competing algorithms while being within the required error bounds.

\subsubsection{MNIST}

To evaluate more concretely the performance of our algorithm in a streaming setting and how the errors evolve over time rather than just reporting the result we plot the logarithm of the frobenious norm error over time while using the MNIST dataset used in the main manuscript.
From our results as can be seen from~\autoref{fig:eval_real_mnist} $\FPCA$ consistently outperforms competing methods and exhibits state of the art performance throughout.

\subsubsection{Wine}

The final real dataset we consider to evaluate and plot the evolving errors is the (red) Wine quality dataset, in which we also used in the main manuscript albeit, as with MNIST, we only reported the resulting subspace quality error.
Again, as we can see from~\autoref{fig:eval_real_wine_red} $\FPCA$ performs again remarkably, besting all other methods in this test as well.

\subsubsection{Real dataset evaluation remarks}

One strength of our algorithm is that it has the flexibility of not having its incremental updates to be bounded by the ambient dimension $d$ - {\it i.e.} its merges. 
This is especially true when operating on a memory limited scenario as the minimum number of feature vectors that need to be kept has to be a multiple of the ambient dimension $d$ in order to provide their theoretical guarantees (such as in~\citep{mitliagkas2014streaming}).
Moreover, in the case of having an adversarial spectrum (\textit{e.g.} $\alpha>1$), energy thresholding can quickly overestimates the number of required principal components, unless a forgetting factor is used, but at the cost of approximation quality and robustness as it can be seen through our experiments. 
Notably, in a number of runs SP ended up with linearly dependent columns in the generated subspace and failed to complete. 
This is an inherent limitation of Gram-Schmidt orthonormalisation procedure used in the reference implementation and substituting it with a more robust one (such as $\QR$) decreased its efficiency throughout our experiments.

\begin{figure*}[ht]
    \centering
    \begin{subfigure}{.48\textwidth}
        \centering
        \includegraphics
        [scale=.25]
        {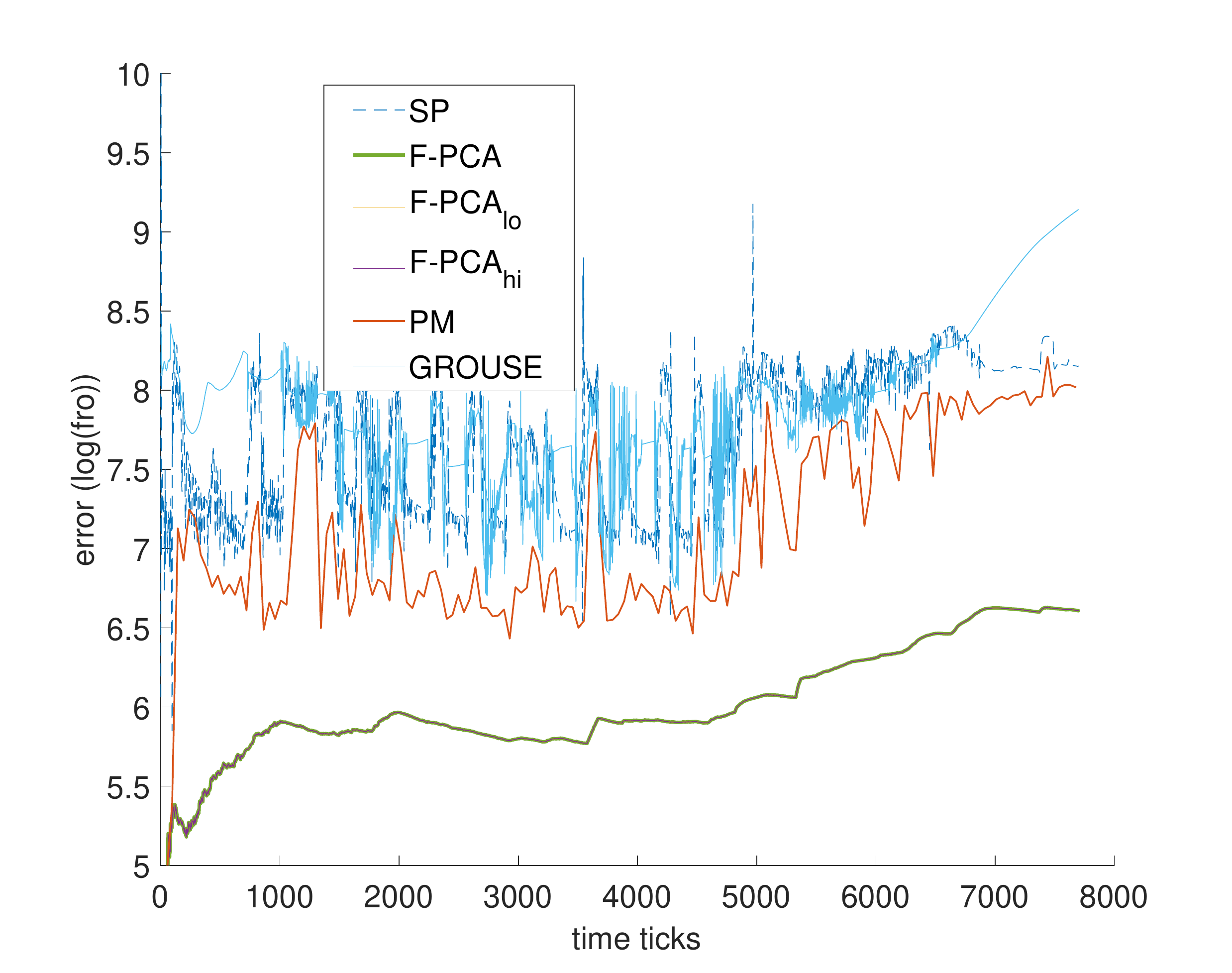}
        \caption{Humidity.}
        \label{fig:eval_real_humid}
    \end{subfigure}
    \begin{subfigure}{.48\textwidth}
        \centering
        \includegraphics
        [scale=.25]
        {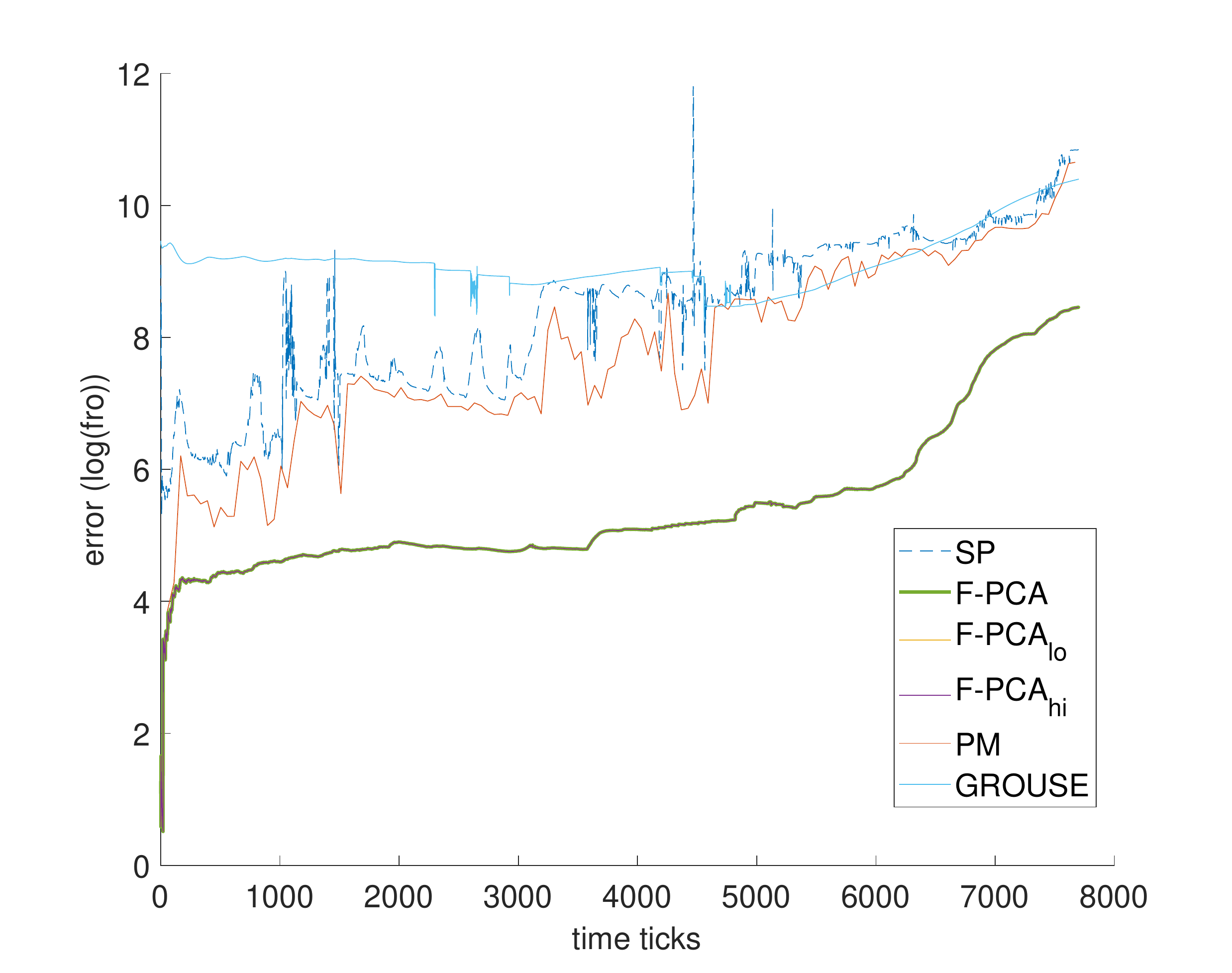}
        \caption{Temperature.}
        \label{fig:eval_real_temp}
    \end{subfigure}
    \begin{subfigure}{.48\textwidth}
        \centering
        \includegraphics
        [scale=.25]
        {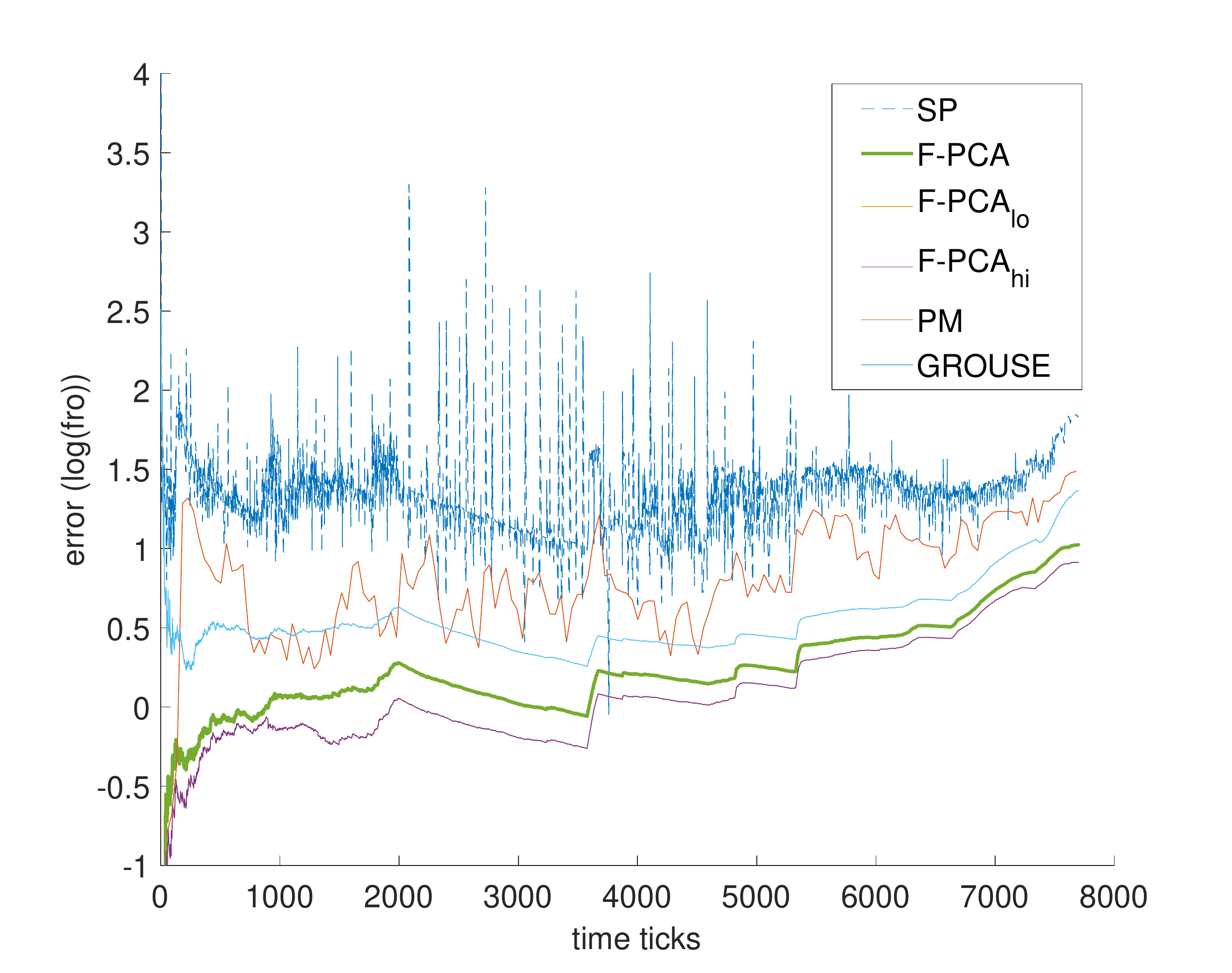}
        \caption{Volt.}
        \label{fig:eval_real_volt}
    \end{subfigure}
    \begin{subfigure}{.48\textwidth}
        \centering
        \includegraphics
        [scale=.25]
        {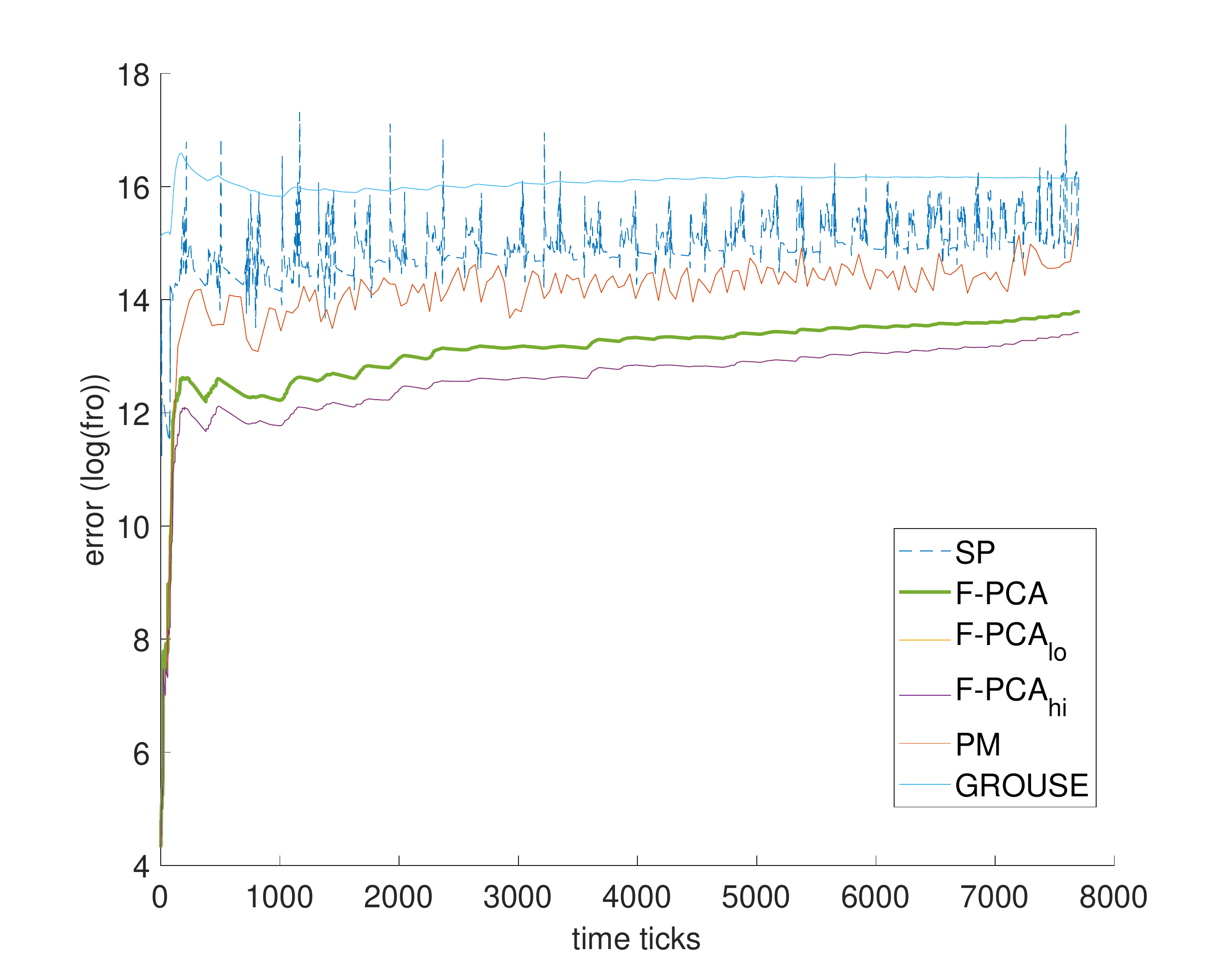}
        \caption{Light.}
        \label{fig:eval_real_light}
    \end{subfigure}
    \begin{subfigure}{.48\textwidth}
        \centering
        \includegraphics
        [scale=.25]
        {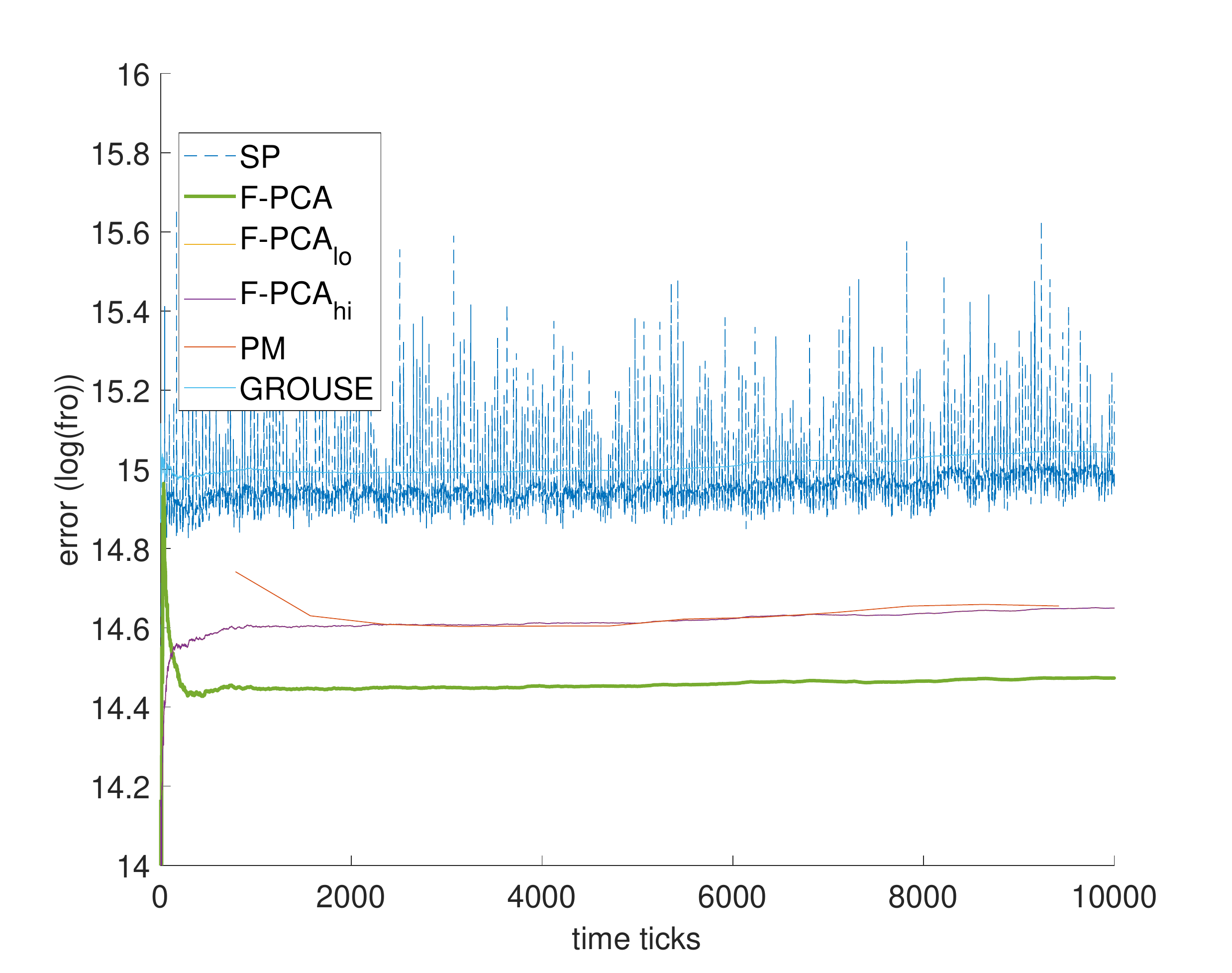}
        \caption{MNIST.}
        \label{fig:eval_real_mnist}
    \end{subfigure}
    \begin{subfigure}{.48\textwidth}
        \centering
        \includegraphics
        [scale=.25]
        {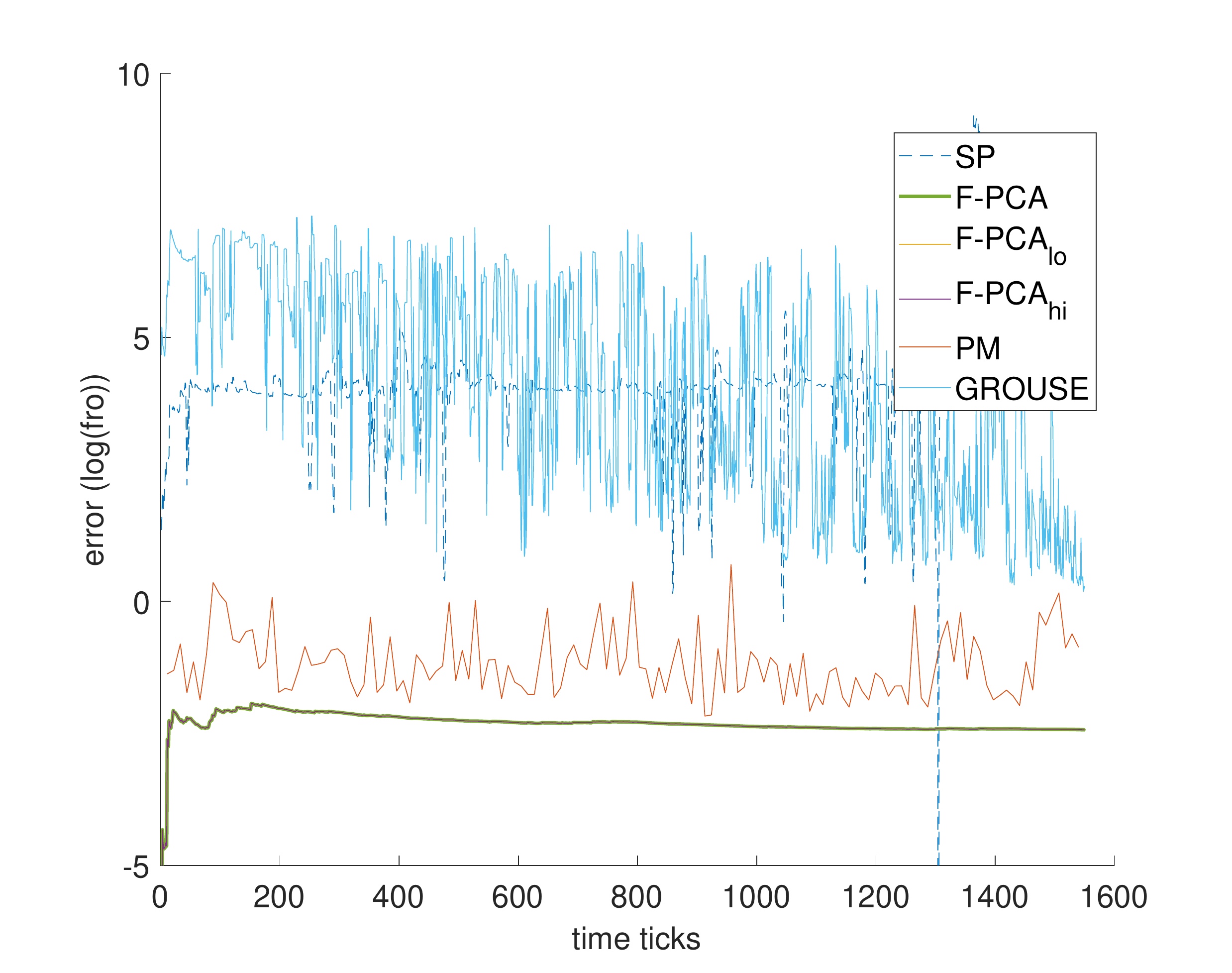}
        \caption{(red) Wine Quality.}
        \label{fig:eval_real_wine_red}
    \end{subfigure}
    \caption{Comparisons against the Motes dataset containing Humidity (\cref{fig:eval_real_humid}), Temperature (\cref{fig:eval_real_temp}), Volt (\cref{fig:eval_real_volt}), and Light (\cref{fig:eval_real_light}) datasets with respect to the Frobenious norm error over time; further, we compare the same error over time for the MNIST (\cref{fig:eval_real_mnist}) and (red) Wine quality (\cref{fig:eval_real_wine_red}) datasets. We compare against SPIRIT (SP), $\FPCAC$, non-adaptive $\FPCAC$ (low/high bounds), PM, \& GROUSE; Frequent directions was excluded due to exploding errors.}
    \label{fig:more_experiments_real}
\end{figure*}

\vfill
\pagebreak

\clearpage

\subsection{Differential Privacy}
\label{apx:diff_eval_details}

Due to spacing limitation we refrained from showing the projections using a variety of differential privacy budgets for the evaluated datasets; in this section we will show how the projections behave for two additional DP budgets, namely for: $\varepsilon\in\{0.6, 1\}$ and $\delta=0.1$ for both datasets. 
The projections for MNIST can be seen in~\autoref{fig:dp_appx_mnist_eval}; the quality of the projections produced by $\FPCA$ appear to be \textit{closer} to the offline ones~\autoref{fig:dp_appx_offline_pca_1_mnist} than the ones produced by $\modsulq$ for both DP budgets considered.

\begin{figure*}[htb!]
    \centering
    \begin{subfigure}{.90\textwidth}
        \centering
        \includegraphics
        [scale=.3]
        {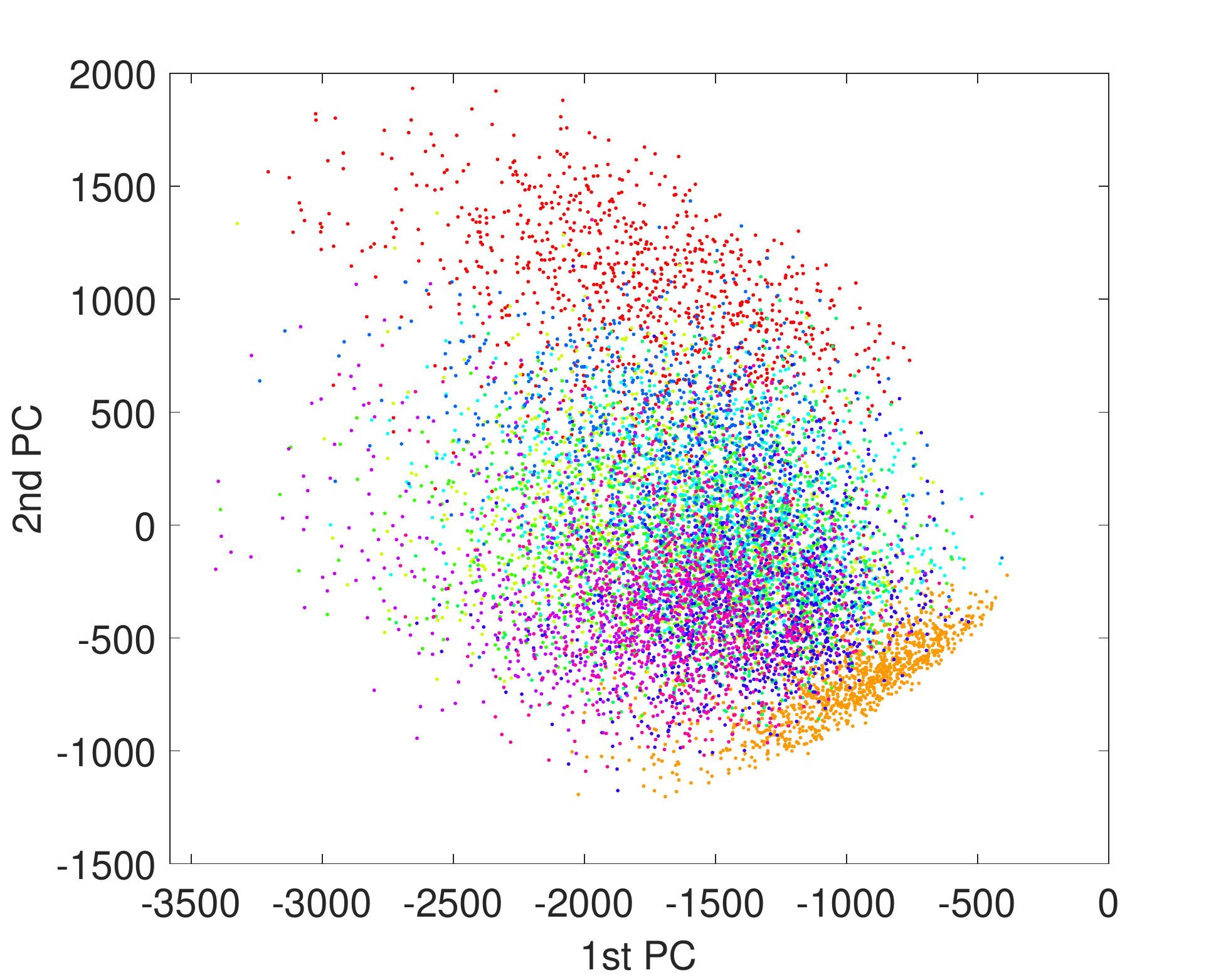}
        \caption{Offline.}
        \label{fig:dp_appx_offline_pca_1_mnist}
    \end{subfigure}
    \begin{subfigure}{.48\textwidth}
        \centering
        \includegraphics
        [scale=.3]
        {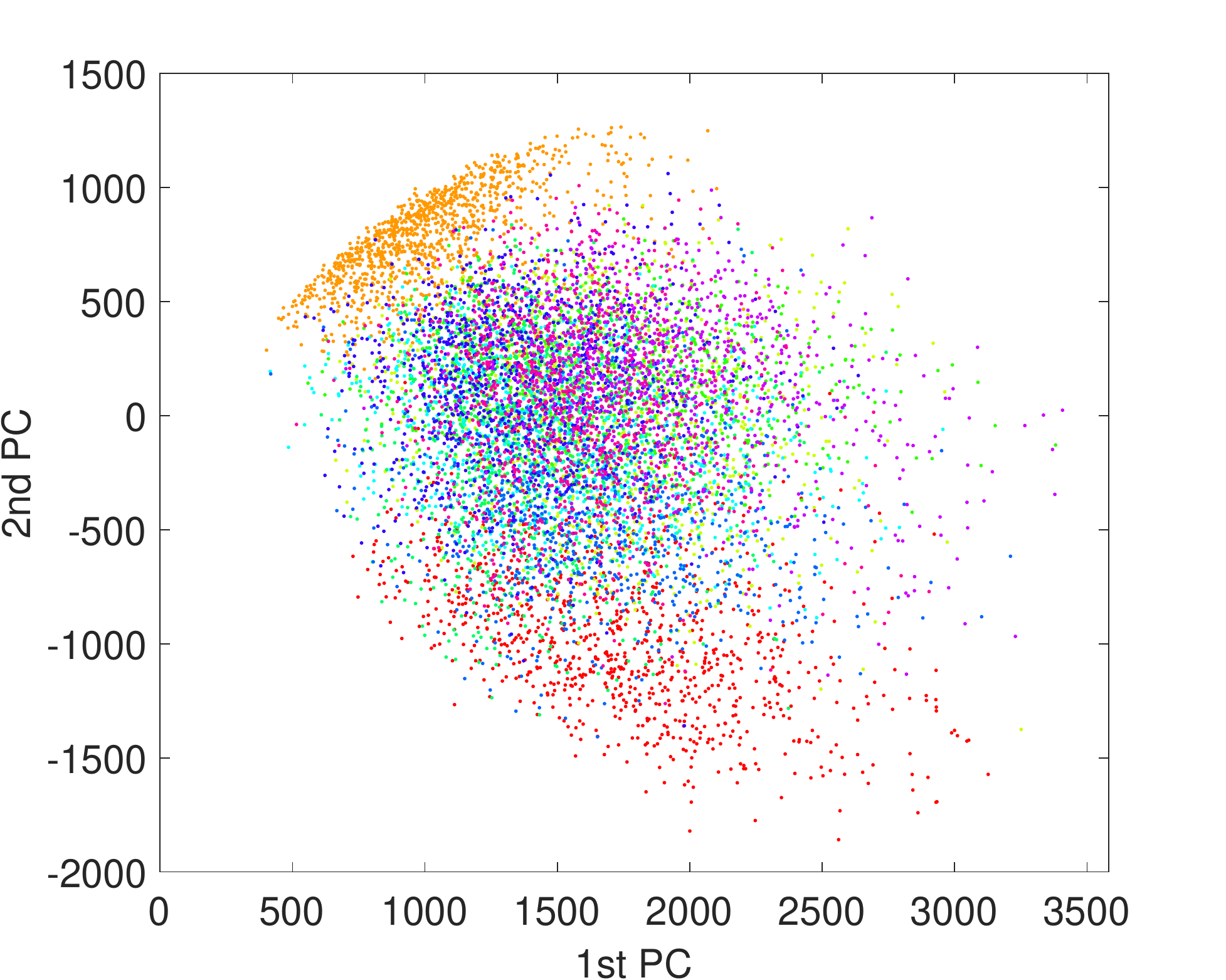}
        \caption{$\FPCAC$ (with masks), $(\varepsilon, \delta)=(0.6, 0.1)$.}
        \label{fig:dp_appx_fpca_0_60_mnist}
    \end{subfigure}
    \begin{subfigure}{.48\textwidth}
        \centering
        \includegraphics
        [scale=.3]
        {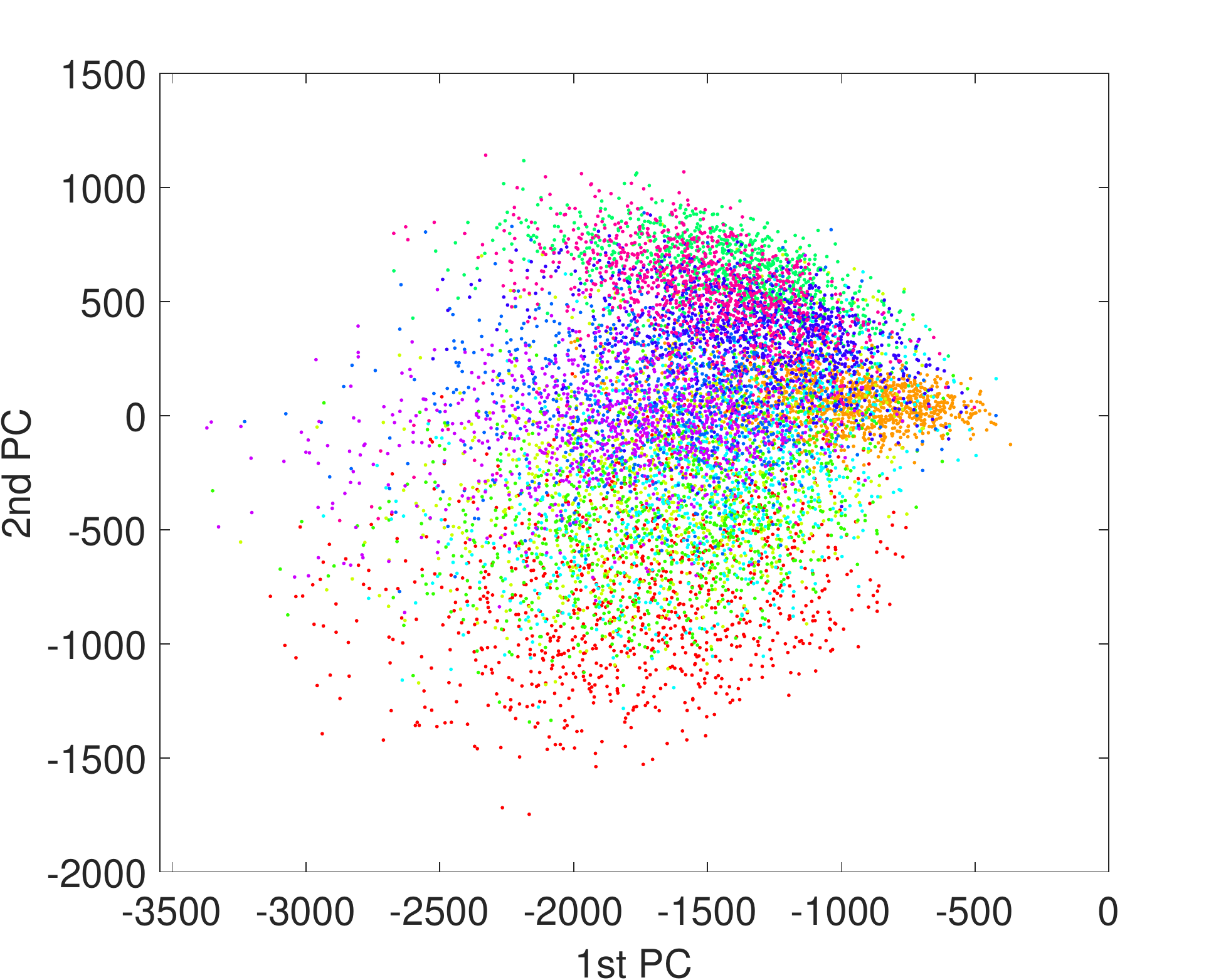}
        \caption{$\modsulq$, $(\varepsilon, \delta)=(0.6, 0.1)$.}
        \label{fig:dp_appx_modsulq_0_60_mnist}
    \end{subfigure}
    \begin{subfigure}{.48\textwidth}
        \centering
        \includegraphics
        [scale=.3]
        {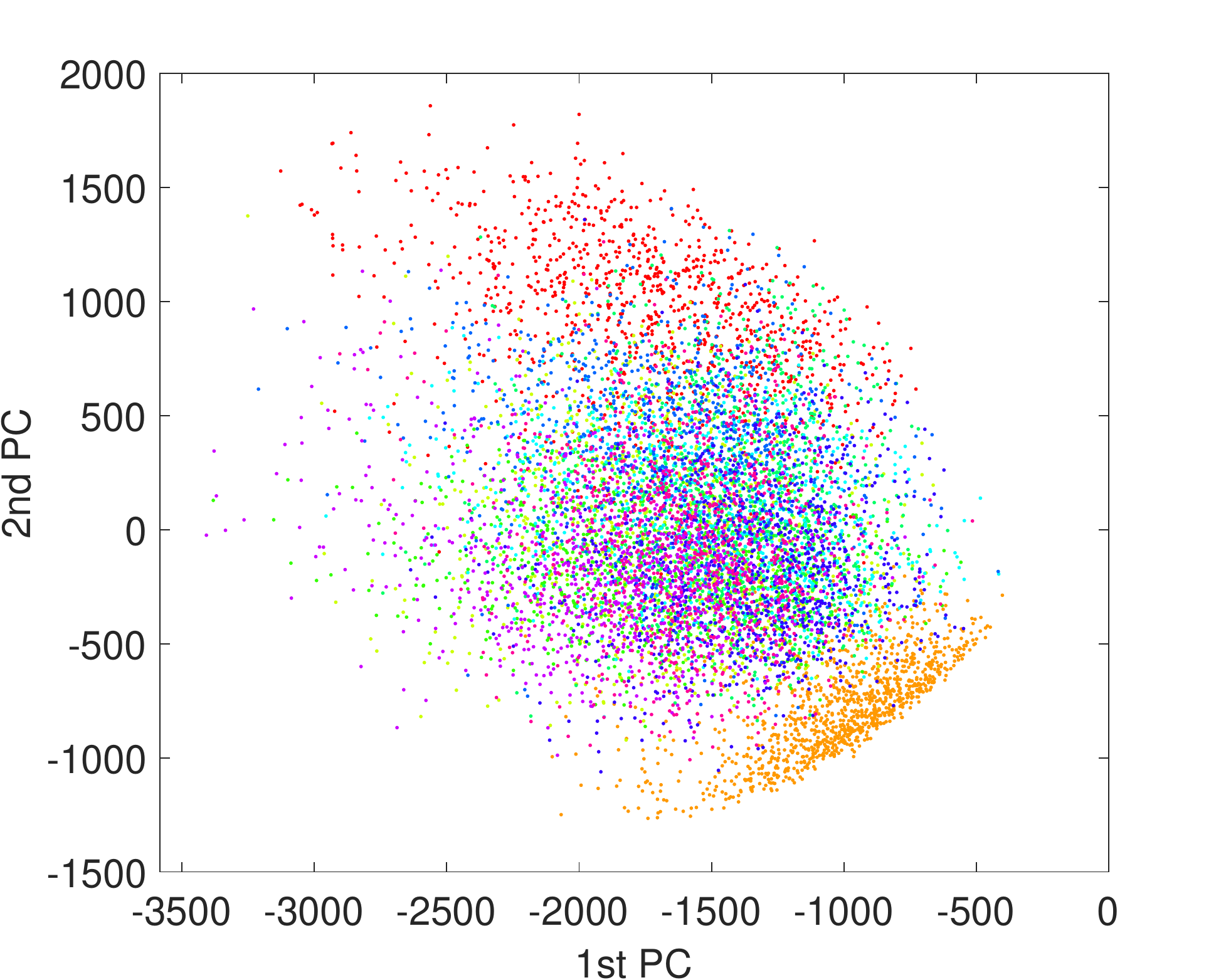}
        \caption{$\FPCAC$ (with masks), $(\varepsilon, \delta)=(1, 0.1)$.}
        \label{fig:dp_appx_fpca_1_mnist}
    \end{subfigure}
    \begin{subfigure}{.48\textwidth}
        \centering
        \includegraphics
        [scale=.3]
        {figs/dp-appx/mnist/e_0_60_delta_0_1_mod_sulq.pdf}
        \caption{$\modsulq$, $(\varepsilon, \delta)=(1, 0.1)$.}
        \label{fig:dp_appx_modsulq_1_mnist}
    \end{subfigure}
    \caption{MNIST projections using different differential privacy budgets, at the top (\cref{fig:dp_appx_offline_pca_1_mnist}) is the full rank PCA while on the left column is $\FPCA$ with perturbation masks and on the right column $\modsulq$ using DP budget of $\varepsilon\in\{0.6, 1\}$ and $\delta=0.1$ while starting from a recovery rank of $6$. Note here that $\FPCA$ exhibits remarkable performance producing higher quality projections than $\modsulq$ in both cases.}
    \label{fig:dp_appx_mnist_eval}
\end{figure*}

However, on the Wine quality dataset projections seen in~\autoref{fig:dp_appx_wine_eval} it seems that $\modsulq$ can produce projection that are \textit{closer} to the offline ones than $\FPCA$ but not too far apart. 
Notably, this can be attributed to the higher sample complexity required by $\FPCA$ as it is an inherently \textit{streaming} method and the (red) Wine dataset is \textit{considerably} smaller than MNIST.

\begin{figure*}[htb!]
    \centering
    \begin{subfigure}{.90\textwidth}
        \centering
        \includegraphics
        [scale=.3]
        {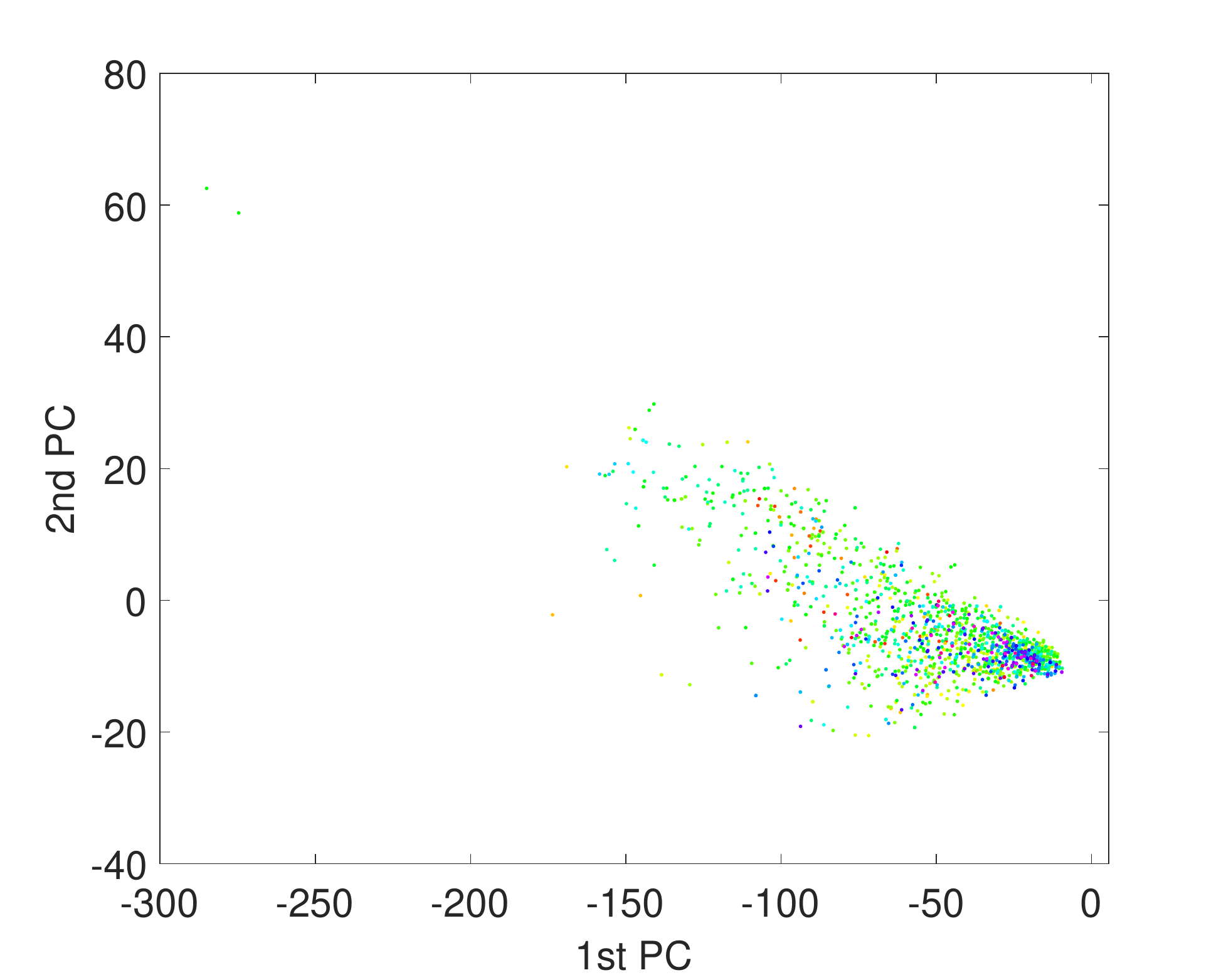}
        \caption{Offline.}
        \label{fig:dp_appx_offline_pca_1_wine}
    \end{subfigure}
    \begin{subfigure}{.48\textwidth}
        \centering
        \includegraphics
        [scale=.3]
        {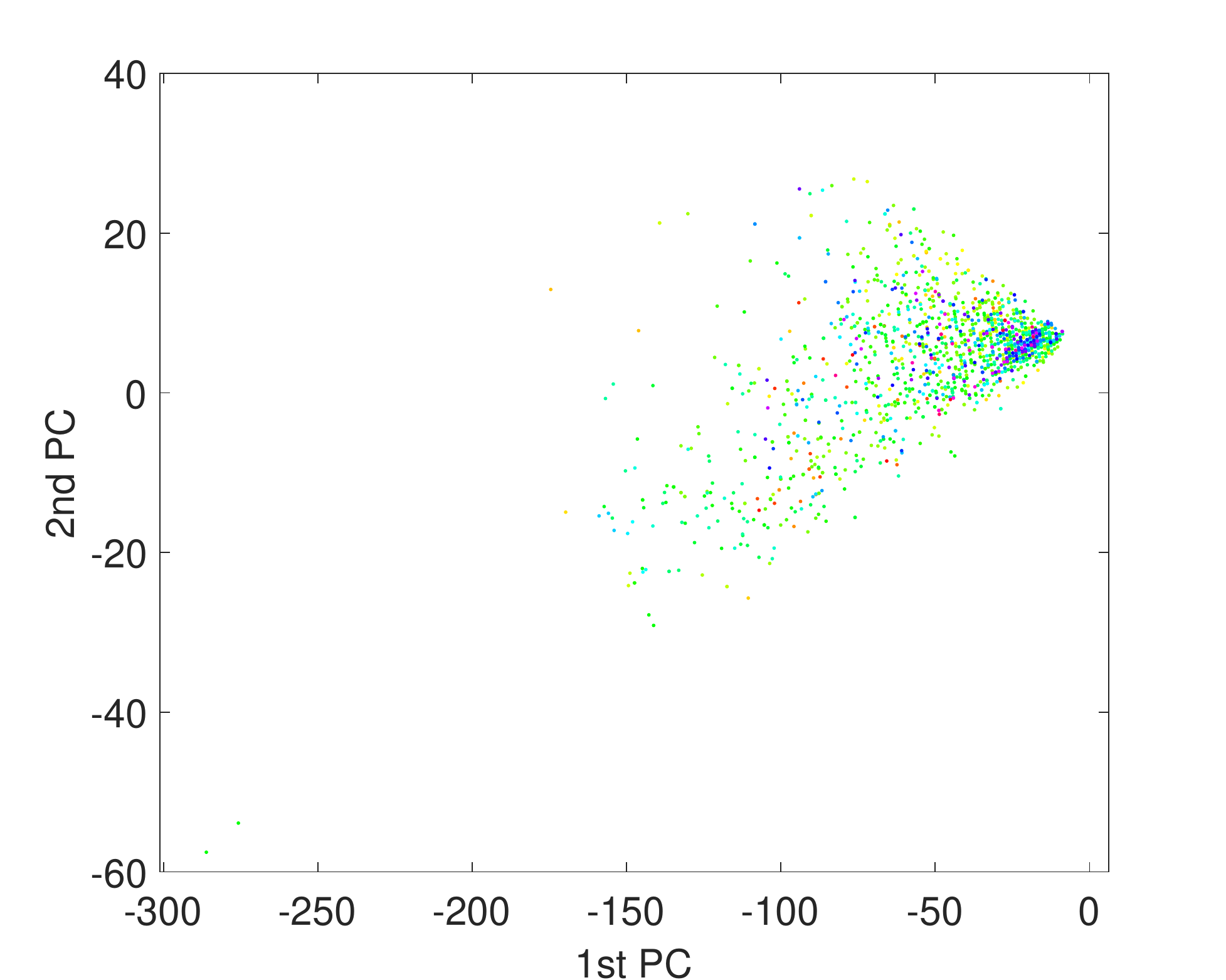}
        \caption{$\FPCAC$ (with masks), $(\varepsilon, \delta)=(0.6, 0.1)$.}
        \label{fig:dp_appx_fpca_0_60_wine}
    \end{subfigure}
    \begin{subfigure}{.48\textwidth}
        \centering
        \includegraphics
        [scale=.3]
        {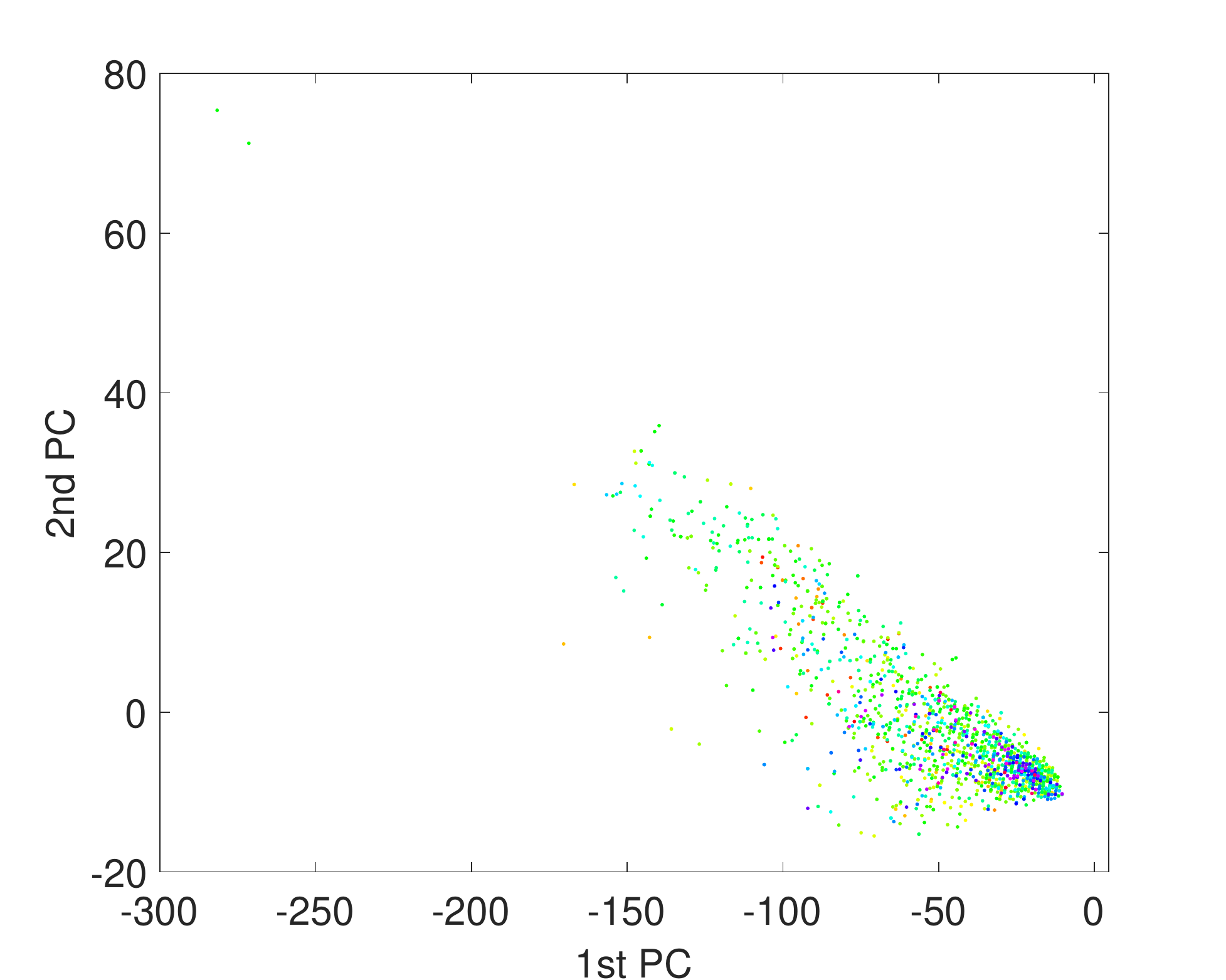}
        \caption{$\modsulq$, $(\varepsilon, \delta)=(0.6, 0.1)$.}
        \label{fig:dp_appx_modsulq_0_60_wine}
    \end{subfigure}
    \begin{subfigure}{.48\textwidth}
        \centering
        \includegraphics
        [scale=.3]
        {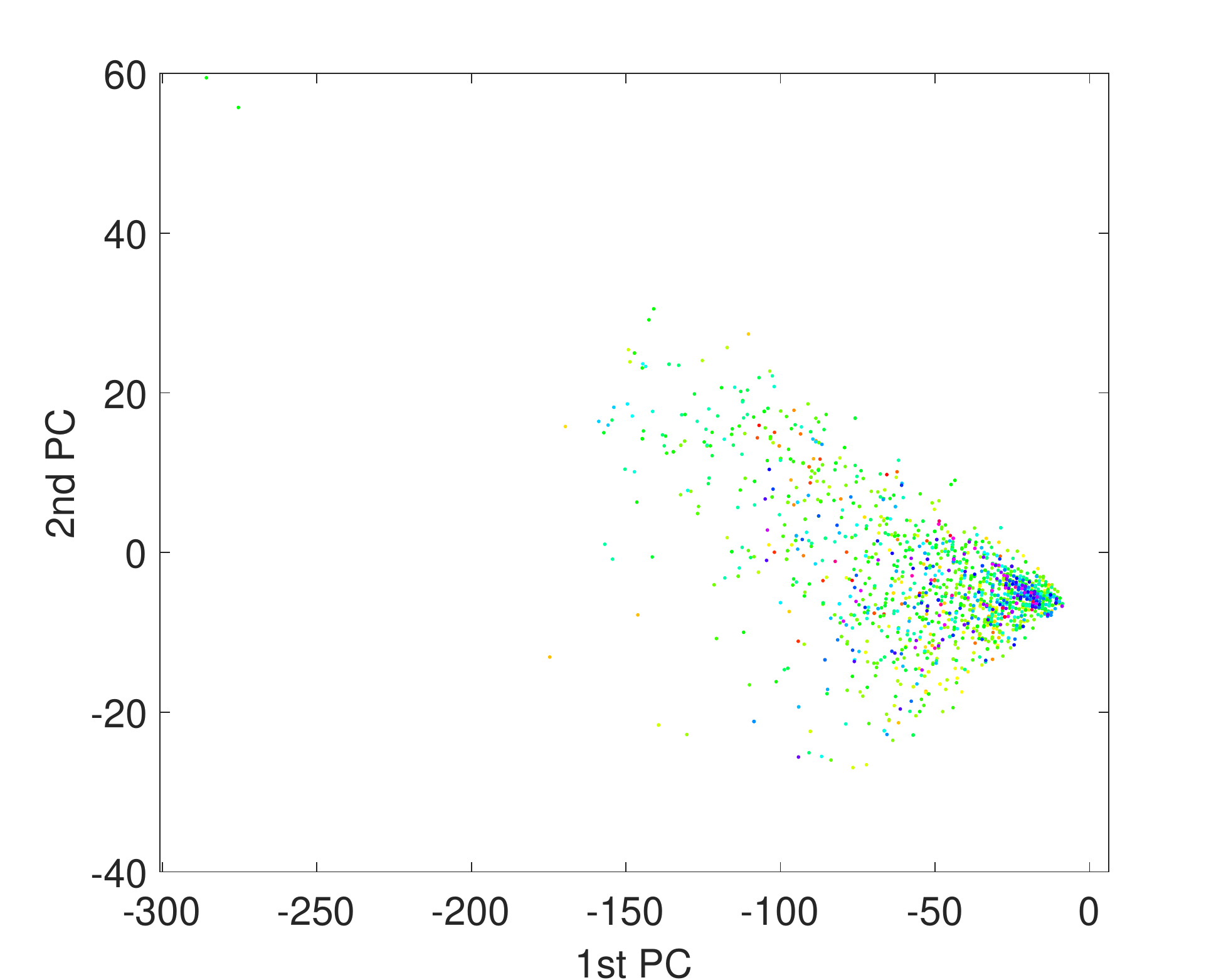}
        \caption{$\FPCAC$ (with masks), $(\varepsilon, \delta)=(1, 0.1)$.}
        \label{fig:dp_appx_fpca_1_wine}
    \end{subfigure}
    \begin{subfigure}{.48\textwidth}
        \centering
        \includegraphics
        [scale=.3]
        {figs/dp-appx/wine/e_0_60_delta_0_1_mod_sulq.pdf}
        \caption{$\modsulq$, $(\varepsilon, \delta)=(1, 0.1)$.}
        \label{fig:dp_appx_modsulq_1_wine}
    \end{subfigure}
    \caption{(red) Wine quality projections using different differential privacy budgets, at the top (\cref{fig:dp_appx_offline_pca_1_wine}) is the full rank PCA while on the left column is $\FPCA$ with perturbation masks and on the right column $\modsulq$ using DP budget of $\varepsilon\in\{0.6, 1\}$ and $\delta=0.1$ while starting from a recovery rank of $6$. Note here that due to the higher sample complexity requirements of $\FPCA$ the projections appear slighly worse.}
    \label{fig:dp_appx_wine_eval}
\end{figure*}

\vfill

\clearpage

\pagebreak

\subsection{Federated Evaluation}
\label{apx:fed_eval_details}

To provide additional information with respect to the evaluation we also report the amortised execution times per number of workers, as if the workers exceed the number of available compute nodes in our workstation then computation cannot be completed in parallel thus hindering the potential speedup.
In \autoref{fig:fed_compute_apx} we show the amortised total (\cref{fig:fed_apx_amo_total_time}), PCA (\cref{fig:fed_apx_amo_pca_time}), and merge (\cref{fig:fed_apx_amo_merge_time}) times respectively - these results, as in the main text, use $\FPCA$ \textit{without} perturbation masks but a similar result would apply to this case as well.
These results indicate, that in the presence of enough resources, $\FPCA$ exhibits an extremely favourable scalability curve emphasising the practical potential of the method if used in conjunction with thin clients ({\it i.e.} mobile phones).

\begin{figure*}[htb!]
    \centering
    \begin{subfigure}{.31\textwidth}
        \centering
        \includegraphics[scale=0.31]{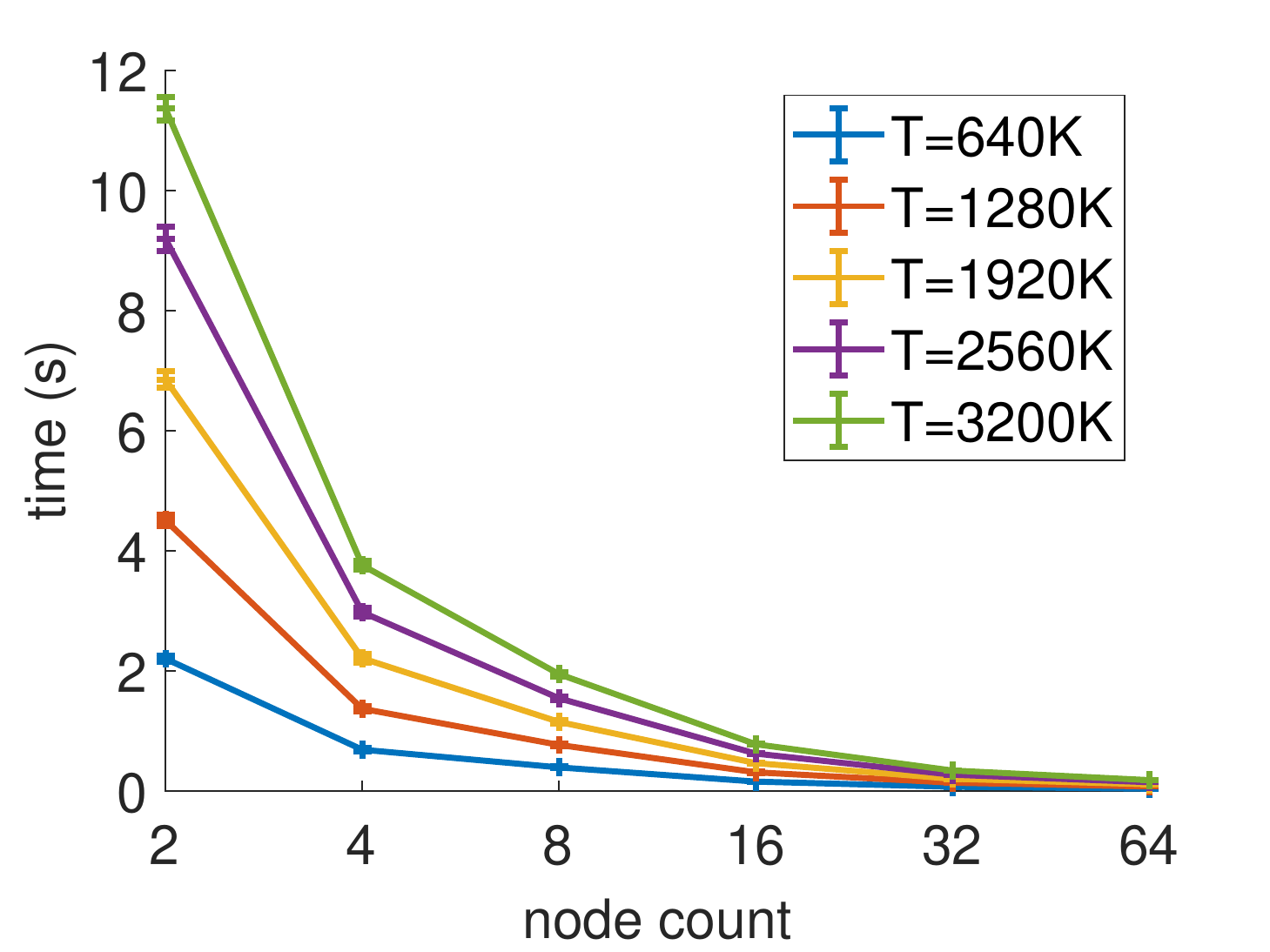}
        \caption{Amortised execution time.}
        \label{fig:fed_apx_amo_total_time}
    \end{subfigure}
    \centering
    \begin{subfigure}{.31\textwidth}
        \centering
        \includegraphics[scale=0.31]{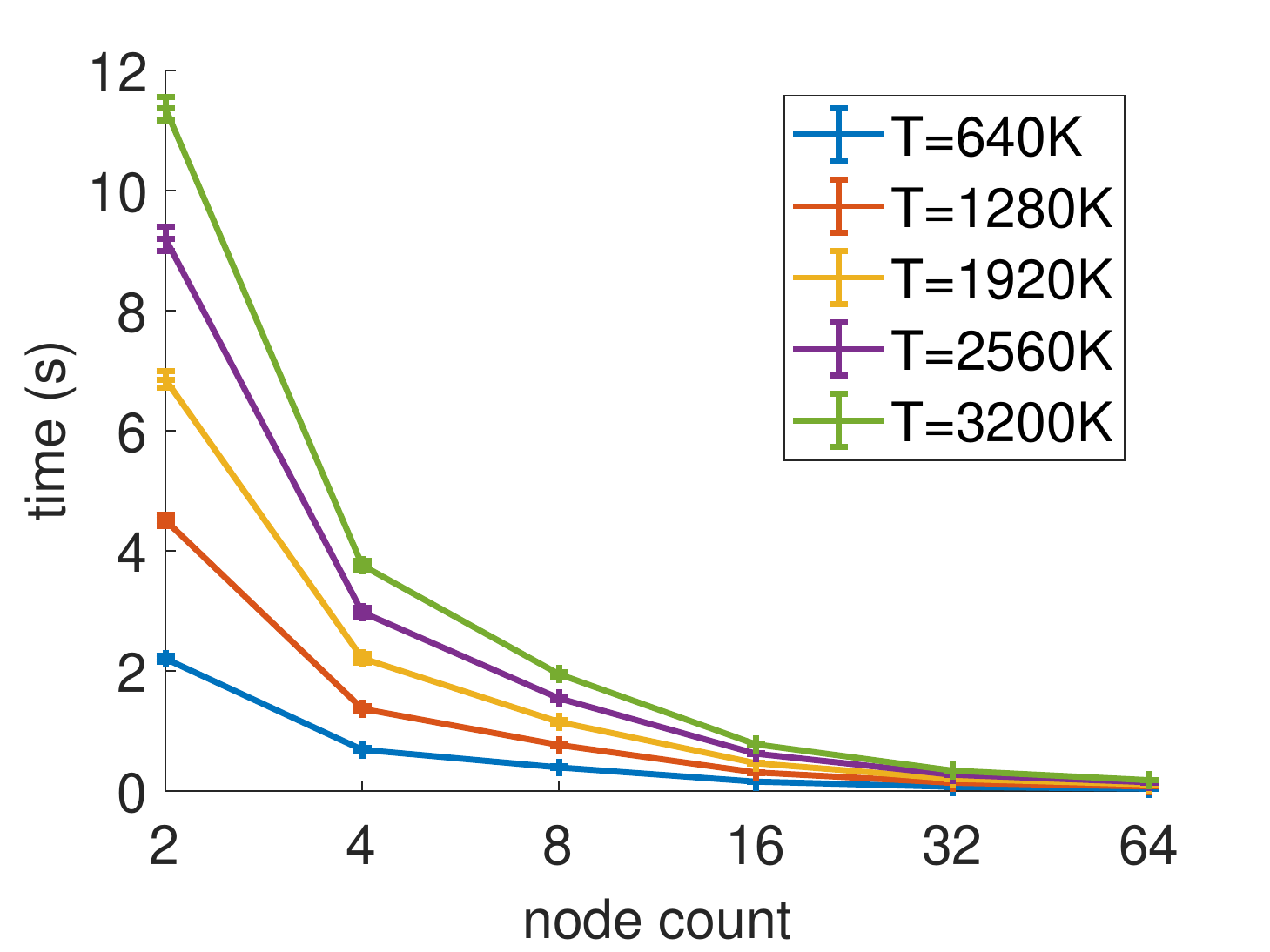}
        \caption{Amortised PCA time.}
        \label{fig:fed_apx_amo_pca_time}
    \end{subfigure}
    \centering
    \begin{subfigure}{.31\textwidth}
        \centering
        \includegraphics[scale=0.31]{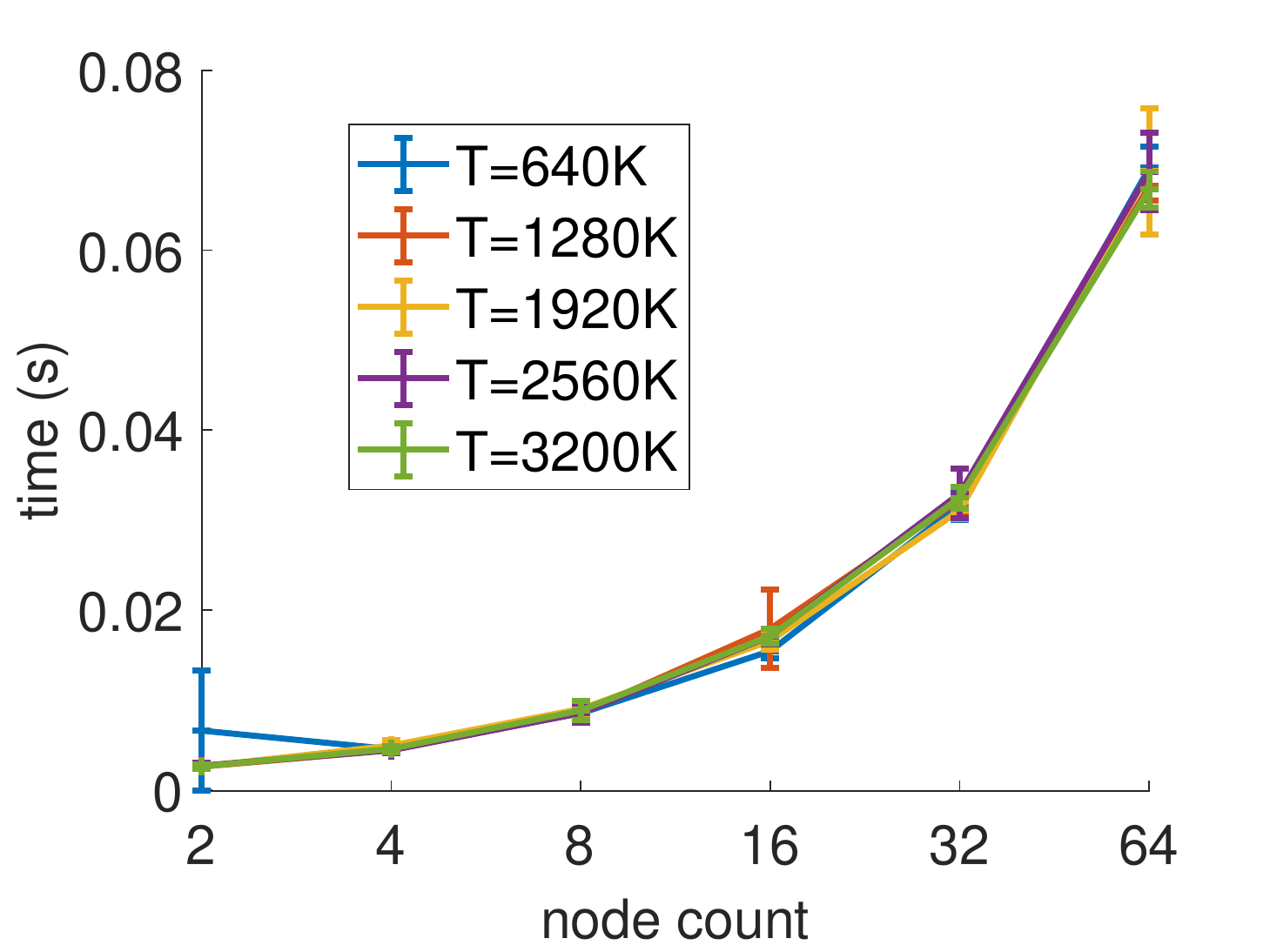}
        \caption{Amortised time spent merging.}
        \label{fig:fed_apx_amo_merge_time}
    \end{subfigure}
    \caption{Amortised execution times for total (\cref{fig:fed_apx_amo_total_time}), PCA (\cref{fig:fed_apx_amo_pca_time}), and merge (\cref{fig:fed_apx_amo_merge_time}) operations respectively.}
    \label{fig:fed_compute_apx}
\end{figure*}

\subsection{Memory Evaluation}
\label{apx:mem_discussion}

We benchmarked each of the methods used against its competitors and found that our $\FPCA$ performed favourably. 
With respect to the experiments, in order to ensure accurate measurements, we started measuring after clearing the previous profiler contents. 
The tool used in all profiling instances was \texttt{MATLAB}'s built-in memory profiler which provides a rough estimate about the memory consumption; however, it has been reported that can cause issues in some instances.

These empirical results support the theoretical claims about the storage optimality of $\FPCAC$.
In terms of average and median memory allocations, $\FPCAC$ is most of the times better than the competitors. 
Naturally, since by design, PM requires the materialisation of larger block sizes it requires more memory than both $\FPCAC$ as well as FD. 
Moreover, GROUSE, in its reference implementation requires the instantiation of the whole matrix again; this is because the reference version of GROUSE is expected to run on a subset of a sparse matrix which is copied locally to the function - since in this instance we require the   entirety of the matrix to be allocated and thus results in a large memory overhead. 
An improved, more efficient GROUSE implementation would likely solve this particular issue. 
Concluding, we note that although $\FPCA{}$ when using perturbation masks consumes slightly more memory, this is due to the inherent added for supporting differential privacy; however, this cost appears to be in line with our $\mathcal{O}(db)$ memory bound and not quadratic with respect to $d$, as with competing algorithms.

\begin{table*}[htb!]
    \centering
    \small
     \caption{Average / median memory allocations (Kb) for a set of real-world datasets.}%
\scalebox{0.85}{
    \begin{tabular}{l|llll}
          & Humidity & Light & Voltage & Temperature  \\
         \hline
         $\FPCAC$ (with mask) & $166.57$ / $81.23$ Kb & $172.00$ / $ 99.17$ Kb & $289.02$ / $143.79$ Kb & $257.00$ / $ 195.30$ Kb \\
         \hline
         $\FPCAC$ (no mask) & $ \textbf{138.11}$ / $\textbf{58.99}$ Kb & $\textbf{104.00}$ / $ \textbf{76.03}$ Kb & $ 204.58$ / $ \textbf{23.47}$ Kb & $ \textbf{187.74}$ / $ \textbf{113.28}$ Kb \\
         PM & $ 905.45$ / $ 666.11$ Kb & $ 685.48$   / $ 685.44$ Kb & $ 649.12$  / $ 644.35$ Kb & $ 657.57$ / $ 668.27$ Kb \\
         GROUSE & $ 2896.61$ / $ 2896.62$ Kb & $ 2896.84$ / $ 2896.62$ Kb & $ 2772.86$ / $ 2772.62$ Kb & $3379.62$ / $ 3376.62$ Kb \\
         FD & $ 162.70$ / $ 117.92$ Kb & $ 170.48$  / $ 127.91$ Kb & $ \textbf{114.46}$ / $ 112.66$ Kb & $ 196.11$ / $ 118.59$ Kb \\
         SP & $476.68$ / $ 405.01$ Kb & $1009.03$ / $508.11$ Kb & $ 348.84$ / $351.98$ Kb & $ 541.56$ / $437.61$ Kb \\
         \hline
    \end{tabular}
}
     \label{tab:mem_results_avg_mem}
\end{table*}

\pagebreak

\subsection{Extended Time-Order Independence Empirical Evaluation}

\label{apx:time_order_independence_experiments}

The figures show the errors for recovery ranks $r$  equal to $5$ (\ref{fig:subspace_time_order_r_5}), $20$ (\ref{fig:subspace_time_order_r_20}), $40$ (\ref{fig:subspace_time_order_r_40}), $60$ (\ref{fig:subspace_time_order_r_60}), and $80$ (\ref{fig:subspace_time_order_r_80}). 
It has to be noted, that legends which are subscripted with $s$ (e.g. ${gr}_{s}$) compare against the $\SVD$ output while the others against its own output of the perturbation against the original $\Y$. 
We remark that when trying a full rank recovery (i.e. $r=100$), SPIRIT failed to complete the full run as it ended up in some instances with linearly dependent columns, while the other methods perform similarly to the previous examples. 

\begin{figure*}[htb!]
    \centering
    \begin{subfigure}{.48\textwidth}
    \centering
    \includegraphics
    [scale=0.28]
    {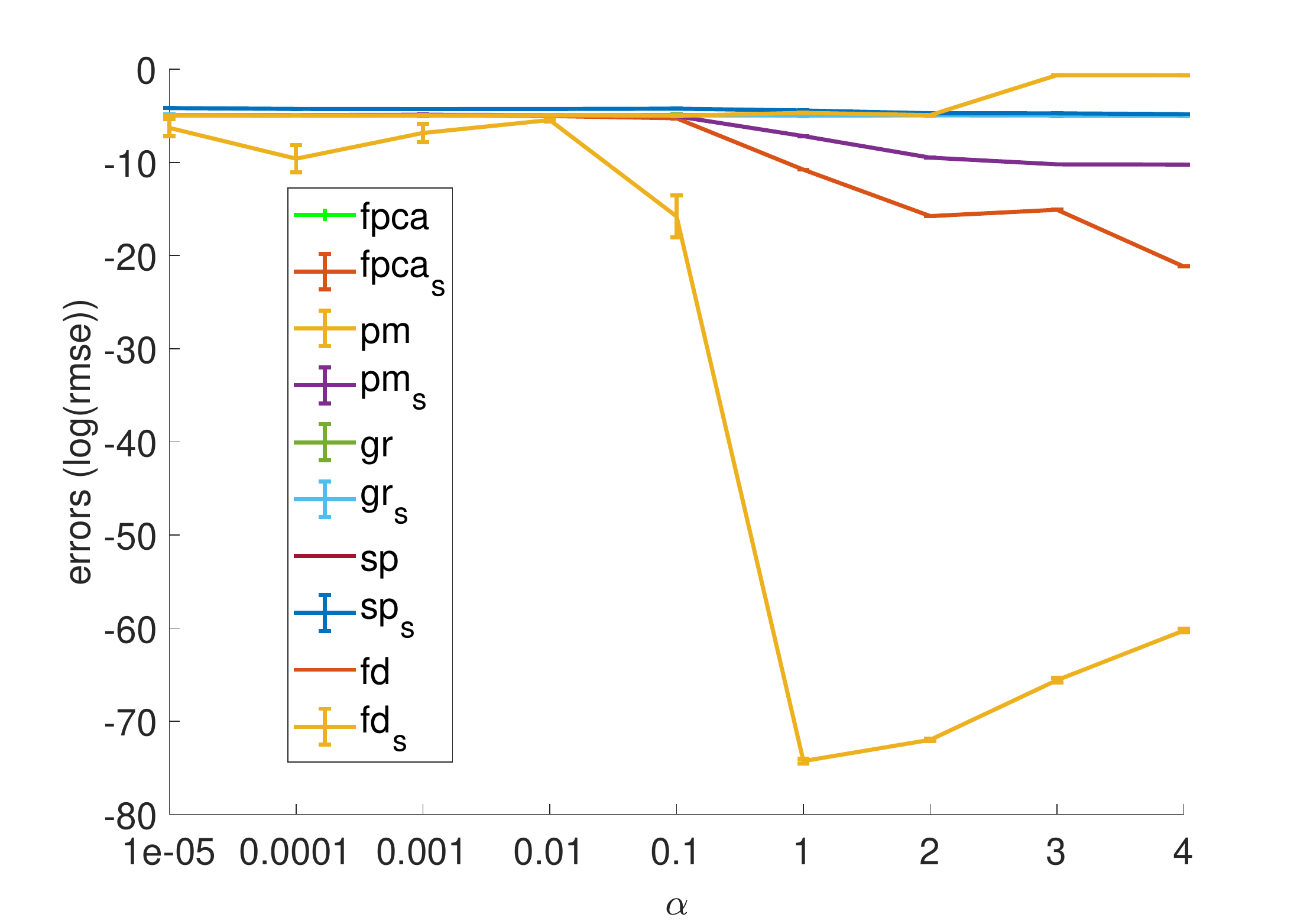}
    \caption{Permutation errors for recovery rank $r=5$.}
    \label{fig:subspace_time_order_r_5}
    \end{subfigure}
    \begin{subfigure}{.48\textwidth}
    \centering
    \includegraphics
    [scale=0.28]
    {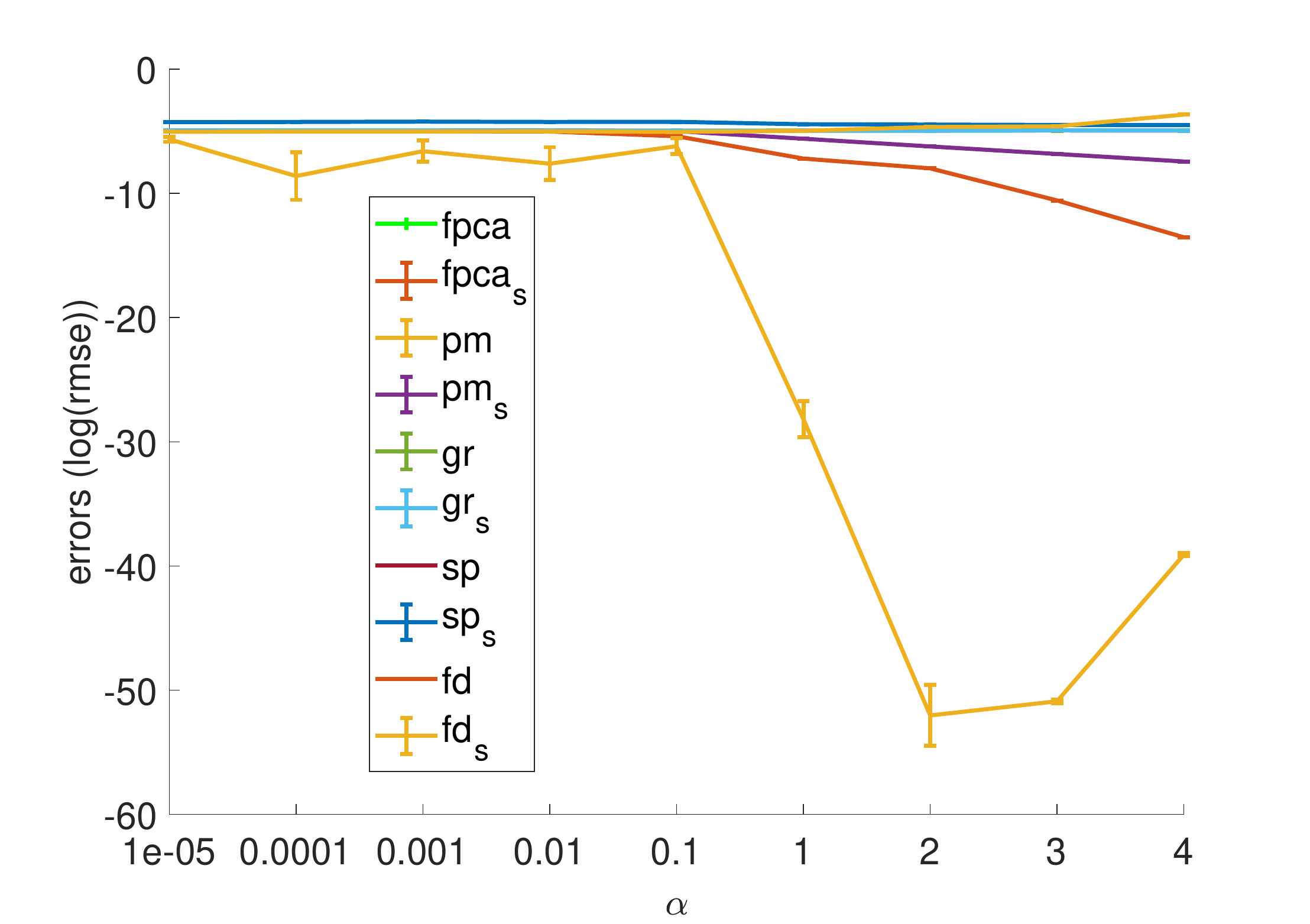}
    \caption{Permutation errors for recovery rank $r=20$.}
    \label{fig:subspace_time_order_r_20}
    \end{subfigure}
    \begin{subfigure}{.48\textwidth}
    \centering
    \includegraphics
    [scale=0.28]
    {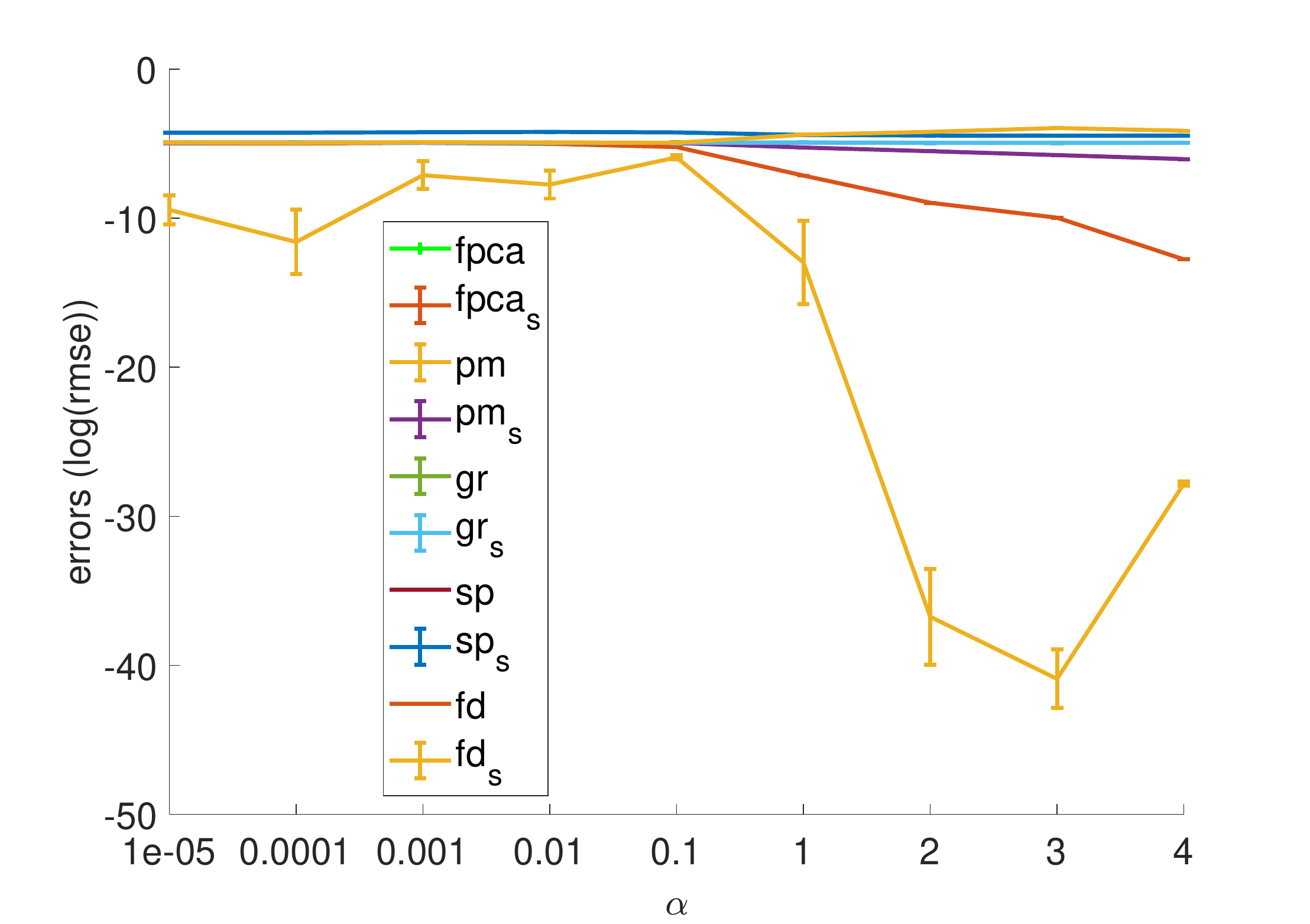}
    \caption{Permutation errors for recovery rank $r=40$.}
    \label{fig:subspace_time_order_r_40}
    \end{subfigure}
    \begin{subfigure}{.48\textwidth}
    \centering
    \includegraphics
    [scale=0.28]
    {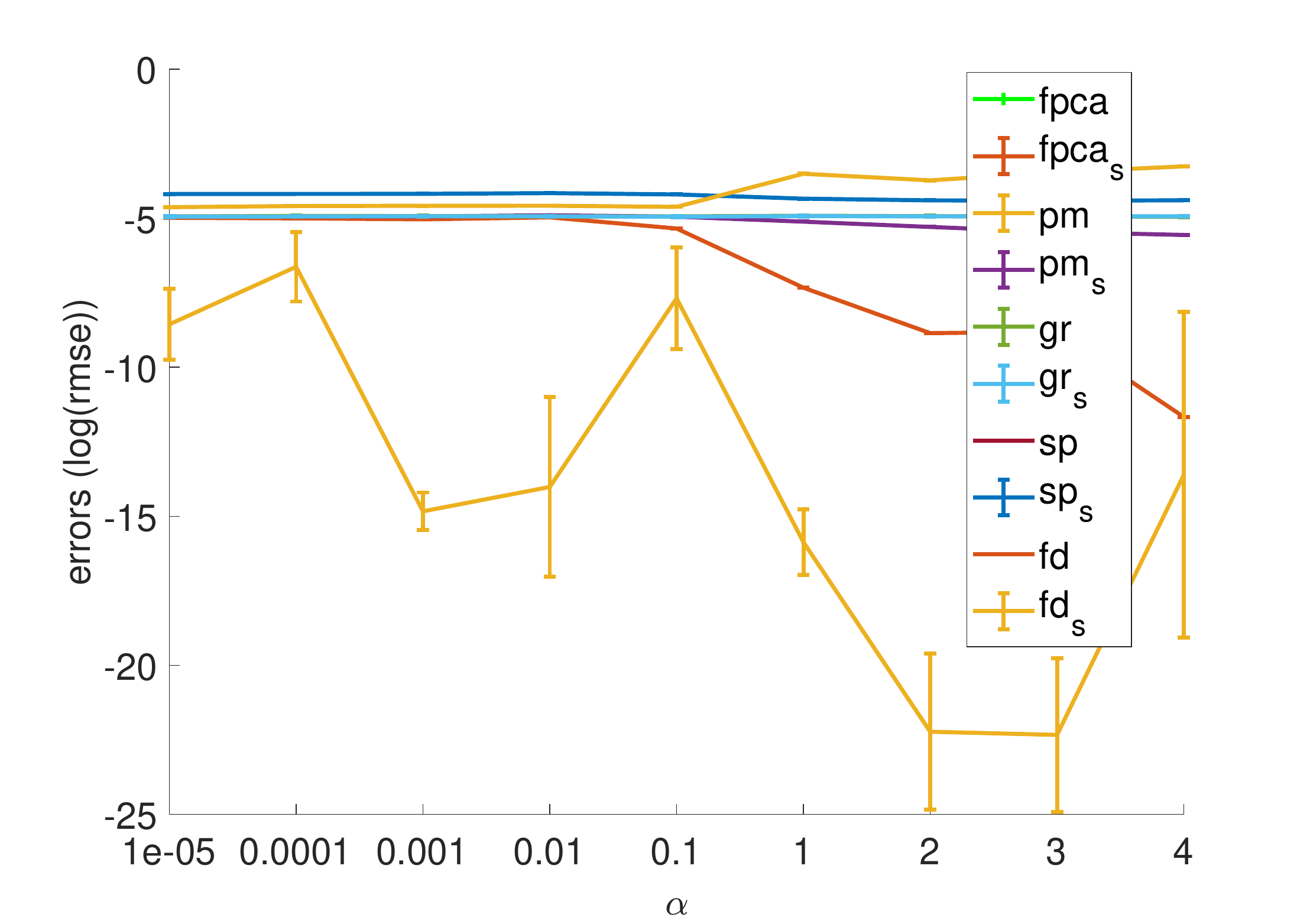}
    \caption{Permutation errors for recovery rank $r=60$.}
    \label{fig:subspace_time_order_r_60}
    \end{subfigure}
    \begin{subfigure}{.48\textwidth}
    \centering
    \includegraphics
    [scale=0.28]
    {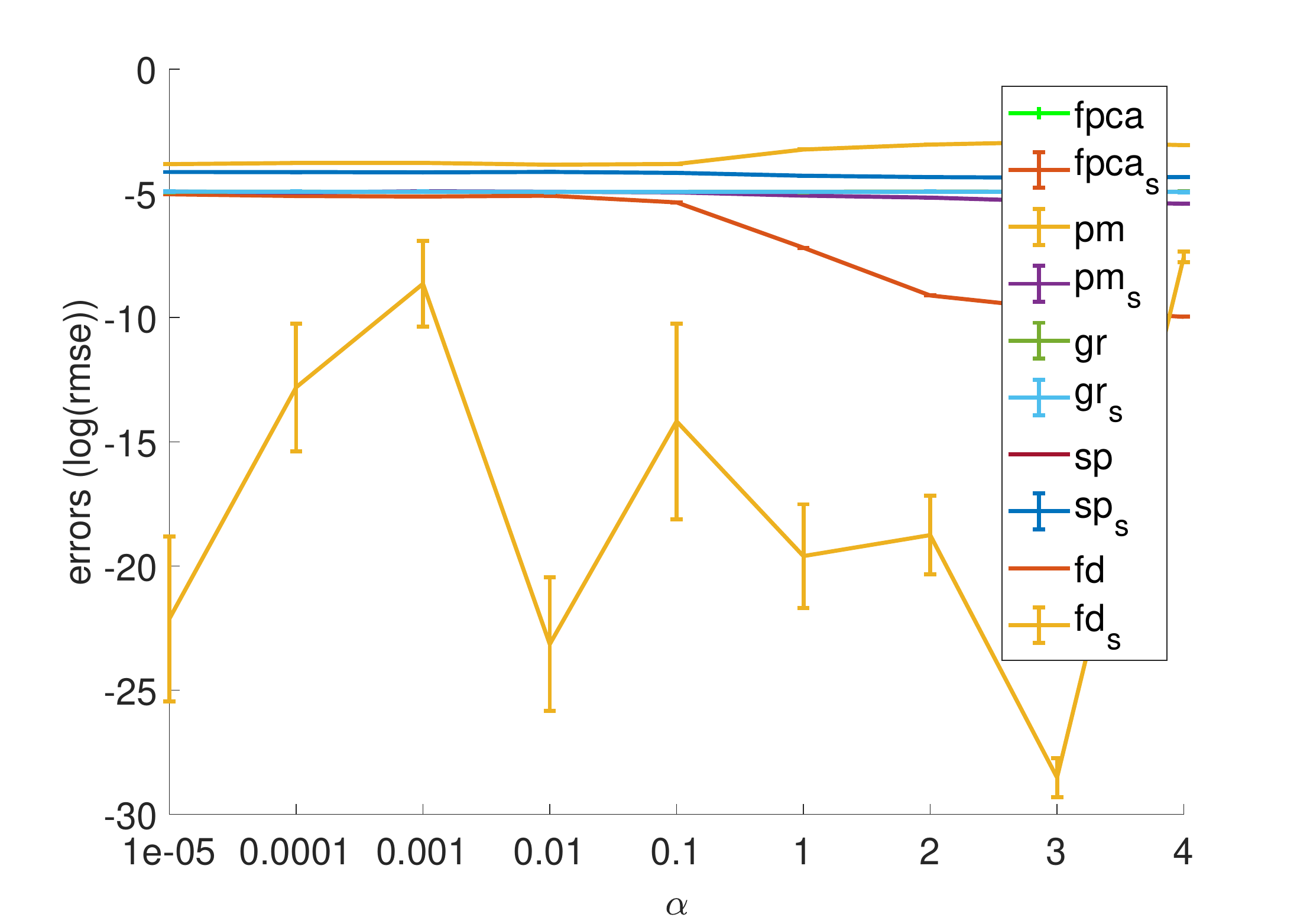}
    \caption{Permutation errors for recovery rank $r=80$.}
    \label{fig:subspace_time_order_r_80}
    \end{subfigure}
    \caption{Mean Subspace errors over $20$ permutations of $Y \in \R^{100 \times 10000}$ for recovery rank $r$ equals $5$ (a), $20$ (b), $40$ (c), $60$ (d), and $80$ (e).}
    \label{fig:subspace_time_order_errors_apx}
\end{figure*}

\vfill

\end{document}